\DeclareMathOperator*{\argmax}{arg\,max}
\DeclareMathOperator*{\argmin}{arg\,min}
\newcommand{\G}{\mathcal{G}} 
\newcommand{\E}{\mathcal{E}} 
\newcommand{\Dc}{\mathcal{D}} 
\renewcommand{\gg}{{\mathrm{g}}} 
\newcommand{\F}{\mathcal{F}} 
\renewcommand{\L}{\mathcal{L}} 
\newcommand{\V}{\mathcal{V}} 
\newcommand{\W}{\mathbf{W}} 
\newcommand{\D}{\mathbf{D}} 
\newcommand{\x}{u} 
\newcommand{\T}{T}
\newcommand{\TT}{{\mathcal T}}
\newcommand{\A}{\mathcal{A}}
\renewcommand{\u}{u}
\renewcommand{\l}{\ell}
\newcommand{\lmax}{\lambda_{\rm max}}
\newcommand{\hg}{\hat{g}}
\newcommand{\Rbb}{\mathbb{R}} \newcommand{\Cbb}{\mathbb{C}}
\newcommand{\scp}[2]{\langle #1, #2 \rangle}
\providecommand{\keywords}[1]{\textbf{\textit{Index terms---}} #1}
\newtheorem{theorem}{Theorem} \newtheorem{definition}{Definition}
 \newtheorem{lemma}{Lemma}
 \newtheorem{corollary}{Corollary}
\newtheorem{example}{Example}
\newtheorem*{mexample}{Motivating Example}
\begin{document}
\title{{Global and Local Uncertainty Principles for Signals on Graphs}}

\author{Nathanael Perraudin, Benjamin Ricaud, David I Shuman, and Pierre Vandergheynst}

\maketitle

\begin{abstract}
Uncertainty principles such as Heisenberg's provide limits on the time-frequency concentration of a signal, and constitute an important theoretical tool for designing and evaluating linear signal transforms. Generalizations of such principles to the graph setting can inform dictionary design for graph signals, lead to algorithms for reconstructing missing information from graph signals via sparse representations, and yield new graph analysis tools. While previous work has focused on generalizing notions of spreads of a graph signal in the vertex and graph spectral domains, our approach is to generalize the methods of Lieb in order to develop uncertainty principles that provide limits on the concentration of the analysis coefficients of any graph signal under a dictionary transform whose atoms are jointly localized in the vertex and graph spectral domains. One challenge we highlight is that due to the inhomogeneity of the underlying graph data domain, the local structure in a single small region of the graph can drastically affect the uncertainty bounds for signals concentrated in different regions of the graph, limiting the information provided by global uncertainty principles. Accordingly, we suggest a new way to incorporate a notion of locality, and develop local uncertainty principles that bound the concentration of the analysis coefficients of each atom of a localized graph spectral filter frame in terms of quantities that depend on the local structure of the graph around the center vertex of the given atom. Finally, we demonstrate how our proposed local uncertainty measures can improve the random sampling of graph signals.
\end{abstract}

\keywords{
Signal processing on graphs, uncertainty principle, local uncertainty, time-frequency analysis, localization, concentration bound, non-uniform random sampling
}



\section{Introduction} \label{Se:intro}


%

The major research thrust to date in the emerging area of signal processing on graphs \cite{shuman2013emerging,sandryhaila2014discrete}  has been to design multiscale wavelet and vertex-frequency transforms  \cite{Crovella2003}-\nocite{Maggioni_biorthogonal,szlam,coifman2006diffusion,bremer_packets,lafon_coarse,wang,narang_lifting_graphs,jansen,gavish,hammond2011wavelets,ram2011generalized,narang2012perfect,leonardi_multislice,ekambaram_globalsip,narang_bior_filters,liu_coarsening,sakiyama,nguyen,shuman2013spectrum,shuman2015vertex}\cite{shuman2016multiscale}. Objectives of these transforms are to sparsely represent different classes of  graph signals and/or efficiently reveal relevant structural properties of high-dimensional data on graphs.
As we move forward, it is important to both test these transforms on myriad applications, as well as to develop additional theory to help answer the question of which transforms are best suited to which types of data. 

%

Uncertainty principles such as the ones presented in \cite{donoho1989uncertainty}-\nocite{Donoho01uncertainty,Elad02Generalized,gribonval2003sparse,candes2006quantitative}\cite{rictorrefined} are an important tool in designing and evaluating linear transforms for processing ``classical'' signals such as audio signals, time series, and images residing on Euclidean domains. It is desirable that the dictionary atoms are jointly localized in time and frequency, and uncertainty principles characterize the resolution tradeoff between these two domains. Moreover, while ``the uncertainty principle is [often] used to show that certain things are impossible,'' Donoho and Stark \cite{donoho1989uncertainty} present ``examples where the generalized uncertainty principle shows something unexpected is \emph{possible}; specifically, the recovery of a signal or image despite significant amounts of missing information.'' In particular, uncertainty principles can provide guarantees that if a signal has a sparse decomposition in a dictionary of incoherent atoms, this is indeed a unique representation that can be recovered via optimization \cite{Donoho01uncertainty,Elad02Generalized}. This idea underlies the recent wave of sparse signal processing techniques, with applications such as denoising, source separation, inpainting, and compressive sensing. While there is still limited theory showing that different mathematical classes of graph signals are sparsely represented by the recently proposed transforms (see \cite{ricaud_SPIE_2013} for one preliminary work along these lines), there is far more empirical work showing the potential of these transforms to sparsely represent graph signals in various applications.

Many of the multiscale transforms designed for graph signals attempt to leverage intuition from signal processing techniques designed for signals on Euclidean data domains by generalizing fundamental operators and transforms to the graph setting (e.g., by checking that they correspond on a ring graph). While some intuition, such as the notion of filtering with a Fourier basis of functions that oscillate at different rates (see, e.g., \cite{shuman2013emerging}) carries over to the graph setting, the irregular structure of the graph domain often restricts our ability to generalize ideas. One prime example is the lack of a shift-invariant notion of translation of a graph signal. As shown in \cite{mcgraw, saito} and discussed in \cite[Section 3.2]{shuman2015vertex}, the concentration of the Fourier basis functions is another example where the intuition does not carry over directly. Complex exponentials, the basis functions for the classical Fourier transform, have global support across the real line. On the other hand, the eigenvectors of the combinatorial or normalized graph Laplacians, which are most commonly used as the basis functions for a graph Fourier transform, are sometimes localized to small regions of the graph. Because the incoherence between the Fourier basis functions and the standard normal basis underlies many uncertainty principles, we demonstrate this issue with a short example.

\begin{mexample}[Part I: Laplacian eigenvector localization] \label{Ex:motivating}
Let us consider the two manifolds (surfaces) embedded in $\Rbb^3$ and shown in the first row of Figure~\ref{fig:intro manifold}. The first one is a flat square. The second is identical except for the center where it contains a spike. We sample both of these manifolds uniformly across the $x$-$y$ plane and create a graph by connecting the $8$ nearest neighbors with weights depending on the distance ($W_{ij}=e^{-d_{ij}/\sigma}$). The energy of each Laplacian eigenvector of the graph arising from the first manifold is not concentrated on any particular vertex;
 i.e., $\max_{i,\ell}|u_\ell(i)| << 1$, where $u_\ell$ is the eigenvector associated with eigenvalue $\lambda_\ell$. 
However, the graph arising from the second manifold does have a few eigenvectors, such as eigenvector 3 shown in the middle row Figure \ref{fig:intro manifold}, whose energy is highly concentrated on the region of the spike; i.e: $\max_{i,\ell}|u_\ell(i)| \approx 1$. Yet, the Laplacian eigenvectors of this second graph whose energy resides primarily on the flatter regions of the manifold, such as eigenvector 17 shown in the bottom row of Figure \ref{fig:intro manifold}, are not too concentrated on any single vertex. Rather, they more closely resemble some of the Laplacian eigenvectors of the graph arising from the first manifold.

\begin{figure}[t!]
\centering
\begin{minipage}[b]{.5\linewidth}
\begin{minipage}[b]{.49\linewidth}
\centerline{\small{Manifold 1}}
\centerline{\includegraphics[width=\linewidth]{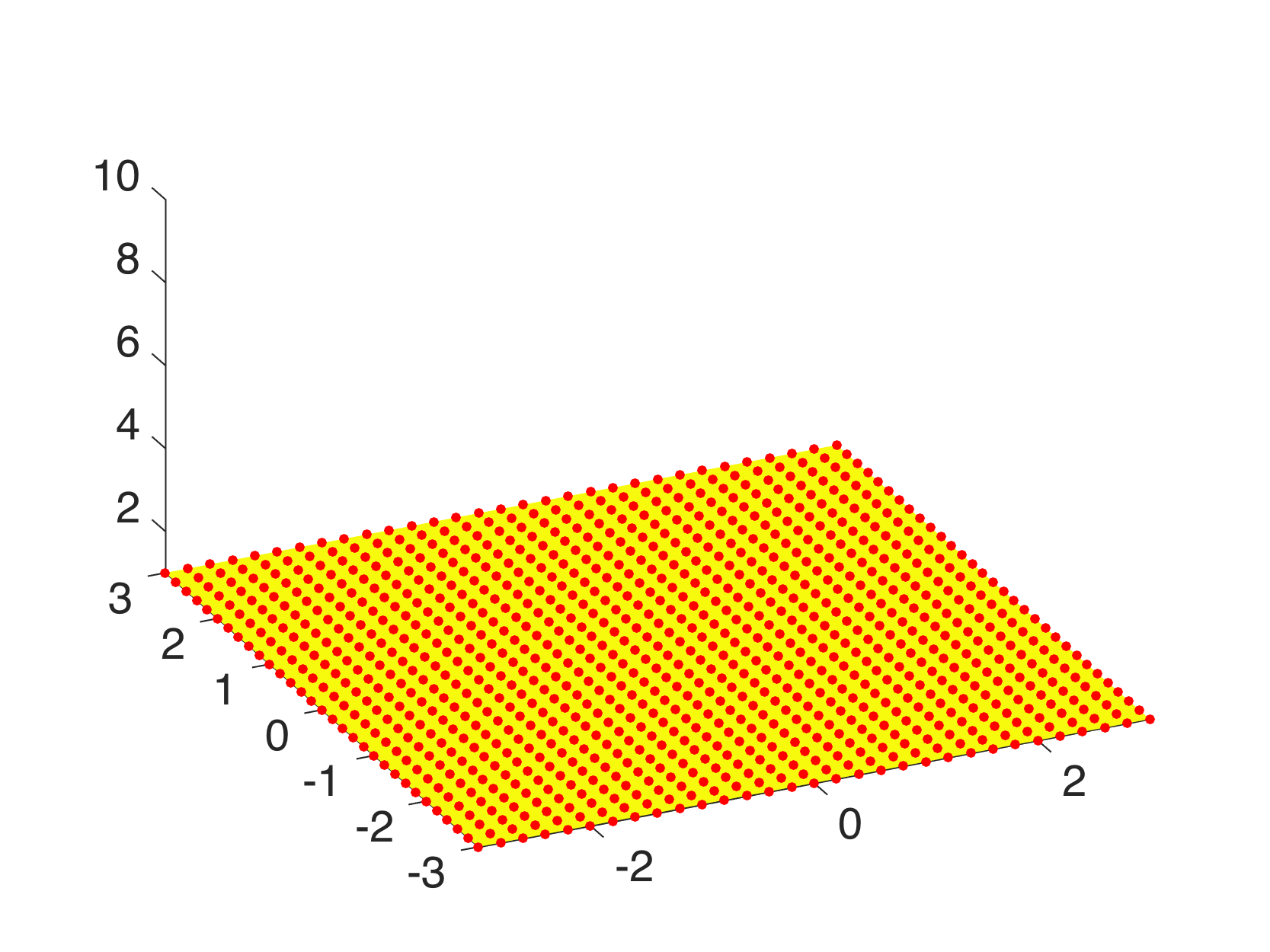}} 
\end{minipage}
\hfill
\begin{minipage}[b]{.49\linewidth}
\centerline{\small{Manifold 2}}
\centerline{\includegraphics[width=\linewidth]{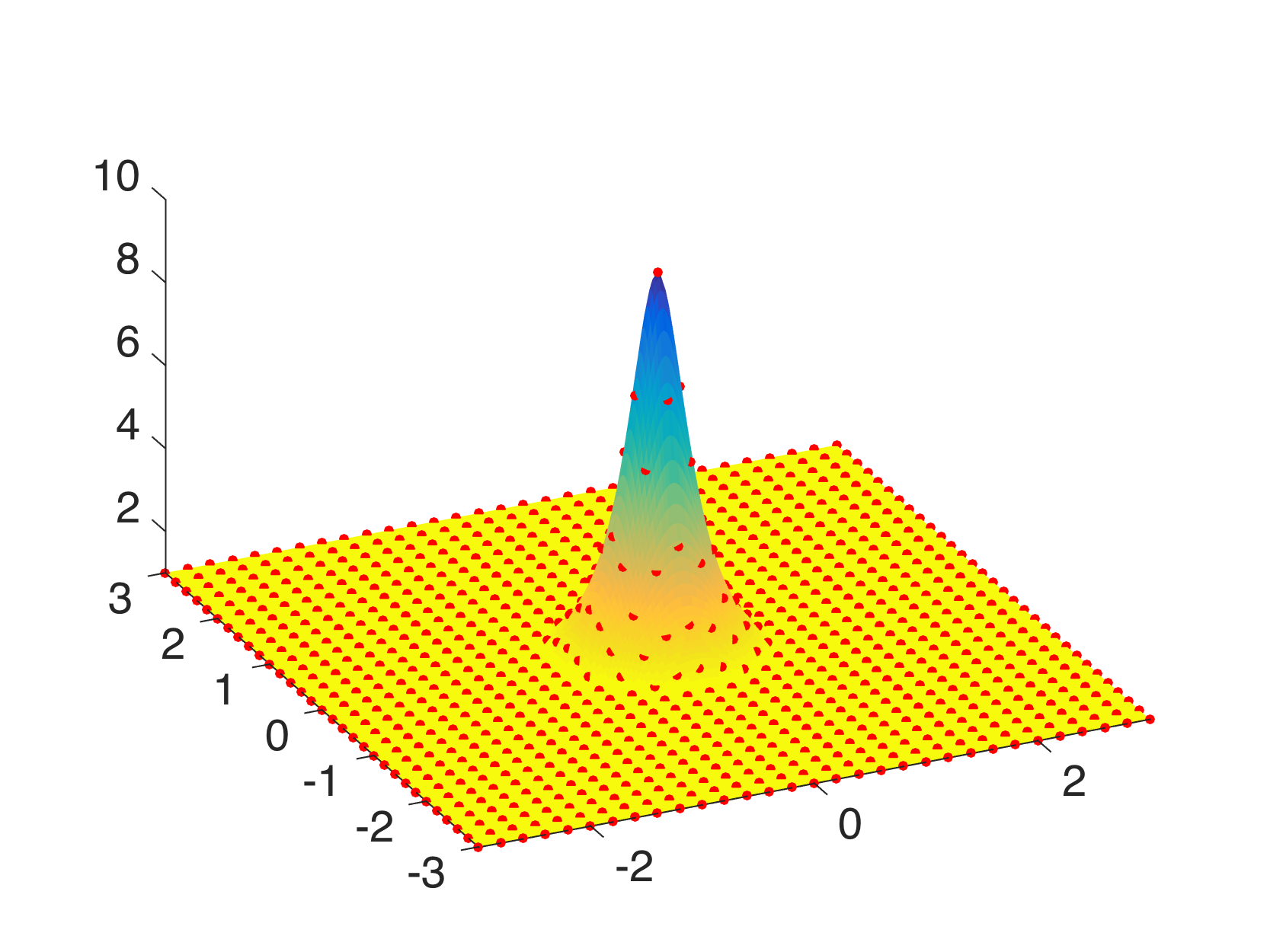}} 
\end{minipage}
\\

\vspace{.1in}
\begin{minipage}[b]{.49\linewidth}
\centerline{\small{Eigenvector 5}}
\centerline{\small{(Manifold 1)}}
\centerline{\includegraphics[width=\linewidth]{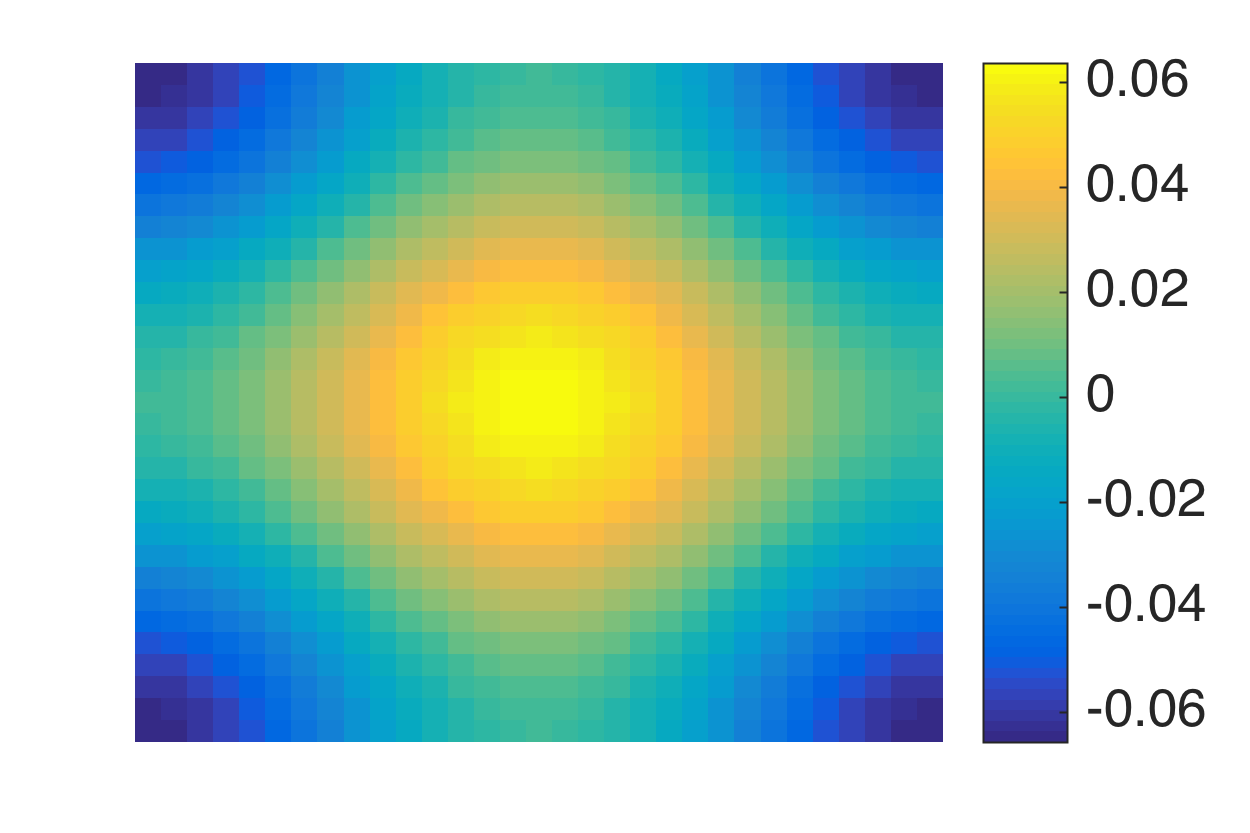}} 
\end{minipage}
\hfill
\begin{minipage}[b]{.49\linewidth}
\centerline{\small{Eigenvector 3}}
\centerline{\small{(Manifold 2)}}
\centerline{\includegraphics[width=\linewidth]{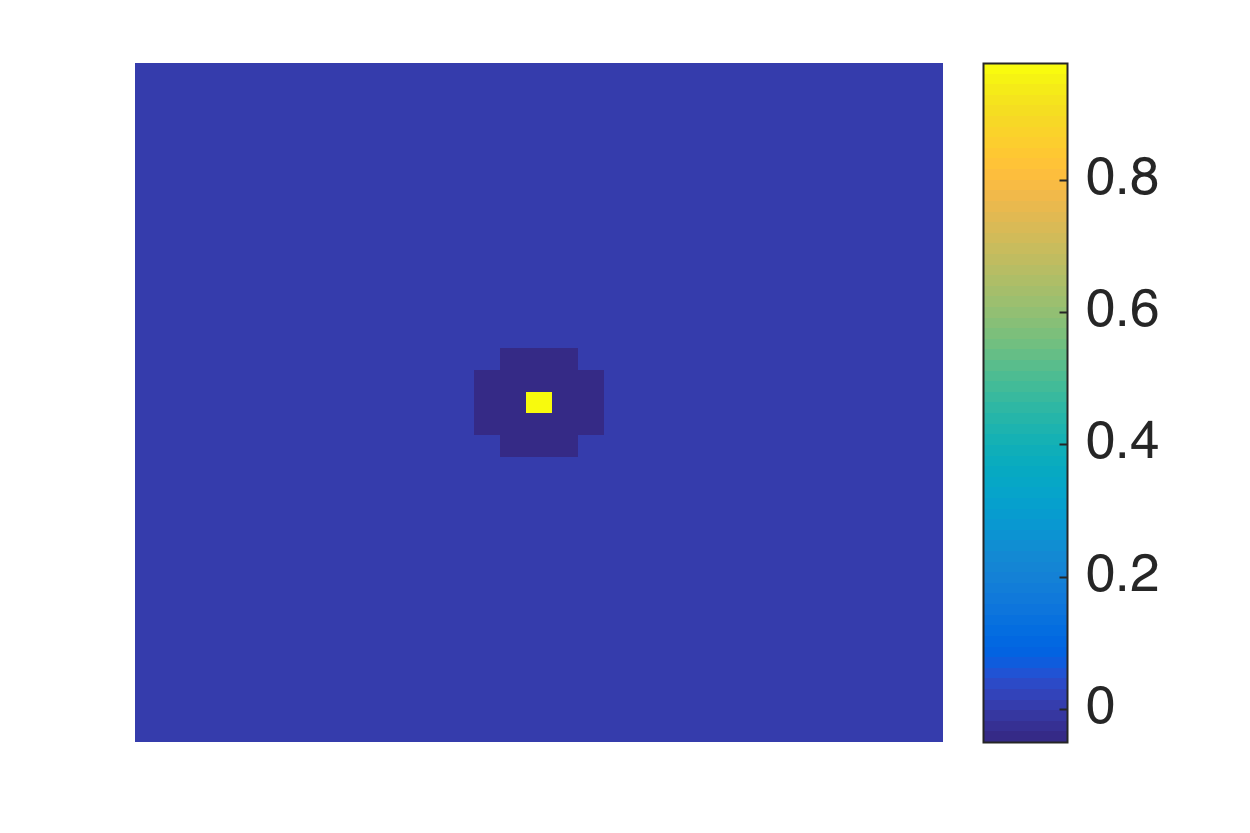}} 
\end{minipage}
\\

\vspace{.1in}
\begin{minipage}[b]{.49\linewidth}
\centerline{\small{Eigenvector 13}}
\centerline{\small{(Manifold 1)}}
\centerline{\includegraphics[width=\linewidth]{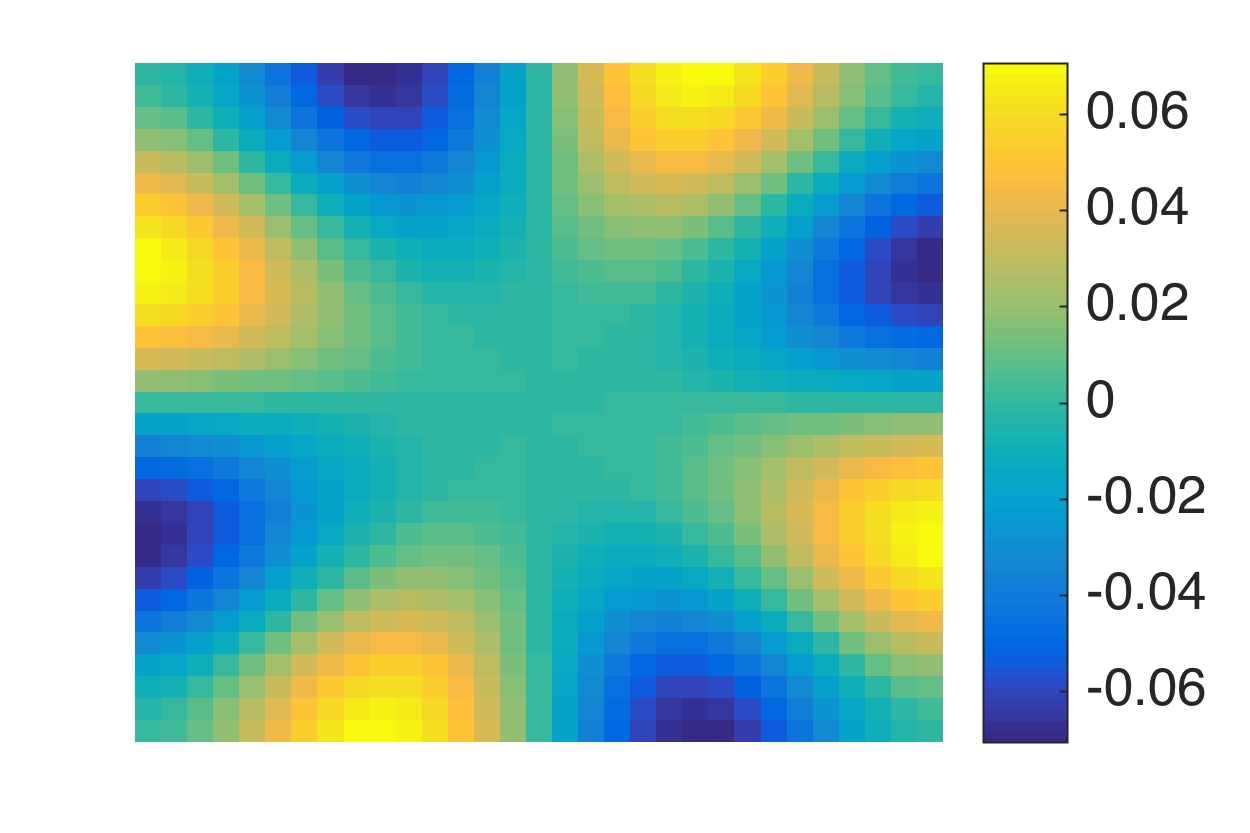}} 
\end{minipage}
\hfill
\begin{minipage}[b]{.49\linewidth}
\centerline{\small{Eigenvector 17}}
\centerline{\small{(Manifold 2)}}
\centerline{\includegraphics[width=\linewidth]{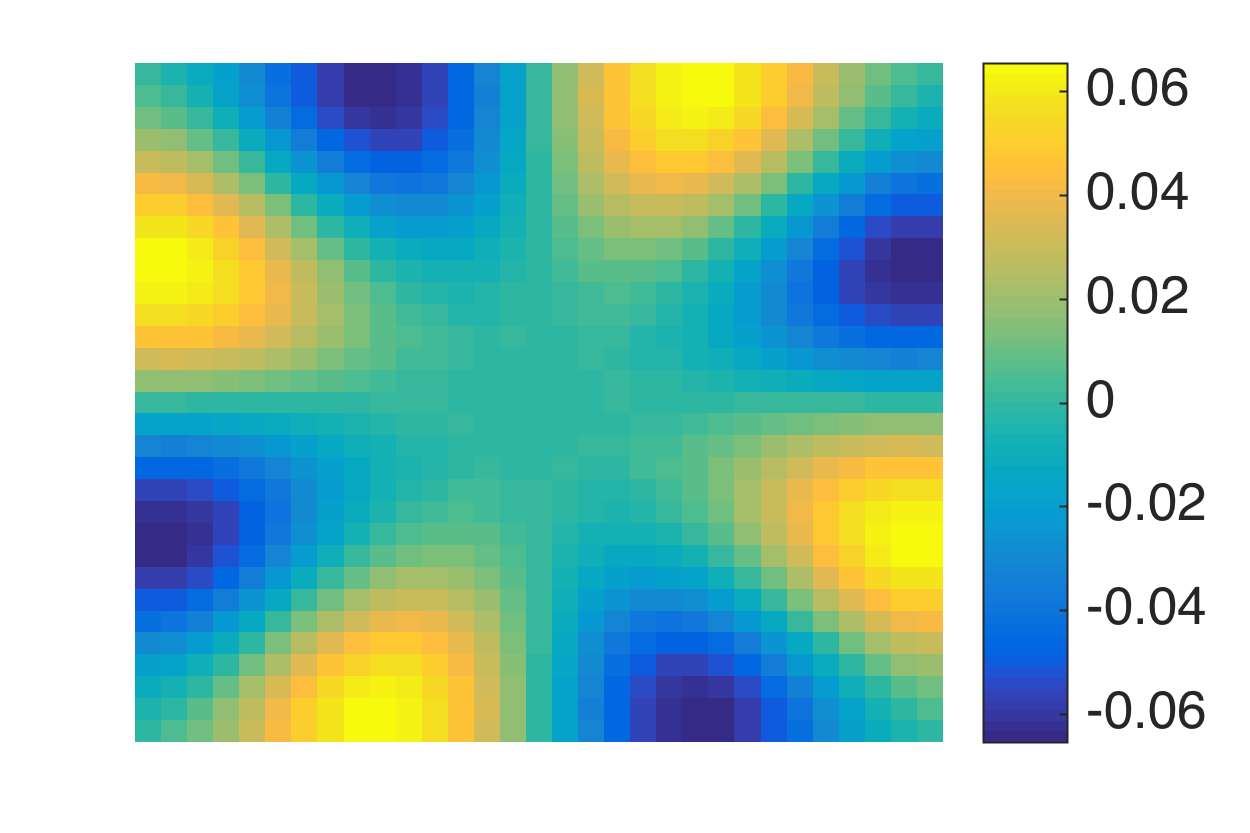}} 
\end{minipage}
\end{minipage}
\caption{Concentration of graph Laplacian eigenvectors. We discretize two different manifolds by sampling uniformly across the $x$-$y$ plane. Due its bumpy central part, the graph arising from manifold 2 has a graph Laplacian eigenvector (shown in the middle row of the right column) that is highly concentrated in both the vertex and graph spectral domains. However, the eigenvectors of this graph whose energy primarily resides in the flatter parts of the manifold (such as the one shown in the bottom row of the right column) are less concentrated, and some closely resemble the Laplacian eigenvectors of the graph arising from the flat manifold 1 (such as the corresponding eigenvector shown in the bottom row of the left column.}
\label{fig:intro manifold}
\end{figure}
\end{mexample}

Below we discuss three different families of uncertainty principles, and their extensions to the graph setting, both in prior work and in this contribution.
\begin{itemize}
\item The first family of uncertainty principles measure the spreading \emph{around some reference point}, usually the mean position of the energy contained in the signal. The well-known Heisenberg uncertainty principle \cite{folland1997, mallat2008wavelet} belongs to this family. It views the modulus square of the signal in both the time and Fourier domains as energy probability density functions, and takes the variance of those energy distributions as measures of the spreading in each domain. The uncertainty principle states that the product of variances in the time and in the Fourier domains cannot be arbitrarily small. The generalization of this uncertainty principle to the graph setting is complex since there does not exist a simple formula for the mean value or the variance of graph signals, in either the vertex or the graph spectral domains. 
For unweighted graphs, Agaskar and Lu \cite{agaskar_spie,agaskar_icassp,agaskarlu13} also view the square modulus of the signal in the vertex domain as an energy probability density function and use the geodesic graph distance (shortest number of hops) to define the spread of a graph signal around a given center vertex.
For the spread of a signal $f$ in the graph spectral domain, Agaskar and Lu use the normalized variation $\frac{f^{\top}\L f}{||f||_2^2}$, which captures the smoothness of a signal. They then specify uncertainty curves 
that characterize the tradeoff between the smoothness of a graph signal and its localization in the vertex domain. This idea is  generalized to weighted graphs in \cite{pasdeloup}.
As pointed out in \cite{agaskarlu13}, the tradeoff between smoothness and localization in the vertex domain is intuitive as a signal that is smooth with respect to the graph topology cannot feature values that decay too quickly from the peak value. However, as shown in Figure \ref{fig:intro manifold} 
(and subsequent examples in Table \ref{tab:uncertainty tight}), graph signals can indeed be simultaneously highly localized or concentrated in both the vertex domain and the graph spectral domain. 
This discrepancy is because the normalized variation used as the spectral spread in \cite{agaskarlu13}  is one method to measure the spread of the spectral representation around the eigenvalue 0, rather than around some mean of that signal in the graph spectral domain.
In fact, using the notion of spectral spread presented in \cite{agaskarlu13}, the graph signal with the highest spectral spread on a graph $\G$ is the graph Laplacian eigenvector associated with the highest eigenvalue. The graph spectral representation of that signal is a Kronecker delta whose energy is completely localized at a single eigenvalue. One might argue that its \emph{spread} should in fact be zero. So, in summary, while there does exist a tradeoff between the smoothness of a graph signal and its localization around any given center vertex in the vertex domain, the classical idea that a signal cannot be simultaneously localized in the time and frequency domains does not always carry over to the graph setting. 
%
While certainly an interesting avenue for continued investigation, we do not discuss uncertainty principles based on spreads in the vertex and graph spectral domains any further in this paper.

\item The second family of uncertainty principles involve the absolute sparsity or concentration of a signal. The key quantities are typically either support measures counting the number of non-zero elements, or concentration measures, such as $\ell^p$-norms. 
An important distinction is that these sparsity and concentration measures are not localization measures.  
They can give the same values for different signals, independent of whether the predominant signal components are clustered in a small region of the vertex domain or spread across different regions of the graph. An example of a recent work from the graph signal processing literature that falls into this family is \cite{tsitsvero2015signals}, in which  Tsitsvero et al. propose an uncertainty principle that characterizes how jointly concentrated graph signals can be in the vertex and spectral domains.
Generalizing prolate spheroidal wave functions \cite{slepian1961prolate}, their notion of concentration is
based on the percentage of energy of a graph signal that is concentrated on a given set of vertices in the vertex domain and a given set of frequencies in the graph spectral domain.


Since we can interpret signals defined on graphs as finite dimensional vectors with well-defined $\ell^p$-norms, we can also apply directly the results of existing uncertainty principles for finite dimensional signals. As one example, the Elad-Bruckstein uncertainty principle of \cite{Elad02Generalized} states that if $\alpha$ and $\beta$ are the coefficients of a vector $f \in \Rbb^N$ in two different orthonormal bases, then 
\begin{align}\label{Eq:elbr_gen}
\frac{||\alpha||_0 + ||\beta||_0}{2} \geq \sqrt{||\alpha||_0 \cdot ||\beta||_0} \geq \frac{1}{\mu},
\end{align}
where $\mu$ is the maximum magnitude of the inner product between any vector in the first basis with any vector in the second basis. In Section \ref{Se:direct_extensions}, we apply \eqref{Eq:elbr_gen} to graph signals by taking one basis to be the canonical basis of Kronecker delta functions in the graph vertex domain and the other to be a Fourier basis of graph Laplacian eigenvectors. We also apply
other such finite dimensional uncertainty principles from 
\cite{rictorrefined}, \cite{folland1997}, and \cite{Maassen88generalized} to the graph setting.
In Section \ref{Se:HYI}, we adapt the Hausdorff-Young inequality \cite[Section IX.4]{reed1975methods}, a classical result for infinite dimensional signals, to the graph setting. 
These results typically depend on the mutual coherence between the graph Laplacian eigenvectors and the canonical basis of deltas. For the special case of shift-invariant graphs with circulant graph Laplacians \cite[Section 5.1]{grady}, such as ring graphs, these bases are incoherent, and we can attain meaningful uncertainty bounds. However, for less homogeneous graphs (e.g., a graph with a vertex with a much higher or lower degree than other vertices), the two bases can be more coherent, leading to weaker bounds. Moreover, as we discuss in Section \ref{sec:towarduncertaintyprinciple}, the bounds are global bounds, so even if the majority of a graph is for example very homogenous, inhomogeneity in one small area can prevent the result from informing the behavior of graph signals across the rest of the graph.

\item The third family of uncertainty principles characterize a single joint representation of time and frequency. The short-time Fourier transform (STFT) is an example of a time-frequency representation that projects a function $f$ onto a set of translated and modulated copies of a function $g$. Usually, $g$ is a function localized in the time-frequency plane, for example a Gaussian, vanishing away from some known reference point in the joint time and frequency domain. Hence this transformation reveals \emph{local properties in time and frequency} of $f$ by separating the time-frequency domain into regions where the translated and modulated copies of $g$ are localized. This representation obeys an uncertainty principle:  the STFT coefficients cannot be arbitrarily concentrated. This can be shown by estimating the different $\l^p$-norms of this representation (note that the concentration measures of the second family of uncertainty principles are used). For example, Lieb \cite{lieb1990integral} proves a concentration bound on the ambiguity function (e.g., the STFT coefficients of the STFT atoms). Lieb's approach is more general than the Heisenberg uncertainty principle, because it handles the case where the signal is concentrated around multiple different points (see, e.g., the signal $f_3$ in Figure~\ref{fig:concentration_example}).

In Section \ref{Se:Lieb_global}, we generalize Lieb's uncertainty principle to the graph setting to provide upper bounds on the concentration of the transform coefficients of any graph signal under (i) any frame of dictionary atoms, and (ii) a  special class of dictionaries called \emph{localized spectral graph filter frames}, whose atoms are of the form $T_i g_k$, where $T_i$ is a localization operator that centers on vertex $i$ a pattern described in the graph spectral domain by the kernel $\widehat{g_k}$. 


%
\end{itemize}

While the second family of uncertainty principles above yields \emph{global uncertainty principles}, we can generalize the third family to the graph setting in a way that yields \emph{local uncertainty principles}. 
In the classical Euclidean setting, the underlying domain is homogenous, and thus uncertainty principles apply to all signals equally, regardless of where on the real line they are concentrated. However, in the graph setting, the underlying domain is irregular, and a change in the graph structure in a single small region of the graph can drastically affect the uncertainty bounds. For instance, the second family of uncertainty principles  all depend on the coherence between the graph Laplacian eigenvectors and the standard normal basis of Kronecker deltas, which is a global quantity in the sense that it incorporates local behavior from all regions of the graph. To see how this can limit the usefulness of such global uncertainty principles, we return to the motivating example from above.

\begin{mexample}[Part II: Global versus local uncertainty principles]
In Section \ref{Se:direct_extensions}, we show that a direct application of a result from \cite{rictorrefined} to the graph setting yields the following uncertainty relationship, which falls into the second family described above, for any signal $f\in \Rbb^N$:
\begin{align} \label{Eq:mot_unc}
\left(\frac{\| f \|_2}{\| f \|_1}\right)\left(\frac{\| \hat{f} \|_2}{\| \hat{f} \|_1}\right) \leq {\max_{i,\ell}|u_\ell(i)|}.
\end{align}
Each fraction in the left-hand side of \eqref{Eq:mot_unc} is a measure of concentration that lies in the interval $[\frac{1}{\sqrt{N}},1]$ ($N$ is the number of vertices), and the coherence between the graph Laplacian eigenvectors and the Kronecker deltas on the right-hand side lies in the same interval. On the graph arising from manifold 1, the coherence is close to $\frac{1}{\sqrt{N}}$, and \eqref{Eq:mot_unc} yields a meaningful uncertainty principle. However, on the graph arising from manifold 2, the coherence is close to 1 due to the localized eigenvector 3 in Figure \ref{fig:intro manifold}, \eqref{Eq:mot_unc} is trivially true for any signal in $\Rbb^n$ from the properties of vector norms, and thus the uncertainty principle is not particularly useful. Nevertheless, far away from the spike, signals should behave similarly on manifold 2 to how they behave on manifold 1. Part of the issue here is that the uncertainty relationship holds for any graph signal $f$, even those concentrated on the spike, which we know can be jointly localized in both the vertex and graph spectral domains. An alternative approach is to develop a local uncertainty principle that characterizes the uncertainty in different regions of the graph on a separate basis. Then, if the energy of a given signal is concentrated on a more homogeneous part of the graph, the concentration bounds will be tighter.
\end{mexample}


In Section \ref{subsec:local}, we generalize the approach of Lieb to build a local uncertainty principle that bounds the concentration of the analysis coefficients of each atom of a localized graph spectral filter frame in terms of quantities that depend on the local structure of the graph around the center vertex of the given atom. Thus, atoms localized to different regions of the graph feature different concentration bounds. Such local uncertainty principles also have constructive applications, and we conclude with an example of non-uniform sampling for graph inpainting, where the varying uncertainty levels across the graph suggest a strategy of sampling more densely in areas of higher uncertainty. For example, if we were to take $M$ measurements of a smooth signal on manifold 2 in Figure \ref{fig:intro manifold}, this method would lead to a higher probability of sampling signal values near the spike, and a lower probability of sampling signal values in the more homogenous flat parts of the manifold, where reconstruction of the missing signal values is inherently easier.

\section{Notation and graph signal concentration}\label{sec:towarduncertaintyprinciple}

In this section, we introduce some notation and illustrate further how certain intuition from signal processing on Euclidean spaces does not carry over to the graph setting.


\subsection{Notation}
Throughout the paper, we consider signals residing on an undirected, connected, and weighted graph $\G=\{ \V,\E,\W\}$, where $\V$ is a finite set of $N$ vertices ($|\V|=N$), $\E$ is a finite set of edges, and $\W$ is the weight or adjacency matrix. The entry $W_{ij}$ of $\W$ represents the weight of an edge connecting vertices $i$ and $j$. 
A graph signal $f: \V \rightarrow  \mathbb{C}$ is a function assigning one value to each vertex. Such a signal $f$ can be written as a vector of size $N$ with the $n^{th}$ component representing the signal value at the $n^{th}$ vertex. The generalization of Fourier analysis to the graph setting requires a graph Fourier basis $\{u_{\l}\}_{\l \in \{0,1,\ldots,N-1\}}$.  The most commonly used graph Fourier bases are the eigenvectors of the combinatorial (or non-normalized) graph Laplacian, which is defined as $\L=\D-\W$, where $\D$ is the diagonal degree matrix with diagonal entries $\D_{ii}=\sum_{j=1}^N W_{ij}$, and $i\in \V$, or the eigenvectors of the normalized graph Laplacian $\tilde{\L}=\D^{-\frac{1}{2}} \L \D^{-\frac{1}{2}}$. However, the eigenbases (or Jordan eigenbases) of other matrices such as the adjacency matrix have also been used as graph Fourier bases \cite{sandryhaila_2013, sandryhaila2014discrete}. All of our results in this paper hold for any choice of the graph Fourier basis. For concreteness, we use the combinatorial Laplacian, which has a complete set of orthonormal eigenvectors $\{\u_{\l}\}_{\l \in \{0,1,\ldots,N-1\}}$ associated with the real eigenvalues $0=\lambda_0 < \lambda_1 \leq \lambda_2 \leq ... \leq \lambda_{N-1} = \lambda_{\rm max}$. 
The graph Fourier transform $\hat{f}\in\Cbb^N$ of a function $f\in\Cbb^N$ defined on a graph $\G$ is the projection of the signal onto the orthonormal graph Fourier basis $\{\u_{\l}\}_{\l=0,1,\ldots,N-1}$, which we take to be the eigenvectors of the graph Laplacian associated with $\G$:
\begin{equation} \label{def: graph Fourier transform}
\hat{f}(\lambda_\l)=\langle f,\u_{\l} \rangle= \sum_{n=1}^N f(n) \overline{\u_{\l}(n)},\quad \l \in \{0,1,\ldots,N-1\}.
\end{equation}
See for example~\cite{chung1997spectral} for more details on spectral graph theory, and \cite{shuman2013emerging} for more details on signal processing on graphs.

\subsection{Concentration measures}
In order to discuss uncertainty principles, we must first introduce some concentration/sparsity measures.
Throughout the paper, we use the terms \emph{sparsity} and \emph{concentration} somewhat interchangeably, but we reserve the term \emph{spread} to describe the spread of a function around some mean or center point, as discussed in the first family of uncertainty principles in Section \ref{Se:intro}. The first concentration measure is the support measure of $f$, denoted $\|f\|_0$, which counts the number of non-zero elements of $f$. The second concentration measure is the Shannon entropy, which is used often in information theory and physics:
$$H(f)=-\sum_{n} |f(n)|^2\ln|f(n)|^2,$$
where the variable $n$ has values in $\{1,2,\ldots,N\}$ for functions on graphs and in $\{0,1,\ldots,N-1\}$ in the graph Fourier representation. 
Another class of concentration measures is the $\ell^p$-norms, with $p\in[1,\infty]$. For $p\neq 2$, the sparsity of $f$ may be measured using the following quantity: 
$$s_p(f)=
\begin{cases}
\frac{\|f\|_2}{\|f\|_p},&\hbox{ if } 1 \leq p \leq 2 \\
\\
\frac{\|f\|_p}{\|f\|_2},&\hbox{ if } 2 < p \leq \infty 
\end{cases}.
$$ 
For any vector $f\in \Cbb^N$ and any $p \in [1,\infty]$, $s_p(f) \in \left[N^{-\left|\frac{1}{p}-\frac{1}{2}\right|},1\right]$. If $s_p(f)$ is high (close to 1), then $f$ is sparse, and if $s_p(f)$ is low, then $f$ is not concentrated.
Figure \ref{fig:concentration_example} uses some basic signals to illustrate this notion of concentration, for different values of $p$.
In addition to sparsity, one can also relate $\ell^p$-norms to the Shannon entropy via Renyi entropies (see, e.g., \cite{renyi1961measures,RicTorrACOM} for more details). 

\begin{figure}[ht!]
\begin{minipage}[h]{.45\linewidth}
\centering
\includegraphics[width=\textwidth]{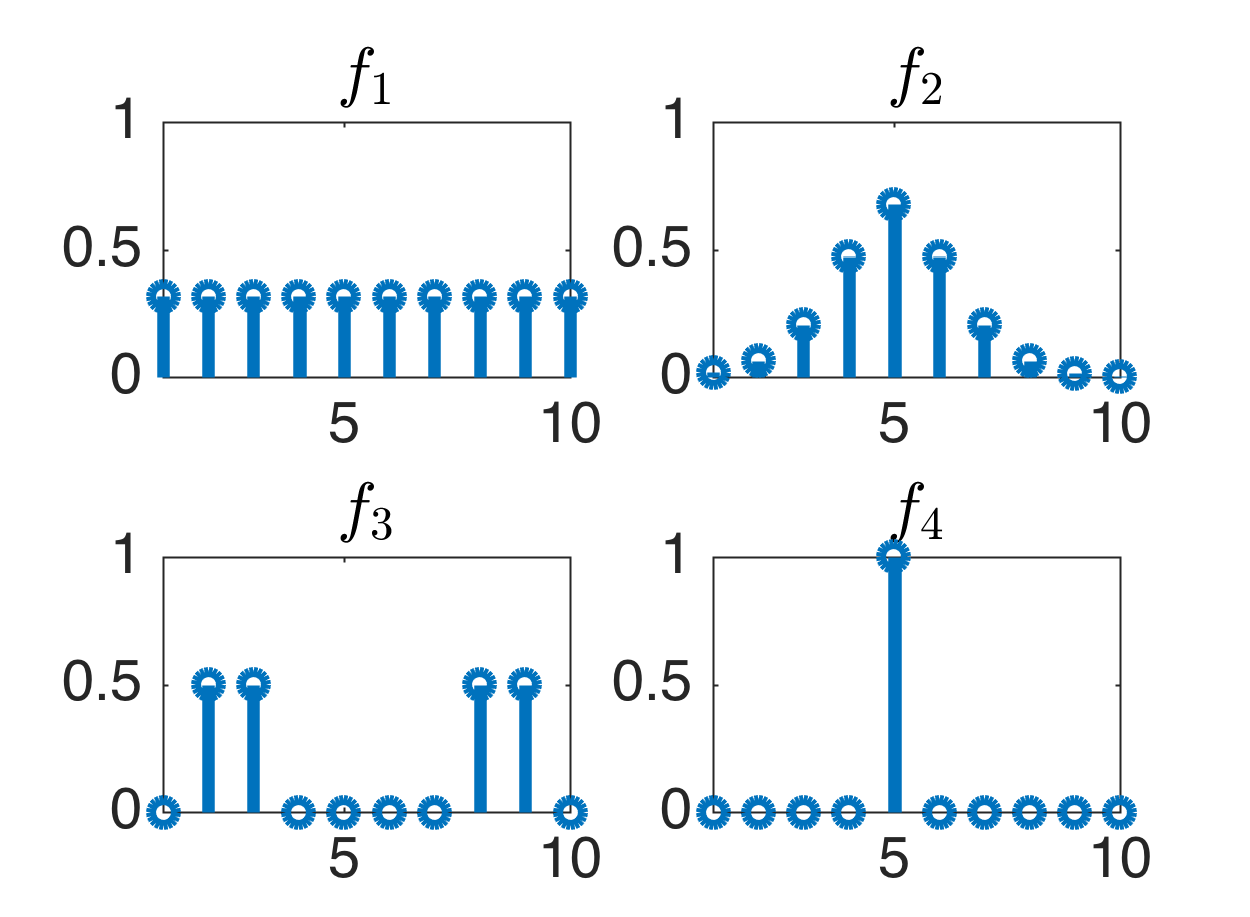}
\end{minipage}
\begin{minipage}[h]{.45\linewidth}
\centering
\begin{tabular}[h]{ |c|cccc|} 
 \hline 
  p  &   $s_p(f_1)$ &  $s_p(f_2)$    & $s_p(f_3)$   & $s_p(f_4)$ \\ 
 \hline 
  $  1.00 $ & $  0.32 $ & $  0.46 $ & $  0.50 $ & $  1.00 $ \\ 
  $  1.33 $ & $  0.56 $ & $  0.69 $ & $  0.71 $ & $  1.00 $ \\ 
  $  2.00 $ & $  1.00 $ & $  1.00 $ & $  1.00 $ & $  1.00 $ \\ 
  $  4.00 $ & $  0.56 $ & $  0.75 $ & $  0.71 $ & $  1.00 $ \\ 
  $   \infty $ & $  0.32 $ & $  0.68 $ & $  0.50 $ & $  1.00 $ \\ 
 \hline 
 \end{tabular}
\end{minipage}
\caption{The concentration $s_p(\cdot)$ of four different example signals (all with 2-norm equal to 1), for various values of $p$. 
Note that the position of the signal coefficients does not matter for this concentration measure. Different values of $p$ lead to different notions of concentration; for example, $f_2$ is more concentrated than $f_3$ if $p=\infty$ (it has a larger maximum absolute value), but less concentrated if $p=1$.}
\label{fig:concentration_example}
\end{figure}

\subsection{Concentration of the graph Laplacian eigenvectors}
The spectrum of the graph Laplacian replaces the frequencies as coordinates in the Fourier domain.
For the special case of shift-invariant graphs with circulant graph Laplacians \cite[Section 5.1]{grady}, the Fourier eigenvectors can still be viewed as pure oscillations. However, for more general graphs (i.e., all but the most highly structured), the oscillatory behavior of the Fourier eigenvectors must be interpreted more broadly. For
example, \cite[Fig. 3]{shuman2013emerging} displays the number of zero crossings of each eigenvector; that is, for each eigenvector, the number of pairs of connected vertices where the signs of the values of the eigenvector at the connected vertices are opposite. It is generally the case that the graph Laplacian eigenvectors associated with larger eigenvalues contain more zero crossings, yielding a notion of frequency to the graph Laplacian eigenvalues. However, despite this broader notion of frequency, the graph Laplacian eigenvectors are not always globally-supported, pure oscillations like the complex exponentials. In particular, they can feature sharp peaks, meaning that some of the Fourier basis elements can be much more similar to an element of the canonical basis of Kronecker deltas on the vertices of the graph. As we will see, uncertainty principles for signals on graphs are highly affected by this phenomenon. 


One way to compare a graph Fourier basis to the canonical basis is to compute the coherence between these two representations. 
\begin{definition}[Graph Fourier Coherence $\mu_{\G}$]
Let $\G$ be a graph of $N$ vertices. Let $\{\delta_i\}_{i\in \{1,2,\ldots,N\}}$ denote the canonical basis of $\ell^2(\Cbb^N)$ of Kronecker deltas and let $\{u_{\l}\}_{\l\in \{0,1,\ldots,N-1\}}$ be the orthonormal basis of eigenvectors of the graph Laplacian of $\G$. The graph Fourier coherence is defined as:
$$\mu_{\G}=\max_{i,\l}|\langle \delta_i,u_{\l}\rangle|=\max_{i,\l}|u_{\l}(i)|=\max_{\l} s_{\infty}(u_{\l}).$$
\end{definition}
This quantity measures the similarity between the two sets of vectors. If the sets possess a common vector, then $\mu_{\G}=1$ (the maximum possible value for $\mu_{\G}$). If the two sets are maximally incoherent, such as the canonical and Fourier bases in the standard discrete setting, then $\mu_{\G}=1/\sqrt{N}$ (the minimum possible value).


Because the graph Laplacian matrix encodes the weights of the edges of the graph, the coherence $\mu_{\G}$ clearly depends on the structure of the underlying graph.  It remains an open question exactly how structural properties of weighted graphs such as the regularity, clustering, modularity, and other spectral properties can be linked to the concentration of the graph Laplacian eigenvectors. For certain classes of random graphs \cite{dekel, dumitriu, tran} or large regular graphs \cite{brooks}, the eigenvectors have been shown to be non-localized, globally oscillating functions (i.e., $\mu_{\G}$ is low). Yet, empirical studies such as \cite{mcgraw} show that graph Laplacian eigenvectors can be highly concentrated (i.e., $\mu_{\G}$ can be close to 1), particularly when the degree of a vertex is much higher or lower than the degrees of other vertices in the graph. The following example illustrates how $\mu_{\G}$ can be influenced by the graph structure.

\begin{example}\label{ex:pathgraph}
In this example, we discuss two classes of graphs that can have graph Fourier coherences. The first, called comet graphs, are studied in \cite{saito,Nakatsukasa2013mysteries}. They are  composed of a star with $k$ vertices connected to a center vertex, and a single branch of length greater than one extending from one neighbor of the center vertex (see Figure~\ref{fig:intro graph topology}, top). 
If we fix the length of the longer branch (it has length 10 in Figure \ref{fig:intro graph topology}), and increase $k$, the number of neighbors of the center vertex, the graph Laplacian eigenvector associated with the largest eigenvalue approaches a Kronecker delta centered at the center vertex of the star.
As a consequence, the coherence between the graph Fourier and the canonical bases approaches 1 as $k$ increases. 

The second class are the modified path graphs, which we use several times in this contribution. 
We start with a standard path graph of 10 nodes equally spaced (all edge weights are equal to one) and we move the first node out to the left; i.e., we reduce the weight between the first two nodes (see Figure~\ref{fig:intro graph topology}, bottom). The weight is related to the distance by $W_{12}=1/d(1,2)$ with $d(1,2)$ being the distance between nodes 1 and 2. When the weight between nodes 1 and 2 decreases, the eigenvector associated with the largest eigenvalue of the Laplacian becomes more concentrated, which increases the coherence $\mu_{\G}$.
These two examples of simple families of graphs illustrate that the topology of the graph can impact the graph Fourier coherence, and, in turn, uncertainty principles that depend on the coherence. 

\begin{figure}[ht!]
\begin{center}
\includegraphics[width=0.25\textwidth]{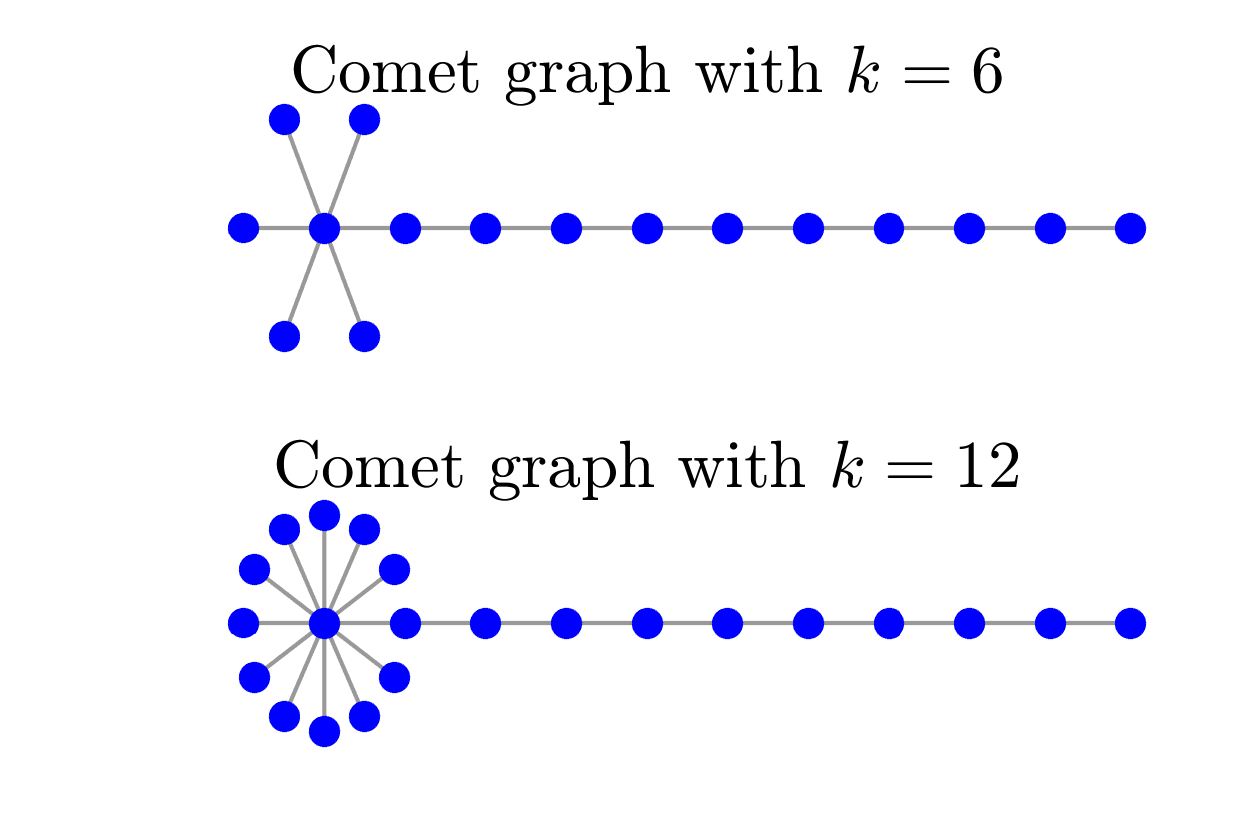}
\includegraphics[width=0.25\textwidth]{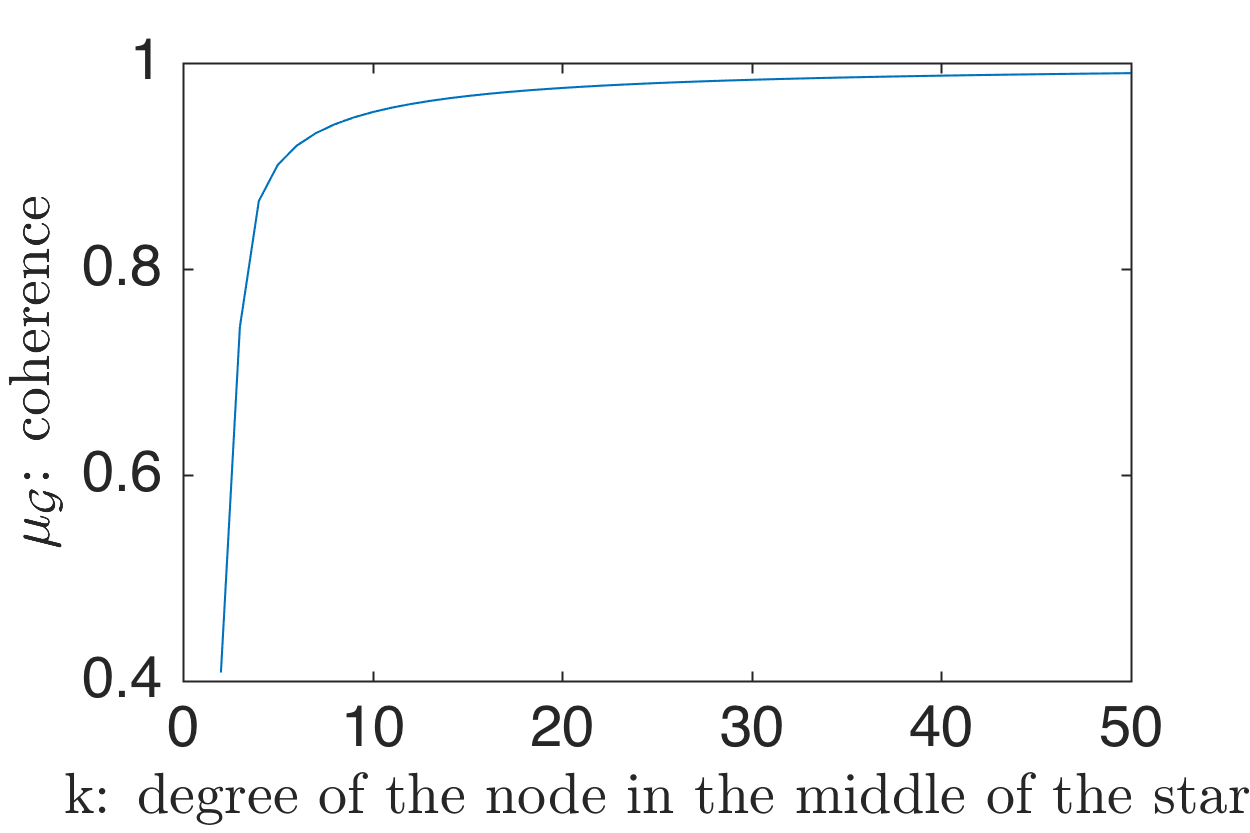}\\
\includegraphics[width=0.25\textwidth]{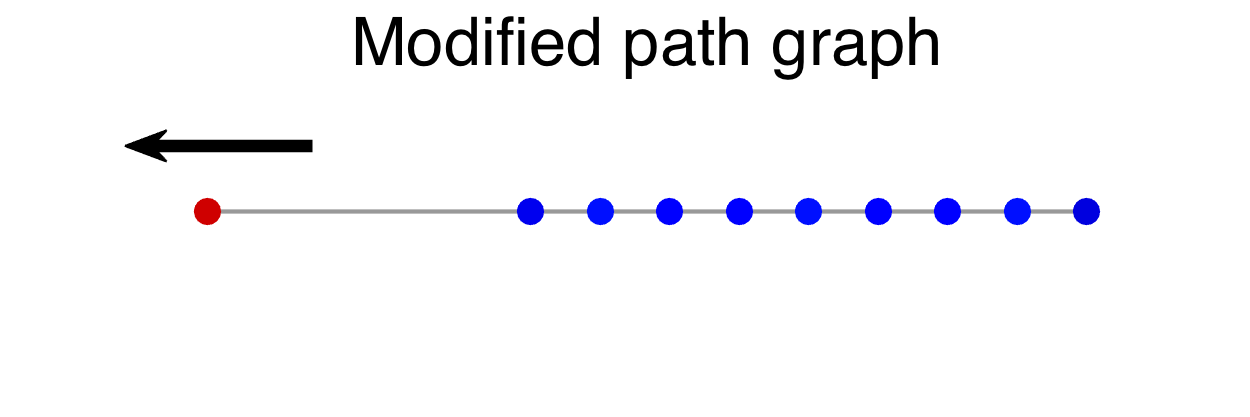}
\includegraphics[width=0.25\textwidth]{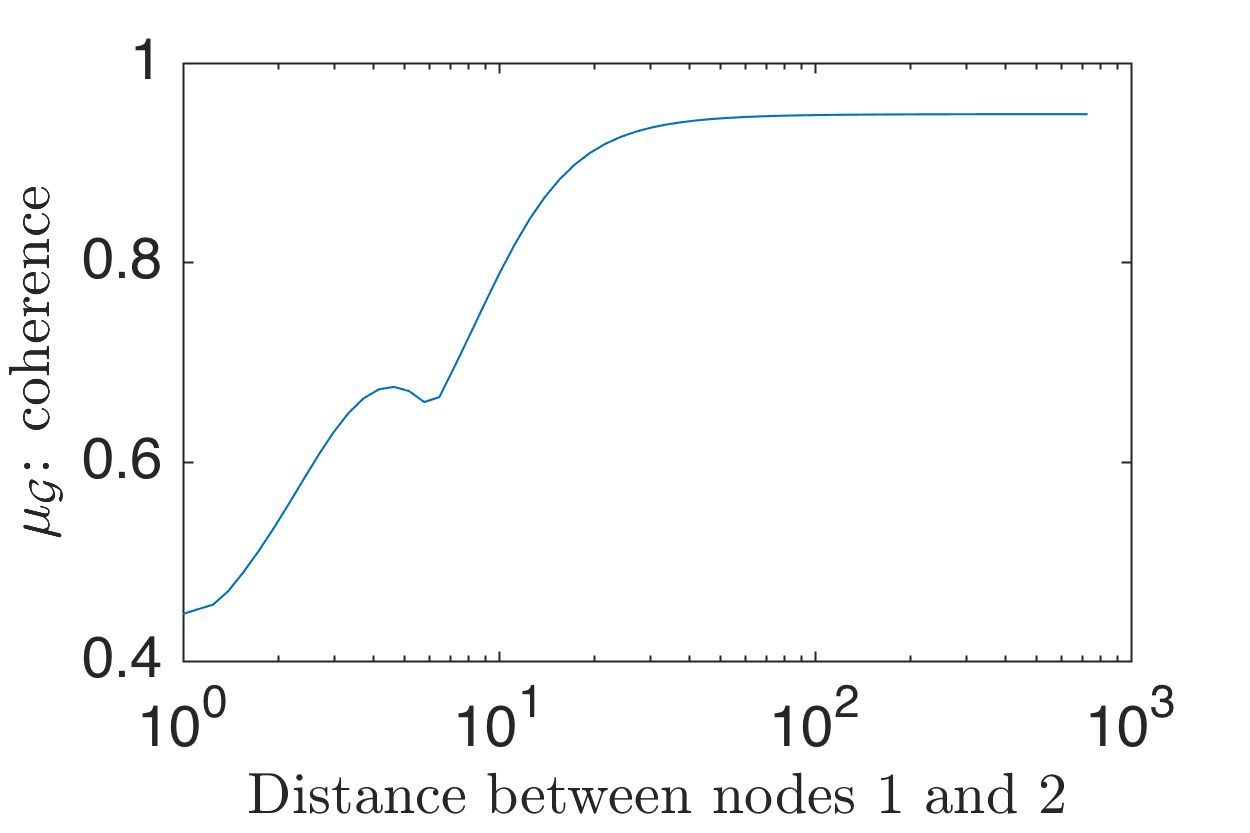}
\end{center}
\caption{Coherence between the graph Fourier basis and the canonical basis for the graphs described in Example~\ref{ex:pathgraph}. Top left: Comet graphs with  $k=6$ and $k=12$ branches, all of length one except for one of length ten. 
Top right: Evolution of the graph Fourier coherence $\mu_{\G}$ with respect to $k$. Bottom left: Example of a modified path graph with $10$ nodes. Bottom right: Evolution of the coherence of the modified path graph with respect to the distance between nodes $1$ and $2$. As the degree of the comet's center vertex increases or the first node of the modified path is pulled away, the coherence $\mu_{\G}$ tends to the limit value $\sqrt{\frac{N-1}{N}}$.}
\label{fig:intro graph topology}
\end{figure}

In Figure~\ref{fig: modified path eig}, we display the eigenvector associated with the largest graph Laplacian eigenvalue for a modified path graph of 100 nodes, for several values of the weight $W_{12}$. 
Observe that the shape of the eigenvector has a sharp local change at node 1. 

\begin{figure}[ht!]
\begin{center}
\includegraphics[width=0.75\textwidth]{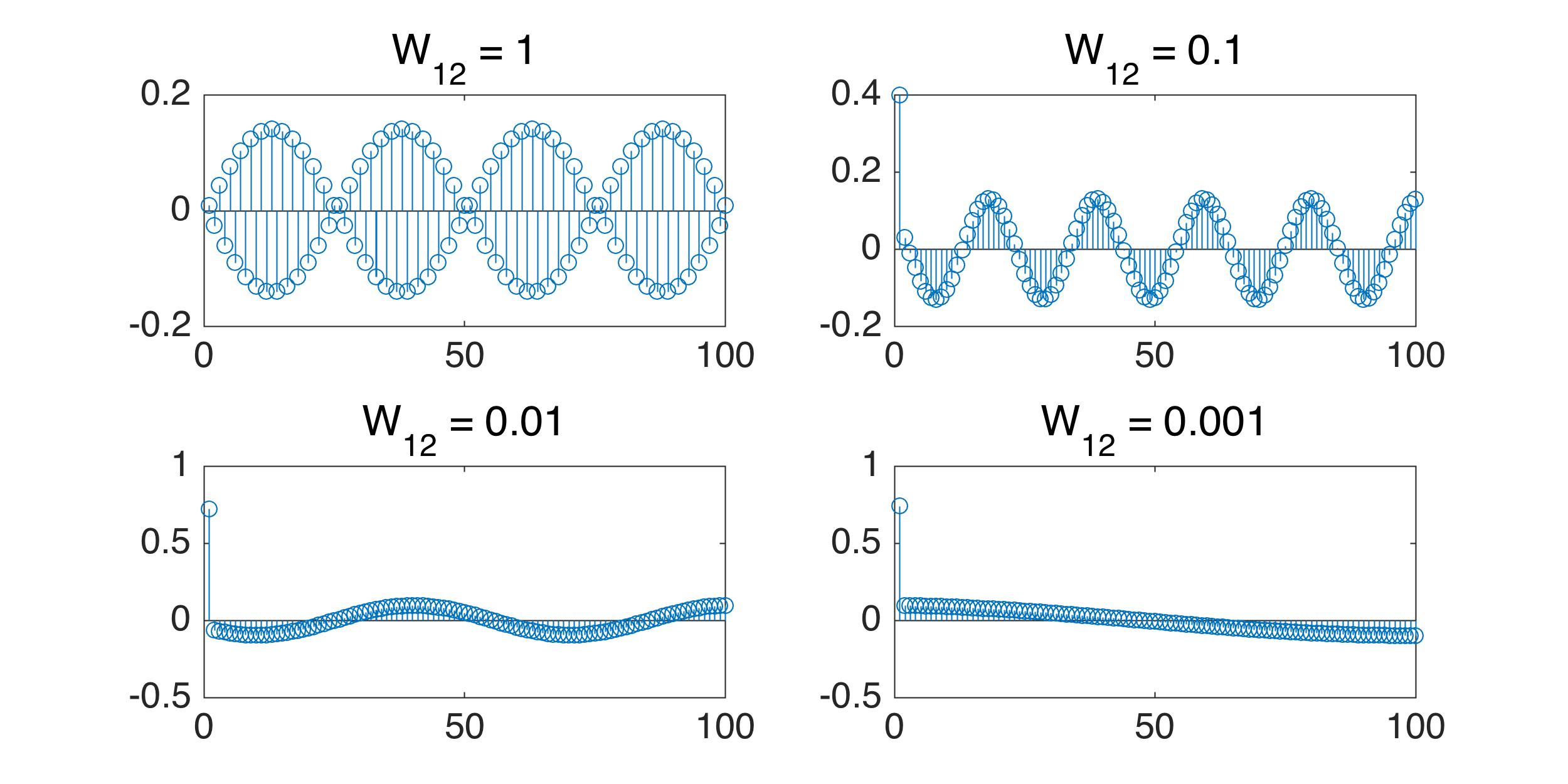}
\caption{Eigenvectors associated with the largest graph Laplacian eigenvalue of the modified path graph with $100$ nodes, for different values of $W_{12}$. As the distance between the first two nodes increases, the eigenvector becomes sharply peaked.\label{fig: modified path eig}}
\end{center}
\end{figure}
\end{example}

Example~\ref{ex:pathgraph} demonstrates an important point to keep in mind. A small local change in the graph structure can greatly affect the behavior of one eigenvector, and, in turn, a global quantity such as $\mu_\G$. However, intuitively, a small local change in the graph should not drastically change the processing of signal values far away, for example in a denoising or inpainting task. For this reason, in Section \ref{subsec:local}, we introduce a notion of local uncertainty that depicts how the graph is behaving locally.

Note that not only special classes of graphs or pathological graphs yield highly localized graph Laplacian eigenvectors. Rather, graphs arising in applications such as sensor or transportation networks, or graphs constructed from sampled manifolds (such as the graph sampled from manifold 2 in Figure \ref{fig:intro manifold}) can also have graph Fourier coherences close to 1 (see, e.g., \cite[Section 3.2]{shuman2015vertex} for further examples).






\section{Global uncertainty principles relating the concentration of graph signals in two domains} \label{Se:global_unc_princ}

In this section, we derive basic uncertainty principles using concentration measures and highlight the limitations of those uncertainty principles.

\subsection{Direct applications of uncertainty principles for discrete signals}\label{Se:direct_extensions}
We start by applying three known uncertainty principles for discrete signals to the graph setting.
\begin{theorem}\label{th:generaluncertainty}
Let $f\in\Cbb^N$ be a nonzero signal defined on a connected, weighted, undirected graph $\G$, let $\{ u_{\l}\}_{\l=0,1,\ldots,N-1}$ be a graph Fourier basis for $\G$, and let $\mu_{\G}=\max_{i,\l}|\langle\delta_i,u_{\l}\rangle|$. We have the following four uncertainty principles: 
\begin{itemize}
\item[(i)] the support uncertainty principle \cite{Elad02Generalized} 
\begin{equation}\label{eq:eladbruckstein}
\frac{\|f\|_0+\|\hat{f}\|_0}{2}\ge\sqrt{\|f\|_0\|\hat{f}\|_0}\ge\frac{1}{\mu_{\G}},
\end{equation}
\item[(ii)] the $\ell^p$-norm uncertainty principle \cite{rictorrefined}
\begin{equation} \label{eq:rictorr}
\|f\|_{p}\|\hat{f}\|_{p}\geq \mu_{\G}^{1-\frac{2}{p}}\|f\|_{2}^2 ,\qquad p\in[1,2], 
\end{equation}
\item[(iii)] the entropic uncertainty principle \cite{Maassen88generalized}
\begin{equation}\label{eq:entr}
H(f)+H(\hat{f})\ge -2\ln\mu_{\G}.
\end{equation}
\item[(iv)] the `local' uncertainty principle~\cite{folland1997} 
\begin{align}\label{eq:loc_folland}
\sum_{i\in \V_S}|f(i)|^2\leq |\V_S|  \|f\|_\infty^2 \le|\V_S|\mu_{\G}^2\|\hat{f}\|_1^2
\end{align}
for any subset $\V_S$ of the vertices $\V$ in the graph $\G$.
\end{itemize}
\end{theorem}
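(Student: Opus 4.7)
The common thread is that the change-of-basis matrix $U$ with entries $U_{i\ell}=u_\ell(i)$ is unitary (by orthonormality of the graph Fourier basis) and, by the very definition of the coherence, satisfies $\max_{i,\ell}|U_{i\ell}|=\mu_{\G}$. Consequently $U$ is bounded $\ell^1\to\ell^\infty$ with operator norm exactly $\mu_{\G}$, and the dual pointwise bounds $|f(i)|\le\mu_{\G}\|\hat f\|_1$ and $|\hat f(\ell)|\le\mu_{\G}\|f\|_1$ hold for every $f$. Part (iv) is then immediate: $\sum_{i\in\V_S}|f(i)|^2\le|\V_S|\|f\|_\infty^2\le|\V_S|\mu_{\G}^2\|\hat f\|_1^2$. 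For (i), we sharpen the first bound via Cauchy--Schwarz restricted to the support of $\hat f$: for any vertex $i$ with $f(i)\neq 0$,
\begin{equation*}
|f(i)|^2 = \Bigl|\sum_{\ell:\,\hat f(\ell)\neq 0}\hat f(\ell)\,u_\ell(i)\Bigr|^2 \le \|\hat f\|_0\,\mu_{\G}^2\,\|\hat f\|_2^2.
\end{equation*}
Summing over $i$ in the support of $f$ and applying Parseval yields $\|f\|_2^2\le\|f\|_0\|\hat f\|_0\,\mu_{\G}^2\|f\|_2^2$, hence $\|f\|_0\|\hat f\|_0\ge 1/\mu_{\G}^2$, and AM--GM produces the averaged lower bound.

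For (ii), the plan is to invoke Riesz--Thorin interpolation on $U$ between the endpoints $U:\ell^2\to\ell^2$ of norm $1$ and $U:\ell^1\to\ell^\infty$ of norm $\mu_{\G}$. Choosing the interpolation parameter so that the source exponent equals $p\in[1,2]$, the target exponent is the H\"older conjugate $p'=p/(p-1)$ and the operator norm bound becomes $\mu_{\G}^{2/p-1}$; equivalently, $\|\hat f\|_{p'}\le\mu_{\G}^{2/p-1}\|f\|_p$. Applying H\"older's inequality on the spectral side we split $\|f\|_2^2=\|\hat f\|_2^2=\sum_\ell|\hat f(\ell)|\cdot|\hat f(\ell)|\le\|\hat f\|_p\|\hat f\|_{p'}$ and substitute the interpolation bound, obtaining $\|f\|_2^2\le\mu_{\G}^{2/p-1}\|f\|_p\|\hat f\|_p$. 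Since $\mu_{\G}\le 1$ and $2/p-1\ge 0$ on the range considered, rearranging gives $\|f\|_p\|\hat f\|_p\ge\mu_{\G}^{1-2/p}\|f\|_2^2$.

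Statement (iii) I plan to derive from (ii) by infinitesimal perturbation at $p=2$, in the style of Maassen--Uffink. Normalize $\|f\|_2=1$; after taking logarithms, (ii) reads $L(p):=\ln\|f\|_p+\ln\|\hat f\|_p-(1-2/p)\ln\mu_{\G}\ge 0$ for $p\in[1,2]$, and $L(2)=0$ by Parseval. Hence the one-sided derivative satisfies $L'(2^-)\le 0$. A direct computation of $\frac{d}{dp}\ln\|g\|_p=-\frac{1}{p^2}\ln\|g\|_p^p+\frac{1}{p}\cdot\frac{\sum|g|^p\ln|g|}{\sum|g|^p}$ for a unit-$\ell^2$-norm $g$ evaluates at $p=2$ to $-\tfrac14 H(g)$, while $\frac{d}{dp}\bigl((1-2/p)\ln\mu_{\G}\bigr)\big|_{p=2}=\tfrac12\ln\mu_{\G}$. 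Substituting gives $-\tfrac14(H(f)+H(\hat f))-\tfrac12\ln\mu_{\G}\le 0$, i.e.\ $H(f)+H(\hat f)\ge-2\ln\mu_{\G}$.

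The step I expect to require the most care is (iii): one must legitimately differentiate $p\mapsto\|g\|_p$ at $p=2$ across coordinates where $|g(n)|$ vanishes, and correctly orient the one-sided inequality $L'(2^-)\le 0$ (rather than $\ge 0$, which would reverse the bound). The convention $0\ln 0=0$ together with smoothness of the finitely many nonzero terms makes this routine but easy to fumble. Everything else is a packaging of classical finite-dimensional uncertainty arguments around the single structural fact $\|U\|_{\ell^1\to\ell^\infty}=\mu_{\G}$.
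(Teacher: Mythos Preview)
Your proof is correct and follows essentially the same route as the paper. The paper itself does not spell out proofs for (i)--(iii), merely citing \cite{Elad02Generalized}, \cite{rictorrefined}, and \cite{Maassen88generalized} as direct applications once the two bases are taken to be the canonical basis and the graph Fourier basis; for (iv) it invokes the graph Hausdorff--Young inequality (Theorem~\ref{theo: Hausdorff-Young graph}), whose proof in the appendix is exactly the Riesz--Thorin interpolation you carry out for (ii). Your explicit arguments---Cauchy--Schwarz on the support for (i), Riesz--Thorin plus H\"older for (ii), the Maassen--Uffink differentiation of (ii) at $p=2$ for (iii), and the pointwise bound $\|f\|_\infty\le\mu_{\G}\|\hat f\|_1$ for (iv)---are precisely the standard proofs behind those citations, so there is no meaningful methodological difference.
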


The first uncertainty principle is given by a direct application of the Elad-Bruckstein inequality~\cite{Elad02Generalized}. It states that the sparsity of a function in one representation limits the sparsity in a second representation. As displayed in \eqref{Eq:elbr_gen}, the work of \cite{Elad02Generalized} holds for representations in any two bases. As we have seen, if we focus on the canonical basis $\{\delta_i\}_{i=1,\dots,N}$ and the graph Fourier basis $\{u_{\l}\}_{\l=0,\dots,N-1}$, the coherence $\mu_{\G}$ depends on the graph topology. For the ring graph, $\mu_{\G}=\frac{1}{\sqrt{N}}$, and we recover the result from the standard discrete case (regular sampling, periodic boundary conditions). However, for graphs where $\mu_{\G}$ is closer to 1, the uncertainty principle \eqref{eq:eladbruckstein} is much weaker and therefore less informative. For example, $\|\hat{f}\|_0 \|f\|_0 \geq \frac{1}{\mu_{\G}^2} \approx 1$ is trivially true of nonzero signals. The same caveat applies to \eqref{eq:rictorr}
 and \eqref{eq:entr}, which follow directly from \cite{rictorrefined} and \cite{Maassen88generalized}, respectively, by once again specifying the canonical and graph Fourier bases.  
 The last inequality \eqref{eq:loc_folland} is an adaptation 
 \cite[Eq. (4.1)]{folland1997} to the graph setting, 
using the Hausdorff-Young inequality of Theorem~\ref{theo: Hausdorff-Young graph} (see next section).
It states that the energy of a function in a subset of the domain is bounded from above by the size of the selected subset and the sparsity of the function in the Fourier domain. 
If the subset $\V_S$ is small and the function is sparse in the graph Fourier domain, this uncertainty principle limits the amount of energy of $f$ 
that fits insides of the subset of $\V_S$.
Because $\V_S$ can be chosen to be a local region of the domain (the graph vertex domain in our case), Folland and Sitaram \cite{folland1997} refer to such principles as ``local uncertainty inequalities.'' 
However, the term $\mu_{\G}$ in the uncertainty bound is not local in the sense that it depends on the whole graph structure and not just on the topology of the subgraph containing vertices in $\V_S$.

The following example illustrates the relation between the graph, the concentration of a specific graph signal, and one of the uncertainty principles from Theorem \ref{th:generaluncertainty}. We return to this example in Section \ref{Se:limitations} to discuss further the limitations of these uncertainty principles featuring $\mu_{\G}$.

\begin{example} \label{Ex:lpunc}
Figure~\ref{fig:path_node_away_uncertainty1} shows the computation of the quantities involved in~\eqref{eq:rictorr}, with $p=1$ and different $\G$'s taken to be the modified path graphs of Example~\ref{ex:pathgraph}, with different distances between the first two vertices. We show the lefthand side of \eqref{eq:rictorr} for two different Kronecker deltas, one centered at vertex 1, and one centered at vertex 10. We have seen in Figure \ref{fig:intro graph topology} that as the distance between the first two vertices increases, the coherence increases, and therefore the lower bound on the right-hand side of \eqref{eq:rictorr} decreases. For $\delta_1$, the uncertainty quantity on the left-hand side of \eqref{eq:rictorr} follows a similar pattern. The intuition behind this is that as the weight between the first two vertices decreases, a few of the eigenvectors start to have local jumps around the first vertex (see Figure \ref{fig: modified path eig}). As a result, we can sparsely represent $\delta_1$ as a linear combination of those eigenvectors and $||\widehat{\delta_1}||_1$ is reduced. However,  since there are not any eigenvectors that are localized around the last vertex in the path graph, we cannot find a sparse linear combination of the graph Laplacian eigenvectors to represent $\delta_{10}$. Therefore, its uncertainty quantity on the left-hand side of \eqref{eq:rictorr} does not follow the behavior of the lower bound.
\begin{figure}[ht]
\begin{center}
\includegraphics[width=0.45\textwidth]{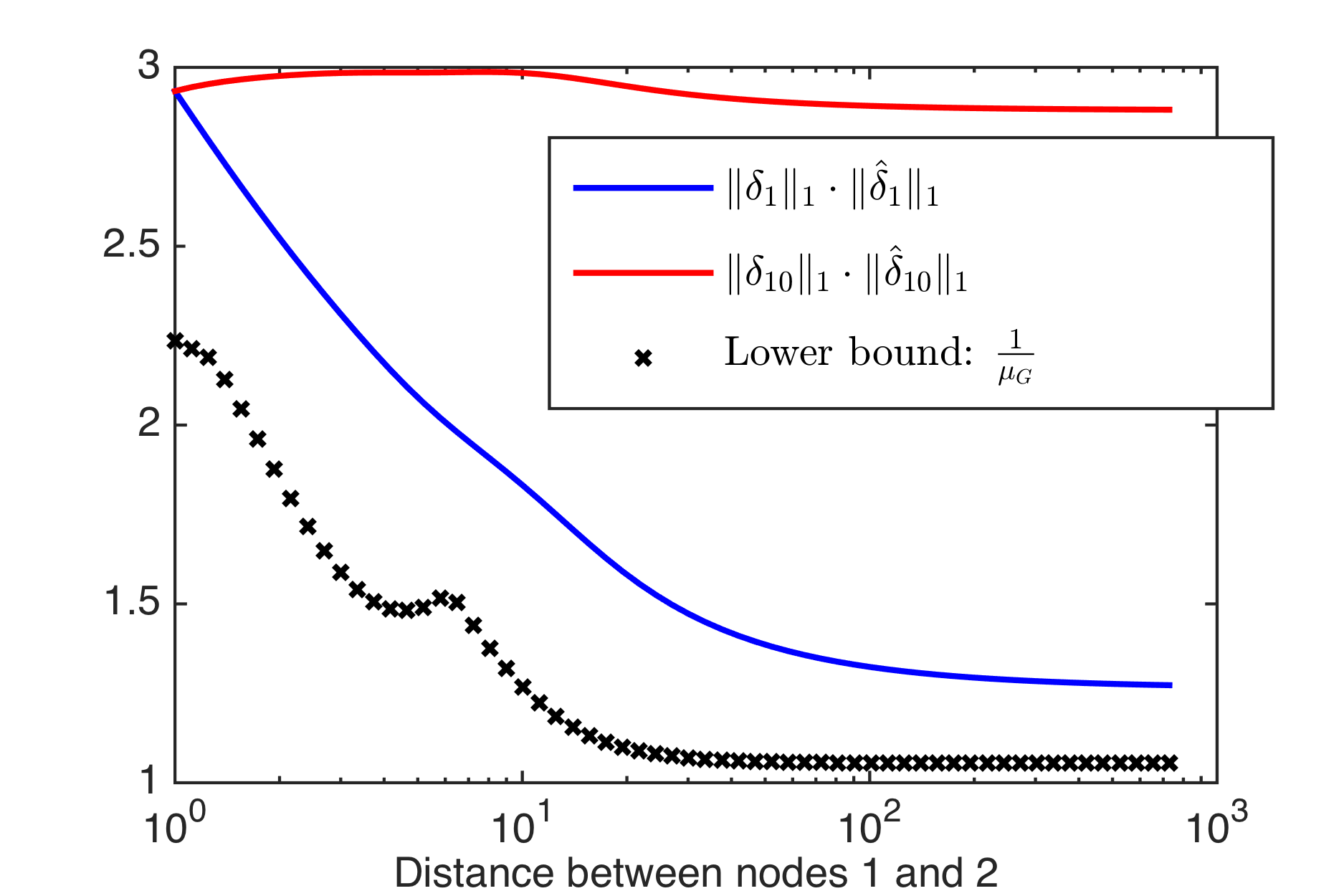}
\end{center}
\caption{Numerical illustration of the $\l^p$-norm uncertainty principle on a sequence of modified path graphs with different mutual coherences between the canonical basis of deltas and the graph Laplacian eigenvectors. For each modified path graph, the weight $W_{12}$ of the edge between the first two vertices is the reciprocal of the distance shown on the horizontal axis. The black crosses show the lower bound on the right-hand side of \eqref{eq:rictorr}, with $p=1$. The blue and red lines show the corresponding uncertainty quantity on the left-hand side of  \eqref{eq:rictorr}, for the graph signals $\delta_1$ and $\delta_{10}$, respectively.}
\label{fig:path_node_away_uncertainty1}
\end{figure}
\end{example}


\subsection{The Hausdorff-Young inequalities for signals on graphs}\label{Se:HYI}

The classical Hausdorff-Young inequality \cite[Section IX.4]{reed1975methods} is a fundamental harmonic analysis result behind the intuition that a high degree of concentration of a signal in one domain (time or frequency) implies a low degree of concentration in the other domain. This relation is used in the proofs of the entropy and $\ell^p$-norm uncertainty principles in the continuous setting. In this section, as we continue to explore the role of $\mu_{\G}$ and the differences between the Euclidean and graph settings, we extend the Hausdorff-Young inequality to graph signals.
 
\begin{theorem} \label{theo: Hausdorff-Young graph}
Let $\mu_{\G}$ be the coherence between the graph Fourier and canonical bases of a graph $\G$. Let $p,q>0$ be such that  $\frac{1}{p}+\frac{1}{q}=1$. For any signal $f \in \Cbb^N$ defined on $\G$ and $1 \leq p \leq 2$, we have
\begin{equation} \label{eq: HY 1}
\| \hat f \|_q \leq \mu_{\G}^{1-\frac{2}{q}} \| f \|_p .
\end{equation}
Conversely, for $2 \leq p \leq \infty$, we have
\begin{equation} \label{eq: HY 2}
\| \hat f \|_q \geq \mu_{\G}^{1-\frac{2}{q}} \| f \|_p.
\end{equation}
\end{theorem}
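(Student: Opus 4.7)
The plan is to prove both inequalities by Riesz--Thorin interpolation between two easily checked endpoint estimates for the graph Fourier transform $F: f \mapsto \hat f$, and then derive the $p \ge 2$ case from duality with $F^*$.

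First I would establish the two endpoints. Since $\{u_\ell\}$ is orthonormal, Parseval's identity gives $\|\hat f\|_2 = \|f\|_2$, so $F: \ell^2 \to \ell^2$ has operator norm $1$. For the other endpoint, I bound entry-wise:
\begin{equation*}
|\hat f(\lambda_\ell)| = \bigl|\sum_{n=1}^N f(n)\,\overline{u_\ell(n)}\bigr| \;\le\; \Bigl(\max_{n,\ell}|u_\ell(n)|\Bigr)\,\|f\|_1 \;=\; \mu_{\G}\,\|f\|_1,
\end{equation*}
so $F: \ell^1 \to \ell^\infty$ has operator norm at most $\mu_{\G}$.

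Next I would interpolate. By the Riesz--Thorin theorem applied to $F$, for every $t \in [0,1]$ we have $\|\hat f\|_{q_t} \le \mu_{\G}^{\,1-t}\cdot 1^{\,t}\,\|f\|_{p_t}$, where
\begin{equation*}
\frac{1}{p_t} = \frac{1-t}{1} + \frac{t}{2}, \qquad \frac{1}{q_t} = \frac{1-t}{\infty}+\frac{t}{2} = \frac{t}{2}.
\end{equation*}
Solving gives $t = 2/q$ and $1-t = 1 - 2/q$, and a quick check shows $1/p_t + 1/q_t = 1$, so $p_t$ is the conjugate exponent $p$. As $t$ ranges over $[0,1]$, $p$ ranges over $[1,2]$, and substituting yields exactly \eqref{eq: HY 1}:
\begin{equation*}
\|\hat f\|_q \;\le\; \mu_{\G}^{\,1-2/q}\,\|f\|_p.
\end{equation*}

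For the reverse inequality \eqref{eq: HY 2}, I would apply the same argument to the inverse transform $F^*: g \mapsto \sum_\ell g(\ell)\, u_\ell$, which enjoys the identical endpoint bounds $\|F^* g\|_2 = \|g\|_2$ and $\|F^* g\|_\infty \le \mu_{\G}\,\|g\|_1$ since the matrix entries have the same moduli $|u_\ell(n)|$. Interpolation therefore gives $\|F^* g\|_{p'} \le \mu_{\G}^{\,1-2/p'}\,\|g\|_{q'}$ whenever $1 \le q' \le 2$ and $1/p'+1/q'=1$. Setting $g = \hat f$, so $F^*g = f$, and then taking $q' = q$, $p' = p$ in the new regime $2 \le p \le \infty$ (which forces $1 \le q \le 2$, matching the hypothesis of the interpolated bound), I obtain $\|f\|_p \le \mu_{\G}^{\,1-2/p}\,\|\hat f\|_q$. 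Using the conjugacy identity $1-2/p = 2/q - 1 = -(1-2/q)$ and rearranging gives \eqref{eq: HY 2}. The main subtlety is simply the bookkeeping in the exponents of $\mu_{\G}$ across the two regimes; no deeper obstacle arises because both endpoints are elementary and Riesz--Thorin does all the heavy lifting.
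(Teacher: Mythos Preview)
Your proposal is correct and follows essentially the same approach as the paper: both establish the endpoints $\|\hat f\|_2=\|f\|_2$ and $\|\hat f\|_\infty\le\mu_\G\|f\|_1$ and then apply Riesz--Thorin to obtain \eqref{eq: HY 1}. For \eqref{eq: HY 2} the paper packages the argument as a ``converse Riesz--Thorin'' corollary applied to the graph Fourier transform (using that its inverse satisfies $\|f\|_\infty\le\mu_\G\|\hat f\|_1$), whereas you apply Riesz--Thorin directly to $F^*$ and then substitute $g=\hat f$; these are the same argument, just with the corollary inlined.
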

The proof of Theorem ~\ref{theo: Hausdorff-Young graph}, given in Sec.~\ref{sec:Rieszthorin}, is an extension of the classical proof using the Riesz-Thorin interpolation theorem. In the classical (infinite dimensional) setting, the inequality only depends on $p$ and $q$. On a graph, it depends on $\mu_{\G}$ and hence on the structure of the graph. 

Dividing both sides of each inequality in Theorem \ref{theo: Hausdorff-Young graph} by $\|f\|_2$ leads to bounds on the concentrations (or sparsity levels) of a graph signal 
 and its graph Fourier transform. 
 \begin{corollary} \label{Co:splevel}
 Let $p,q>0$ be such that  $\frac{1}{p}+\frac{1}{q}=1$. For any signal $f \in \Cbb^N$ defined on the graph $\G$, we have
\begin{align*}
s_p(f) s_q(\hat{f}) \leq {\mu_{\G}^{\left| 1-\frac{2}{q}\right|}}.
\end{align*}
 \end{corollary}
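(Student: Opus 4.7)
The plan is to derive the corollary as a direct consequence of Theorem~\ref{theo: Hausdorff-Young graph} by dividing through by $\|f\|_2$ (equivalently, by $\|\hat f\|_2$, since Parseval's identity guarantees $\|\hat f\|_2 = \|f\|_2$ when the graph Fourier basis is orthonormal). The only subtlety is that the definition of $s_p$ is piecewise, so I would split into two cases according to whether $p\le 2$ or $p>2$, and check that each side of each Hausdorff--Young inequality lines up with the correct numerator/denominator in the definition of $s_p$ and $s_q$.

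First, suppose $1\le p\le 2$, so the conjugate exponent satisfies $2\le q\le\infty$. Then by definition $s_p(f)=\|f\|_2/\|f\|_p$ and $s_q(\hat f)=\|\hat f\|_q/\|\hat f\|_2$. Using Parseval,
\begin{equation*}
s_p(f)\,s_q(\hat f) \;=\; \frac{\|f\|_2}{\|f\|_p}\cdot\frac{\|\hat f\|_q}{\|\hat f\|_2} \;=\; \frac{\|\hat f\|_q}{\|f\|_p},
\end{equation*}
and inequality \eqref{eq: HY 1} immediately gives $\|\hat f\|_q/\|f\|_p\le \mu_{\G}^{1-2/q}$. Since $q\ge 2$ we have $1-2/q\ge 0$, so $\mu_{\G}^{1-2/q}=\mu_{\G}^{|1-2/q|}$ and the claim holds.

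Next, suppose $2<p\le\infty$, so $1\le q<2$. Now $s_p(f)=\|f\|_p/\|f\|_2$ and $s_q(\hat f)=\|\hat f\|_2/\|\hat f\|_q$. Again by Parseval,
\begin{equation*}
s_p(f)\,s_q(\hat f) \;=\; \frac{\|f\|_p}{\|\hat f\|_q},
\end{equation*}
and the reverse inequality \eqref{eq: HY 2} gives $\|\hat f\|_q\ge \mu_{\G}^{1-2/q}\|f\|_p$, hence $\|f\|_p/\|\hat f\|_q\le \mu_{\G}^{-(1-2/q)}=\mu_{\G}^{2/q-1}$. Because $q<2$ here, the exponent $2/q-1$ is positive and equals $|1-2/q|$, as required.

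There is essentially no obstacle: the result is a one-line corollary in each case once Parseval and the appropriate inequality from Theorem~\ref{theo: Hausdorff-Young graph} are invoked. The only point deserving care is bookkeeping of the sign of $1-2/q$ so that the absolute value in the exponent $\mu_{\G}^{|1-2/q|}$ is correctly justified; it comes out naturally because the two Hausdorff--Young inequalities have opposite senses precisely when $1-2/q$ changes sign at $q=2$.
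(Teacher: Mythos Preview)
Your proof is correct and follows exactly the approach indicated in the paper: the corollary is obtained from Theorem~\ref{theo: Hausdorff-Young graph} by dividing each inequality by $\|f\|_2$ and invoking Parseval. The paper states this in one sentence without writing out the case split; you have simply filled in the bookkeeping for the two ranges of $p$, which is precisely what is needed.
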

Theorem~\ref{theo: Hausdorff-Young graph} and Corollary \ref{Co:splevel} assert that concentration or sparsity level of a graph signal in one domain (vertex or graph spectral) limits the concentration or sparsity level in the other domain. However, once again, if the coherence $\mu_{\G}$ is close to 1, the result is not particularly informative as $s_p(f) s_q(\hat{f})$ is trivially upper bounded by 1. The following numerical experiment illustrates the quantities involved in the Hausdorff-Young inequalities for graph signals. We again see that as the graph Fourier coherence increases, signals may be simultaneously concentrated in both the vertex domain and the graph spectral domain.
\begin{example}
Continuing with the modified path graphs of Examples \ref{ex:pathgraph} and \ref{Ex:lpunc}, we illustrate the bounds of the Hausdorff-Young inequalities for graph signals in Figure \ref{fig:hausdorffyoung}. For this example, we take the signal $f$ to be $\delta_1$, a Kronecker delta centered on the first node of the modified path graph. As a consequence, $\|\delta_1\|_p=1$ for all $p$, which makes it easier to compare the quantities involved in the inequalities. For this example, the bounds of Theorem \ref{theo: Hausdorff-Young graph} are fairly close to the actual values of $\|\hat{\delta_1}\|_q$.

\begin{figure}[ht!]
\centering
\hfill
\begin{minipage}[b]{.4\linewidth}
\centerline{\includegraphics[width=\linewidth]{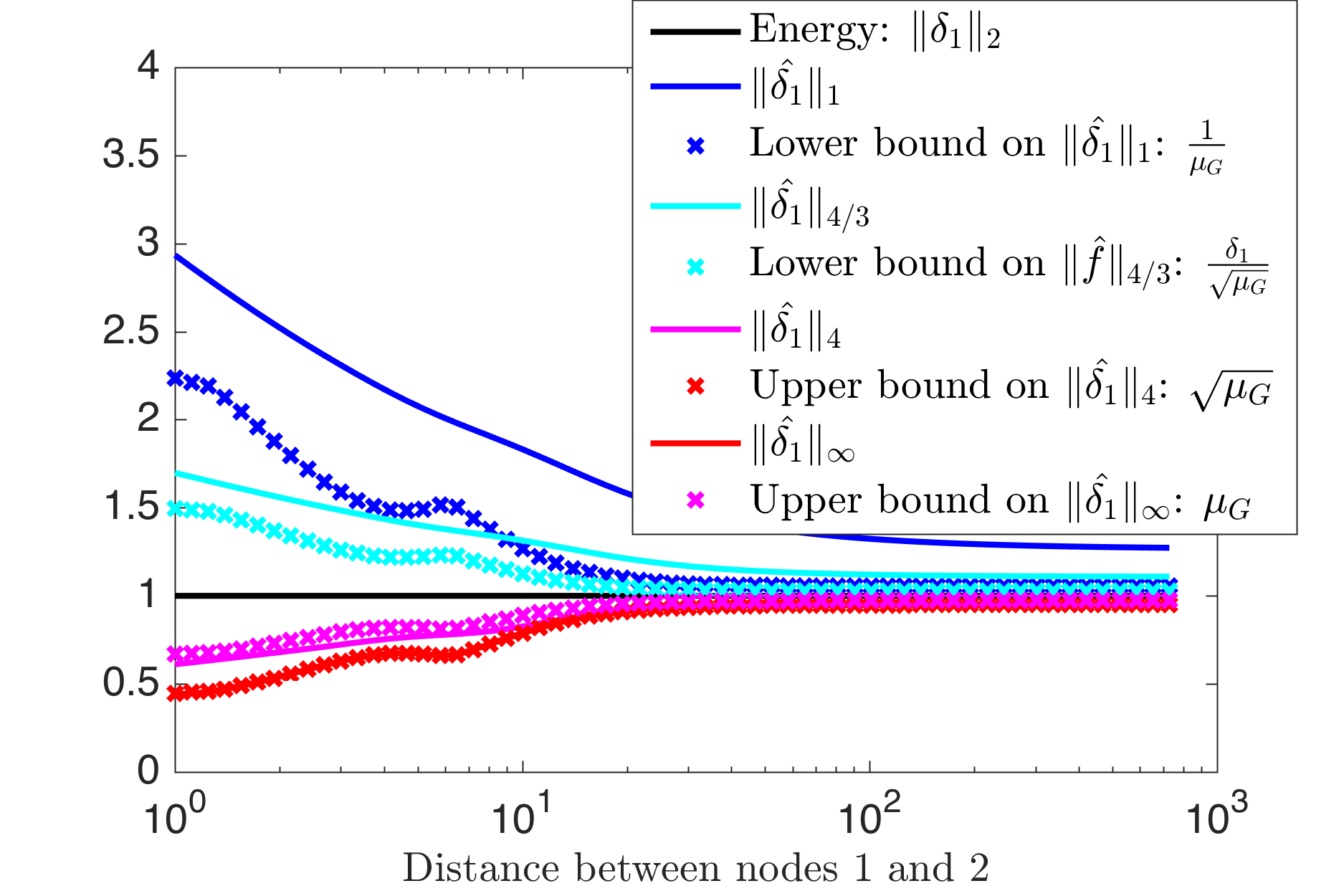}} 
\centerline{\small{~~(a)}}
\end{minipage}
\hfill
\begin{minipage}[b]{.4\linewidth}
\centerline{\includegraphics[width=\linewidth]{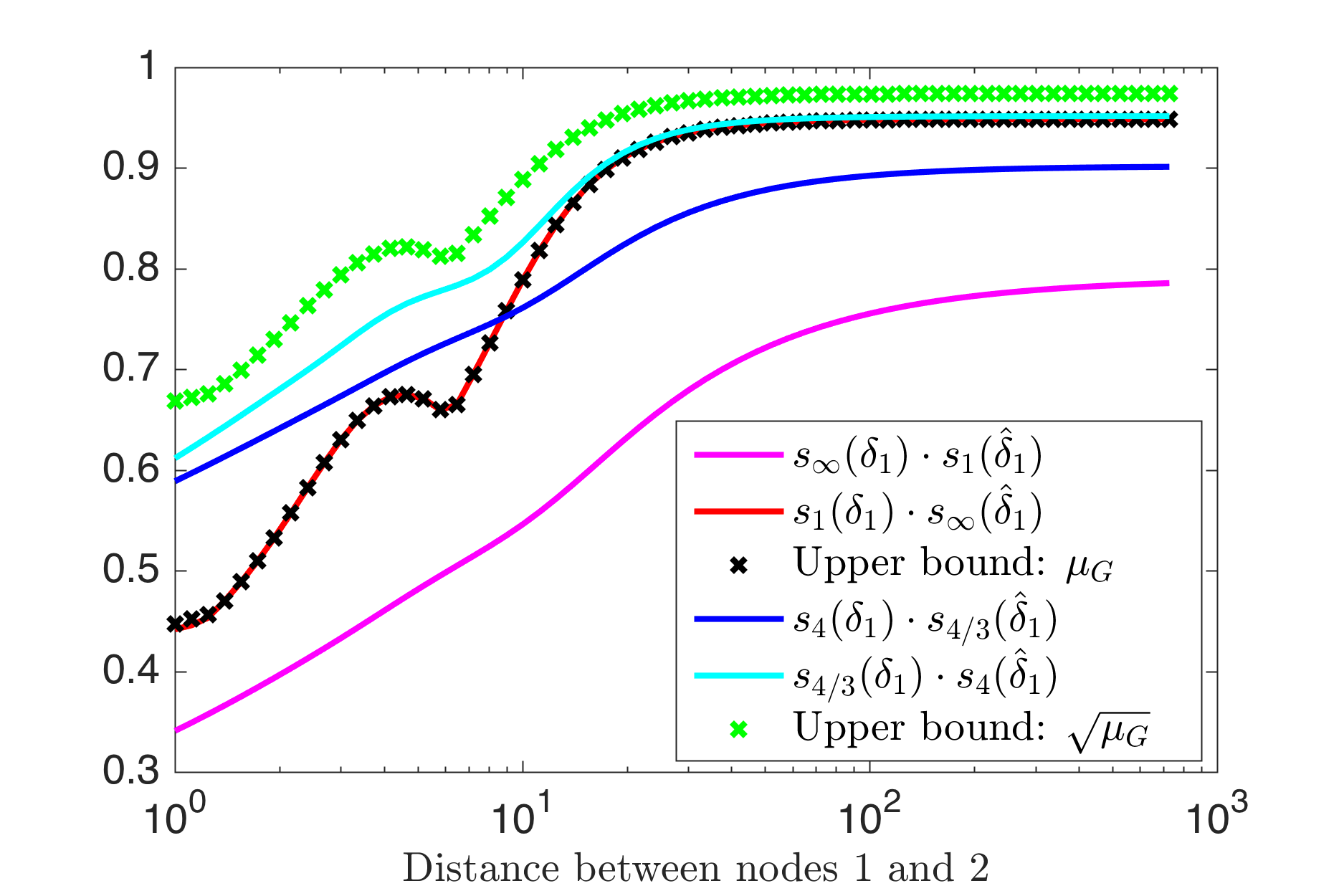}} 
\centerline{\small{~~(b)}}
\end{minipage}
\hfill
\hfill
\caption{Illustration of the bounds of the Hausdorff-Young inequalities for graph signals on the modified path graphs with $f=\delta_1$. 
(a) The quantities in \eqref{eq: HY 1} and \eqref{eq: HY 2} for $q=1,\frac{4}{3},4,$ and $\infty$. (b) The quantities in Corollary \ref{Co:splevel} for the same values of $q$. 
}
\label{fig:hausdorffyoung}
\end{figure}
\end{example}


\paragraph{Sharpness of the graph Hausdorff-Young inequalities.}
For $p=q=2$, \eqref{eq: HY 1} and \eqref{eq: HY 2}  becomes equalities. Moreover, for $p=1$ or $p=\infty$, there is always at least one signal for which the inequalities \eqref{eq: HY 1} and \eqref{eq: HY 2}  become equalities, respectively.
Let $i_1$ and $\l_1$ satisfy $\mu_{\G}=\max_{i,\l}|u_{\l}(i)|=|u_{\l_1}(i_1)|$.
For $p=1$, let $f=\delta_{i_1}$. Then $||f||_1=1$, and $||\hat{f}||_\infty=\max_{\l} |\langle \delta_{i_1},u_{\l}\rangle|=\mu_{\G}$, and thus \eqref{eq: HY 1} is tight. 
For $p=\infty$, let $f=u_{\l_1}$. Then $||f||_\infty=\mu_{\G}$, $||\hat{f}||_1=||\widehat{u_{\l_1}}||_1=1$, and thus  \eqref{eq: HY 2} is tight. The red curve and its bound in Figure ~\ref{fig:hausdorffyoung}  show the tight case for $p=1$ and $q=\infty$.


\subsection{Limitations of global concentration-based uncertainty principles in the graph setting} \label{Se:limitations}
The motivation for this section was twofold. First, we wanted to derive the uncertainty principles for graph signals analogous to some of those that are so fundamental for signal processing on Euclidean domains. However, we also want to highlight the limitations of this approach (the second family of uncertainty principles described in Section \ref{Se:intro}) in the graph setting. The graph Fourier coherence is a global parameter that depends on the topology of the entire graph. Hence, it may be greatly influenced by a small localized changes in the graph structure. For example, in the modified path graph examples above, a change in a single edge weight leads to an increased coherence, and in turn significantly weakens the uncertainty principles  characterizing the  concentrations of the graph signal in the vertex and spectral domains. Such examples call into question the ability of such global uncertainty principles for graph signals to accurately describe phenomena in inhomogeneous graphs.   This is the primary motivation for our investigation into local uncertainty principles 
 in Section \ref{subsec:local}. However, before getting there, we consider global uncertainty principles from the third family of uncertainty principles described in Section \ref{Se:intro} that bound the concentration of the analysis coefficients of a graph signal in a time-frequency transform domain.

\section{Graph signal processing operators and dictionaries}\label{sec:keyquantities}

As mentioned in Section \ref{Se:intro}, uncertainty principles can inform dictionary design. 
In the next section, we present uncertainty principles characterizing the concentration of the analysis coefficients of graph signals in different transform domains.  We focus on three different classes of dictionaries for graph signal analysis: (i) frames, (ii) localized spectral graph filter frames, and (iii) graph Gabor filter bank frames, where localized spectral graph filter frames are a subclass of frames, and graph Gabor filter bank frames are a subclass of localized spectral graph filter frames. In this section, we define these 
different classes of dictionaries, and highlight some of their mathematical properties. Note that our notation uses dictionary atoms that are double indexed by $i$ and $k$, but these could be combined into a single index $j$ for the most general case.
\begin{definition}[Frame]
A dictionary $\Dc = \{g_{i,k}\}$ is a frame if there exist constants $A$ and $B$ called the lower and upper frame bounds such that for all $f\in\Cbb^N$:
$$A\|f\|_2^2\le\sum_{i,k}|\langle f,g_{i,k}\rangle|^2\le B\|f\|_2^2.$$
If $A=B$, the frame is said to be a {tight frame}.
\end{definition}
For more properties of frames, see, e.g., \cite{christensen2003introduction,kovacevic_frames1,kovacevic_frames2}. Most of the recently proposed dictionaries for graph signals are either orthogonal bases (e.g., \cite{coifman2006diffusion,narang2012perfect,sakiyama}) , which are a subset of tight frames, or overcomplete frames (e.g., \cite{hammond2011wavelets,shuman2015vertex,shuman2013spectrum}). 

In order to define localized spectral graph filter frames,
we need to first recall one way to generalize the translation 
operator to the graph setting.

\begin{definition}[Generalized localization/translation operator on graphs \cite{hammond2011wavelets,shuman2015vertex}]\label{def:translation}
We localize (or translate) a kernel $\hat{g}$ to center vertex $i\in\{1,2,\ldots,N\}$ by applying the localization operator $\T_i$, whose action is defined as
$$
\T_ig(n) =  \sqrt{N} \sum_{\ell=0}^{N-1} \hat{g}(\lambda_\ell) \overline{\x_{\ell}(i)}\x_{\ell}(n).
$$
\end{definition}
Note that this generalized localization operator is a kernelized operator. It does not translate an arbitrary signal defined in the vertex domain to different regions of the graph, but rather localizes a pattern defined in the graph spectral domain to be centered at different regions of the graph. The smoothness of the kernel $\hat{g}(\cdot)$ to be localized can be used to bound the localization of the translated kernel around a center vertex $i$; i.e., if a smooth kernel $\hat{g}(\cdot)$ is localized to center vertex $i$, then the magnitude of $T_i g(n)$ decays as the distance between $i$ and $n$ increases \cite[Section 5.2]{hammond2011wavelets}, \cite[Section 4.4]{shuman2015vertex}. Except for special cases such as when $\G$ is a circulant graph with $\mu_{\G}=\frac{1}{\sqrt{N}}$ and the Laplacian eigenvectors are the DFT basis, the generalized localization operator of Definition \ref{def:translation} is not isometric. Rather, we have 
\begin{lemma}[\cite{shuman2015vertex}, Lemma 1] \label{Le:Tisom}
For any $g \in \Cbb^N$, 
\begin{align}\label{Eq:Tisometric}
|\hat{g}(0)| \leq ||T_i g||_2 \leq \sqrt{N} \nu_i ||\hat{g}||_2 \leq \sqrt{N} \mu_{\G} ||\hat{g}||_2,
\end{align}
which yields the following upper bound on the operator norm of $T_i$:
\begin{align*}
||T_i ||_{op} = \sup_{g \in \Cbb^N} \frac{||T_i g||_2}{||\hat{g}||_2}\leq \sqrt{N} \nu_i  \leq \sqrt{N} \mu_{\G},
\end{align*}
where $\nu_i=\max_{\l} \left|u_{\l}(i)\right|$.
\end{lemma}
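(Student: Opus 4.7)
The plan is to compute $\|T_i g\|_2^2$ explicitly using the orthonormality of the graph Laplacian eigenbasis, and then sandwich it by isolating the contribution of a single eigenvalue on the one side and by pulling out the supremum of $|u_\ell(i)|$ together with Parseval on the other.

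First I would rewrite the localization operator in the Laplacian eigenbasis. Letting $c_\ell := \hat{g}(\lambda_\ell)\,\overline{u_\ell(i)}$, Definition~\ref{def:translation} gives $T_i g = \sqrt{N}\sum_{\ell} c_\ell\, u_\ell$. Since $\{u_\ell\}$ is orthonormal, Parseval yields
\begin{equation*}
\|T_i g\|_2^2 \;=\; N\sum_{\ell=0}^{N-1} |\hat{g}(\lambda_\ell)|^2\,|u_\ell(i)|^2.
\end{equation*}
This identity is the key object; both bounds follow by manipulating the weights $|u_\ell(i)|^2$ that appear in this sum.

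For the lower bound, I would single out the term $\ell=0$. Because $\G$ is connected and we use the combinatorial Laplacian, the eigenvector associated with $\lambda_0=0$ is the constant vector $u_0 \equiv 1/\sqrt{N}$, so $|u_0(i)|^2 = 1/N$ for every $i$. Dropping all other (nonnegative) terms gives $\|T_i g\|_2^2 \geq N\cdot|\hat{g}(0)|^2\cdot(1/N) = |\hat{g}(0)|^2$, which is exactly $\|T_i g\|_2\geq|\hat{g}(0)|$. For the upper bound, I would factor the sum as
\begin{equation*}
N\sum_{\ell}|\hat{g}(\lambda_\ell)|^2\,|u_\ell(i)|^2 \;\leq\; N\,\bigl(\max_{\ell}|u_\ell(i)|^2\bigr)\sum_{\ell}|\hat{g}(\lambda_\ell)|^2 \;=\; N\,\nu_i^{\,2}\,\|\hat{g}\|_2^{\,2},
\end{equation*}
using Parseval again to identify $\sum_\ell|\hat{g}(\lambda_\ell)|^2=\|\hat{g}\|_2^2$. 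Taking square roots and then using the trivial inequality $\nu_i = \max_\ell|u_\ell(i)|\leq \max_{i',\ell}|u_\ell(i')| = \mu_{\G}$ delivers the full chain.

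The operator-norm statement is then immediate: the upper bound $\|T_i g\|_2 \leq \sqrt{N}\,\nu_i\,\|\hat{g}\|_2$ holds for every $g$, so dividing by $\|\hat{g}\|_2$ and taking the supremum gives $\|T_i\|_{op}\leq \sqrt{N}\,\nu_i\leq \sqrt{N}\,\mu_{\G}$. I do not foresee any substantive obstacle; the only point that warrants explicit mention is the fact that $u_0$ is the constant vector (which depends on both connectedness and on our choice of the combinatorial Laplacian), since this is what pins down the lower bound and is the most easily overlooked ingredient.
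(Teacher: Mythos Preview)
Your argument is correct. Note, however, that the paper does not actually prove this lemma: it is quoted verbatim from \cite{shuman2015vertex} and no proof is given in the present paper, so there is nothing to compare against here. Your derivation is the standard one and matches what one finds in the cited reference. One cosmetic remark: the identity $\sum_\ell |\hat{g}(\lambda_\ell)|^2 = \|\hat{g}\|_2^2$ is simply the definition of the $\ell^2$-norm of the spectral vector $\hat{g}$, not an application of Parseval; Parseval is what you used in the first step to pass from $\|T_i g\|_2^2$ to $N\sum_\ell |c_\ell|^2$.
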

It is interesting to note that although the norm is not preserved when a kernel is localized on an arbitrary graph, it is preserved on average when translated to separately to every vertex on the graph:
\begin{eqnarray}\label{eq: sum translation conservation energy} 
\frac{1}{N}\sum_{i=1}^{N}\|T_{i}g\|_{2}^{2} = \sum_{i=1}^{N}\sum_{\ell=0}^{N-1}\left|\hat{g}(\lambda_{\ell})\bar{\x}_{{\ell}}(i)\right|^{2} =  \sum_{\ell=0}^{N-1}\left|\hat{g}(\lambda_{\ell})\right|^{2}\sum_{i=1}^{N}\left|\bar{\x}_{{\ell}}(i)\right|^{2} =  \|\hat{g}\|_{2}^{2}. 
\end{eqnarray}

The following example presents more precise insights on the interplay between the localization operator, the graph structure, and the 
concentration of localized functions.
\begin{example} \label{Ex:isom}
Figure~\ref{fig: uncertainty shift} illustrates the effect of the graph structure on the norms of localized functions.
We take the kernel to be localized to be a heat kernel of the form $\hat{g}(\lambda_\ell) = e^{-\tau \lambda_\ell}$, for 
 some constant $\tau>0$. We localize the kernel $\hat{g}$ to be centered at each vertex $i$ of the graph with the operator $\T_i$, and we compute and plot their $\ell^2$-norms $\|\T_ig\|_2$. The figure shows that when a center node $i$ and its surrounding vertices are relatively weakly connected, the $\ell^2$-norm of the localized heat kernel is large, and when the nodes are relatively well connected, the norm is smaller. Therefore, the norm of the localized heat kernel may be seen as a measure of vertex centrality.\footnote{In fact, the square norm of the localized heat kernel at vertex $i$ is, up to constants, the average diffusion distance from $i$ to all other vertices. It is therefore a genuine measure of centrality.} 
Moreover, in the case of the heat kernel, we can relate the $\ell^2$-norm of $T_i g$ to its 
concentration $s_1(T_1 g)$. 
Localized heat kernels are comprised entirely of nonnegative components; i.e., $\T_ig(n)\ge0$ for all $i$ and $n$. 
This property comes from (i) the fact that $\T_ig(n)=\left(\hat{g}(\L)\right)_{in}$ (see~\cite{hammond2011wavelets}), and (ii) the non-trivial property that the entries of $\hat{g}(\L)$ are always nonnegative for the heat kernel~\cite{metzger}. Since $\T_ig(n) \ge 0$ for all $i$ and $n$,
we have
\begin{align} \label{Eq:one_norm_trans_window}
\|\T_ig\|_1= \sum_{n=1}^N T_i g(n)=\sqrt{N}\hat{g}(0)=\sqrt{N},  
\end{align}
where the second equality follows from \cite[Corollary 1]{shuman2015vertex}. Thus, recalling that a large value for $s_1(T_i g)$ means that $T_i g$ is concentrated, we can combine \eqref{Eq:Tisometric}
 and \eqref{Eq:one_norm_trans_window} to derive an upper bound on the concentration of $T_i g$:
$$s_1(T_i g) = \frac{||T_i g||_2}{||T_i g||_1}=\frac{||T_i g||_2}{\sqrt{N}} \leq \nu_i ||\hat{g}||_2.$$
Thus, $||T_i g||_2$ serves as  a measure of concentration, and  according to the numerical experiments of Figure \ref{fig: uncertainty shift}, localized heat kernels centered on the relatively well-connected regions of a graph tend to be less concentrated than the ones centered on relatively less well-connected areas. 
Intuitively, the values of the localized heat kernels can be linked to the diffusion of a unit of energy from the center vertex to surrounding vertices over a fixed time. In the well-connected regions of the graph, energy diffuses faster, making the localized heat kernels less concentrated.

%

\begin{figure}
\begin{center}
\includegraphics[width=0.3\textwidth]{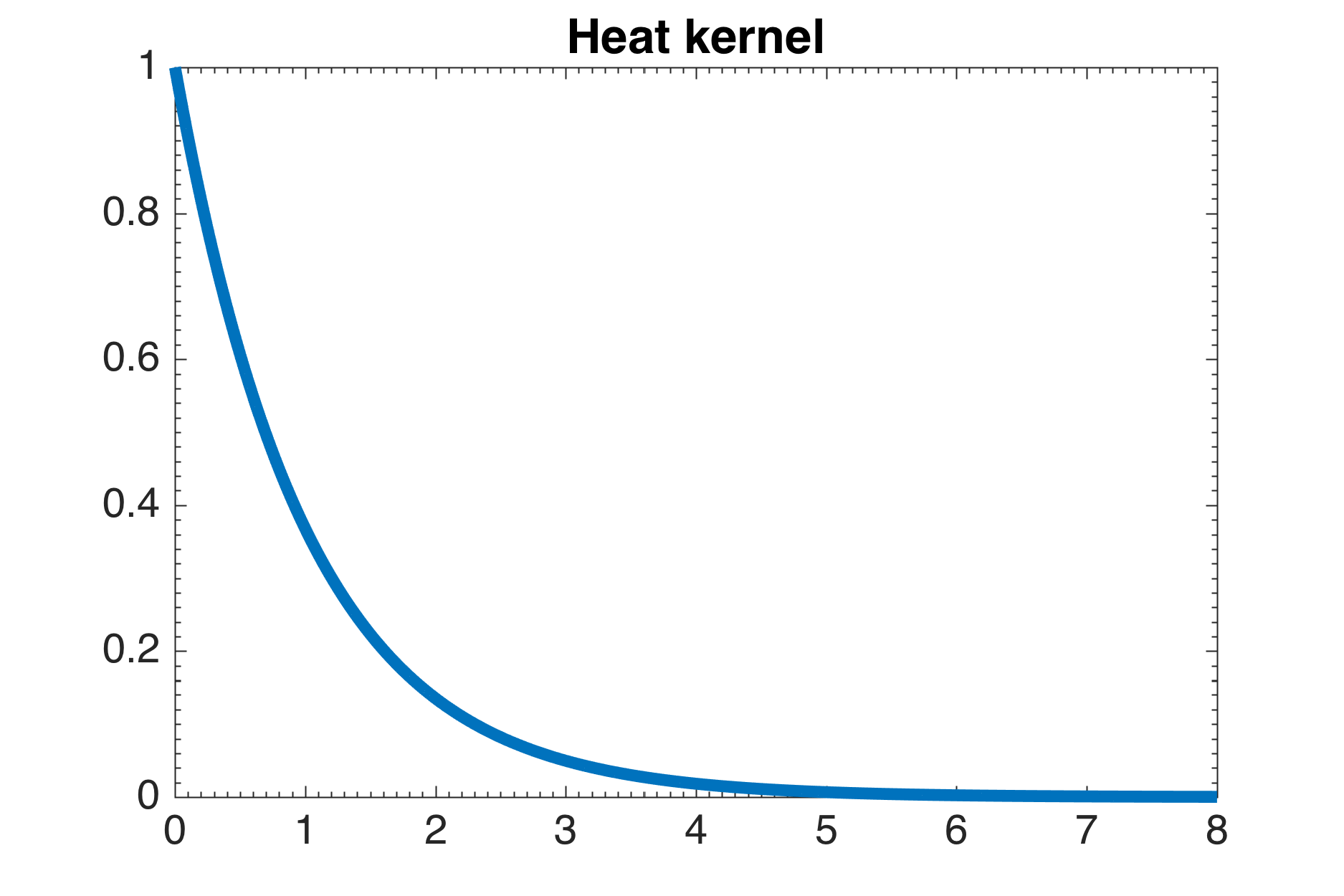} 
\includegraphics[width=0.3\textwidth]{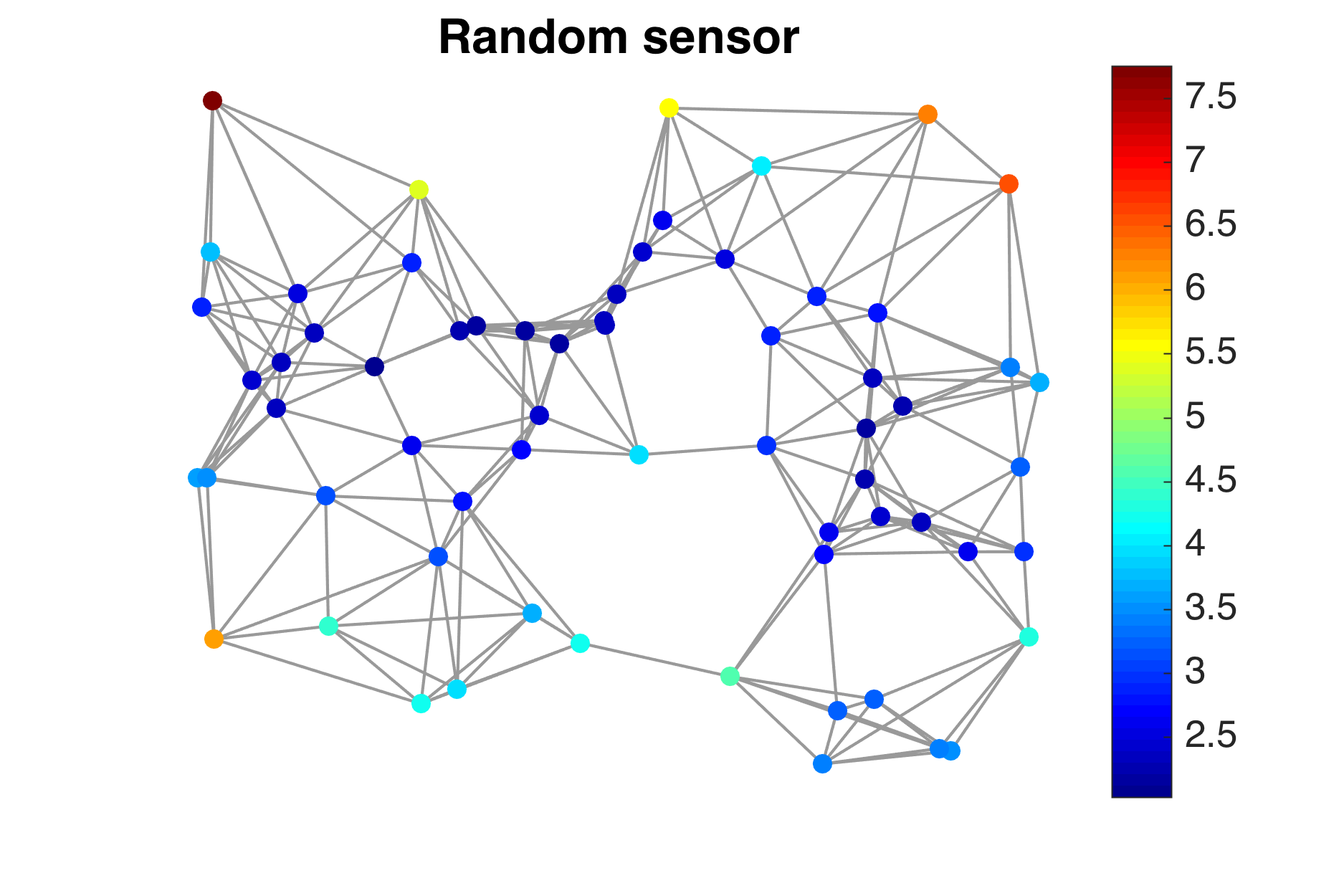} 
\includegraphics[width=0.3\textwidth]{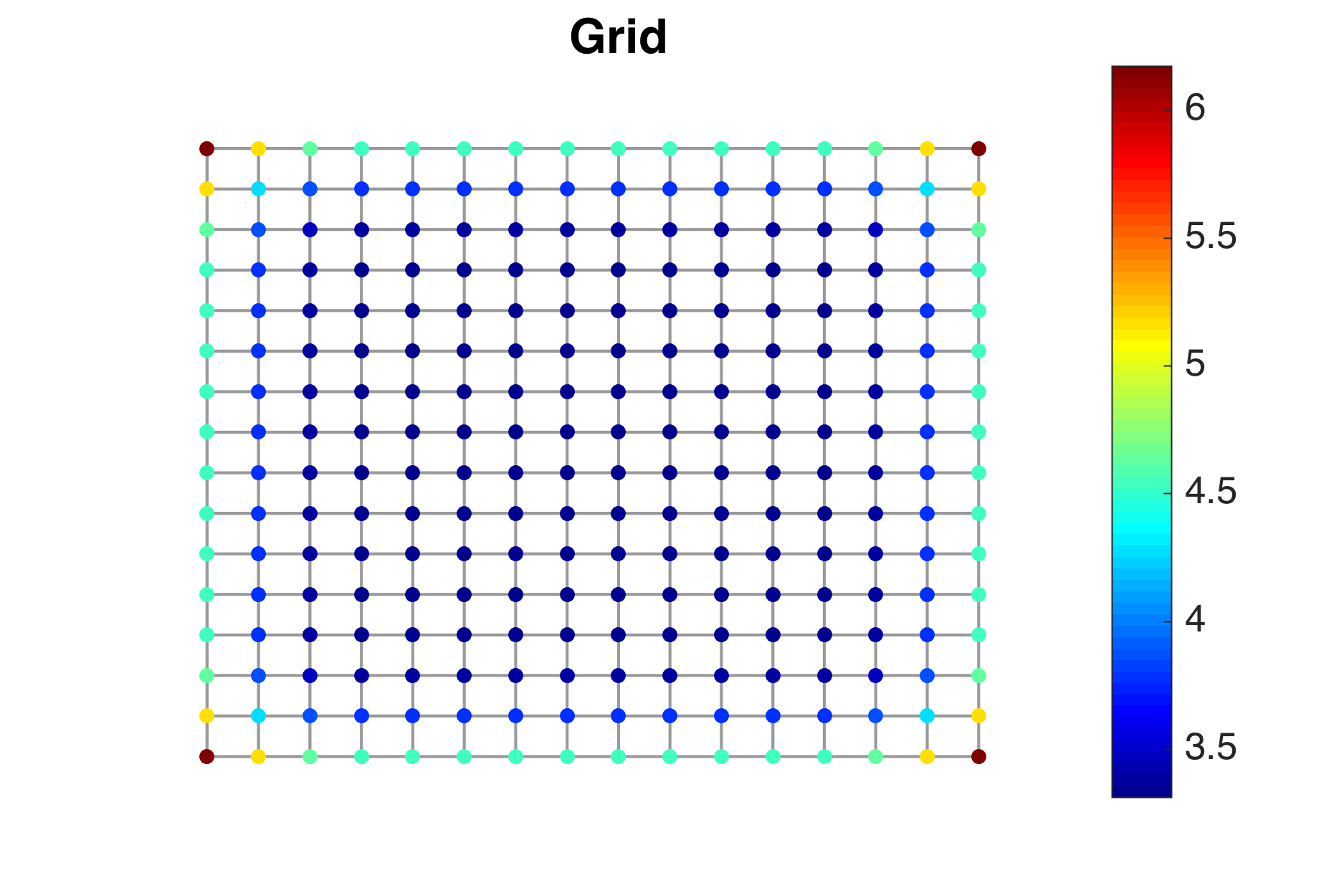} 
\par
\includegraphics[width=0.3\textwidth]{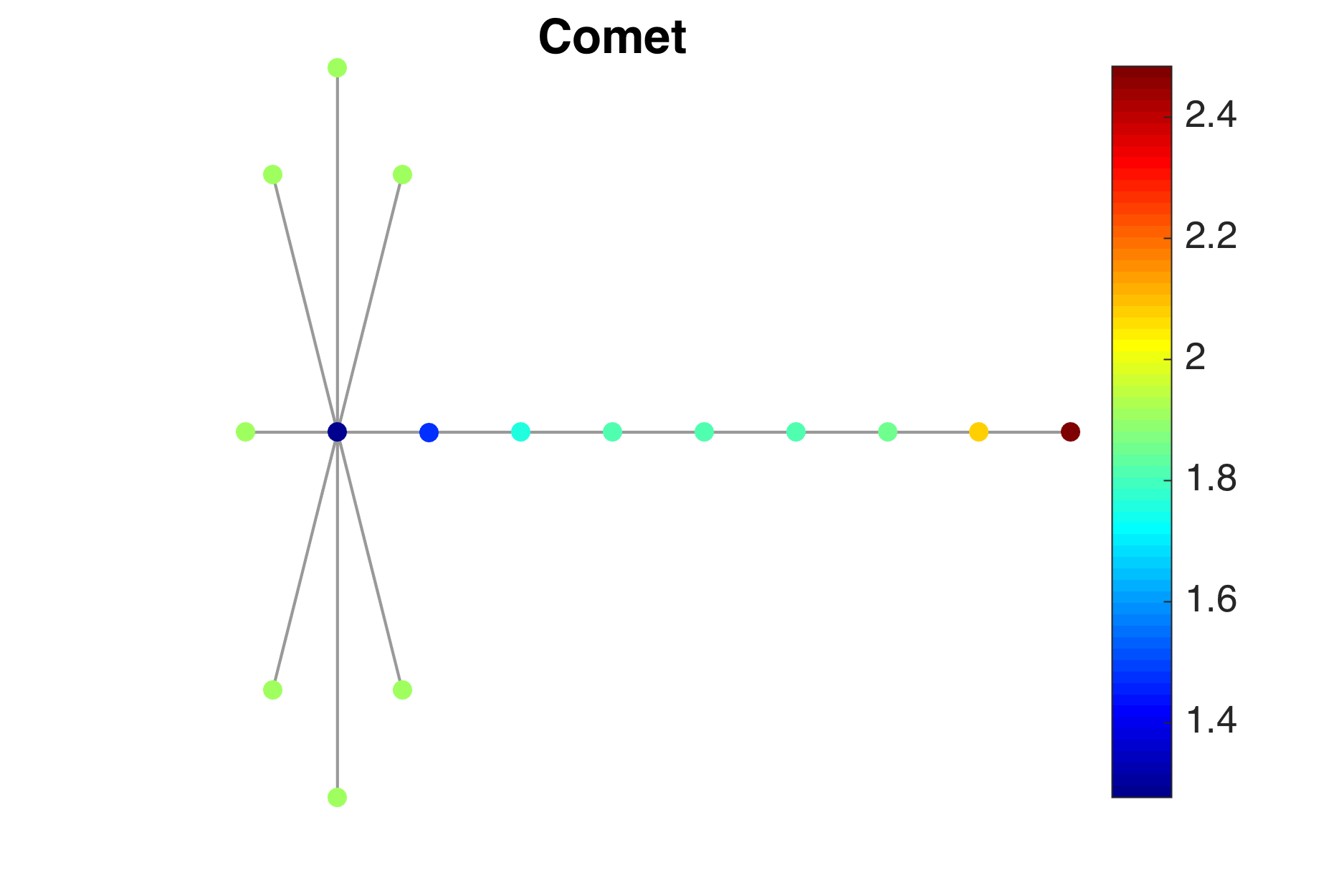} 
\includegraphics[width=0.3\textwidth]{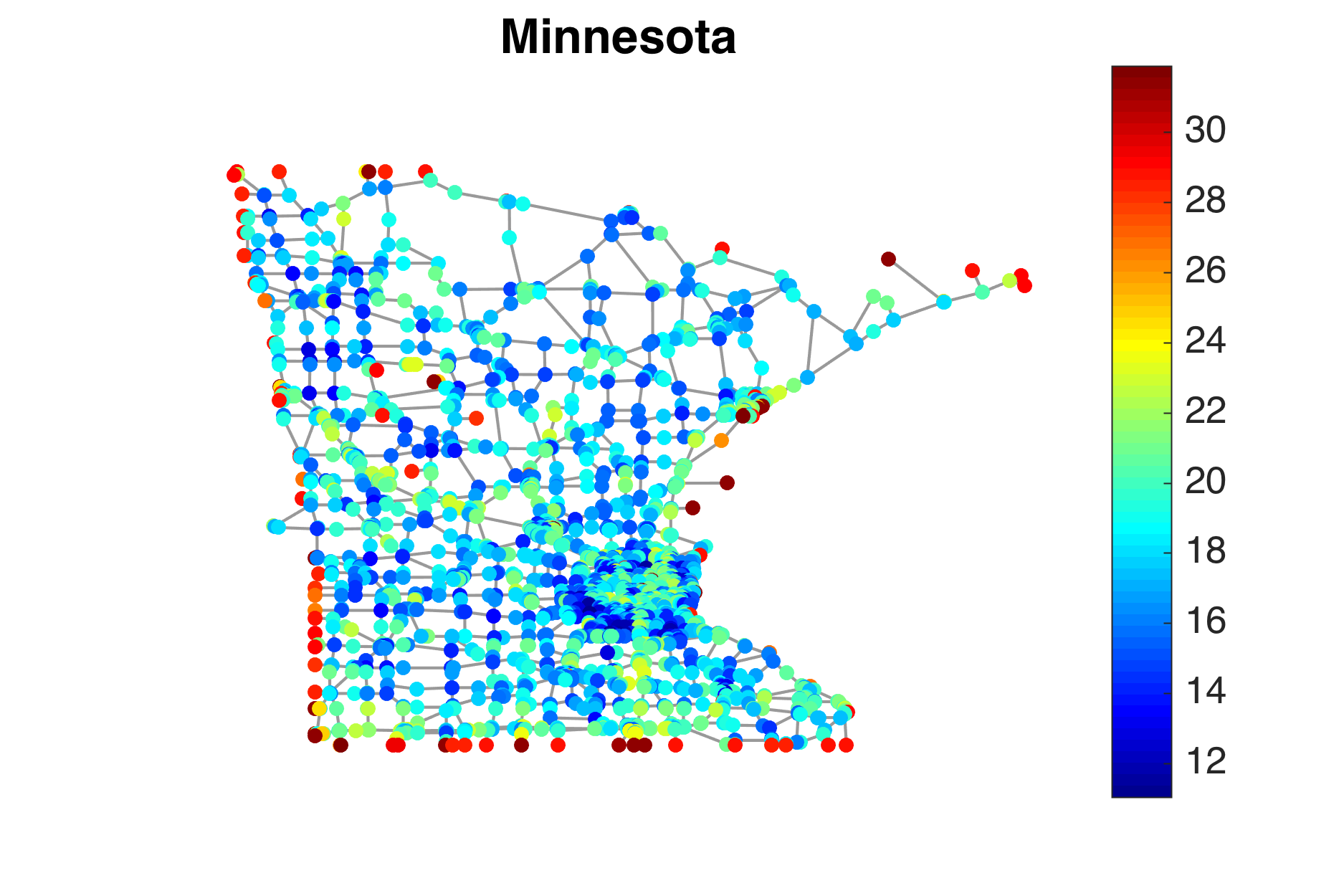} 
\includegraphics[width=0.3\textwidth]{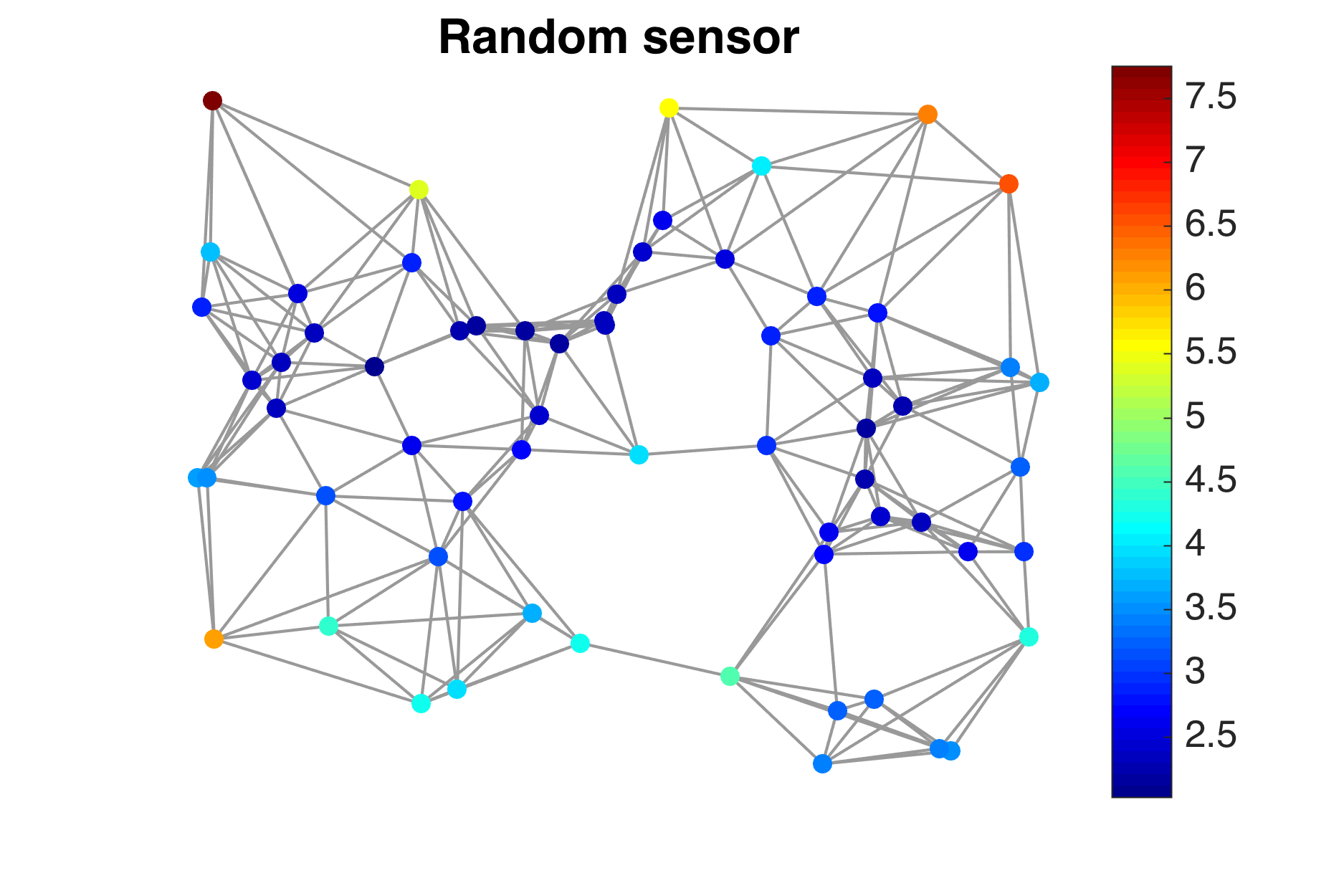} 
\end{center}
\caption{\label{fig: uncertainty shift} 
The heat kernel $\hat{g}(\lambda_\ell) = e^{-10\frac{\lambda_\ell}{\lambda_\text{max}} }$ (upper left), and the norms of the localized heat kernels, $\left\{||T_i{g}||_2\right\}_{i=1,2,\ldots,N}$, on various graphs.
For each graph and each center node $i$, the color of vertex $i$ is proportional to the value of $\|T_ig\|_2$. 
Within each graph, nodes $i$ that are relatively less connected to their neighborhood seem to yield a larger norm $\|T_ig\|_2$.}
\end{figure}

\end{example}

The main class of dictionaries for graph signals that we consider is localized spectral graph filter frames. 
\begin{definition}[Localized spectral graph filter frame]
Let  $\gg=\{\widehat{g_0}(\cdot),\widehat{g_1}(\cdot),\ldots,\widehat{g_{K-1}}(\cdot)\}$ be a sequence of kernels (or filters), where  each $\hg_k:\sigma(\L)\to \Cbb$
is a function defined on the graph Laplacian spectrum $\sigma(\L)$ of a graph $\G$. Define the quantity $G(\lambda):=\sum_{k=0}^{K-1}|\widehat{g_k}(\lambda_\ell)|^2$. Then
$\Dc_{\gg}=\{g_{i,k}\}=\{T_i g_k\}$ is a localized spectral graph filter dictionary, and it forms a frame if $G(\lambda)>0$ for all $\lambda \in \sigma(\L)$.
\end{definition}
In practice, each filter $\widehat{g_k}(\cdot)$ is often defined as a continuous function over the interval $[0,\lambda_{\max}]$ and then applied to the discrete set of eigenvalues in $\sigma(\L)$. The following lemma characterizes the frame bounds for a localized spectral graph filter frame.
\begin{lemma}[\cite{shuman2013spectrum}, Lemma 1]
Let  $\Dc_{\gg}=\{g_{i,k}\}=\{T_i g_k\}$ be a localized spectral graph filter frame of atoms on a graph $\G$ generated from the sequence of filters $\gg=\{\widehat{g_0}(\cdot),\widehat{g_1}(\cdot),\ldots,\widehat{g_{K-1}}(\cdot)\}$. The lower and upper frame bounds for $\Dc_{\gg}$ are given by $A=N\cdot \min_{\lambda \in \sigma(\L)} G(\lambda)$ and $B=N\cdot \max_{\lambda \in \sigma(\L)} G(\lambda)$, respectively. If $G(\lambda)$ is constant over $\sigma(\L)$, then  $\Dc_{\gg}$ is a tight frame.
\end{lemma}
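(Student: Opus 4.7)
The plan is to explicitly evaluate the frame operator quadratic form $\sum_{i,k}|\langle f, T_i g_k\rangle|^2$ in closed form in the graph spectral domain, and then read off the bounds by inserting the extrema of $G(\lambda)$.

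First I would rewrite $\langle f, T_ig_k\rangle$ using Parseval together with Definition~\ref{def:translation}. From the definition, $T_ig_k(n)=\sqrt{N}\sum_{\ell}\widehat{g_k}(\lambda_\ell)\overline{u_\ell(i)}u_\ell(n)$, so the graph Fourier coefficients of the atom are $\widehat{T_ig_k}(\lambda_\ell)=\sqrt{N}\,\widehat{g_k}(\lambda_\ell)\overline{u_\ell(i)}$. Applying Parseval in the graph spectral domain gives
\begin{equation*}
\langle f, T_ig_k\rangle \;=\; \sum_{\ell=0}^{N-1}\hat f(\lambda_\ell)\,\overline{\widehat{T_ig_k}(\lambda_\ell)} \;=\; \sqrt{N}\sum_{\ell=0}^{N-1}\hat f(\lambda_\ell)\,\overline{\widehat{g_k}(\lambda_\ell)}\,u_\ell(i).
\end{equation*}

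Next I would take the squared modulus and sum over the vertices $i$, using orthonormality $\sum_i u_\ell(i)\overline{u_{\ell'}(i)}=\delta_{\ell\ell'}$ to collapse the double sum that appears after expanding $|\cdot|^2$:
\begin{equation*}
\sum_{i=1}^{N}|\langle f, T_ig_k\rangle|^2 \;=\; N\sum_{\ell,\ell'}\hat f(\lambda_\ell)\overline{\hat f(\lambda_{\ell'})}\,\overline{\widehat{g_k}(\lambda_\ell)}\widehat{g_k}(\lambda_{\ell'})\sum_{i}u_\ell(i)\overline{u_{\ell'}(i)} \;=\; N\sum_{\ell}|\widehat{g_k}(\lambda_\ell)|^2\,|\hat f(\lambda_\ell)|^2.
\end{equation*}
Summing this identity over $k\in\{0,\ldots,K-1\}$ brings in exactly $G(\lambda_\ell)=\sum_k|\widehat{g_k}(\lambda_\ell)|^2$, so
\begin{equation*}
\sum_{i,k}|\langle f, T_ig_k\rangle|^2 \;=\; N\sum_{\ell=0}^{N-1} G(\lambda_\ell)\,|\hat f(\lambda_\ell)|^2.
\end{equation*}

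Finally, bounding $G(\lambda_\ell)$ pointwise by $\min_{\lambda\in\sigma(\L)} G(\lambda)$ below and $\max_{\lambda\in\sigma(\L)} G(\lambda)$ above, then applying Parseval once more via $\sum_\ell|\hat f(\lambda_\ell)|^2=\|f\|_2^2$, yields
\begin{equation*}
N\min_{\lambda\in\sigma(\L)} G(\lambda)\,\|f\|_2^2 \;\le\; \sum_{i,k}|\langle f, T_ig_k\rangle|^2 \;\le\; N\max_{\lambda\in\sigma(\L)} G(\lambda)\,\|f\|_2^2,
\end{equation*}
which gives the claimed frame bounds $A$ and $B$. Tightness when $G$ is constant is then immediate since the two bounds coincide. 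I do not expect a genuine obstacle here: the computation is routine once the $\sqrt{N}$ normalization in $T_i$ and the complex conjugation in the inner product are handled carefully, and the hypothesis $G(\lambda)>0$ on $\sigma(\L)$ ensures $A>0$ so that the dictionary is genuinely a frame rather than merely a Bessel sequence.
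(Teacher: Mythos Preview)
The paper does not include its own proof of this lemma; it is simply quoted from \cite{shuman2013spectrum}. Your argument is correct and is the standard computation for this result: pass to the spectral domain via Parseval, use orthonormality of the $\{u_\ell\}$ to collapse the sum over $i$, and read off the bounds from the resulting identity $\sum_{i,k}|\langle f,T_ig_k\rangle|^2=N\sum_\ell G(\lambda_\ell)|\hat f(\lambda_\ell)|^2$. One small addition you might make: the phrasing ``the lower and upper frame bounds are given by'' suggests optimality of $A$ and $B$, not merely validity; your identity already yields this, since choosing $f=u_{\ell^*}$ with $\ell^*$ minimizing (respectively maximizing) $G(\lambda_\ell)$ attains each bound.
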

Examples of localized spectral graph filter frames include the spectral graph wavelets of \cite{hammond2011wavelets}, the Meyer-like tight graph wavelet frames of \cite{leonardi_fmri,leonardi_multislice}, the spectrum-adapted wavelets and vertex-frequency frames of \cite{shuman2013spectrum}, and the learned parametric dictionaries of \cite{thanou_TSP_2014}.
The dictionaries constructions in \cite{hammond2011wavelets, shuman2013spectrum} choose the filters so that their energies are localized in different spectral bands. Different choices of filters lead to different tilings of the vertex-frequency space, and can for example lead to wavelet-like frames or vertex-frequency frames (analogous to classical windowed Fourier frames). The frame condition that $G(\lambda)>0$ for all $\lambda \in \sigma(\L)$ ensures that these filters cover the entire spectrum, so that no band of information is lost during analysis and reconstruction. 

In this paper, in order to generalize classical windowed Fourier frames, we often use a localized graph spectral filter bank where the kernels are uniform translates, which we refer to as a graph Gabor filter bank.
\begin{definition}[Graph Gabor filter bank]
When the $K$ kernels used to generate the localized graph spectral filter frame are uniform translates of each other, we refer to the resulting dictionary as a \emph{graph Gabor filter bank} or a \emph{graph Gabor filter frame}. If we use the warping technique of \cite{shuman2013spectrum} on these uniform translates, we refer to the resulting dictionary as a \emph{spectrum-adapted graph Gabor filter frame}.
\end{definition}
Graph Gabor filter banks are generalizations of the short time Fourier transform. When $\hat{g}$ is smooth, the atoms are localized in the vertex domain.
In this contribution, for all graph Gabor filter frames, we use the following mother window:
$\hat{g}(t) = \sin \left( 0.5 \pi \cos (\pi (t-0.5))^2 \right), $ for $t\in [-0.5, 0.5]$ and $0$ elsewhere. A few desirable properties of this choice of window are (a) it is perfectly localized in the spectral domain in $[-0.5 ,0.5]$, (b) it is smooth enough to be approximated by a low order polynomial, and (c) the frame formed by uniform translates (with an even overlap) is tight.

\begin{definition}[Analysis operator]\label{def:analysis}
The analysis operator of a dictionary $\Dc=\{g_{i,k}\}$ to a signal $f\in\Cbb^N$ is given by
$$ \A_{\Dc} f(i,k) = \langle f,g_{i,k}\rangle.$$
When $\Dc=\{g_{i,k}\}=\{T_i g_k\}$ is a localized spectral graph filter frame, we denote it with $\A_{\gg}$. In all cases, we view $A_{\Dc}$ as a function from $\Cbb^N$ to $\Cbb^{|\Dc|}$, and thus we use $||\A_{\Dc} f||_p$ (or $||\A_{\gg} f||_{p}$) to denote a vector norm of the analysis  coefficients. 
\end{definition}

\section{Global uncertainty principles bounding the concentration of the analysis coefficients of a graph signal in a transform domain}\label{Se:Lieb_global}

\subsection{Discrete version of Lieb's uncertainty principle}

%
%

Lieb's uncertainty principle in the continuous one-dimensional setting \cite{lieb1990integral} states that the cross-ambiguity function of a signal cannot be too concentrated in the time-frequency plane. In the following, we transpose these statements to the discrete periodic setting, and then generalize them to frames and signals on graphs. The following discrete version of Lieb's uncertainty principle is partially presented in \cite[Proposition 2]{eusipcooptamb}.

\begin{theorem} \label{theo:Classical_Lieb_discrete}
Define the discrete Fourier transform (DFT) as 
$$\hat{f}[k]=\frac{1}{\sqrt{N}}\sum_{n=0}^{N-1}f[n] \exp\left(\frac{-i2\pi k n}{N}\right) ,$$
and the discrete windowed Fourier transform (or discrete cross-ambiguity function) as (see, e.g., \cite[Section 4.2.3]{mallat2008wavelet})
$$\A_{\Dc_{DWFT}} f[u,k]=\sum_{n=0}^{N-1}f[n]\overline{g[n-u]}\exp\left(\frac{-i2\pi k n}{N} \right).$$
For two discrete signals of period $N$, we have for $2 \leq p < \infty$
\begin{align} \label{eq: lieb p 1 infty}
\| \A_{\Dc_{DWFT}} f \|_p = \left(\sum_{u=1}^N \sum_{k=0}^{N-1} | \A_{\Dc_{DWFT}} f [u,k]|^p\right)^{\frac{1}{p}} \leq N^\frac{1}{p}\| f \|_2 \| g\|_2,
\end{align}
and for $1 \leq p \leq 2$
\begin{align} \label{eq: lieb p 1 2}
\|\A_{\Dc_{DWFT}} f \|_p = \left(\sum_{u=1}^N \sum_{k=0}^{N-1} | \A_{\Dc_{DWFT}} f [u,k]|^p\right)^{\frac{1}{p}} \geq N^\frac{1}{p}\| f \|_2 \| g\|_2.
\end{align}
\end{theorem}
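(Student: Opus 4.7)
The plan is to mirror the classical continuous Lieb proof by establishing the two extreme cases $p=2$ and $p=\infty$ and then interpolating via the log-convexity of $\ell^p$ norms (Lyapunov's inequality). The factor $N^{1/p}$ will arise entirely from the $p=2$ Plancherel-type identity for the full Gabor system, and the extension to all $p$ comes essentially for free.

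First I would verify the equality $\|\A_{\Dc_{DWFT}} f\|_2^2 = N\|f\|_2^2\|g\|_2^2$, which is a Plancherel/orthogonality relation for the discrete Gabor system indexed by all $N$ translations and all $N$ modulations. Expanding the square, summing first over the modulation index using $\sum_{k=0}^{N-1}\exp(-i2\pi k(n-n')/N) = N\delta_{n,n'}$, and then summing over the translation $u$ via $\sum_u |g[n-u]|^2 = \|g\|_2^2$ gives the identity. In particular, both \eqref{eq: lieb p 1 infty} and \eqref{eq: lieb p 1 2} hold with equality at $p=2$. Next, at $p=\infty$, applying Cauchy--Schwarz to the sum defining $\A_{\Dc_{DWFT}} f[u,k]$ and using that the modulated translate of $g$ has $\ell^2$-norm equal to $\|g\|_2$ yields the uniform bound $\|\A_{\Dc_{DWFT}} f\|_\infty \leq \|f\|_2\|g\|_2$.

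With the two anchors in hand, for $p \in [2,\infty)$ I would invoke Lyapunov's inequality with exponents $2$ and $\infty$, namely $\|a\|_p \leq \|a\|_2^{2/p}\|a\|_\infty^{1-2/p}$, applied to $a=\A_{\Dc_{DWFT}} f$. Combined with the two anchor estimates this immediately gives
\[
\|\A_{\Dc_{DWFT}} f\|_p \leq \bigl(N^{1/2}\|f\|_2\|g\|_2\bigr)^{2/p}\bigl(\|f\|_2\|g\|_2\bigr)^{1-2/p} = N^{1/p}\|f\|_2\|g\|_2,
\]
which is \eqref{eq: lieb p 1 infty}. For $p \in [1,2]$ I would use the same log-convexity relation, but now with $2$ as the intermediate exponent between $p$ and $\infty$:
\[
\|\A_{\Dc_{DWFT}} f\|_2 \leq \|\A_{\Dc_{DWFT}} f\|_p^{p/2}\|\A_{\Dc_{DWFT}} f\|_\infty^{1-p/2}.
\]
Inserting $\|\A_{\Dc_{DWFT}} f\|_2 = N^{1/2}\|f\|_2\|g\|_2$ on the left and $\|\A_{\Dc_{DWFT}} f\|_\infty \leq \|f\|_2\|g\|_2$ on the right, then isolating $\|\A_{\Dc_{DWFT}} f\|_p$, yields the lower bound \eqref{eq: lieb p 1 2}.

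The only real obstacle is conceptual bookkeeping: one must be careful that the $p=2$ identity uses precisely the factor $N$ coming from summing over all $N$ modulations (and not, e.g., $N^2$ or $1$), since this factor is what drives the $N^{1/p}$ scaling in both final inequalities and is the discrete counterpart of the $\|g\|_2^2$ normalization constant in Lieb's continuous statement. Everything else reduces to invoking Plancherel, Cauchy--Schwarz, and a single log-convexity inequality.
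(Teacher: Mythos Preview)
Your proof is correct and follows the same overall strategy as the paper's main proof: anchor at $p=2$ (the tight-frame/Plancherel identity) and $p=\infty$ (Cauchy--Schwarz), then interpolate. The paper obtains the result as a specialization of its general frame inequality (Theorem~\ref{theo: amb tight}), invoking Riesz--Thorin for $p\geq 2$ and, for $p\in[1,2]$, first deriving the $p=1$ bound via H\"older's inequality $\|a\|_2^2\leq\|a\|_\infty\|a\|_1$ and then applying a converse Riesz--Thorin lemma. Your route is slightly more elementary: you use the log-convexity (Lyapunov) of $\ell^p$ norms on a fixed vector directly, and for $p\in[1,2]$ you place $2$ as the intermediate index between $p$ and $\infty$, which bypasses both the $p=1$ step and the converse-interpolation machinery. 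The paper also supplies a second, independent proof for $p\geq 2$ via the discrete Hausdorff--Young and Young convolution inequalities, closer in spirit to Lieb's original continuous argument; your approach does not need that either.
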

These inequalities are proven in Section \ref{Se:discrete_Lieb} of the Appendix.  Note that the minimizers of this uncertainty principle are the so-called "picket fence" signals, trains of regularly spaced diracs.

\subsection{Generalization of Lieb's uncertainty principle to frames}


\begin{theorem} \label{theo: amb tight}
Let  $\Dc=\{g_{i,k}\}$ be a frame of atoms in $\Cbb^N$, with lower and upper frame bounds $A$ and $B$, respectively. For any signal $f\in \Cbb^N$ and any $p \geq 2$, we have
\begin{equation} \label{eq: lieb generalization p 2 inf_main}
\|\A_{\Dc}f\|_{p}\leq B^{\frac{1}{p}}\left(\max_{i,k}\|g_{i,k}\|_{2}\right)^{1-\frac{2}{p}}\|f\|_{2}.
\end{equation}
For any signal $f\in \Cbb^N$ and any $1 \leq p \leq 2$, we have
\begin{equation}  \label{eq: lieb generalization p 1 2_main}
\|\A_{\Dc}f\|_{p}\geq A^{\frac{1}{p}}\left(\max_{i,k}\|g_{i,k}\|_{2}\right)^{1-\frac{2}{p}}\|f\|_{2}.
\end{equation}
Combining \eqref{eq: lieb generalization p 2 inf_main}
and \eqref{eq: lieb generalization p 1 2_main}, 
for any $p\in [1,\infty]$, we have
\begin{align}\label{Eq:ft1a}
s_p(\mathcal{A}_{\Dc}f) \leq\frac{B^{\min\{\frac{1}{2},\frac{1}{p}\}}}{A^{\max\{\frac{1}{2},\frac{1}{p}\}}}\left(\max_{i,k}\|g_{i,k}\|_{2}\right)^{\left|1-\frac{2}{p}\right|}.
\end{align}
When $\Dc$ is a tight frame with frame bound $A$, \eqref{Eq:ft1a} reduces to 
$$
s_p(\mathcal{A}_{\Dc}f) \leq A^{-\left|\frac{1}{2}-\frac{1}{p} \right|}\left(\max_{i,k}\|g_{i,k}\|_{2}\right)^{\left|1-\frac{2}{p}\right|}.
$$
\end{theorem}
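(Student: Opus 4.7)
The plan is to prove the two main inequalities by interpolating between the endpoint cases $p=2$ and $p=\infty$ (for the upper bound) and between $p=2$ and $p=1$ (for the lower bound), using only elementary properties of $\ell^p$-norms on finite counting measure spaces rather than the heavier Riesz--Thorin machinery. The two endpoint facts I need are: at $p=2$, the frame definition gives
\begin{equation*}
A\|f\|_2^2 \;\le\; \|\A_{\Dc}f\|_2^2 \;=\; \sum_{i,k}|\langle f,g_{i,k}\rangle|^2 \;\le\; B\|f\|_2^2,
\end{equation*}
and at $p=\infty$, Cauchy--Schwarz on each atom gives
\begin{equation*}
\|\A_{\Dc}f\|_\infty \;=\; \max_{i,k}|\langle f,g_{i,k}\rangle| \;\le\; \Big(\max_{i,k}\|g_{i,k}\|_2\Big)\,\|f\|_2.
\end{equation*}

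For the upper bound \eqref{eq: lieb generalization p 2 inf_main} in the regime $p\ge 2$, I would start from the pointwise bound $|\A_{\Dc}f(i,k)|^p = |\A_{\Dc}f(i,k)|^{p-2}\cdot|\A_{\Dc}f(i,k)|^2$, sum over $(i,k)$, and factor out the maximum to obtain
\begin{equation*}
\|\A_{\Dc}f\|_p^{\,p} \;\le\; \|\A_{\Dc}f\|_\infty^{\,p-2}\,\|\A_{\Dc}f\|_2^{\,2}.
\end{equation*}
Plugging in the two endpoint bounds and taking $p$-th roots yields exactly the claimed inequality. For the lower bound \eqref{eq: lieb generalization p 1 2_main} in the regime $1\le p\le 2$, the idea is to use the dual elementary inequality, coming from $|\A_{\Dc}f(i,k)|^2 = |\A_{\Dc}f(i,k)|^{2-p}\cdot|\A_{\Dc}f(i,k)|^p$ and bounding the first factor by $\|\A_{\Dc}f\|_\infty^{\,2-p}$:
\begin{equation*}
\|\A_{\Dc}f\|_2^{\,2} \;\le\; \|\A_{\Dc}f\|_\infty^{\,2-p}\,\|\A_{\Dc}f\|_p^{\,p}.
\end{equation*}
Rearranging and using the lower frame bound $A\|f\|_2^2 \le \|\A_{\Dc}f\|_2^2$ together with the upper bound on $\|\A_{\Dc}f\|_\infty$ above gives \eqref{eq: lieb generalization p 1 2_main} after taking $p$-th roots.

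Finally, to obtain \eqref{Eq:ft1a}, I would simply assemble the pieces: for $p\ge 2$, divide the upper bound on $\|\A_{\Dc}f\|_p$ by the lower bound $\|\A_{\Dc}f\|_2\ge\sqrt{A}\|f\|_2$ to get $s_p(\A_{\Dc}f) = \|\A_{\Dc}f\|_p/\|\A_{\Dc}f\|_2 \le B^{1/p}A^{-1/2}(\max_{i,k}\|g_{i,k}\|_2)^{1-2/p}$; for $p\le 2$, divide the upper bound $\|\A_{\Dc}f\|_2\le\sqrt{B}\|f\|_2$ by the lower bound on $\|\A_{\Dc}f\|_p$ to get $s_p(\A_{\Dc}f) = \|\A_{\Dc}f\|_2/\|\A_{\Dc}f\|_p \le B^{1/2}A^{-1/p}(\max_{i,k}\|g_{i,k}\|_2)^{2/p-1}$. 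The two cases combine into the single formula \eqref{Eq:ft1a} by observing that the exponent on $\max_{i,k}\|g_{i,k}\|_2$ is $|1-2/p|$ in both, the exponent on $A$ is $-\max\{1/2,1/p\}$, and the exponent on $B$ is $\min\{1/2,1/p\}$. The tight-frame reduction is immediate by setting $A=B$ and noting $\min\{1/2,1/p\}-\max\{1/2,1/p\} = -|1/2-1/p|$.

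There is no serious obstacle here; the argument is essentially a two-point interpolation on a finite index set. The only delicate point is bookkeeping of the exponents (especially the sign of $1-2/p$ across the two regimes), and making sure the collapse into the single $\min/\max$ formula for $s_p$ is correctly justified in both cases; this is best handled by treating the two ranges $p\ge 2$ and $p\le 2$ separately and only merging the formulas at the end.
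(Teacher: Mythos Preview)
Your proof is correct and reaches the same conclusions, but by a genuinely more elementary route than the paper. The paper establishes the same two endpoint bounds at $p=2$ (frame inequality) and $p=\infty$ (Cauchy--Schwarz), and then invokes the Riesz--Thorin interpolation theorem to obtain \eqref{eq: lieb generalization p 2 inf_main} for $p\ge 2$; for \eqref{eq: lieb generalization p 1 2_main} it first uses the H\"older instance $\|\A_{\Dc}f\|_2^2 \le \|\A_{\Dc}f\|_\infty\|\A_{\Dc}f\|_1$ to get the $p=1$ endpoint and then applies a converse Riesz--Thorin statement (which requires checking that $\A_{\Dc}$ has a bounded left-inverse) to interpolate between $p=1$ and $p=2$. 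Your argument replaces both interpolation steps by the pointwise splittings $|a|^p=|a|^{p-2}|a|^2$ (for $p\ge 2$) and $|a|^2=|a|^{2-p}|a|^p$ (for $p\le 2$), which on a finite counting measure space immediately give the needed inequalities without any operator-theoretic machinery; in particular your lower-bound step for general $p\in[1,2]$ subsumes the paper's H\"older-at-$p=1$ step and bypasses the left-invertibility hypothesis. The assembly into \eqref{Eq:ft1a} and the tight-frame reduction are handled identically in both. What the paper's approach buys is a template that would transfer to more general measure spaces or to operators where the elementary pointwise trick is unavailable; what your approach buys is a short self-contained argument with no appeal to Riesz--Thorin or its converse.
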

A proof is included in Section \ref{Se:lieb_gen_proof} of the Appendix. The proof of Theorem \ref{theo:Classical_Lieb_discrete} in Section \ref{Se:discrete_Lieb} of the Appendix also demonstrates that this uncertainty principle is indeed a generalization of the discrete periodic variant of Lieb's uncertainty principle.


\subsection{Lieb's uncertainty principle for localized spectral graph filter frames}

Lemma \ref{Le:Tisom} implies that $\max_{i,k} ||T_i g_k||_2 \leq  \sqrt{N} \mu_{\G}  \max_k\| \widehat{g_k}\|_2$. Therefore the following is a corollary to Theorem \ref{theo: amb tight} for the case of localized spectral graph filter frames.


\begin{theorem} \label{Co:Lieblocgraph}
Let  $\Dc_{\gg}=\{g_{i,k}\}=\{T_i g_k\}$ be a localized spectral graph filter frame of atoms on a graph $\G$ generated from the sequence of filters $\gg=\{\widehat{g_0}(\cdot),\widehat{g_1}(\cdot),\ldots,\widehat{g_{K-1}}(\cdot)\}$. For any signal $f\in \Cbb^N$ on $\G$ and 
for any $p\in [1,\infty]$, we have
\begin{align}\label{Eq:Llg1a}
s_p(\mathcal{A}_{\gg}f) \leq
\frac{B^{\min\{\frac{1}{2},\frac{1}{p}\}}}{A^{\max\{\frac{1}{2},\frac{1}{p}\}}}\left(\max_{i,k}\|g_{i,k}\|_{2}\right)^{\left|1-\frac{2}{p}\right|} \leq
\frac{B^{\min\{\frac{1}{2},\frac{1}{p}\}}}{A^{\max\{\frac{1}{2},\frac{1}{p}\}}}\left(\sqrt{N}\mu_{\G} \max_{k}\|\widehat{g_k}\|_{2}\right)^{\left|1-\frac{2}{p}\right|},
\end{align}
where $A=\min_{\lambda \in \sigma(\L)} G(\lambda)$ is the lower frame bound and $B=\max_{\lambda \in \sigma(\L)} G(\lambda)$ is the upper frame bound. 
When $\Dc$ is a tight frame with frame bound $A$, \eqref{Eq:Llg1a} reduces to 
\begin{align}\label{Eq:Llg1atight}
s_p(\mathcal{A}_{\gg}f)  
\leq A^{-\left|\frac{1}{2}-\frac{1}{p} \right|}\left(\max_{i,k}\|g_{i,k}\|_{2}\right)^{\left|1-\frac{2}{p}\right|}
\leq A^{-\left|\frac{1}{2}-\frac{1}{p} \right|}\left(\sqrt{N}\mu_{\G} \max_{k}\|\widehat{g_k}\|_{2}\right)^{\left|1-\frac{2}{p}\right|}.
\end{align}
\end{theorem}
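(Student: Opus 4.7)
The plan is to recognize this result as a direct corollary obtained by specializing Theorem \ref{theo: amb tight} to the particular dictionary structure $\Dc_{\gg}=\{T_i g_k\}$ and then substituting the norm bound on localized atoms provided by Lemma \ref{Le:Tisom}. So the proof breaks into two short steps: first, invoke the general frame Lieb inequality to get the bound in terms of $\max_{i,k}\|g_{i,k}\|_2$; second, replace that quantity with its upper bound in terms of $\sqrt{N}\mu_{\G}\max_k \|\widehat{g_k}\|_2$ and check that the substitution preserves the direction of inequality.

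First I would apply Theorem \ref{theo: amb tight} with the dictionary taken to be the localized spectral graph filter frame $\Dc_{\gg}$. Since $\Dc_{\gg}$ is a frame with lower and upper frame bounds $A$ and $B$ (given by the extrema of $G(\lambda)$ on $\sigma(\L)$, by the lemma cited right after the definition of a localized spectral graph filter frame), the theorem applies verbatim and yields
\begin{align*}
s_p(\mathcal{A}_{\gg}f) \leq \frac{B^{\min\{\frac{1}{2},\frac{1}{p}\}}}{A^{\max\{\frac{1}{2},\frac{1}{p}\}}}\left(\max_{i,k}\|T_i g_k\|_{2}\right)^{\left|1-\frac{2}{p}\right|},
\end{align*}
which is exactly the first (tighter) inequality of \eqref{Eq:Llg1a}.

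Next I would control $\max_{i,k}\|T_i g_k\|_2$ using Lemma \ref{Le:Tisom}, which says $\|T_i g\|_2 \leq \sqrt{N}\,\nu_i \|\hat g\|_2 \leq \sqrt{N}\,\mu_{\G}\|\hat g\|_2$ for any kernel $\hat g$ and any vertex $i$. Applying this uniformly to each atom and taking the maximum over $k$ gives $\max_{i,k}\|T_i g_k\|_2 \leq \sqrt{N}\,\mu_{\G}\max_k \|\widehat{g_k}\|_2$. Since the exponent $|1-2/p|$ is nonnegative and the quantities being raised to that exponent are themselves nonnegative, substituting this upper bound into the previous display yields the second inequality of \eqref{Eq:Llg1a}.

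Finally, for the tight-frame specialization \eqref{Eq:Llg1atight}, I would simply set $A=B$ in the prefactor and observe that
\begin{align*}
\frac{A^{\min\{\frac{1}{2},\frac{1}{p}\}}}{A^{\max\{\frac{1}{2},\frac{1}{p}\}}} = A^{-\left|\frac{1}{2}-\frac{1}{p}\right|},
\end{align*}
which collapses the bound to the stated form. There is no real obstacle here: the technical content has already been done in Theorem \ref{theo: amb tight} and Lemma \ref{Le:Tisom}, and the only care needed is in tracking that the exponent $|1-2/p|$ is nonnegative so that monotonicity of $x\mapsto x^{|1-2/p|}$ on $[0,\infty)$ justifies substituting the norm bound inside the parentheses.
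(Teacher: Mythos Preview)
Your proposal is correct and follows exactly the paper's approach: the paper presents this theorem explicitly as a corollary of Theorem~\ref{theo: amb tight}, obtained by observing (via Lemma~\ref{Le:Tisom}) that $\max_{i,k}\|T_i g_k\|_2 \leq \sqrt{N}\mu_{\G}\max_k\|\widehat{g_k}\|_2$. Your additional remark that the nonnegativity of the exponent $|1-2/p|$ is what justifies the substitution is a helpful clarification the paper leaves implicit.
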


The bounds depend on the frame bounds $A$ and $B$, which are fixed with the design of the filter bank. However, in the tight frame case,  we can 
choose the filters in a manner such that the bound
 $A$ does not depend on the graph structure. For example, if the $\hg_k$ are defined continuously on the interval $[0,\lambda_{\rm max}]$ and $\sum_{k=0}^{M-1}\left|\hg_{k}(\lambda)\right|^{2}$ is equal to a constant for all $\lambda$, $A$ is not affected by a change in the values of the Laplacian eigenvalues, e.g., from a change in the graph structure. The second quantity, $\max_{i,k}\|g_{i,k}\|_{2}$, reveals the influence of the graph. The maximum $\ell_2$-norm of the atoms depends on the filter design, but also, as discussed previously in Section \ref{sec:keyquantities}, on the graph topology. 
However, the bound is not local as it depends on the maximum $\|g_{i,k}\|_2$ over all localizations $i$ and filters $k$, which takes into account the entire graph structure. 

The second bounds in \eqref{Eq:Llg1a} and \eqref{Eq:Llg1atight} also suggest how the filters can be designed so as to improve the uncertainty bound.
The quantity $\| \widehat{g_k}\|_2 = \left( \sum_\ell |\hat{g}_k(\lambda_\ell)|^2 \right)$ depends on the distribution of the eigenvalues $\lambda_\ell$, and, as consequence, on the graph structure. However, the distribution of the eigenvalues can be taken into account when designing the filters in order to reduce or cancel this dependency \cite{shuman2013spectrum}. 

In the following example, we compute the first uncertainty bound in \eqref{Eq:Llg1atight} for different types of graphs and filters. It provides some insight on the influence of the graph topology and filter bank design on the uncertainty bound. 
\begin{example}\label{ex:filterbank} 
We use the techniques of \cite{shuman2013spectrum} to construct four tight localized spectral graph filter frames for each of eight different graphs. Figure \ref{fig: gabor filterbank} shows an examples of the four sets of filters for a 64 node sensor network. For each graph, two of the sets of filters (b and d in Figure \ref{fig: gabor filterbank}) are adapted via warping to the distribution of the graph Laplacian eigenvalues so that each filter contains an appropriate number of eigenvalues (roughly equal in the case of translates and roughly logarithmic in the case of wavelets). The warping avoids filters containing zero or very few eigenvalues at which the filter has a nonzero value.
These tight frames are designed such that { $A=N$}, and thus Theorem~\ref{Co:Lieblocgraph} yields
 $$
s_{\infty}(\A_{\gg}f)=\frac{\|\A_{\gg}f\|_\infty}{\|\A_{\gg}f\|_2} \leq { N^{-\frac{1}{2}}} \max_{i,k}\|\T_ig_k\|_2 \leq \mu_{\G}\max_k ||\widehat{g_k}||_2.
$$
Table  \ref{tab:uncertainty tight} displays the values of the first concentration bound $\max_{i,k}\|\T_ig_k\|_2$ for each graph and frame pair.
The uncertainty bound is 
largest when the graph is far from a regular lattice (ring or path). As expected, the worst cases are for highly inhomogeneous graphs like the comet graph or 
a modified path graph with one isolated vertex. 
The choice of the filter bank may also decrease or increase the bound, depending on the graph.
\begin{figure}[htb!]
\centering
\begin{minipage}[b]{.24\linewidth}
\centerline{\small{~~Graph Gabor}}
\centerline{\small{~~Filter Frame}}
\centerline{\includegraphics[width=\linewidth]{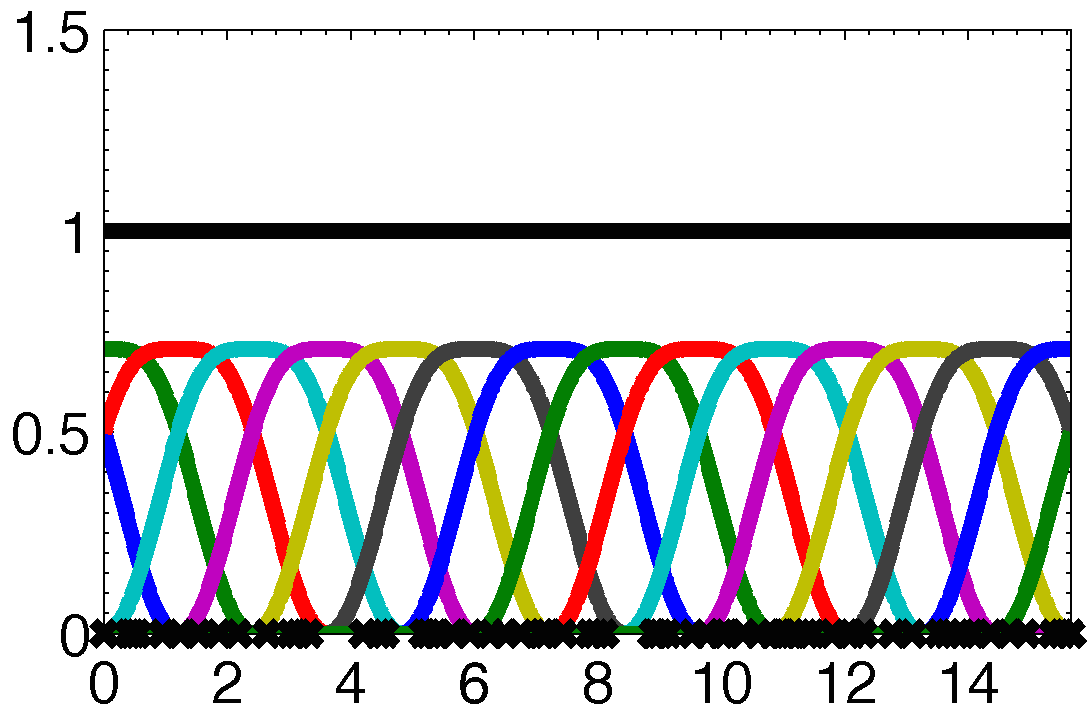}} 
\centerline{\small{~~$\lambda$}}
\centerline{\small{~~(a)}}
\end{minipage}
\hfill
\begin{minipage}[b]{.24\linewidth}
\centerline{\small{~~Spectrum-Adapted}}
\centerline{\small{~~Graph Gabor}}
\centerline{\small{~~Filter Frame}}
\centerline{\includegraphics[width=\linewidth]{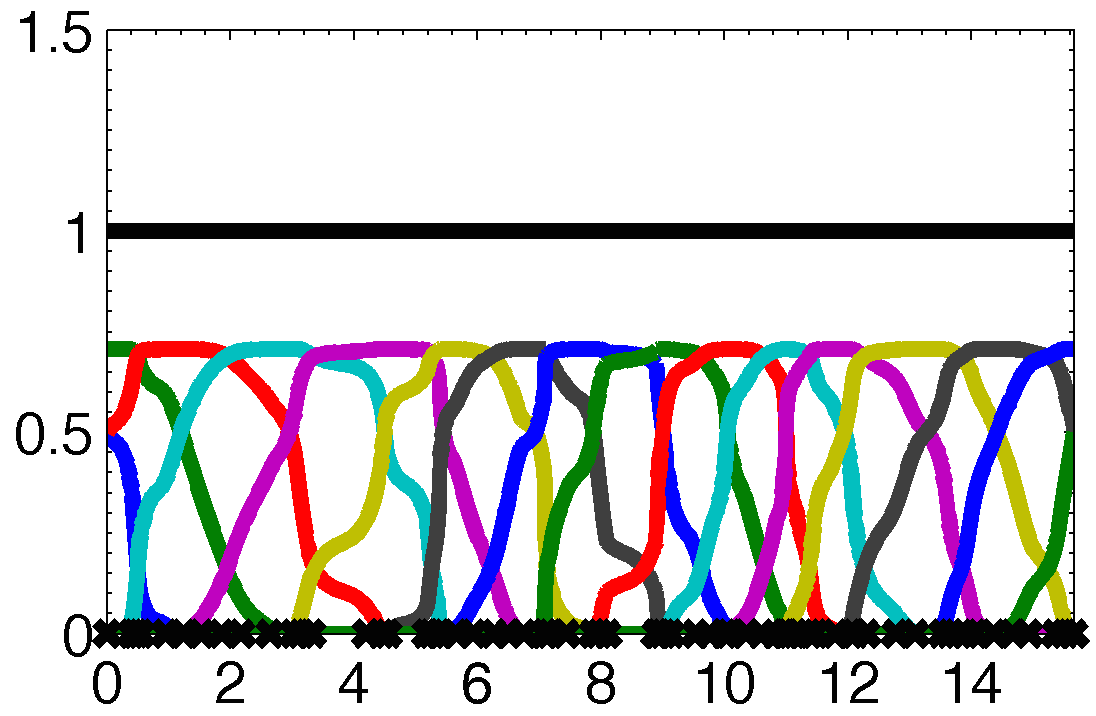}} 
\centerline{\small{~~$\lambda$}}
\centerline{\small{~~(b)}}
\end{minipage}
\hfill
\begin{minipage}[b]{.24\linewidth}
\centerline{\small{~~Log-Warped}}
\centerline{\small{~~Tight Graph}}
\centerline{\small{~~Wavelet Frame}}
\centerline{\includegraphics[width=\linewidth]{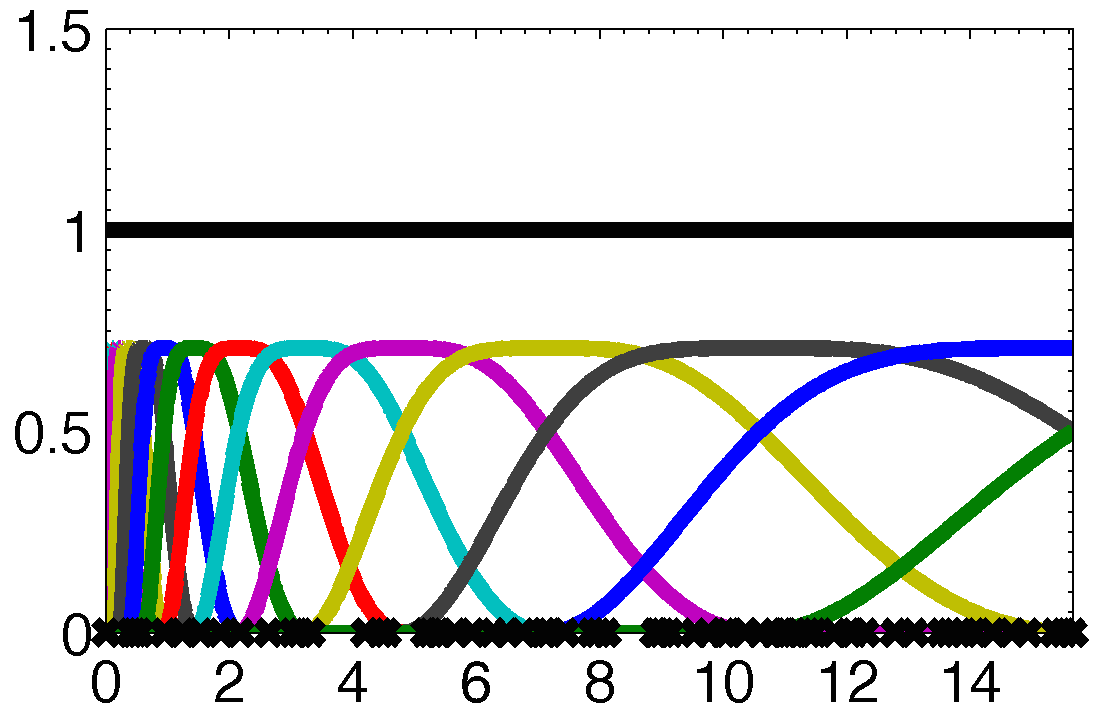}} 
\centerline{\small{~~$\lambda$}}
\centerline{\small{~~(c)}}
\end{minipage}
\hfill
\begin{minipage}[b]{.24\linewidth}
\centerline{\small{~~Spectrum-Adapted}}
\centerline{\small{~~Tight Graph}}
\centerline{\small{~~Wavelet Frame}}
\centerline{\includegraphics[width=\linewidth]{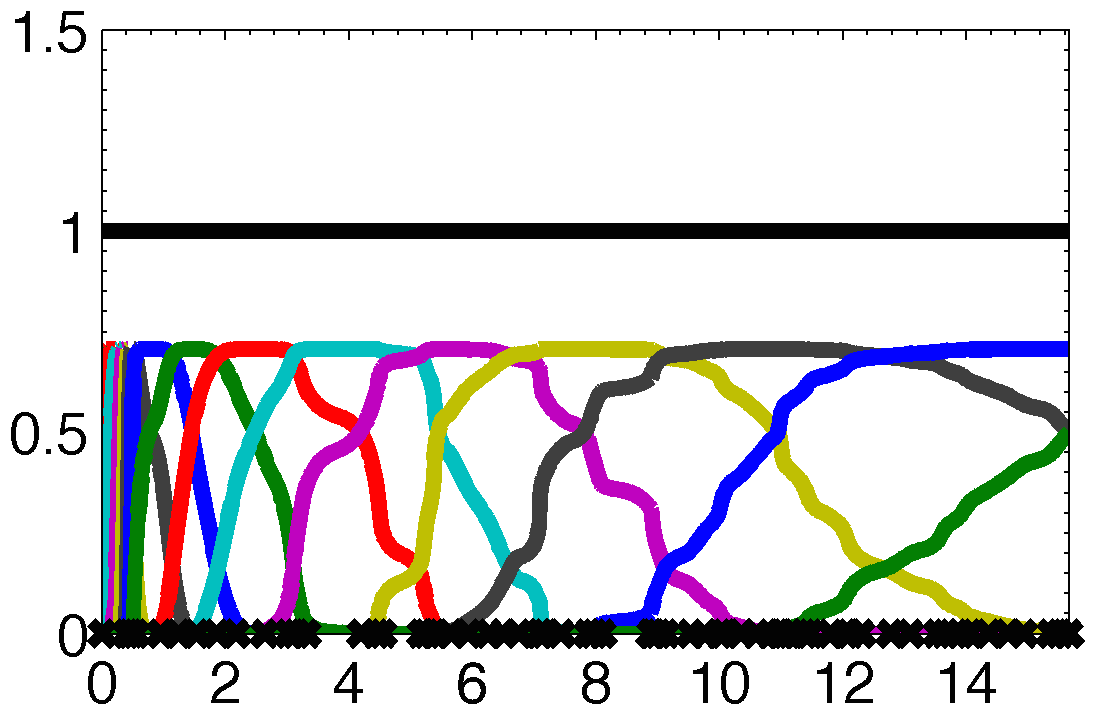}} 
\centerline{\small{~~$\lambda$}}
\centerline{\small{~~(d)}}
\end{minipage} 
\caption{
Four different filter bank designs of \cite{shuman2013spectrum}, shown for a random sensor network with 64 nodes.  
Each colored curve is a filter defined continuously on $[0,\lambda_{\rm max}]$, and each filter bank has 16 such filters. They are designed such that $G(\lambda)=1$ for all $\lambda$ (black line), and thus all four designs yield tight localized spectral graph filter frames. 
The frame bounds here are $A=B=N$. 
\label{fig: gabor filterbank}}
\end{figure}

\begin{table}[thb] 
\begin{center}\begin{tabular}{ |c|ccccc|} 
 \hline 
  &  & Graph Gabor  & Spectrum-Adapted & Log-Warped  & Spectrum-Adapted  \\ 
Graph & $\mu_{\G}$ &(Uniform Translates) &  Graph Gabor& Wavelets & Wavelets \\ 
 \hline
Ring                    & $  0.12 $ & $  0.33 $ & $\textbf{  0.28 }$ & $  0.44 $ & $  0.45 $ \\ 
Random sensor network   & $  0.90 $ & $  0.70 $ & $  0.69 $ & $\textbf{  0.68 }$ & $  0.69 $ \\ 
Random regular          & $  0.43 $ & $  0.41 $ &$\textbf{   0.40 }$ & $  0.57 $ & $  0.53 $ \\ 
Erdos Renyi             & $  0.93 $ & $  0.68 $ & $  0.68 $ & $  0.68 $ & $\textbf{  0.67  }$\\ 
Comet                   & $  0.98 $ &$  0.70 $ & $  0.70 $ & $  0.70 $& $  0.70 $ \\ 
Path                    & $  0.18 $ & $  0.45 $ & $\textbf{  0.38 }$ & $  0.51 $ & $  0.51 $ \\ 
Modified path: $W_{12}=0.1$   & $  0.48 $ & $  0.69 $ & $  0.66 $ & $\textbf{ 0.57 }$ & $  0.58 $ \\ 
Modified path: $W_{12}=0.01$  & $  0.70 $ & $  0.71 $ & $  0.68 $ & $  0.70 $ & $\textbf{ 0.65 }$ \\ 
 \hline 
 \end{tabular}\end{center} 
 \caption{Numerical values of the uncertainty bound $\max_{i,k}\|\T_ig_k\|_2$ of Example~\ref{ex:filterbank} for various graphs of $64$ nodes. 
 \label{tab:uncertainty tight} } 
\end{table} 

\end{example}

The uncertainty principle in Theorem \ref{Co:Lieblocgraph} bounds the concentration of the graph Gabor transform coefficients. In the next example, we examine these coefficients for a series of signals with different vertex and spectral domain localization properties.
\begin{example}[Concentration of the graph Gabor coefficients for signals with varying vertex and spectral domain concentrations.] 
In Figure \ref{Example_ambiguity_sensor_tau_effect}, we 
analyze a series of signals on a random sensor network of 100 vertices. Each signal is created by localizing a kernel $\widehat{h_{\tau}}(\lambda) = e^{-\frac{\lambda^2}{\lambda_{\text{max}}^2} \tau^2}$ to be centered at vertex 1 (circled in black). To generate the four different signals, we vary the value of the parameter $\tau$ in the heat kernel. We plot the four localized kernels in the graph spectral and vertex domains in the first two columns, respectively. The more we ``compress'' $\hat{h}$ in the graph spectral domain (i.e. we reduce its spectral spreading by increasing $\tau$), the less concentrated the localized atom becomes in the vertex domain.
%
The joint vertex-frequency representation $|\A_\gg T_1h_\tau(i,k)|$ of each signal is shown in the third column, which illustrates the trade-off between concentration in the vertex and the spectral domains. The concentration of these graph Gabor transform coefficients is the quantity bounded by the uncertainty principle presented in Theorem \ref{Co:Lieblocgraph}. 
In the last row of the Figure \ref{Example_ambiguity_sensor_tau_effect}, $\tau=\infty$ which leads to a Kronecker delta for the kernel and a constant on the vertex domain. On the contrary, when the kernel is constant, with $\tau=0$ (top row), the energy of the graph Gabor coefficients 
stays concentrated around one vertex but spreads along all frequencies. 

\begin{figure}[ht!]
\centering
\begin{minipage}[b]{0.07\linewidth}
$\tau=0$ \\
\vspace{.8in}

$\tau=4$ \\
\vspace{.8in}

$\tau=10$ \\
\vspace{.8in}

$\tau=\infty$
\vspace{1in}

\end{minipage}
\hfill
\begin{minipage}[b]{0.9\linewidth}
\centerline{\includegraphics[width=1.1\textwidth]{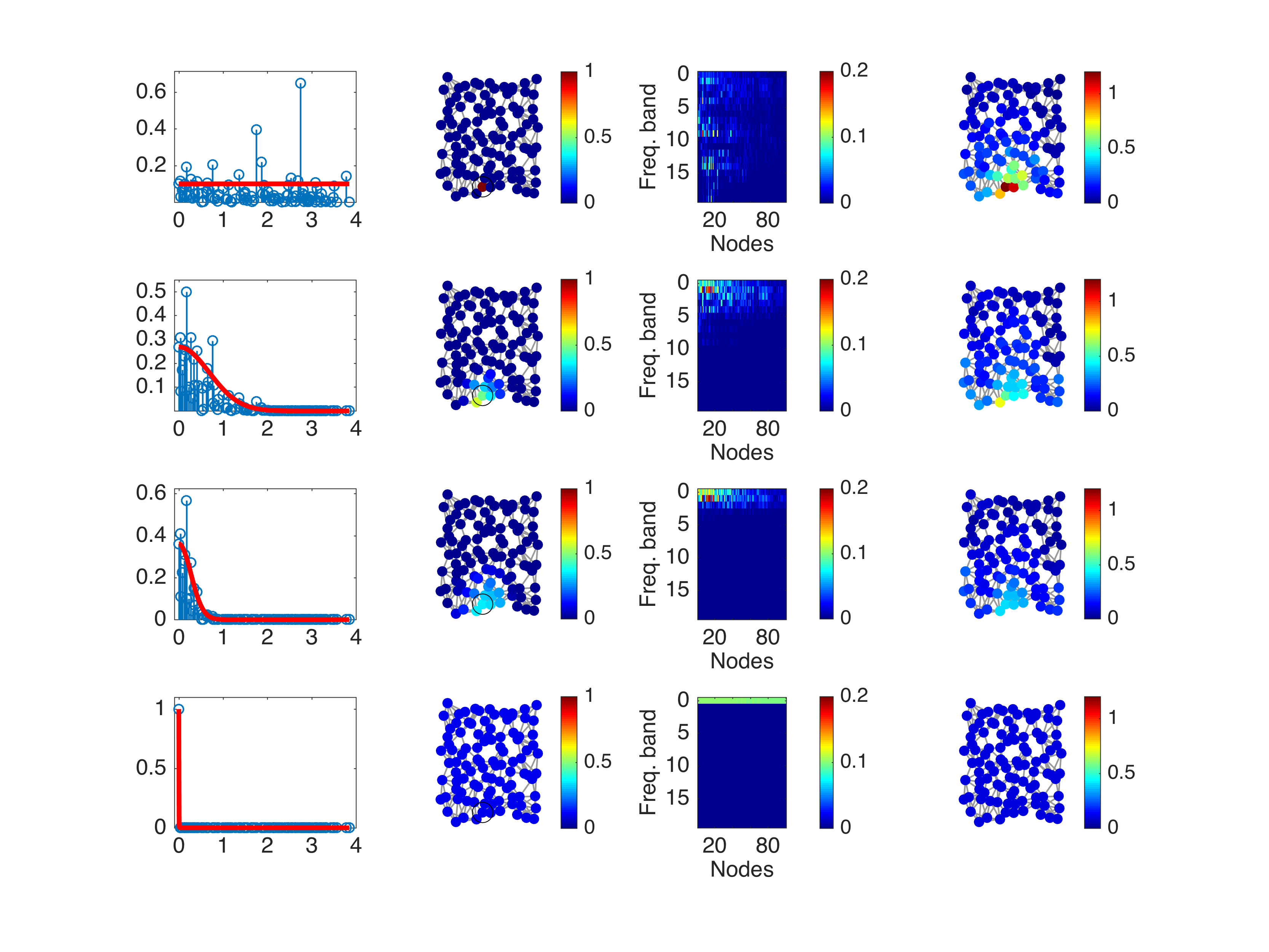}}
\end{minipage}
\caption{Graph Gabor transform of four different signals $f_{\tau}=T_1 h_{\tau}$, with each row corresponding to a signal with a different value of the parameter $\tau$. Each of the signals is a kernel localized to vertex 1, with the kernel to be localized equal to $\widehat{h_{\tau}}(\lambda)=e^{-\frac{\lambda^2}{\lambda_{\text{max}}^2} \tau^2}$. The underlying graph is a random sensor network of $100$ vertices. First column: the kernel $h_{\tau}(\lambda)$ is shown in red and the localized kernel $\widehat{f_{\tau}}$ is shown in blue, both in the graph spectral domain. Second column: the signal $f_{\tau}$ in the vertex domain (the center vertex 1 is circled). Third column: $|\A_\gg T_1h_\tau (i,k)|$, the absolute value of the Gabor transform coefficients for each vertex $i$ and each of the 20 frequency bands $k$. Fourth column: since it is hard to see where on the graph the transform coefficients are concentrated when the nodes are placed on a line in the third column, we display the value $\sum_{k=0}^{19}  |\A_\gg T_1h_\tau (i,k)|$ on each vertex $i$ in the network. This figure illustrates the tradeoff between the vertex and the frequency concentration.}
\label{Example_ambiguity_sensor_tau_effect}
\end{figure}

\end{example}



\section{Local uncertainty principles for signals on graphs} \label{subsec:local}

In the previous section, we defined a global bound for the concentration of the localized spectral graph filter frame analysis coefficients. In the classical setting, such a global bound is also local in the sense that each part of the domain has the same structure, due to the regularity of the underlying domain. 
However, this is not the case for the graph setting where the domain is irregular.  Example~\ref{ex:pathgraph} shows that a ``bad'' structure (a weakly connected node) in a small region of the graph reduces the uncertainty bound even if the rest of the graph is well behaved. Functions localized near the weakly connected node can be highly concentrated in both the vertex and  frequency domains, whereas functions localized away from it are barely impacted. Importantly, \emph{the worst case determines the global uncertainty bound}.  As another example, suppose one has two graphs $G_1$ and $G_2$ with two different structures, each of them having a different uncertainty bound. The uncertainty bound for the graph $\G$ that is the union of these two disconnected graphs is the minimum of the uncertainty bounds of the two disconnected graphs, which is suboptimal for one of the two graphs.

 In this section, we ask the following questions. Where does this worse case happen? Can we find a local principle that more accurately characterizes the uncertainty in other parts of the graph? 
In order to answer this question, we investigate the concentration of the analysis coefficients of the frame atoms, which are localized signals in the vertex domain.
This technique 
is used in the classical continuous case by Lieb~\cite{lieb1990integral}, who
defines the (cross-) ambiguity function, 
the STFT of a short-time Fourier atom. The result is a joint time-frequency uncertainty principle that does not depend on the localization in time or in frequency of the analyzed atom.  

Thus, we start by generalizing to the graph setting the definition of ambiguity (or cross-ambiguity) functions from time-frequency analysis of one-dimensional signals. 
\begin{definition}[Ambiguity function]
The ambiguity function of a localized spectral frame $\Dc=\{g_{i,k}\}=\{T_i g_k\}$ is defined as:
$$
\mathbb{A}_\gg(i_0,k_0,i,k) = \A_\gg T_{i_0} g_{k_0} (i,k) = \langle T_{i_0} g_{k_0} , T_i g_k \rangle
$$
\end{definition}
When the kernels $\{\widehat{g_k}\}_{k=0,1,\ldots,M-1}$ are appropriately warped uniform translates,
the operator $\mathbb{A}_\gg$ becomes a generalization of the short time Fourier transform. 
Additionally, the ambiguity function assesses the degree of coherence (linearly dependence) between the atoms  $T_{i_0} g_{k_0}$ and $ T_{i} g_{k}$.
In the following, we use this ambiguity function to  probe \emph{locally} the structure of the graph, and derive local uncertainty principles.


\subsection{Local uncertainty principle}

In order to probe the local uncertainty of a graph, we take a set of localized kernels in the graph spectral domain and center them at different local regions of the graph in the vertex domain. The atoms resulting from this construction are jointly localized in both the vertex and graph spectral domains, where "localized" means that the values of the function are zero or close to zero away from some reference point. 
By ensuring that the atoms are localized or have support within a small region of the graph, we focus on the properties of the graph in that region. In order to get a local uncertainty principle, we apply the frame operator to these localized atoms, and analyze the concentration of the resulting coefficients. In doing so, we develop an uncertainty principle relating these concentrations to the local graph structure.  


To prepare for the theorem, we first state a lemma that gives a hint to how the scalar product of two localized functions depends on the graph structure and properties. In the following, we multiply two kernels $\hat{g}$ and $\hat{h}$ in the graph spectral domain. For notation, we represent the product of these two kernels in vertex domain as $g \cdot h$. 
\begin{lemma}\label{lemma:Perraudin}
For two kernels $\hat{g}$, $\hat{h}$ and two nodes $i,j$, the localization operator satisfies
\begin{eqnarray}
<\T_{i}g,\T_{j}h> & = & \sqrt{N}\T_{i}(g\cdot h)(j) \label{eq:norm_product_tig_el}
\end{eqnarray}
and 
\begin{equation}
\left(\sum_{i}\left|<\T_{i}g,\T_{j}h>\right|^{p}\right)^{\frac{1}{p}}=\sqrt{N}\|\T_{j}(g\cdot h)\|_{p}.\label{eq:norm_product_tig}
\end{equation}
\end{lemma}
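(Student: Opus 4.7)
The plan is to expand the inner product on the left-hand side of \eqref{eq:norm_product_tig_el} using the spectral definition of $T_i$ (Definition~\ref{def:translation}) and collapse one of the two eigenvalue sums by orthonormality of $\{u_\ell\}$. Concretely, I would write
\[
\langle T_i g, T_j h\rangle \;=\; \sum_{n=1}^N T_ig(n)\,\overline{T_jh(n)}
\;=\; N\sum_{\ell,\ell'} \hat g(\lambda_\ell)\,\overline{\hat h(\lambda_{\ell'})}\,\overline{u_\ell(i)}\,u_{\ell'}(j)\,\sum_{n=1}^N u_\ell(n)\overline{u_{\ell'}(n)},
\]
and note that the inner sum over $n$ collapses to $\delta_{\ell\ell'}$. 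Assuming (as throughout the paper) that the spectral kernels are real-valued, $\overline{\hat h(\lambda_\ell)}=\hat h(\lambda_\ell)$, and the result is
\(N\sum_\ell \hat g(\lambda_\ell)\hat h(\lambda_\ell)\,\overline{u_\ell(i)}\,u_\ell(j)\).
Reading this against Definition~\ref{def:translation} with the product kernel $\hat g\cdot\hat h$ (which, by the notational convention just introduced above the lemma, is what $g\cdot h$ denotes in the vertex domain) identifies the expression as $\sqrt{N}\,T_i(g\cdot h)(j)$, proving \eqref{eq:norm_product_tig_el}.

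For \eqref{eq:norm_product_tig}, the plan is to raise \eqref{eq:norm_product_tig_el} to the $p$th power, sum over $i$, and use the symmetry of the localization kernel to swap the indices $i$ and $j$ inside the $T$. Specifically, since the combinatorial Laplacian is real symmetric, its eigenvectors $u_\ell$ can be chosen real, which makes the kernel $T_i k(j)=\sqrt N\sum_\ell \hat k(\lambda_\ell)u_\ell(i)u_\ell(j)$ symmetric in $(i,j)$; equivalently, $T_ik(j)=\sqrt N\,[\hat k(\mathcal L)]_{ij}=T_j k(i)$. Applying this with $k=g\cdot h$ gives $|T_i(g\cdot h)(j)|=|T_j(g\cdot h)(i)|$, so
\[
\sum_{i}\left|\langle T_ig,T_jh\rangle\right|^p \;=\; N^{p/2}\sum_i |T_i(g\cdot h)(j)|^p \;=\; N^{p/2}\sum_i |T_j(g\cdot h)(i)|^p \;=\; N^{p/2}\,\|T_j(g\cdot h)\|_p^p,
\]
and taking $p$-th roots gives \eqref{eq:norm_product_tig}.

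I do not expect any serious obstacle: the two identities are essentially bookkeeping with the spectral formula for $T_i$, orthonormality of the eigenbasis, and symmetry of the localization kernel. The only subtle point worth stating explicitly in the write-up is the reality assumption on $\hat h$ (or, equivalently, that one interprets the product kernel as $\hat g\cdot\overline{\hat h}$ when complex-valued kernels are allowed); this is in line with the standing conventions of the paper, where all filters $\widehat{g_k}$ are real-valued functions of the Laplacian eigenvalues.
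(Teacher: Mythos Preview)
Your proposal is correct and follows essentially the same route as the paper. The only cosmetic difference is that the paper invokes Parseval to pass to the spectral domain in one step, writing $\langle T_ig,T_jh\rangle=\langle\widehat{T_ig},\widehat{T_jh}\rangle$ and reading off the single sum over $\ell$ directly, whereas you expand the vertex-domain inner product as a double sum and collapse it by orthonormality; for \eqref{eq:norm_product_tig} the paper silently uses the index swap $T_i(g\cdot h)(j)=T_j(g\cdot h)(i)$ that you justify explicitly via the symmetry of $\hat k(\mathcal L)$.
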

Equation~\eqref{eq:norm_product_tig_el} shows more clearly the conditions on the kernels and nodes under which the scalar product is small. Let us take two examples. First, suppose $\hat{g}$ and $\hat{h}$ have a compact support on the spectrum and do not overlap (kernels localized in different places), then $\hat{g}\cdot \hat{h}$ is zero everywhere on the spectrum, and therefore the scalar product on the left-hand side of \eqref{eq:norm_product_tig_el} is also equal to zero. Second, assume $i$ and $j$ are distant from each other. Then $|\T_i(g\cdot h)(j)|$ is small if $\hat{g}$ and  $\hat{h})$ are reasonably smooth. In other words, the two atoms $T_i g$ and $T_j h$  must be localized both in the same area of graph in the vertex domain and the same spectral region in order for the  scalar product to be large. This localization depends on the atoms, but also on the graph structure.
\begin{proof}[Proof of Lemma \ref{lemma:Perraudin}]
\begin{eqnarray*}
<\T_{i}g,\T_{j}h> & = & <\widehat{\T_{i}g},\widehat{\T_{j}h}> =  N\sum_{\ell=0}^{N-1}\hat{g}(\lambda_{\ell})u_{\ell}(i)\bar{\hat{h}}(\lambda_{\ell})\bar{u}_{\ell}(j)\\
 & = & N\sum_{\ell=0}^{N-1}\left(\hat{g}\cdot \hat{h}\right)(\lambda_{\ell})u_{\ell}(i)\bar{u}_{\ell}(j) = \sqrt{N}\T_{i}(g\cdot h)(j).
\end{eqnarray*}
Moreover, a direct computation shows 
\[
\left(\sum_{i}\left|<\T_{i}g,\T_{j}h>\right|^{p}\right)^{\frac{1}{p}}=\left(\sum_{i}\left|\sqrt{N}\T_{j}(g\cdot h)(i)\right|^{p}\right)^{\frac{1}{p}}=N^{\frac{1}{2}}\|\T_{j}(g\cdot h)\|_{p}.
\]
\end{proof}
The following theorem provides inequalities giving a local uncertainty principle. The local bound depends of the localization of the atom $\T_{i_0}g_{k_0}$ both in the graph and spectral domains. The center vertex $i_0$ and kernel $\hat{g}_{k_0}$ can be chosen to be any vertex and kernel; however, the locality property of the uncertainty principle appears when $\T_{i_0}g_{k_0}$ 
is concentrated around node $i_0$ in the vertex domain and 
 around a small portion of the spectrum in the graph spectral domain.
Once again, we measure the concentration with $\l^p$-norms.
\begin{theorem}[Local uncertainty]\label{theo:local_uncertainty}
Let $\{ \T_ig \}_{\left\{i\in[1,N],k\in[0,M-1]\right\}}$ be a localized spectral graph filter frame with lower frame bound $A$ and upper frame bound $B$. For any $i_{0}\in[1,N],k_{0}\in[0,M-1]$ such that $\|\T_{i_{0}}g_{k_{0}}\|_{2}>0$, the quantity
\begin{equation} \label{Eq:local_ineq}
\|\A_{\gg}\T_{i_0}g_{k_0}\|_{p}
= \left(\sum_{k=1}^{M}\sum_{i=1}^{N}\left|<\T_{i}g_{k},\T_{i_{0}}g_{k_{0}}>\right|^{p}\right)^{\frac{1}{p}}
= \sqrt{N}\left(\sum_{k=1}^{M}\|\T_{i_0}(g_{k_0}\cdot g_{k})\|_{p}^{p}\right)^{\frac{1}{p}}
\end{equation}
satisfies for $p \in [1,\infty]$
\begin{equation}\label{eq:localuncertainty}
s_p\left(\A_{\gg}\T_{i_0}g_{k_0}\right) 
\leq\frac{B^{\min\{\frac{1}{p},1-\frac{1}{p}\}}\|\T_{\tilde{i}_{i_0,k_0}}g_{\tilde{k}_{i_0,k_0}}\|_{2}^{\left|1-\frac{2}{p}\right|}}{A^{\frac{1}{2}}}
\leq \frac{ B^{\min\{\frac{1}{p},1-\frac{1}{p}\}} \left( \sqrt{N}\nu_{\tilde{i}_{i_0,k_0}} \| g_{\tilde{k}_{i_0,k_0}} \|_2 \right)^{\left|1-\frac{2}{p}\right|}}{A^{\frac{1}{2}}},
\end{equation}
where $\nu_i$ is defined in Lemma \ref{Le:Tisom}, $\tilde{k}_{i_0,k_0}=\argmax_{k}\|\T_{i_0}(g_{k_0}\cdot g_{k})\|_{\infty}$, and $\tilde{i}_{i_0,k_0}=\argmax_{i}\left|\T_{i_0}(g_{k_0}\cdot g_{\tilde{k}_{i_0,k_0}})(i)\right|$. 
\end{theorem}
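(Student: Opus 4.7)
The plan is to first establish the two equalities in \eqref{Eq:local_ineq} by a direct application of Lemma \ref{lemma:Perraudin}, and then to derive the concentration bound \eqref{eq:localuncertainty} by combining a standard $\ell^p$ log-convexity inequality with the frame bounds and a Cauchy--Schwarz estimate on $\|\A_\gg\T_{i_0}g_{k_0}\|_\infty$.

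For the equalities, the first one is just the definition of the analysis operator applied to $\T_{i_0}g_{k_0}$ (using $|\langle\T_i g_k,\T_{i_0}g_{k_0}\rangle|=|\langle\T_{i_0}g_{k_0},\T_i g_k\rangle|$ by Hermitian symmetry). For the second, fix $k$ and apply \eqref{eq:norm_product_tig} with $g\leftarrow g_k$, $h\leftarrow g_{k_0}$, $j\leftarrow i_0$ to obtain $\sum_i|\langle\T_i g_k,\T_{i_0}g_{k_0}\rangle|^p=N^{p/2}\|\T_{i_0}(g_{k_0}\cdot g_k)\|_p^p$; summing over $k$ and taking the $p$-th root gives the stated formula.

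For the concentration bound I will assemble three ingredients. (i) The frame inequality applied to $\T_{i_0}g_{k_0}$ yields $\sqrt{A}\,\|\T_{i_0}g_{k_0}\|_2\leq\|\A_\gg\T_{i_0}g_{k_0}\|_2\leq\sqrt{B}\,\|\T_{i_0}g_{k_0}\|_2$. (ii) By the first part together with the definitions of $\tilde k=\tilde k_{i_0,k_0}$ and $\tilde i=\tilde i_{i_0,k_0}$, and Lemma \ref{lemma:Perraudin} used once more in reverse, $\|\A_\gg\T_{i_0}g_{k_0}\|_\infty=\sqrt{N}\,\|\T_{i_0}(g_{k_0}\cdot g_{\tilde k})\|_\infty=\sqrt{N}\,|\T_{i_0}(g_{k_0}\cdot g_{\tilde k})(\tilde i)|=|\langle\T_{\tilde i}g_{\tilde k},\T_{i_0}g_{k_0}\rangle|$, which Cauchy--Schwarz bounds by $\|\T_{\tilde i}g_{\tilde k}\|_2\,\|\T_{i_0}g_{k_0}\|_2$. (iii) For any $y\in\Cbb^n$, log-convexity of $\ell^p$-norms gives $\|y\|_p\leq\|y\|_\infty^{1-2/p}\|y\|_2^{2/p}$ for $p\geq 2$ and $\|y\|_p\geq\|y\|_2^{2/p}/\|y\|_\infty^{2/p-1}$ for $p\leq 2$; both follow by inserting $|y_n|^{|p-2|}$ inside the defining sum and factoring out $\|y\|_\infty^{|p-2|}$.

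For $p\geq 2$, plugging (i)--(iii) into $s_p=\|\A_\gg\T_{i_0}g_{k_0}\|_p/\|\A_\gg\T_{i_0}g_{k_0}\|_2$ gives $s_p\leq\|\T_{\tilde i}g_{\tilde k}\|_2^{\,1-2/p}/A^{1/2-1/p}$, where the factors of $\|\T_{i_0}g_{k_0}\|_2$ cancel because the exponents sum to zero. The analogous computation for $p\leq 2$ (using the reverse interpolation) produces $s_p\leq\|\T_{\tilde i}g_{\tilde k}\|_2^{\,2/p-1}/A^{1/p-1/2}$. In each case I will weaken the $A$-exponent to $1/2$ by multiplying through by $(B/A)^{\min\{1/p,\,1-1/p\}}\geq 1$, which is valid since $A\leq B$ and the exponent is nonnegative; this yields the uniform bound $s_p\leq B^{\min\{1/p,\,1-1/p\}}\|\T_{\tilde i}g_{\tilde k}\|_2^{\,|1-2/p|}/A^{1/2}$ claimed in \eqref{eq:localuncertainty}. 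The second inequality of \eqref{eq:localuncertainty} then follows from Lemma \ref{Le:Tisom} to bound $\|\T_{\tilde i}g_{\tilde k}\|_2\leq\sqrt{N}\,\nu_{\tilde i}\|g_{\tilde k}\|_2$ (using Parseval, $\|\hat g\|_2=\|g\|_2$), raised to the nonnegative power $|1-2/p|$.

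The main obstacle is the exponent bookkeeping across the two cases $p\geq 2$ and $p\leq 2$, together with the (mild) observation that the clean symmetric closed form with $A^{1/2}$ in the denominator and the $B^{\min\{1/p,\,1-1/p\}}$ prefactor is a uniform loosening of the tightest bound the method produces; the tightest bound would read $A^{\max\{1/p,\,1-1/p\}}$ in the denominator and contain no $B$ factor in the numerator, and one trades sharpness for the symmetric, easily interpretable form.
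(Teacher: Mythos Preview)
Your proof is correct and follows essentially the same route as the paper. The paper invokes the Riesz--Thorin theorem on the operator $\A_\gg$ restricted to the one-dimensional span of $\T_{i_0}g_{k_0}$, interpolating between the $\ell^2$ upper frame bound and the $\ell^\infty$ Cauchy--Schwarz bound; on a one-dimensional domain this is exactly the elementary log-convexity inequality $\|y\|_p\leq\|y\|_\infty^{1-2/p}\|y\|_2^{2/p}$ that you use directly. Your version is slightly more economical: because you apply log-convexity to the ratio $s_p$ itself, you never need the upper frame bound $B$ at the interpolation step and obtain the sharper intermediate estimate $s_p\leq\|\T_{\tilde i}g_{\tilde k}\|_2^{|1-2/p|}/A^{|1/2-1/p|}$, which you then deliberately loosen by the factor $(B/A)^{\min\{1/p,1-1/p\}}\geq 1$ to match the stated form. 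Your final remark that the tightest bound the method yields has $A^{\max\{1/p,1-1/p\}}$ in the denominator and no $B$ factor is correct and worth noting; the paper does not make this observation explicit.
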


The bound in \eqref{eq:localuncertainty} is local, because we get a different bound for each $i_0$, $k_0$ pair. For each such pair, the bound depends on the quantities $\tilde{i}_{i_0,k_0},\tilde{k}_{i_0,k_0}$, which are maximizers over a set of all vertices and kernels, respectively; however, as we discuss in Example \ref{Ex:local} below, $\tilde{i}_{i_0,k_0}$ is typically close to $i_0$, and $\tilde{k}_{i_0,k_0}$ is typically close to $k_0$. For this reason, this bound typically depends only on local quantities.

\begin{proof}[Proof of Theorem \ref{theo:local_uncertainty}]
For notational brevity in this proof, we omit the indices $i_0,k_0$ for the quantities $\tilde{i}$ and $\tilde{k}$. First, note that
\[
\|\A_{\gg}\T_{i_0}g_{k_0}\|_{\infty}
= \max_{k}\sqrt{N}\|\T_{i_0}(g_{k_0}\cdot g_{k})\|_{\infty}\leq\|\T_{\tilde{i}}g_{\tilde{k}}\|_{2}\|\T_{i_0}g_{k_0}\|_{2},
\]
where $\tilde{k}_{i_0,k_0}=\argmax_{k}\|\T_{i_0}(g_{k_0}\cdot g_{k})\|_{\infty}$
and $\tilde{i}_{i_0,k_0}=\argmin_{i}\left|\T_{i_0}(g_{k_0}\cdot g_{\tilde{k}})(i)\right|$. 
Let us then interpolate the two following expressions:
\begin{align}
\|\A_{\gg}\T_{i_0}g_{k_0}\|_{2}&\leq B^{\frac{1}{2}}\|\T_{i_0}g_{k_0}\|_{2} \\
\hbox{ and~~~} \|\A_{\gg}\T_{i_0}g_{k_0}\|_{\infty}&\le\|\T_{\tilde{i}}g_{\tilde{k}}\|_{2}\|\T_{i_0}g_{k_0}\|_{2}.
\end{align}
We use the Riesz-Thorin Theorem (Theorem \ref{theo:Riesz-Thorin}) with $p_{1}=q_{1}=p_{2}=2$,
$q_{2}=\infty$, $M_{p}=B^{\frac{1}{2}}$ and $M_{q}=\|\T_{\tilde{i}}g_{\tilde{k}}\|_{2}$. Note that $\A_{\gg}$ is a bounded operator from the Hilbert space spanned by $\T_{i_0}g_{k_0}$ (isomorphic to a one-dimensional Hilbert space) to the one spanned by $\{T_{i_0}g_{k_0}\}_{i,k}$. 
We take $t=\frac{2}{r_{2}}$ and find $r_{1}=2$, leading to
\[
\|\A_{\gg}\T_{i_0}g_{k_0}\|_{r_{2}}\le B^{\frac{1}{r_{2}}}\|\T_{\tilde{i}}g_{\tilde{k}}\|_{2}^{1-\frac{2}{r_{2}}}\|\T_{i_0}g_{k_0}\|_{2}.
\]
Since $\A_{\gg}$ is a frame, we also have $\|\A_{\gg}\T_{i_0}g_{k_0}\|_{2}\ge A^{\frac{1}{2}}\|\T_{i_0}g_{k_0}\|_{2}$, which yields:
\[
\frac{\|\A_{\gg}\T_{i_0}g_{k_0}\|_{2}}{||\A_{\gg}\T_{i_0}g_{k_0}\|_{p}}\ge\frac{A^{\frac{1}{2}}}{B^{\frac{1}{p}}\|\T_{\tilde{i}}g_{\tilde{k}}\|_{2}^{1-\frac{2}{p}}}.
\]
Finally, thanks to Hölder's inequality, we have for $p\leq2$ and $\frac{1}{p}+\frac{1}{q}=1$
\begin{eqnarray*}
\frac{\|\A_{\gg}\T_{i_0}g_{k_0}\|_{2}}{\|\A_{\gg}\T_{i_0}g_{k_0}\|_{p}} &  \leq & 
\frac{\|\A_{\gg}\T_{i_0}g_{k_0}\|_{q}}{\|\A_{\gg}\T_{i_0}g_{k_0}\|_{2}} \\
& \leq & \frac{B^{\frac{1}{q}}\|\T_{\tilde{i}}g_{\tilde{k}}\|_{2}^{1-\frac{2}{p}}}{A^{\frac{1}{2}}} \\
&\leq&\frac{B^{1-\frac{1}{p}}\|\T_{\tilde{i}}g_{\tilde{k}}\|_{2}^{\frac{2}{p}-1}}{A^{\frac{1}{2}}} \\
& \leq & \frac{ B^{1-\frac{1}{p}} \left( \sqrt{N}\nu_{\tilde{i}} \| g_{\tilde{k}} \|_2 \right)^{\frac{2}{p}-1}}{A^{\frac{1}{2}}}.
\end{eqnarray*}
\end{proof}

The next corollary shows that in many cases, the local uncertainty inequality~\eqref{eq:localuncertainty} is  sharp (becomes an equality). To obtain this,
we require that the frame 
 $\A_{\gg}$ is tight and 
 $|\langle\T_ig_k,T_{i_0}g_{k_0}\rangle|$ is maximized when $k=k_0$ and $i=i_0$. 
\begin{corollary}\label{corol:localuncertainty}
Under the assumptions of Theorem \ref{theo:local_uncertainty} and, assuming additionally
\begin{enumerate}
\item $\A_{\gg}$ is a tight frame with frame-bound $A$,
\item $k_0 = \argmax_{k} \| \T_{i_0}(g_k\cdot g_{k_0})\|_\infty$, and 
\item $i_0 = \argmax_j |\T_{i_0}g_{k_0}^2(j) |$,
\end{enumerate} 
we have 
\begin{equation} \label{eq:local_uncertainty_with_hypothesis}
s_\infty\left(\A_{\gg}\T_{i_0}g_{k_0}\right) 
= \frac{\|\A_{\gg}\T_{i_0}g_{k_0}\|_{\infty}}{\|\A_{\gg}\T_{i_0}g_{k_0} \|_{2}}
= \frac{\|\T_{i_0} g_{k_0} \|_{2} }{A^{\frac{1}{2}}}.
\end{equation}
\end{corollary}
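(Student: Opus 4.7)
The plan is to compute $\|\A_{\gg}T_{i_0}g_{k_0}\|_\infty$ and $\|\A_{\gg}T_{i_0}g_{k_0}\|_2$ separately and then form their ratio. The two extra hypotheses will be used precisely to guarantee that the ambiguity function $\mathbb{A}_\gg(i_0,k_0,i,k)$ is maximized at the ``diagonal'' point $(i,k)=(i_0,k_0)$, and tightness will be used to control the $\ell^2$-norm of the analysis coefficients.

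First, I would apply Lemma~\ref{lemma:Perraudin} to rewrite the ambiguity function pointwise as
\[
\bigl|\A_{\gg}T_{i_0}g_{k_0}(i,k)\bigr| \;=\; \bigl|\langle T_{i_0}g_{k_0}, T_i g_k\rangle\bigr| \;=\; \sqrt{N}\,\bigl|T_{i_0}(g_{k_0}\cdot g_k)(i)\bigr|.
\]
Taking the supremum over $i$ for each fixed $k$ gives $\sqrt{N}\,\|T_{i_0}(g_{k_0}\cdot g_k)\|_\infty$, and then taking the supremum over $k$ and invoking hypothesis~(2) shows that the maximizer is $k=k_0$. Hypothesis~(3) then tells us that the inner supremum $\|T_{i_0}(g_{k_0}^2)\|_\infty$ is attained at $i=i_0$. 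Hence
\[
\|\A_{\gg}T_{i_0}g_{k_0}\|_\infty \;=\; \sqrt{N}\,T_{i_0}(g_{k_0}^2)(i_0).
\]

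Next I would identify $\sqrt{N}\,T_{i_0}(g_{k_0}^2)(i_0)$ with $\|T_{i_0}g_{k_0}\|_2^2$ by applying Lemma~\ref{lemma:Perraudin} a second time with $i=j=i_0$ and $g=h=g_{k_0}$:
\[
\|T_{i_0}g_{k_0}\|_2^2 \;=\; \langle T_{i_0}g_{k_0}, T_{i_0}g_{k_0}\rangle \;=\; \sqrt{N}\,T_{i_0}(g_{k_0}^2)(i_0).
\]
Combining the two displays yields $\|\A_{\gg}T_{i_0}g_{k_0}\|_\infty = \|T_{i_0}g_{k_0}\|_2^2$. For the denominator, tightness (hypothesis~(1)) gives the Parseval-type identity $\|\A_{\gg}f\|_2 = A^{1/2}\|f\|_2$ for all $f$, so in particular $\|\A_{\gg}T_{i_0}g_{k_0}\|_2 = A^{1/2}\|T_{i_0}g_{k_0}\|_2$. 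Dividing these two expressions gives
\[
s_\infty\bigl(\A_{\gg}T_{i_0}g_{k_0}\bigr) \;=\; \frac{\|T_{i_0}g_{k_0}\|_2^2}{A^{1/2}\|T_{i_0}g_{k_0}\|_2} \;=\; \frac{\|T_{i_0}g_{k_0}\|_2}{A^{1/2}},
\]
which is the claim.

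There is no real obstacle; the content of the corollary is essentially bookkeeping to verify that hypotheses~(2) and (3) are exactly what is needed to turn the inequality $\max_{i,k}|T_{i_0}(g_{k_0}\cdot g_k)(i)| \leq \|T_{\tilde i}g_{\tilde k}\|_2\|T_{i_0}g_{k_0}\|_2/\sqrt{N}$ used in the proof of Theorem~\ref{theo:local_uncertainty} into the exact identity $|T_{i_0}(g_{k_0}^2)(i_0)| = \|T_{i_0}g_{k_0}\|_2^2/\sqrt{N}$. The only point that requires care is noting that the two kernel products reduce to $g_{k_0}^2$ and that the quantity $T_{i_0}(g_{k_0}^2)(i_0)$ is nonnegative (being $\|T_{i_0}g_{k_0}\|_2^2/\sqrt{N}$), so the absolute value on $\|T_{i_0}(g_{k_0}^2)\|_\infty$ can be dropped without ambiguity.
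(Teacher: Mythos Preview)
Your proof is correct and follows essentially the same route as the paper: both compute the numerator by applying Lemma~\ref{lemma:Perraudin}, use hypotheses~(2) and~(3) to identify the maximizer as $(i_0,k_0)$, apply the lemma once more to equate $\sqrt{N}\,T_{i_0}(g_{k_0}^2)(i_0)$ with $\|T_{i_0}g_{k_0}\|_2^2$, and handle the denominator via tightness. Your explicit remark on the nonnegativity of $T_{i_0}(g_{k_0}^2)(i_0)$ is a nice touch that the paper leaves implicit.
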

\begin{proof}
The proof follows directly from the two following equalities. For the denominators, since the frame is tight, we have:
\[
\|\A_{\gg}\T_{i_0}g_{k_0}\|_{2} = A^{\frac{1}{2}}\|\T_{i_0}g_{k_0}\|_{2}.
\]
For the numerators, we have

\begin{eqnarray}
\|\A_{\gg}\T_{i_0}g_{k_0}\|_{\infty} 
&=& \max_{i,k} |\scp{\T_{i}g_{k}}{\T_{i_0}g_{k_0}} | \nonumber \\
&=& \sqrt{N}\max_{i,k} |\T_{i_0}(g_{k}\cdot g_{k_0})(i) | \label{Eq:loccor1}\\
&=& \sqrt{N}\max_{k} \|\T_{i_0}(g_{k}\cdot g_{k_0})\|_\infty \nonumber \\
&=& \sqrt{N}\|\T_{i_0} g_{k_0}^2\|_\infty \label{Eq:loccor2} \\
&=& \sqrt{N}|\T_{i_0}g_{k_0}^2(i_0) | \label{Eq:loccor3} \\
&=& \scp{\T_{i_0}g_{k_0} }{\T_{i_0} g_{k_0} } \label{Eq:loccor4} \\
&=& \|\T_{i_0} g_{k_0} \|_2^2, \nonumber
\end{eqnarray}
where \eqref{Eq:loccor1} and \eqref{Eq:loccor4} follow from \eqref{eq:norm_product_tig_el}, \eqref{Eq:loccor2} follows from the second hypothesis, and \eqref{Eq:loccor3} follows from the third hypothesis.
\end{proof}
\begin{corollary}\label{corol:lower_bound_on_concentration}
Under the assumptions of Theorem \ref{theo:local_uncertainty}, we have
\begin{align}\label{Eq:local_lower_b}
s_\infty(\A_{\gg}\T_{i_{0}}g_{k_{0}}) = \frac{\|\A_{\gg}\T_{i_{0}}g_{k_{0}}\|_{\infty}}{\|\A_{\gg}\T_{i_{0}}g_{k_{0}}\|_{2}}\geq \frac{\|\T_{i_{0}}g_{k_{0}}\|_{2}}{B^{\frac{1}{2}}}
,
\end{align}
which is a lower bound on the concentration measure.
\begin{proof}
We have
\begin{eqnarray} \label{Eq:loccor5}
\|\A_{\gg}\T_{i_{0}}g_{k_{0}}\|_{\infty}= \max_{i,k} |\scp{\T_{i}g_{k}}{\T_{i_0}g_{k_0}} | \geq  \left|\scp{\T_{i_{0}}g_{k_{0}}}{\T_{i_{0}}g_{k_{0}}}\right| =  \|\T_{i_{0}}g_{k_{0}}\|_{2}^{2}. 
\end{eqnarray}
Additionally, because 
$\{T_i g_k\}_{i=1,2,\ldots,N;k=0,1,\ldots,M-1}$ 
is a frame, we have
\begin{align} \label{Eq:loccor6}
\|\A_{\gg}\T_{i_{0}}g_{k_{0}}\|_{2}\leq B^{\frac{1}{2}}\|\T_{i_{0}}g_{k_{0}}\|_{2}.
\end{align}
Combining \eqref{Eq:loccor5} and \eqref{Eq:loccor6} yields the desired inequality in \eqref{Eq:local_lower_b}.
\end{proof}
\end{corollary}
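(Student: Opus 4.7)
The plan is to establish the lower bound on the concentration $s_\infty$ by bounding the numerator $\|\A_{\gg}\T_{i_{0}}g_{k_{0}}\|_{\infty}$ from below and the denominator $\|\A_{\gg}\T_{i_{0}}g_{k_{0}}\|_{2}$ from above, each in terms of $\|\T_{i_{0}}g_{k_{0}}\|_{2}$, and then dividing. Unlike Corollary \ref{corol:localuncertainty}, this inequality is claimed without assuming tightness or that the index $(i_{0},k_{0})$ actually maximizes the ambiguity function, so the argument should rely only on the frame property together with a single pointwise lower bound on $\A_{\gg}\T_{i_{0}}g_{k_{0}}$.

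For the numerator, I would use the definition of the $\ell^\infty$-norm of the analysis vector as the maximum of $|\scp{\T_{i}g_{k}}{\T_{i_{0}}g_{k_{0}}}|$ over all frame indices $(i,k)$, and simply evaluate the maximand at the particular index $(i_{0},k_{0})$. This yields the pointwise lower bound
\[
\|\A_{\gg}\T_{i_{0}}g_{k_{0}}\|_{\infty}\;\geq\;\left|\scp{\T_{i_{0}}g_{k_{0}}}{\T_{i_{0}}g_{k_{0}}}\right|\;=\;\|\T_{i_{0}}g_{k_{0}}\|_{2}^{2}.
\]

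For the denominator, I would invoke the upper frame bound applied to the signal $f=\T_{i_{0}}g_{k_{0}}$, which reads
\[
\|\A_{\gg}\T_{i_{0}}g_{k_{0}}\|_{2}^{2}\;=\;\sum_{i,k}\left|\scp{\T_{i}g_{k}}{\T_{i_{0}}g_{k_{0}}}\right|^{2}\;\leq\;B\,\|\T_{i_{0}}g_{k_{0}}\|_{2}^{2},
\]
so that $\|\A_{\gg}\T_{i_{0}}g_{k_{0}}\|_{2}\leq B^{1/2}\|\T_{i_{0}}g_{k_{0}}\|_{2}$. Dividing the two resulting estimates cancels one factor of $\|\T_{i_{0}}g_{k_{0}}\|_{2}$ and produces exactly the claimed bound $\|\T_{i_{0}}g_{k_{0}}\|_{2}/B^{1/2}$. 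There is no real obstacle here; the only substantive observation is that the diagonal coefficient of the ambiguity function at $(i_{0},k_{0})$ equals $\|\T_{i_{0}}g_{k_{0}}\|_{2}^{2}$, and this single lower bound is already strong enough to dominate the full frame energy by a factor $\|\T_{i_{0}}g_{k_{0}}\|_{2}/B^{1/2}$.
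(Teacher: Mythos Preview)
Your proposal is correct and follows essentially the same approach as the paper: bound the numerator below by evaluating the ambiguity function at the diagonal index $(i_0,k_0)$ to get $\|\T_{i_0}g_{k_0}\|_2^2$, bound the denominator above via the upper frame inequality, and divide. There is no meaningful difference between your argument and the paper's.
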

Together, Theorem \ref{theo:local_uncertainty} and Corollary \ref{corol:lower_bound_on_concentration} yield lower and upper bounds on the local sparsity levels $s_\infty(\A_{\gg}\T_{i_{0}}g_{k_{0}})$: 
\[
\frac{\|\T_{\tilde{i}}g_{\tilde{k}}\|_{2}}{A^{\frac{1}{2}}} \geq s_\infty(\A_{\gg}\T_{i_{0}}g_{k_{0}}) = \frac{\|\A_{\gg}\T_{i_{0}}g_{k_{0}}\|_{\infty}}{\|\A_{\gg}\T_{i_{0}}g_{k_{0}}\|_{2}}\geq \frac{\|\T_{i_{0}}g_{k_{0}}\|_{2}}{B^{\frac{1}{2}}}.
\]

\subsection{Illustrative examples}

In order to better understand this local uncertainty principle, we illustrate it with some examples.
\begin{example}[Local uncertainty on a sensor network] \label{Ex:local}
Let us concentrate on the case where $p=\infty$. Theorem \ref{theo:local_uncertainty} tells us that
\begin{equation}\label{eq:localuncertaintyexample}
\frac{\|\A_{\gg}\T_{i_0}g_{k_0}\|_{\infty}}{\|\A_{\gg}\T_{i_0}g_{k_0}\|_{2}}
\leq \frac{\|\T_{\tilde{i}_{i_0,k_0}}g_{\tilde{k}_{i_0,k_0}}\|_{2}}{A^{\frac{1}{2}}}
\leq \frac{  \left( \sqrt{N}\nu_{\tilde{i}_{i_0,k_0}} \| g_{\tilde{k}_{i_0,k_0}} \|_2 \right)}{A^{\frac{1}{2}}},
\end{equation}
meaning that the concentration of $\A_{\gg}\T_{i_0}g_{k_0}$ is limited by $\frac{1}{\|\T_{\tilde{i}}g_{\tilde{k}_{i_0,k_0}}\|_{2}}$. One question is to what extent this quantity is local or reflects the local behavior of the graph.
As a general illustration for this discussion, we present in Fig.~\ref{fig: local uncertainty filter} quantities related to the local uncertainty of a random sensor network of $100$ nodes evaluated for two different values of $k$ (one in each column) and all nodes $i$. 

\begin{figure}[htb]
\caption{Illustration of Theorem \ref{theo:local_uncertainty} and related variables $\tilde{i}$ and $\tilde{k}$ for a random sensor graph of $100$ nodes. Top figure: the $8$ uniformly translated kernels $\{\widehat{g_k}\}_k$ (in 8 different colors) defined on the spectrum and giving a tight frame. Each row corresponds to quantities related to the local uncertainty principle. The first column concerns the kernel (filter) in blue on the top figure, the second is associated with the orange one. On a sensor graph, the local uncertainty level (inversely proportional to the local sparsity level plotted here) is far from constant from one node to another or from one frequency band to another. 
}
\centering
 \vspace{-2cm}
\begin{tabular}{m{0.3\textwidth} m{0.3\textwidth} m{0.3\textwidth}}
 Graph Gabor filter bank 
 & \includegraphics[width=0.2\textwidth]{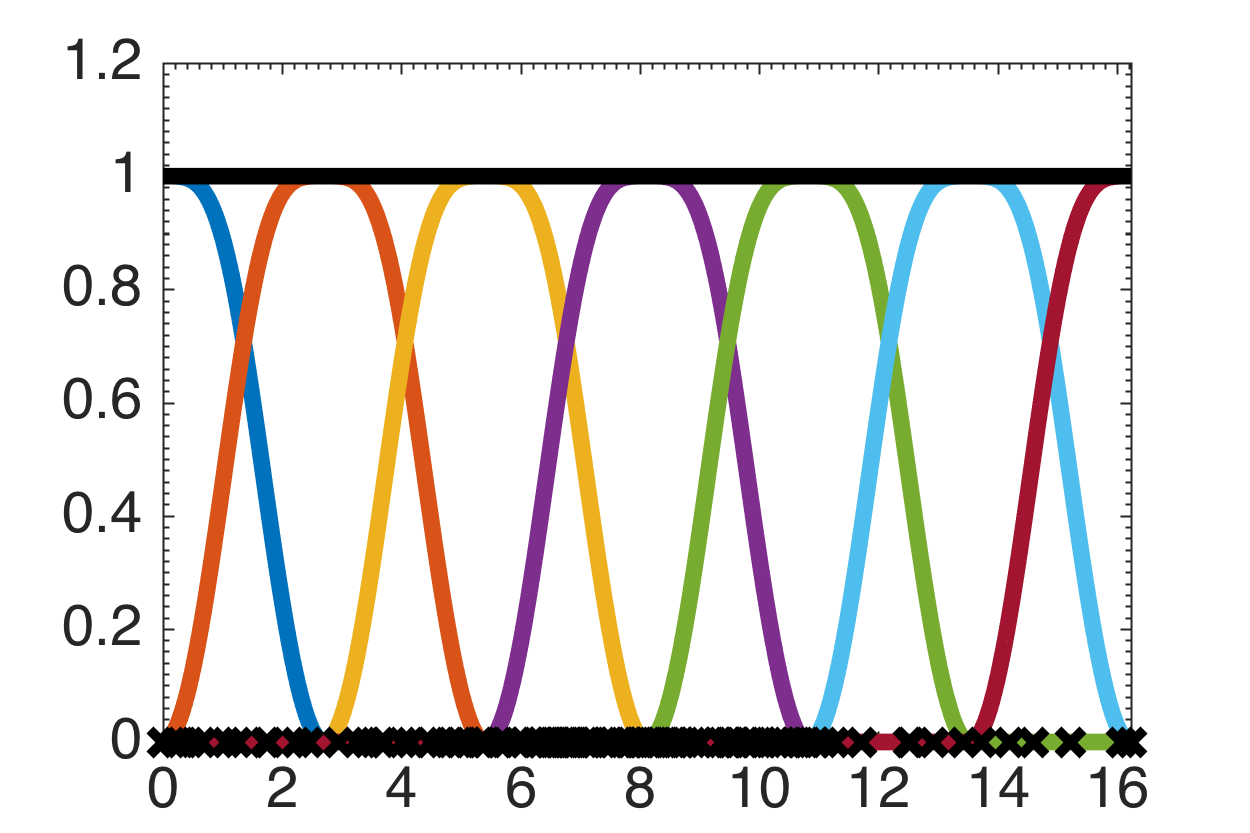}   &\\
& First filter $k_0=0$ (blue) & Second Filter $k_0=1$ (orange) \\
Local sparsity level: 
$$s_\infty(\A_{\gg}\T_{i_0}g_{k_0}) = \frac{\|\A_{\gg}\T_{i_0}g_{k_0}\|_{\infty}}{\|\A_{\gg}\T_{i_0}g_{k_0}\|_{2}}$$
&  \includegraphics[width=0.25\textwidth]{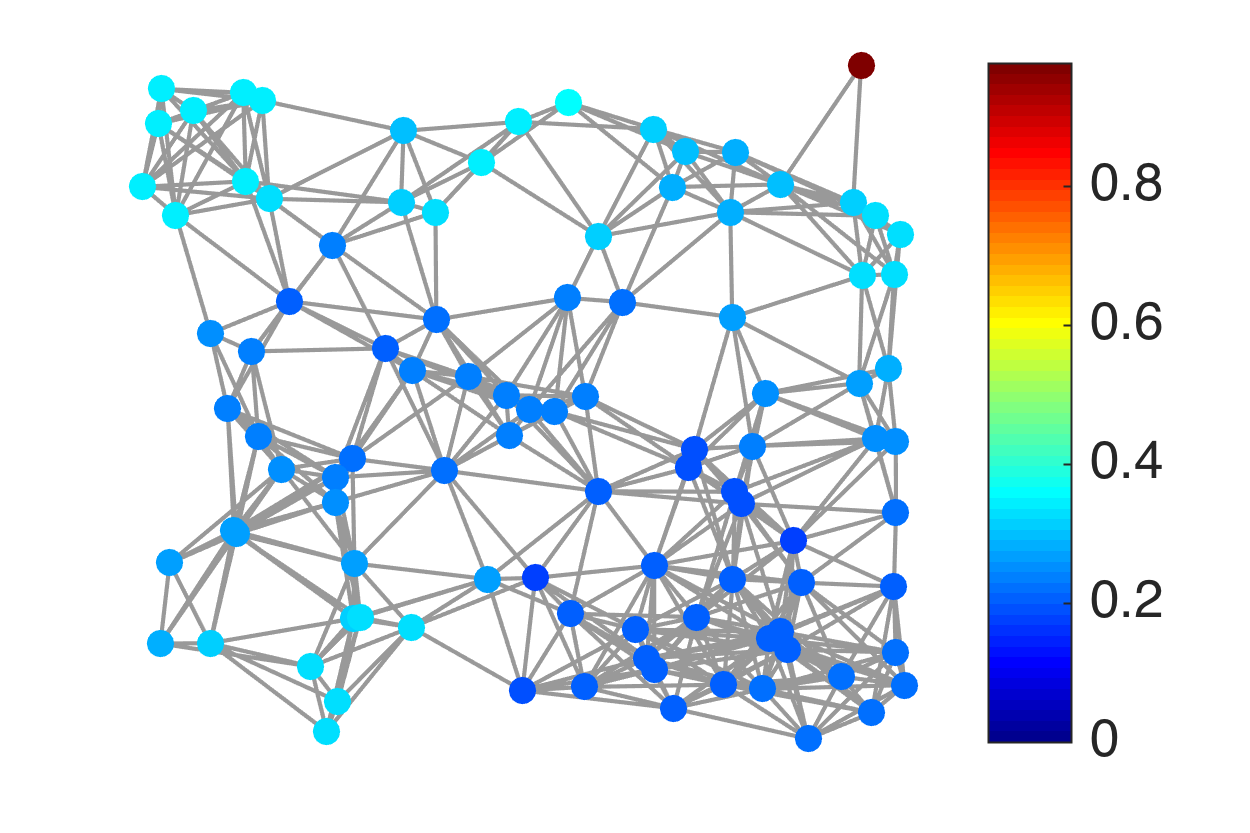}  
&  \includegraphics[width=0.25\textwidth]{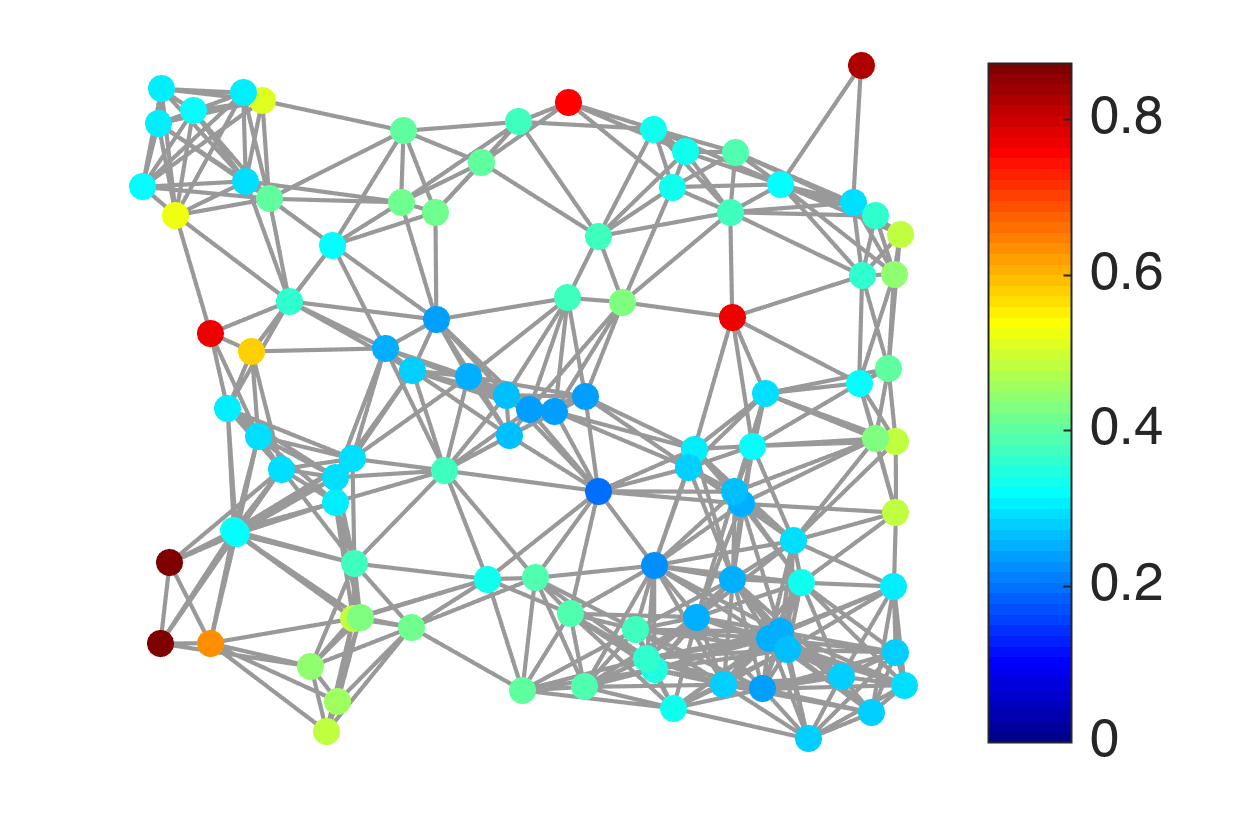}  \\ 
Upper bound on local sparsity from Theorem \ref{theo:local_uncertainty}: 
$$\frac{\|\T_{\tilde{i}_{i_0,k_0}}g_{\tilde{k}_{i_0,k_0}}\|_{2}}{A^{\frac{1}{2}}}$$
&  \includegraphics[width=0.25\textwidth]{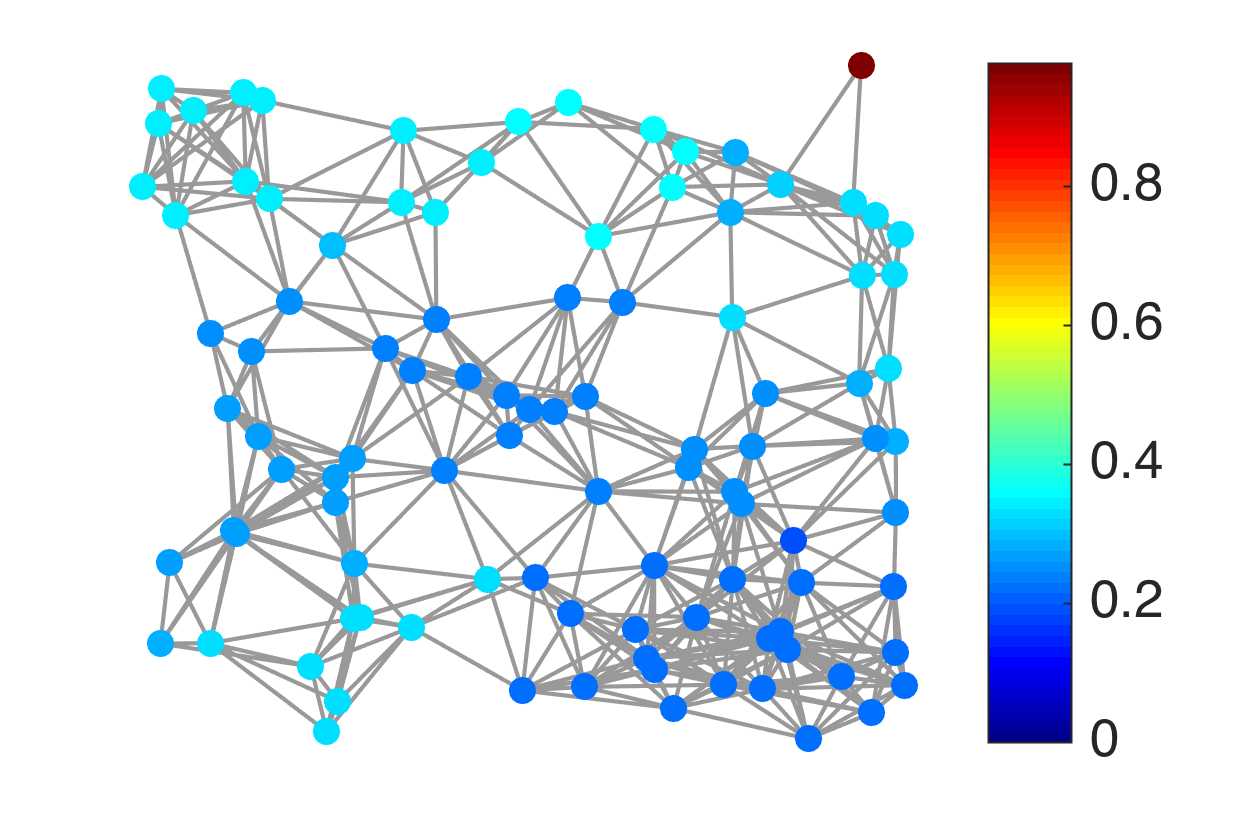} \vspace{-0.25cm}
& \includegraphics[width=0.25\textwidth]{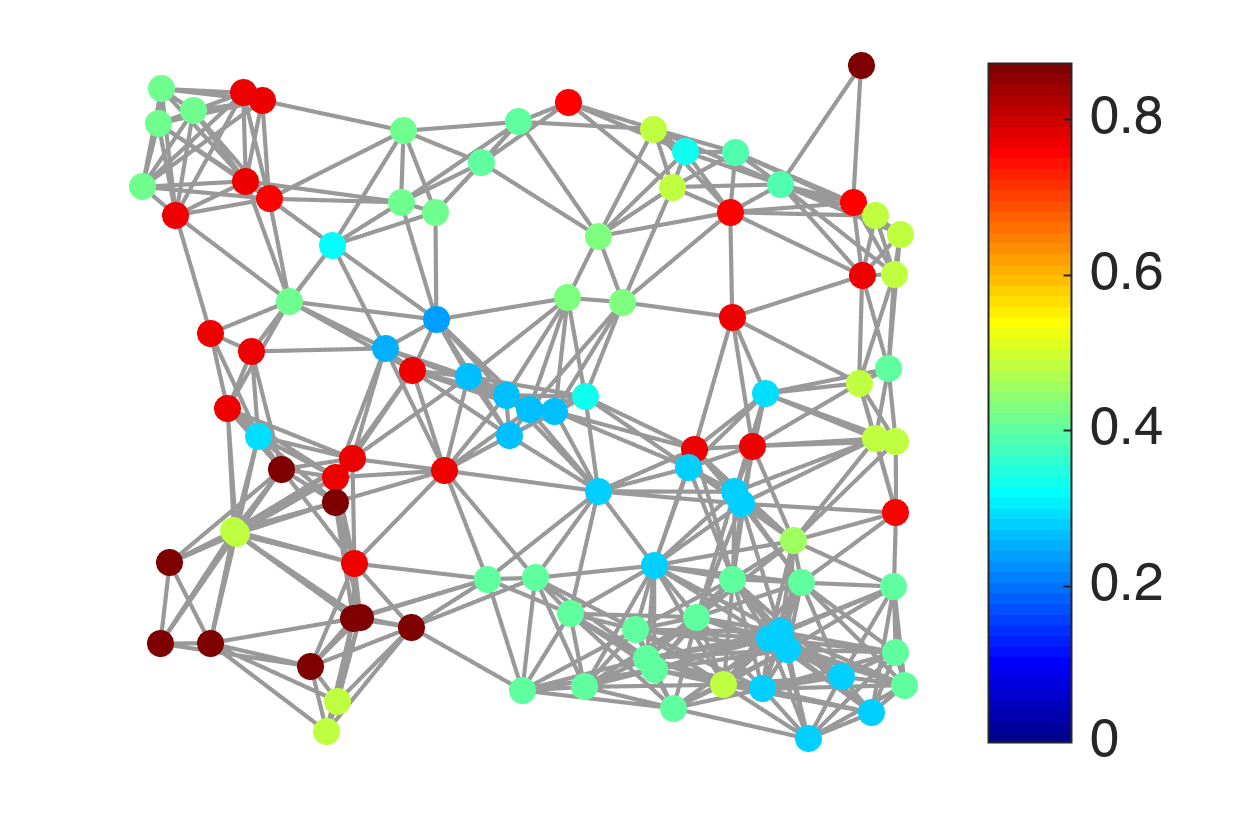} \vspace{-0.25cm} \\
Maximimizing filter index:
$$\tilde{k}_{k_0,i_0}$$
&  \includegraphics[width=0.25\textwidth]{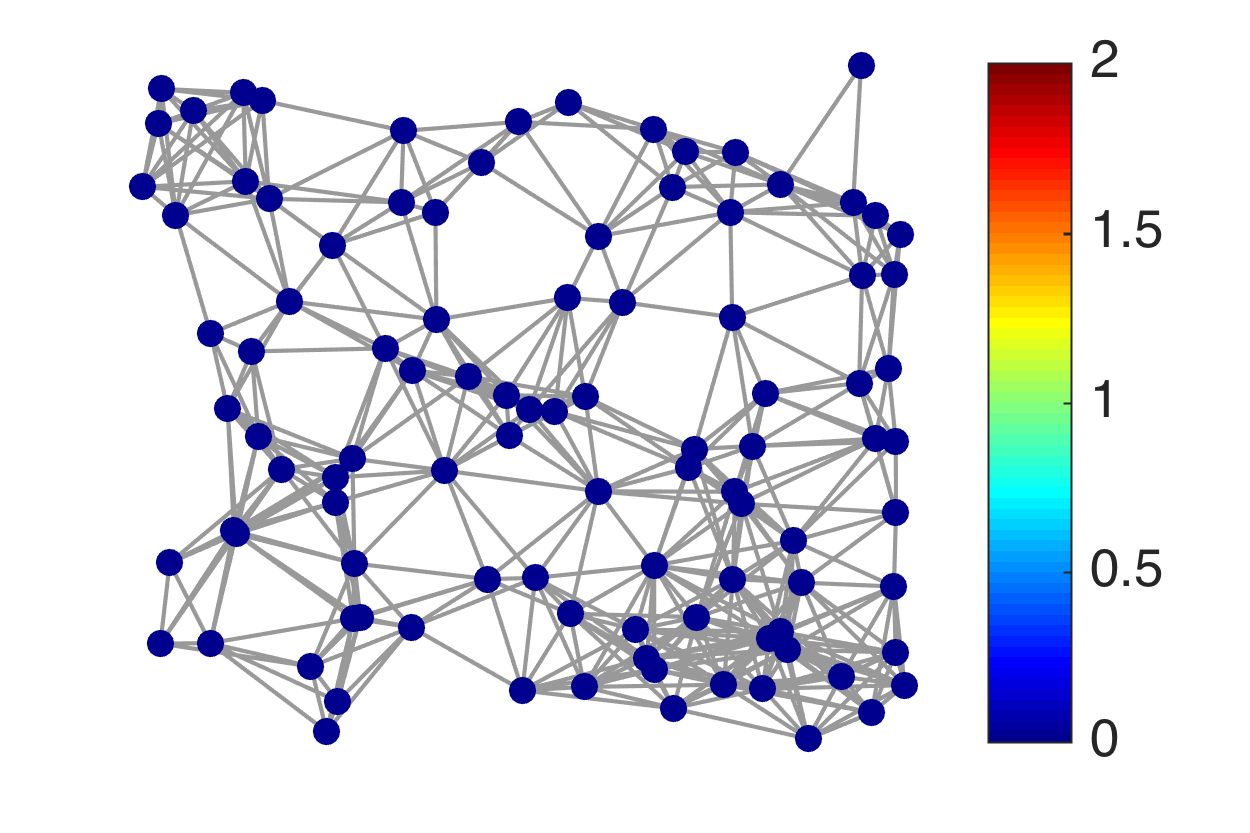} \vspace{-0.25cm} 
&  \includegraphics[width=0.25\textwidth]{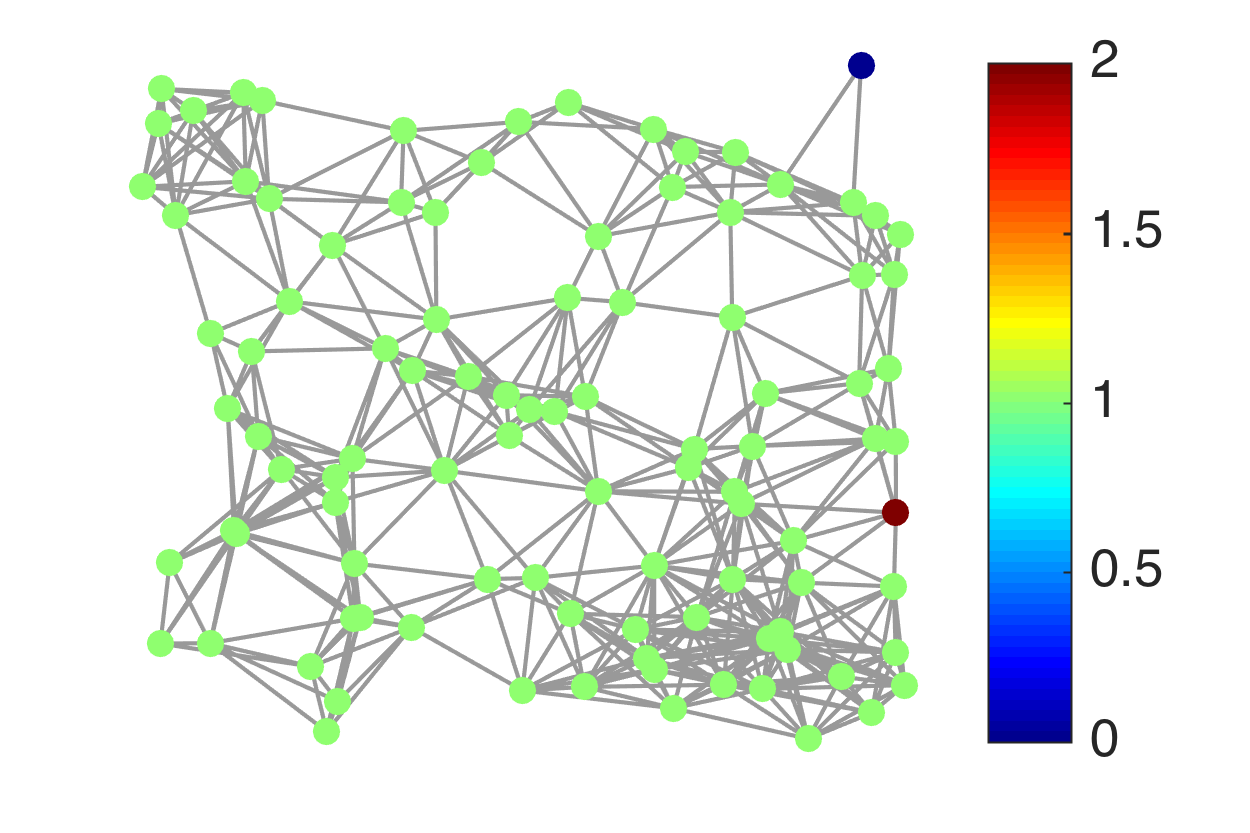}  \vspace{-0.25cm}\\ 
Hop distance between $\tilde{i}_{k_0,i_0}$ and $i_0$ 
&  \includegraphics[width=0.25\textwidth]{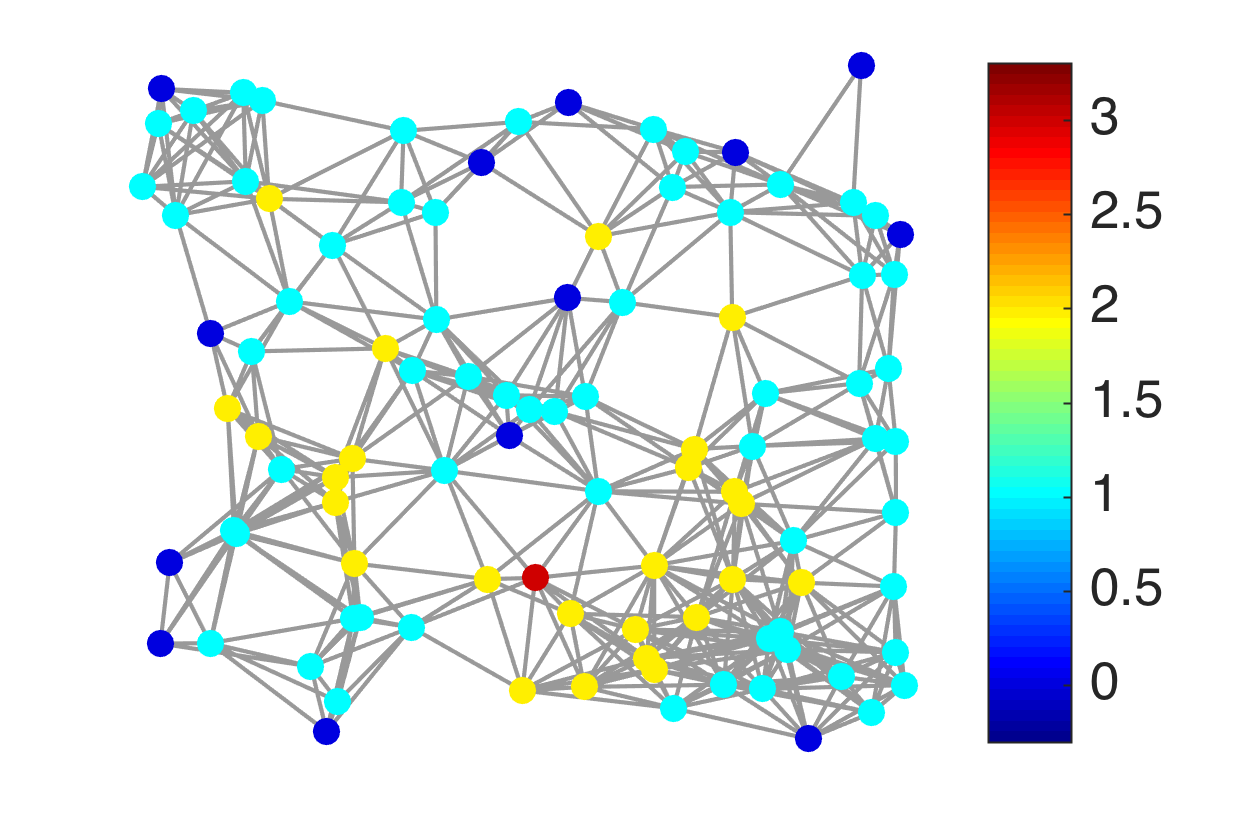} \vspace{-0.25cm}
&  \includegraphics[width=0.25\textwidth]{figures/LU_hope_dist_11.png}   \vspace{-0.25cm}\\
Lower bound on the local sparsity from Corollary \ref{corol:lower_bound_on_concentration}:
$$
\frac{\|\T_{i_{0}}g_{k_{0}}\|_{2}}{B^{\frac{1}{2}}}
$$
&  \includegraphics[width=0.25\textwidth]{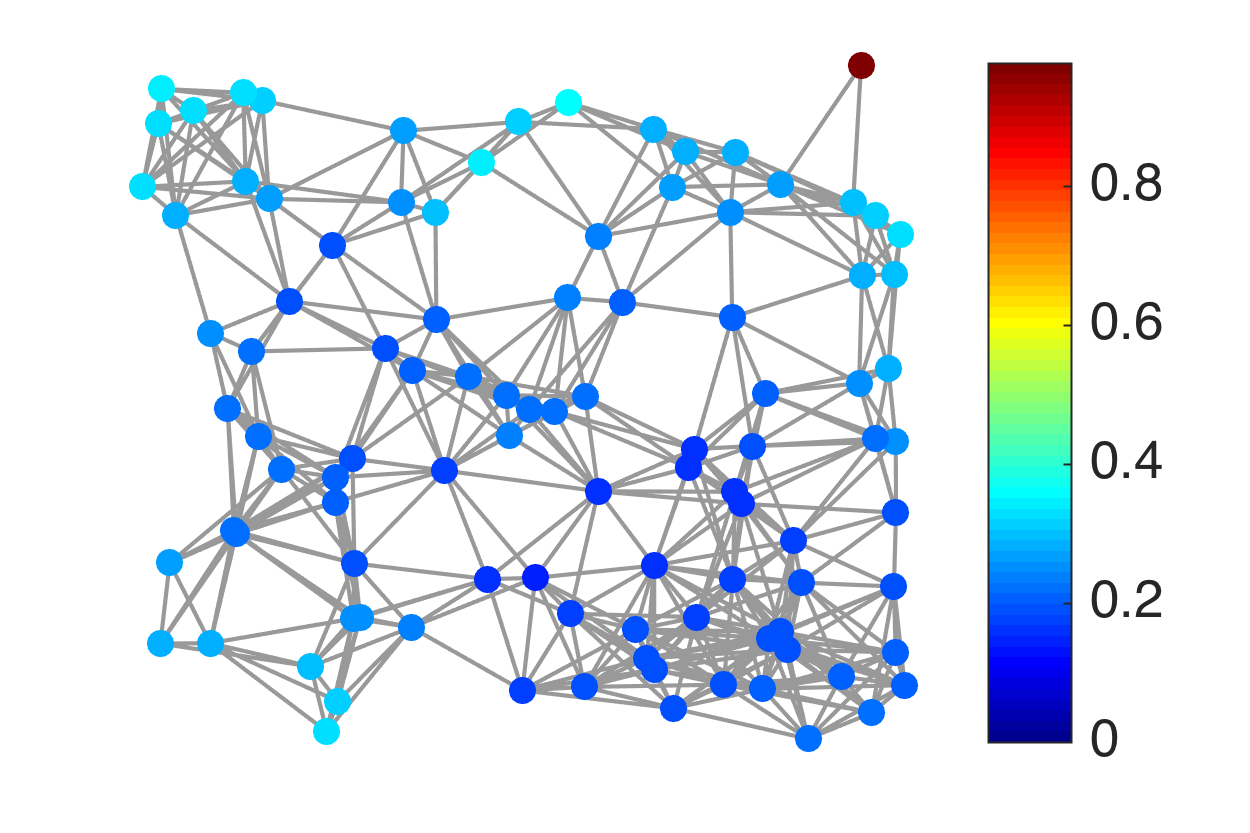} \vspace{-0.25cm} 
&  \includegraphics[width=0.25\textwidth]{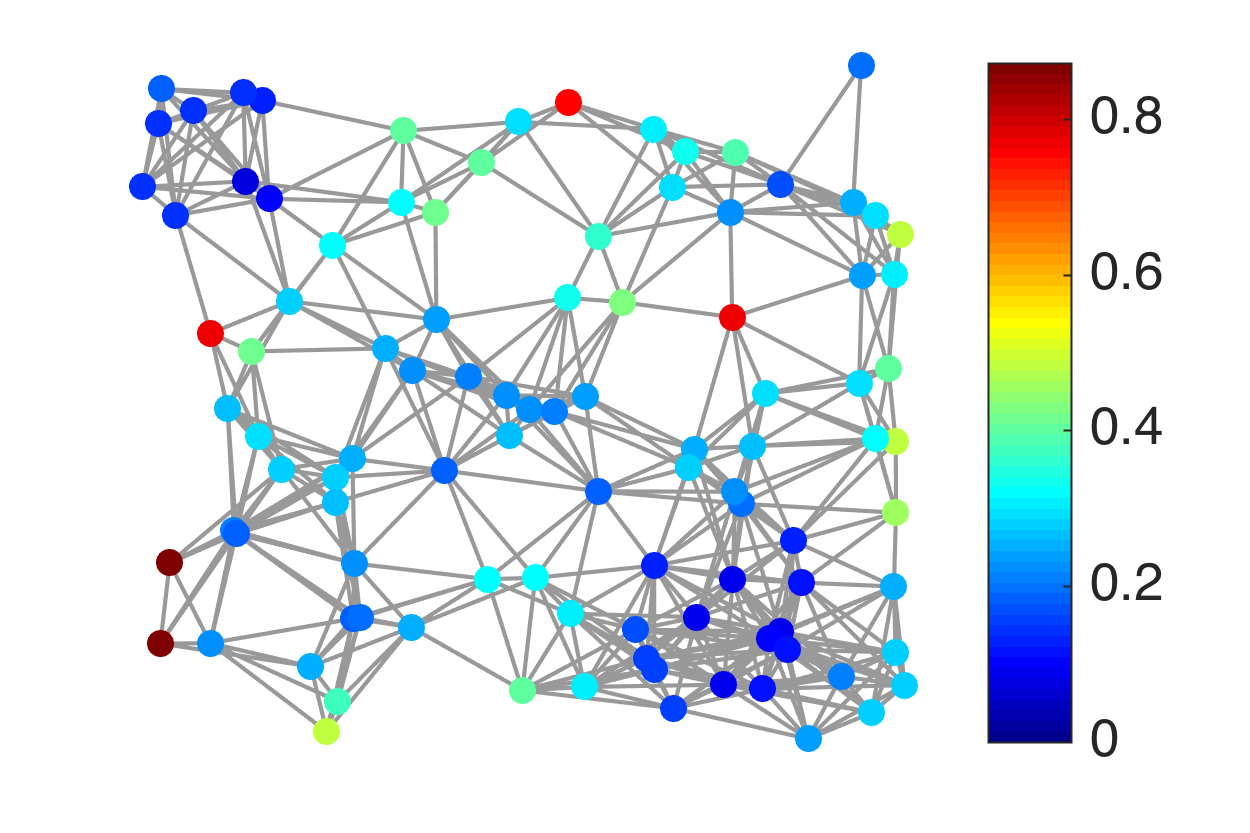}  \vspace{-0.25cm}\\
Relative error between $s_\infty(\A_{\gg}\T_{i_0}g_{k_0})$ and $B^{-\frac{1}{2}}\|T_{i_0}g_{k_0}\|_2$:
$$\frac{s_\infty(\A_{\gg}\T_{i_0}g_{k_0})-B^{-\frac{1}{2}}\|T_{i_0}g_{k_0}\|_2}{s_\infty(\A_{\gg}\T_{i_0}g_{k_0})}$$
&  \includegraphics[width=0.25\textwidth]{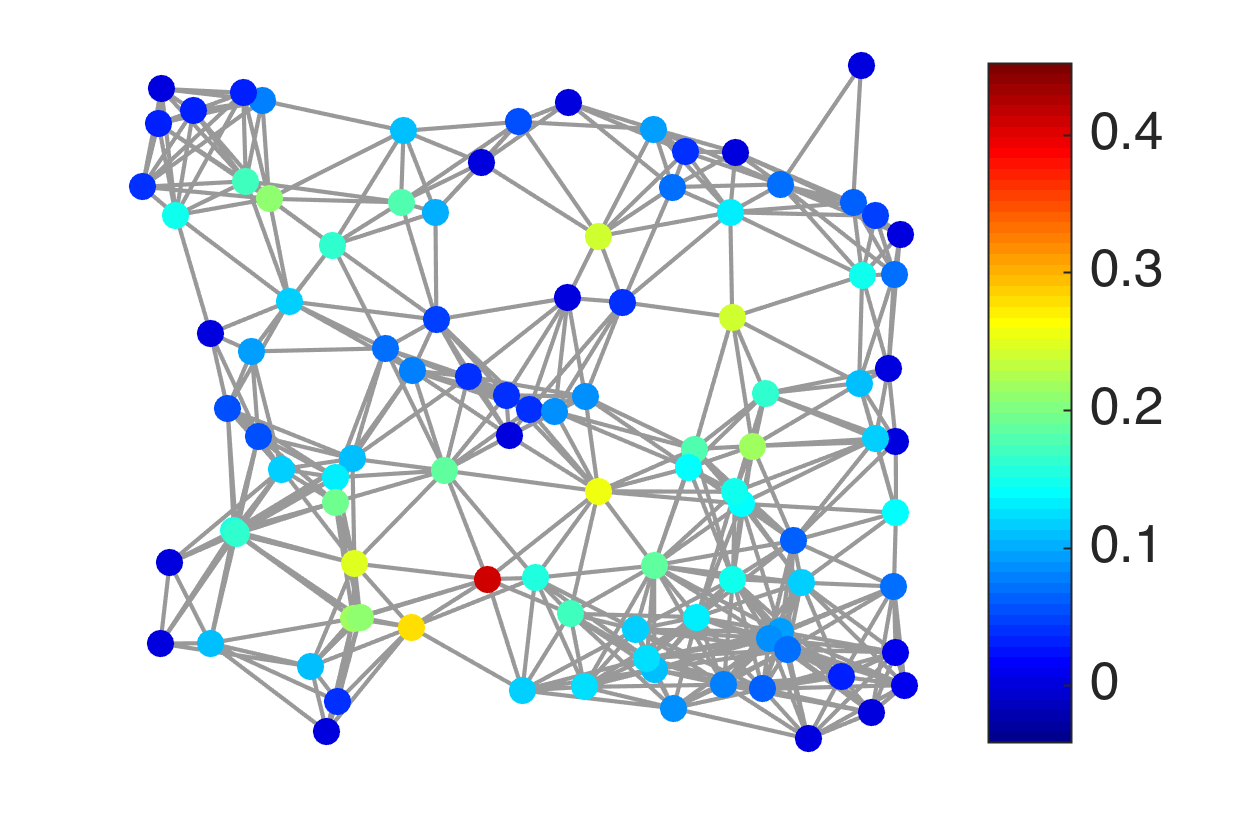}  \vspace{-0.25cm}
& \includegraphics[width=0.25\textwidth]{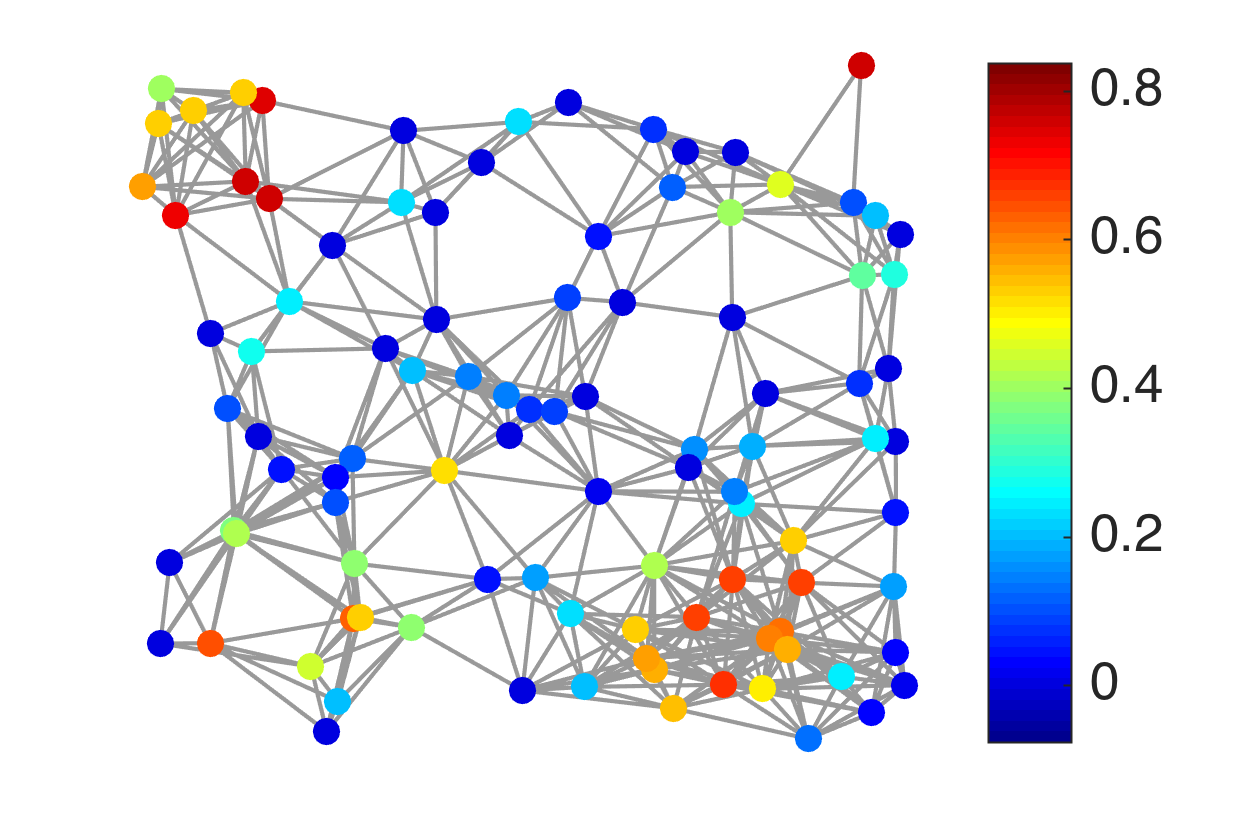} 
\end{tabular}
\label{fig: local uncertainty filter}
\end{figure}
\clearpage

The first row (not counting the top figure) shows the local sparsity levels  of $\A_{\gg}\T_{i_0}g_{k_0}$ in terms of the $\ell^\infty$-norm (left hand side of~\eqref{eq:localuncertaintyexample}) at each node of the graph. The second row shows the values of the upper bound on local sparsity for each node of the graph (middle term of~\eqref{eq:localuncertaintyexample}). The values of both rows are strikingly close. Note that for this type of graph, 
local sparsity/concentration is lowest 
where the nodes are well connected. 

We focus now on the values of $\tilde{k}$ and $\tilde{i}$ as they are crucial in Theorem \ref{theo:local_uncertainty}. We also give insights that explain when a tight bound is obtained, as stated in Corollary~\ref{corol:localuncertainty}.
There is not a simple way to determine the value of $\tilde{k}$, because it depends not only on the node $i_0$ and the filters $\widehat{g_k}$, but also on the graph Fourier basis. However, the definition $\tilde{k} = \text{arg}\max_{k} \| \T_{i_0}(g_k\cdot g_{k_0})\|_\infty$ implies that the two kernels $\widehat{g_{\tilde{k}}}$ and $\widehat{g_{k_0}}$ have to overlap ``as much as possible'' in the graph Fourier domain in order to maximize the infinity-norm. In the case of a Gabor filter bank like the one presented in the first line of Fig.~\ref{fig: local uncertainty filter}, $k_0 = \tilde{k}$ for most of the nodes.  This happens because the filters $\widehat{g_{k}}$ and $\widehat{g_{k_{0}}}$ do not overlap much if $k\neq k_{0}$, i.e when
\[
\|\widehat{g_{k_{0}}}^{2}\|_{2}^{2}=\sum_{\ell}\left(\widehat{g_{k_{0}}}^{2}(\lambda_\ell)\right)^{2}\text{>>}\sum_{\ell}\left(\widehat{g_{k_{0}}}(\lambda_\ell)\widehat{g_{k}}(\lambda_\ell)\right)^{2}=\|\widehat{g_{k}}\cdot \widehat{g_{k_{0}}}\|_{2}^{2}.
\]
In fact, in the case of Fig.~\ref{fig: local uncertainty filter}, $\tilde{k}$ is bounded between $k_0-1$ and $k_0+1$ because there is no overlap with the other filters.
In Fig.~\ref{fig: local uncertainty filter}, we plot $\tilde{k}(i)$ for $k_0=0$ and $k_0=1$. For the first filter, we have $\tilde{k}_{i_0,k_0} = k_0$ for all vertices $i_0$. The second filter follows the same rule except for two nodes. The isolated node on the north east is less connected to the rest and there is a Laplacian eigenvector well localized on it. As a consequence, the localization on the graph is affected in a counter-intuitive manner.

Let us now concentrate on the second important variable: $\tilde{i}$. Under the assumption that the kernels $\widehat{g_k}$ are smooth,
the energy of localized atoms $\T_{i_0}g_k$ reside inside a ball centered at $i_0$~\cite{shuman2015vertex}. 
Thus, the node $j$ maximizing $ |\T_{i_0}(g_{k_0} g_{\tilde{k}})(j)|$ cannot be far from the node $i_0$. Let us define the hop distance $h_{\G}(i,j)$ as the length of the shortest path\footnote{A \emph{path} in a graph is a tuple of vertices $(v_1,v_2,...,v_p)$ with the property that $[v_i,v_{i+p}]\in \E$ for $1\leq i \leq p-1$. Two nodes $v_i,v_j$ are connected by a path if there is exist such  tuple with $v_1=v_i$ and $v_p=v_j$. The length of a path is defined as the cardinality of the path tuple minus one.} between nodes $i$ and $j$. If the kernels $\widehat{g_k}$ are polynomial functions of order $K$, the localization operator $T_{i_0}$ concentrates all of the energy of $T_{i_0}g_k$ inside a $K$-radius ball centered in $i_0$. Since the resulting kernel $\widehat{g_{k_0}} \widehat{g_{\tilde{k}}}$ is a polynomial of order $2K$, $\tilde{i}$ will be at a distance of at most of $2K$ hops from the node $i_0$. In general, $\tilde{i}$ is close to $i_0$. In fact, the distance $h_{\G}(i, \tilde{i})$ is related to the smoothness of the kernel $\widehat{g_{k_0}}\widehat{g_{\tilde{k}}}$ \cite{shuman2015vertex}. 
To illustrate this effect, we present in Fig.~\ref{fig:rel_error_ntig_inf} the average and maximum hop distance $h_{\G}(i,\tilde{i})$. In this example, we control the concentration of a kernel $\hat{g}$ with a dilation parameter $a$: $\widehat{g_a}(x) = \hat{g}(ax)$. Increasing $a$ compresses the kernel in the Fourier domain and increases the spread of the localized atoms in the vertex domain. Note that even for high spectral compression, the hop distance $h_{\G}(i,\tilde{i})$ remains low. Additionally, we also compute the mean relative error between $\|\T_i g^2 \|_\infty$ and $|\T_i g^2(i) |$. This quantity asserts how well $\|\T_i g \|_2^2$ estimates $\|\T_i g^2 \|_\infty$.\footnote{From Lemma \ref{lemma:Perraudin}, when $\|T_i g^2\|_\infty = |T_ig^2(i)|$, then $\|T_i g^2\|_\infty =  \|\T_i g \|_2^2$.}
Returning to Fig.~\ref{fig: local uncertainty filter}, 
the fourth  row 
shows the hop distance between $i_0$ and $\tilde{i}$. It never exceed $3$ for both the first and the second filter, which is a good sign of locality. 

\begin{figure}[ht!]
\begin{minipage}[t]{.38\linewidth}
\begin{center}
\includegraphics[width=0.95\textwidth]{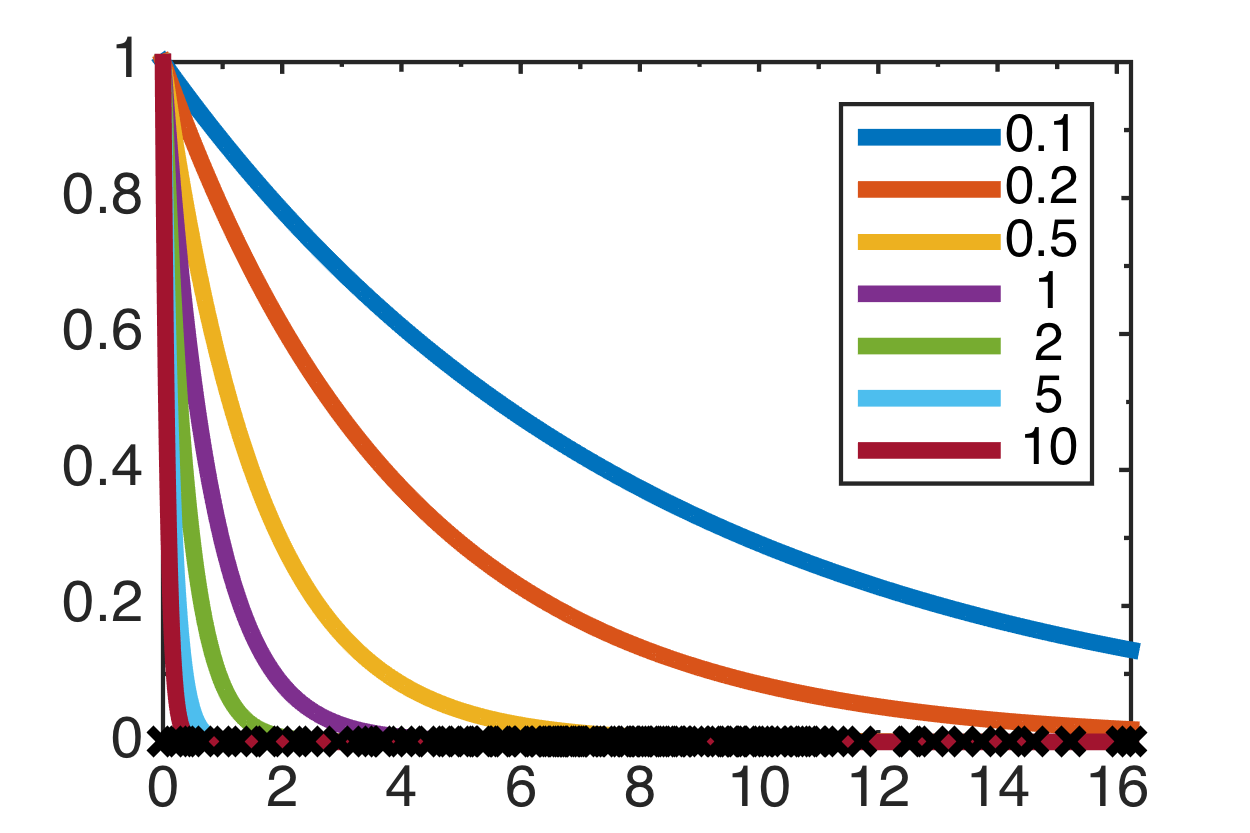}
\includegraphics[width=0.95\textwidth]{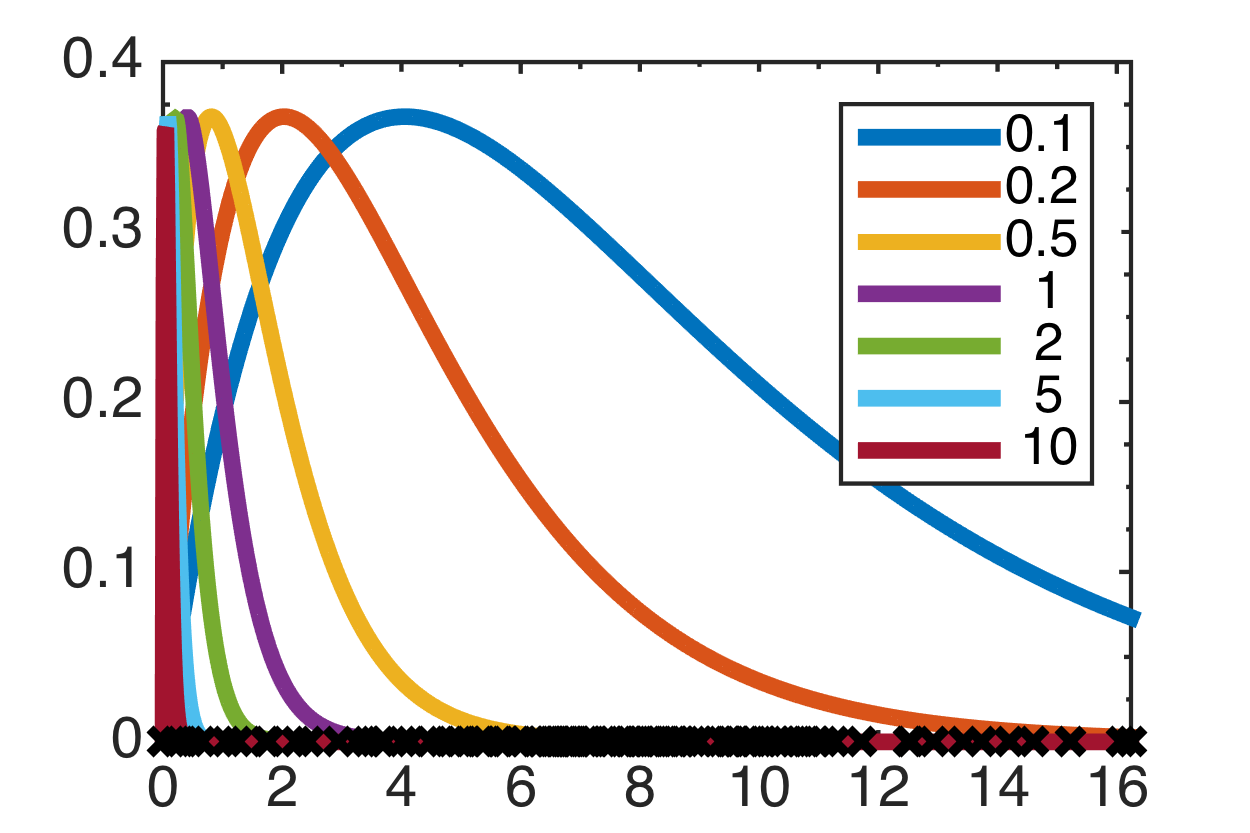} \\
\end{center}
\end{minipage}
\begin{minipage}[t]{.45\linewidth}
\begin{tabular}{|c|c|c|c|} 
\hline
$a$ & $\text{mean}_i~\frac{\|\T_i g \|_\infty - |\T_i g(i) |}{\|\T_i g \|_\infty} $ in \% & $\text{mean}_i~h_{\G}(\tilde{i},i) $  & $ \max_i h_{\G}(\tilde{i},i)  $ \\ 
\hline
\hline
\multicolumn{4}{|c|}{Heat kernel} \\
\hline
0.1 & 0 & 0 & 0 \\ 
0.2 & 0 & 0 & 0 \\ 
0.5 & 2.01 & 0.28 & 1 \\ \textbf{}
1 & 5.87 & 0.89 & 2 \\ 
2 & 7.45 & 1.39 & 3 \\ 
5 & 8.59 & 2.04 & 4 \\ 
10 & 2.63 & 2.08 & 4 \\ 
\hline
\hline
\multicolumn{4}{|c|}{Wavelet kernel} \\
\hline
0.1 & 0 & 0 & 0 \\ 
0.2 & 0 & 0 & 0 \\ 
0.5 & 9.03 & 0.62 & 1 \\ 
1 & 10.99 & 1.07 & 2 \\ 
2 & 17.69 & 1.67 & 3 \\ 
5 & 29.67 & 2.07 & 4 \\ 
10 & 33.45 & 2.48 & 6 \\
\hline
\end{tabular}
\end{minipage}
\caption{Localization experiment using the sensor graph of Fig.~\ref{fig: local uncertainty filter}. The heat kernel is defined as $\hat{g}(ax) = e^{-\frac{10 \cdot ax}{\lambda_{\max}}}$ and the wavelet kernel $\hat{g}(ax)= \sqrt{40} \cdot ax \cdot e^{-\frac{40\cdot ax}{\lmax}}$. For a smooth kernel $\hat{g}$, the hop distance $h_{\G}$ between $i$ and $\tilde{i} =\argmax_j |\T_ig(j)|$ is small.} 
\label{fig:rel_error_ntig_inf}
\end{figure}
In practice we can not 
always determine the values of $\tilde{k}$ and $\tilde{i}$, but as we have seen, the quantity $B^{-\frac{1}{2}}\|\T_{i}g_{k_0}\|_2$ may still be a good estimate of the local sparsity level. 
Row 5 of Fig.~\ref{fig: local uncertainty filter} shows these 
estimates, and the last row shows the relative error between these estimates and the actual local sparsity levels. We observe that for the first kernel, the estimate  gives a sufficiently rough approximation of the local sparsity levels. For the second kernel, the approximation error is low for most of the nodes, but not all.


\end{example}


In the next example, we compare the local and global uncertainty principles on a modified path graph.

\begin{example}
On a $64$ node modified path graph (see Example~\ref{ex:pathgraph} for details), we compute the graph Gabor transform of the signals 
$f_1=T_1 g_0$ and $f_2=T_{64}g_0$.
In Figure \ref{fig:path_node_away_ambiguity}, we show the evolution of the graph Gabor transforms of the two signals with respect to the distance $d=1/W_{12}$ from the first to the second vertex in the graph. As the first node is pulled away, a localized eigenvector appears centered on the isolated vertex. Because of this, 
as this distance increases, the
signal $f_1$ becomes concentrated in both the vertex and graph spectral domains, leading to graph Gabor transform coefficients that are highly concentrated (see the top right plot in Fig.  \ref{fig:path_node_away_ambiguity}). 
However, since the graph modification is local, it does not drastically affect 
the graph Gabor transform coefficients of the signal $f_2$ (middle row of Fig.  \ref{fig:path_node_away_ambiguity}), whose energy is concentrated on the far end of the path graph.

In Figure~\ref{fig:bound Agf}, we plot the evolution of the uncertainty bounds as well as the concentration of the Gabor transform coefficients of $f_1$ and $f_2$. 
The global uncertainty bound from Theorem  \ref{Co:Lieblocgraph} tells us that 
$$s_1(\A_\gg f) \leq \max_{i,k} ||T_i g_k||_2,\hbox{ for any signal }f.$$
The local uncertainty bound from Theorem \ref{theo:local_uncertainty} tells us that 
$$s_1(\A_\gg T_{i_0} g_{k_0}) \leq ||T_{\tilde{i}_{i_0,k_0}}g_{\tilde{k}_{i_0,k_0}}||_2, \hbox{ for all } i_0 \hbox{ and } k_0.$$
Thus, we can view the global uncertainty bound as an upper bound 
on all of the local uncertainty bounds. 
In fact the bumps in the global uncertainty bound in Figure~\ref{fig:bound Agf} correspond to the local bound with $i_0=1$ and   different frequency bands $k_0$. We plot the local bounds for $i_0=1$ and $k_0=0$ and $k_0=2$. 

\begin{figure}[ht!]
\begin{center}
\hspace{.75in} $d=1$ \hspace{.75in} $d\approx 11$ \hspace{.75in} $d\approx 17$ \hspace{.75in} $d=27$ \hspace{.65in} $d=81$  \hspace{.6in} \\
\begin{minipage}{.1\linewidth}
$f_1=T_1g_0$ \vspace{1in}
\end{minipage}
\includegraphics[width=0.16\textwidth]{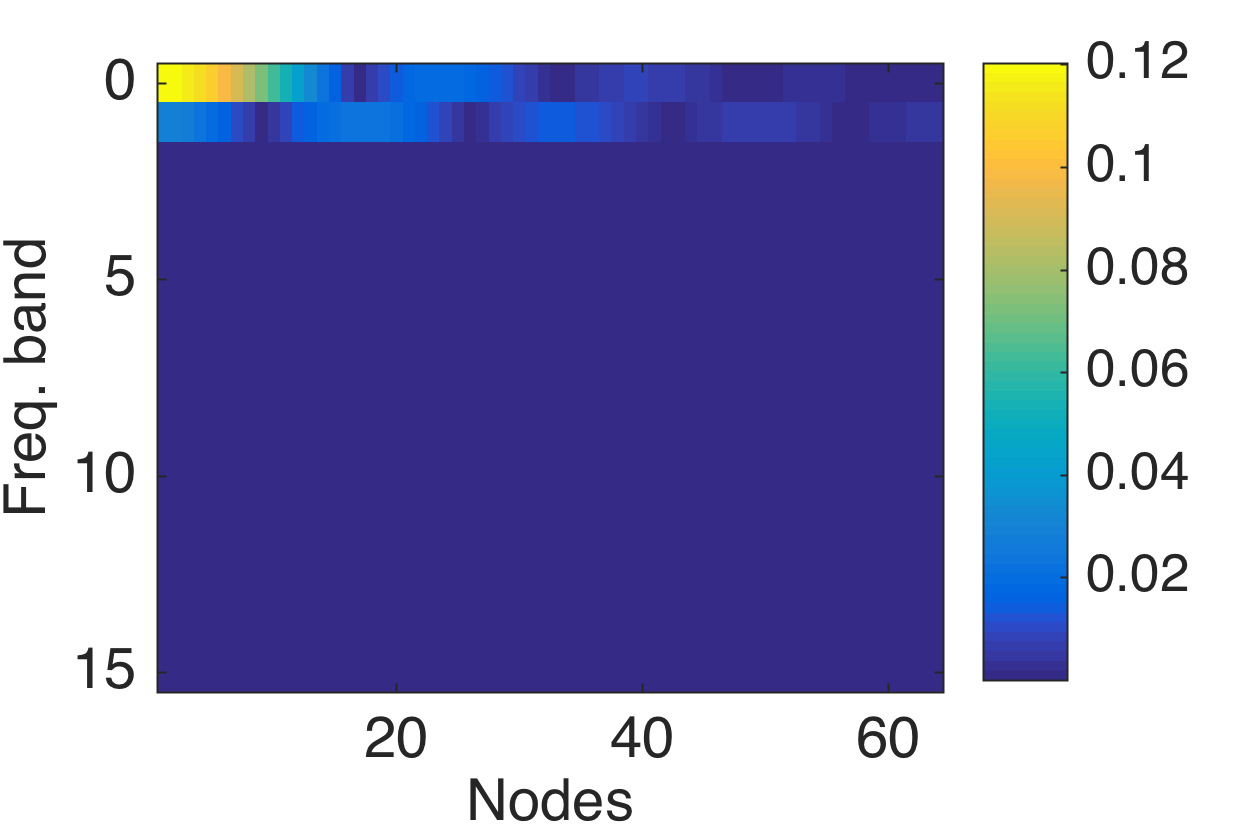}
\includegraphics[width=0.16\textwidth]{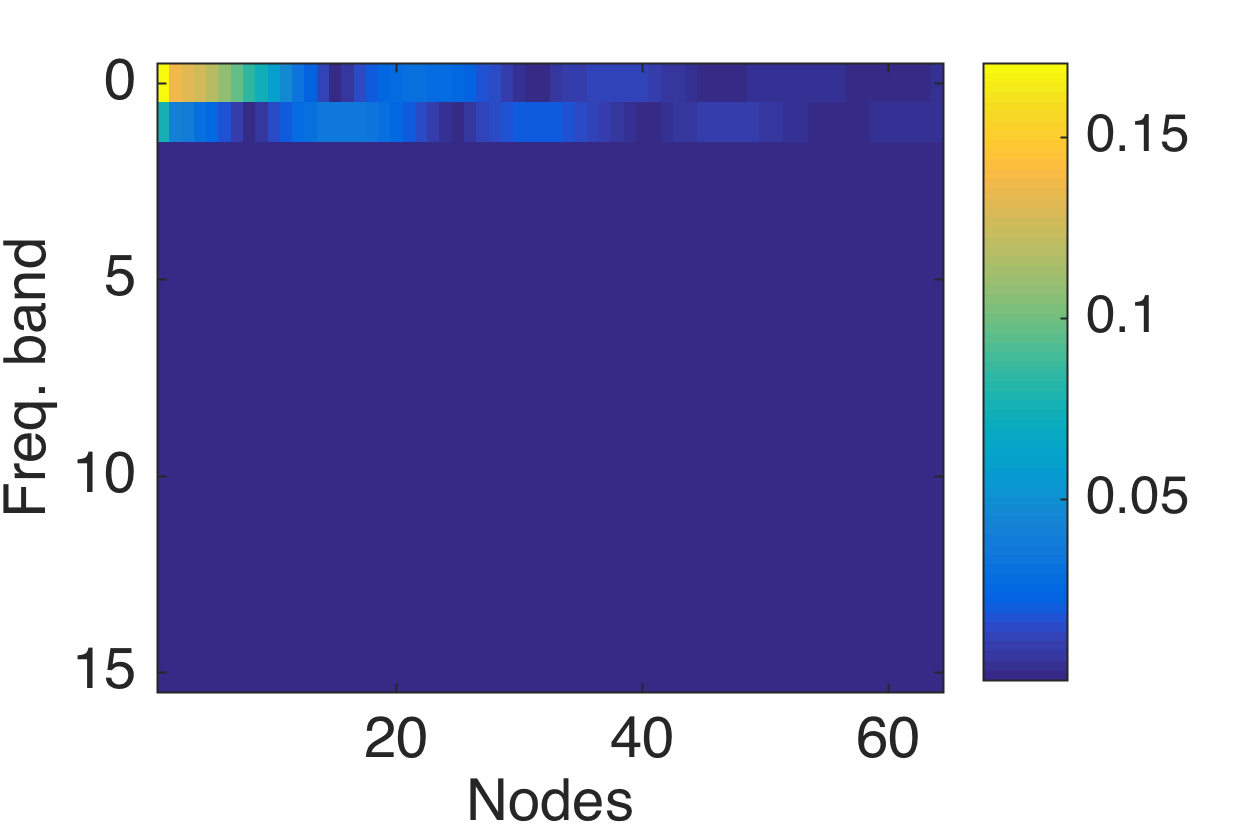}
\includegraphics[width=0.16\textwidth]{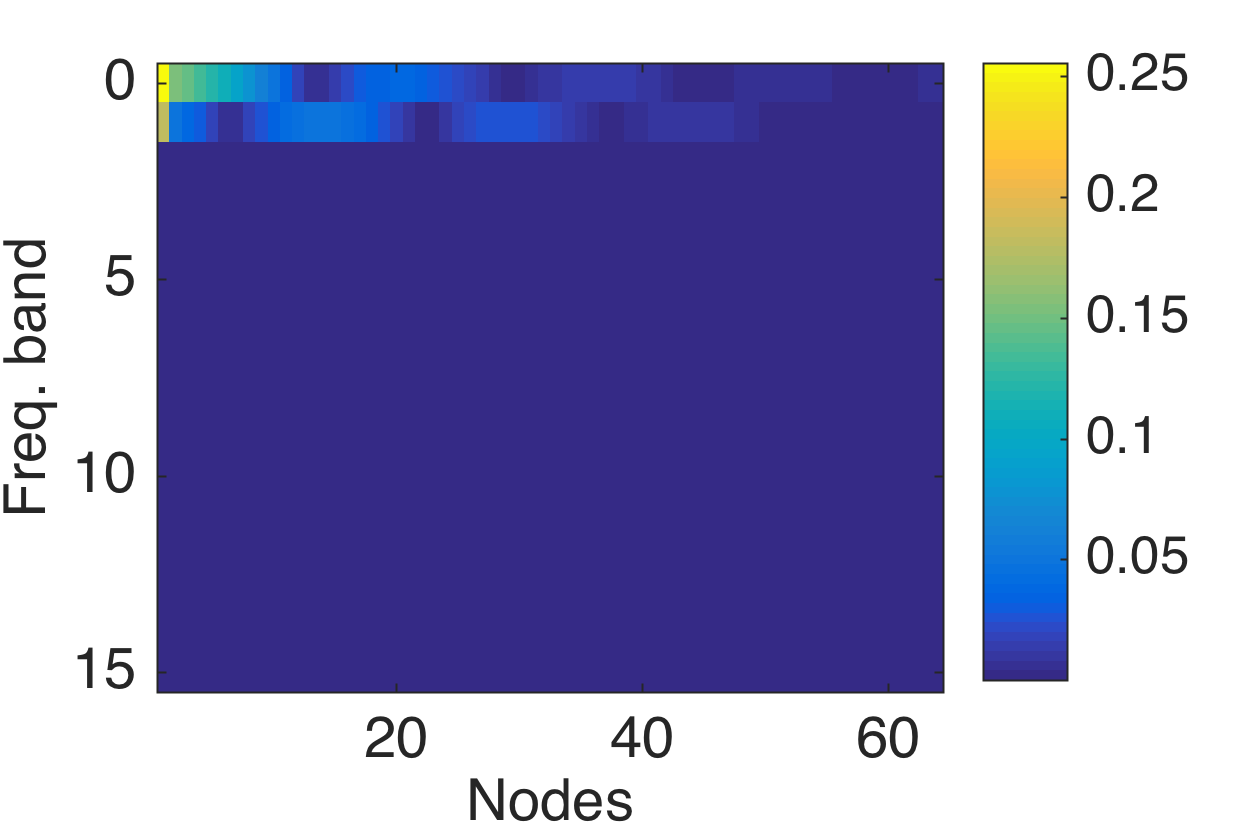}
\includegraphics[width=0.16\textwidth]{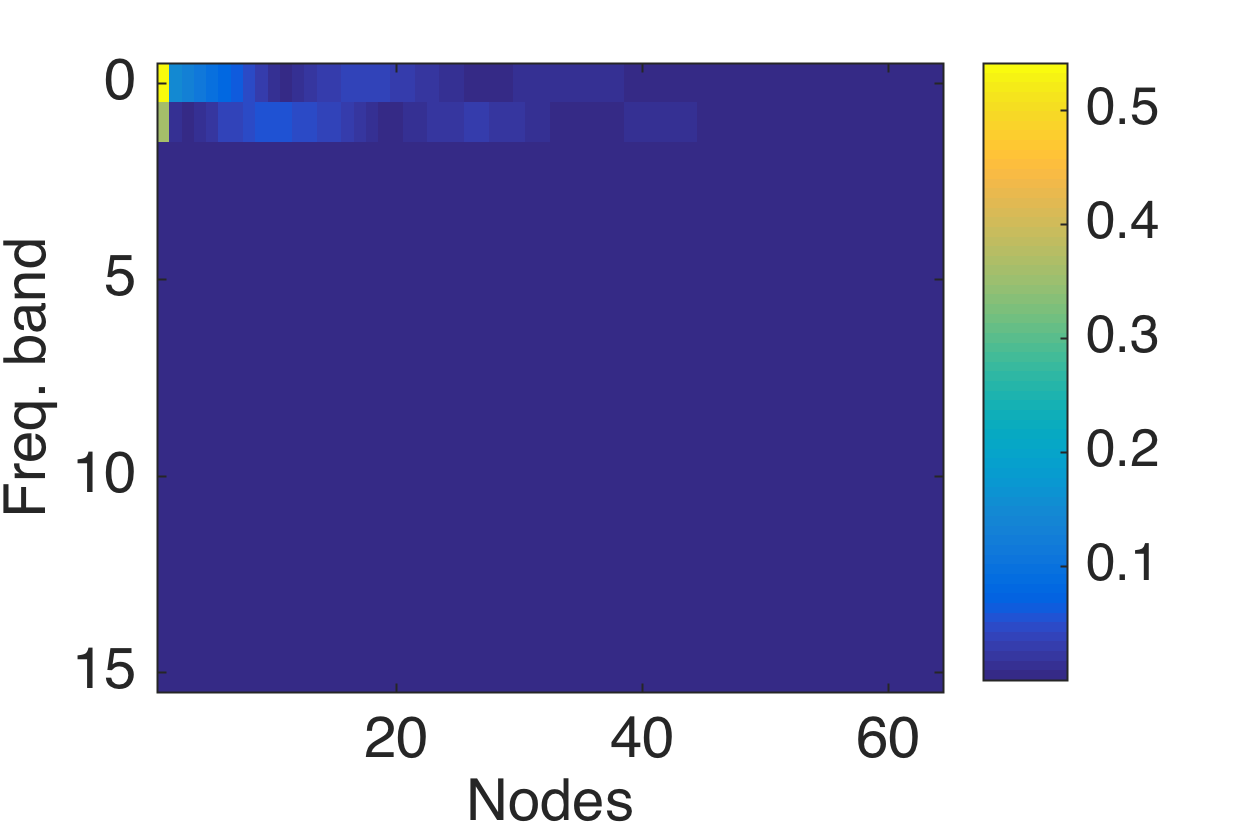}
\includegraphics[width=0.16\textwidth]{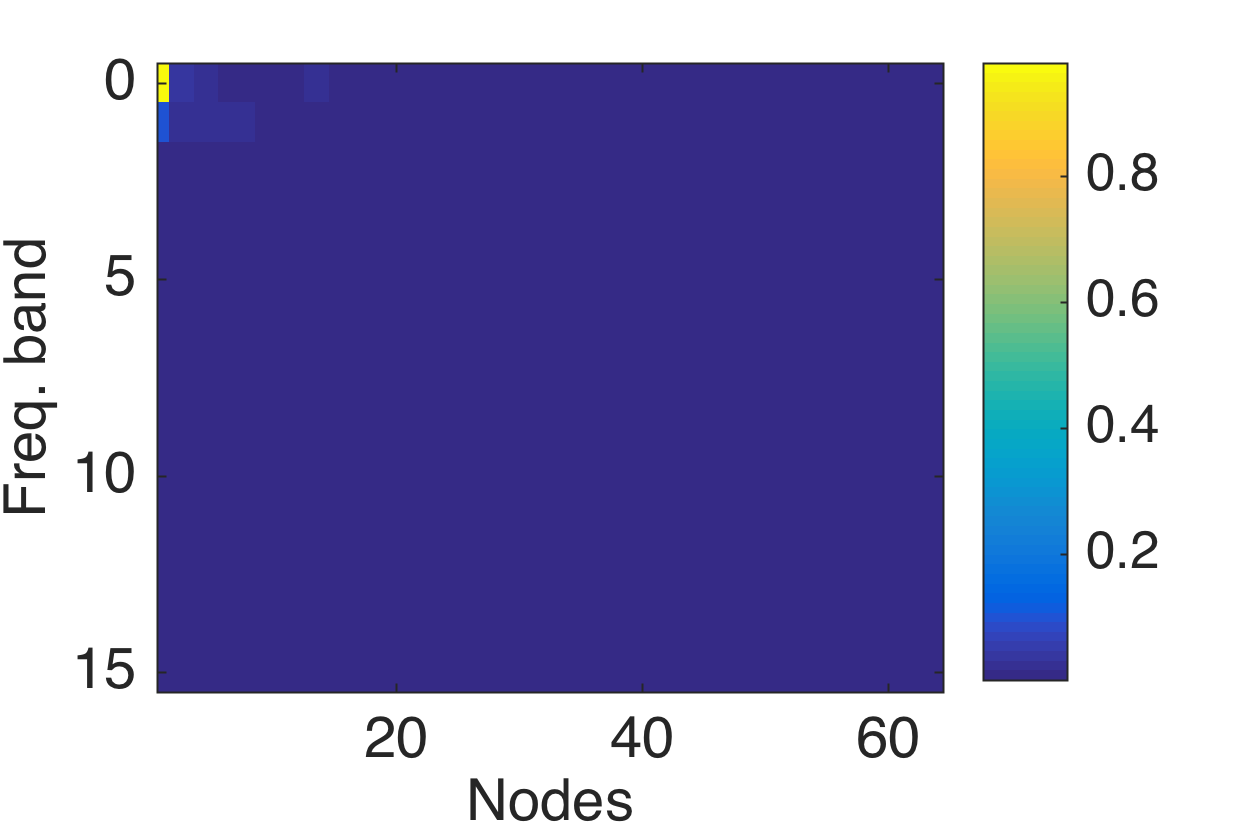} \\
\vspace{-.3in}
\begin{minipage}{.1\linewidth}
$f_2=T_{64}g_0$ \vspace{1in}
\end{minipage}
\includegraphics[width=0.16\textwidth]{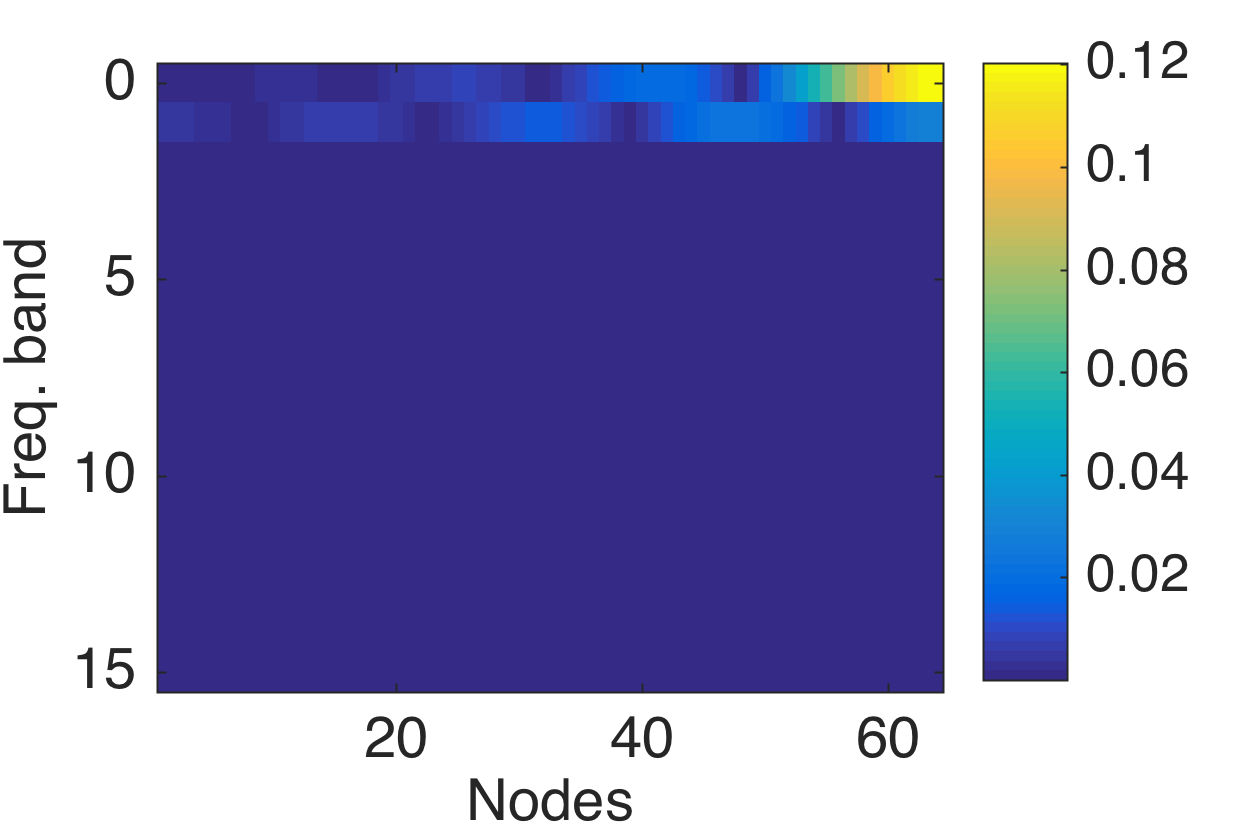}
\includegraphics[width=0.16\textwidth]{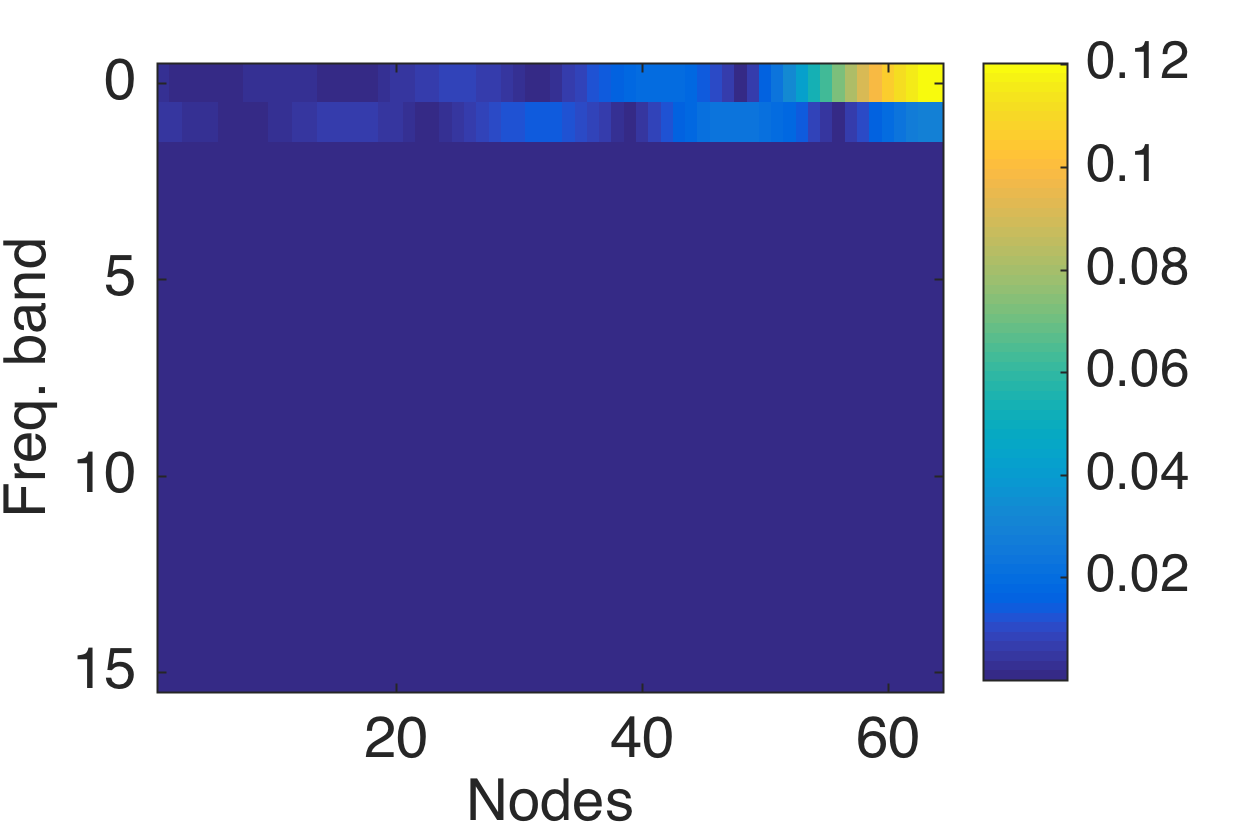}
\includegraphics[width=0.16\textwidth]{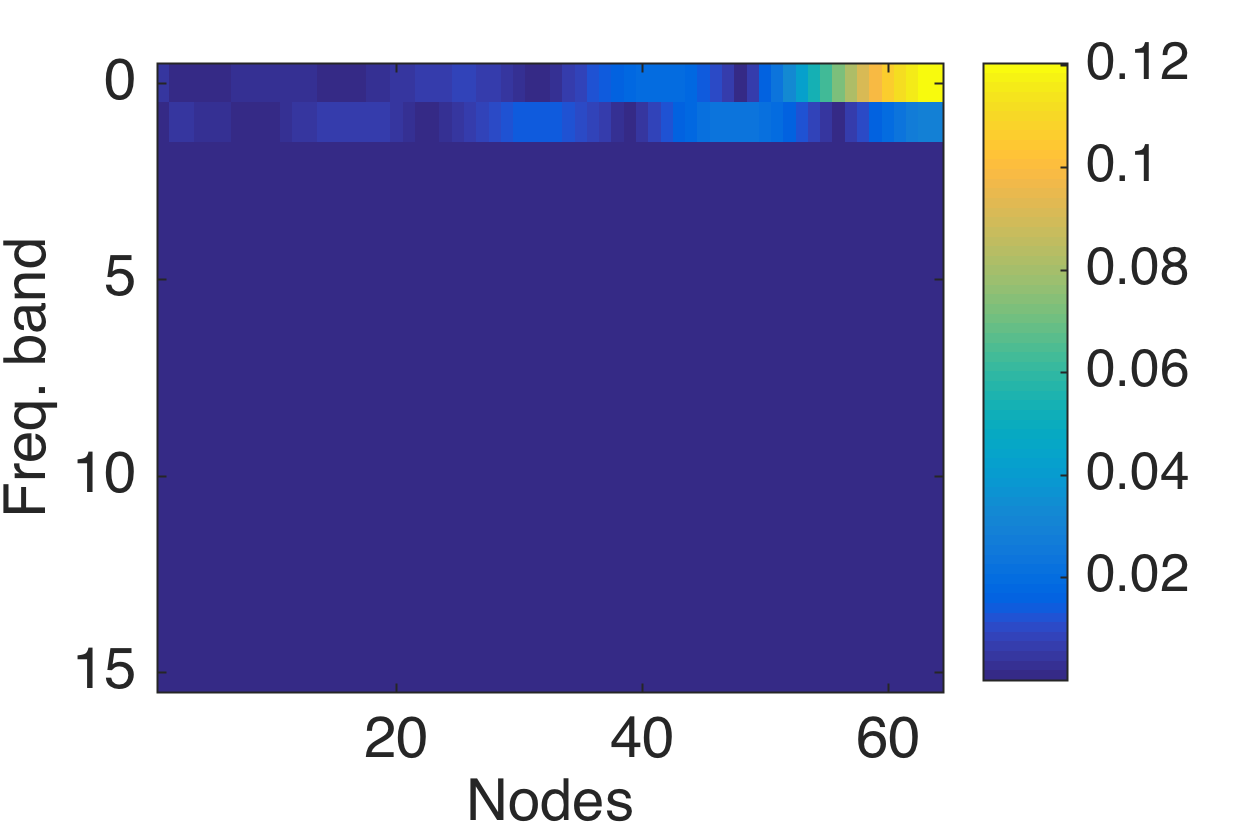}
\includegraphics[width=0.16\textwidth]{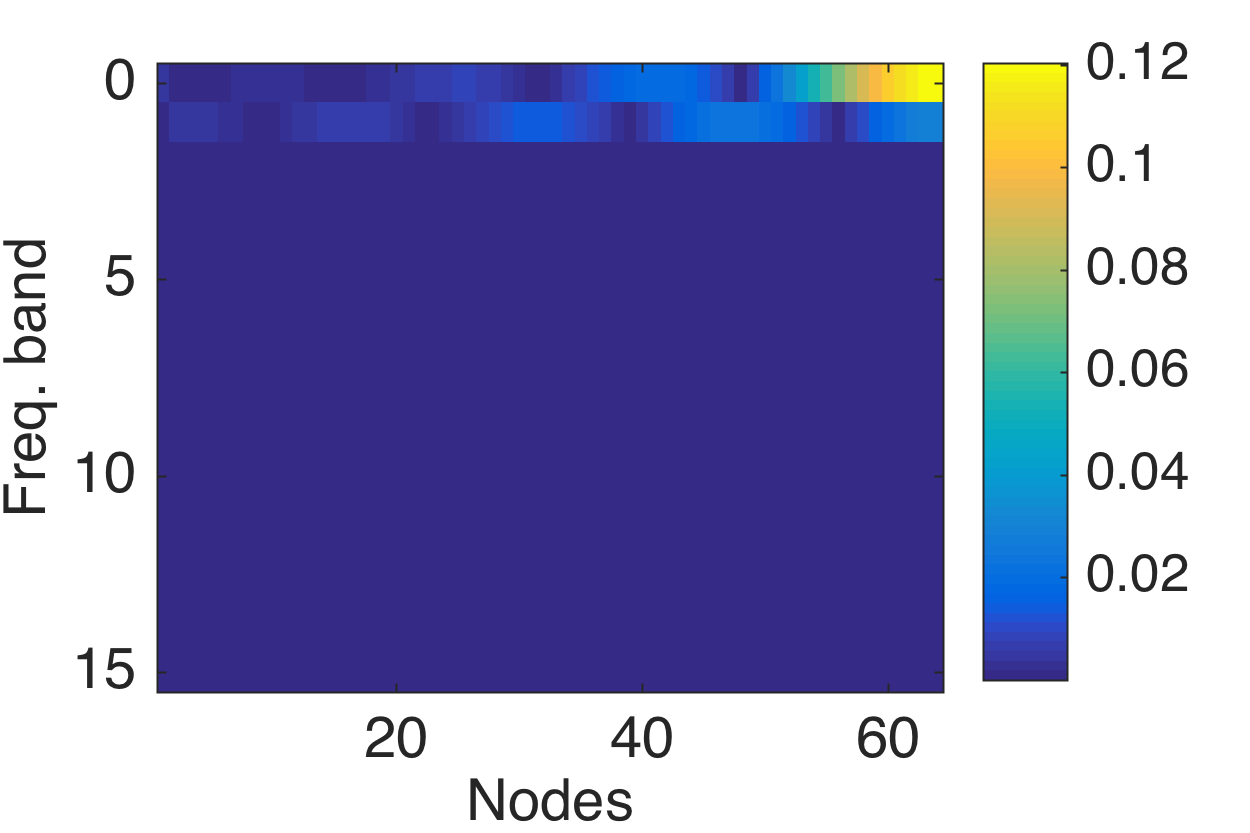}
\includegraphics[width=0.16\textwidth]{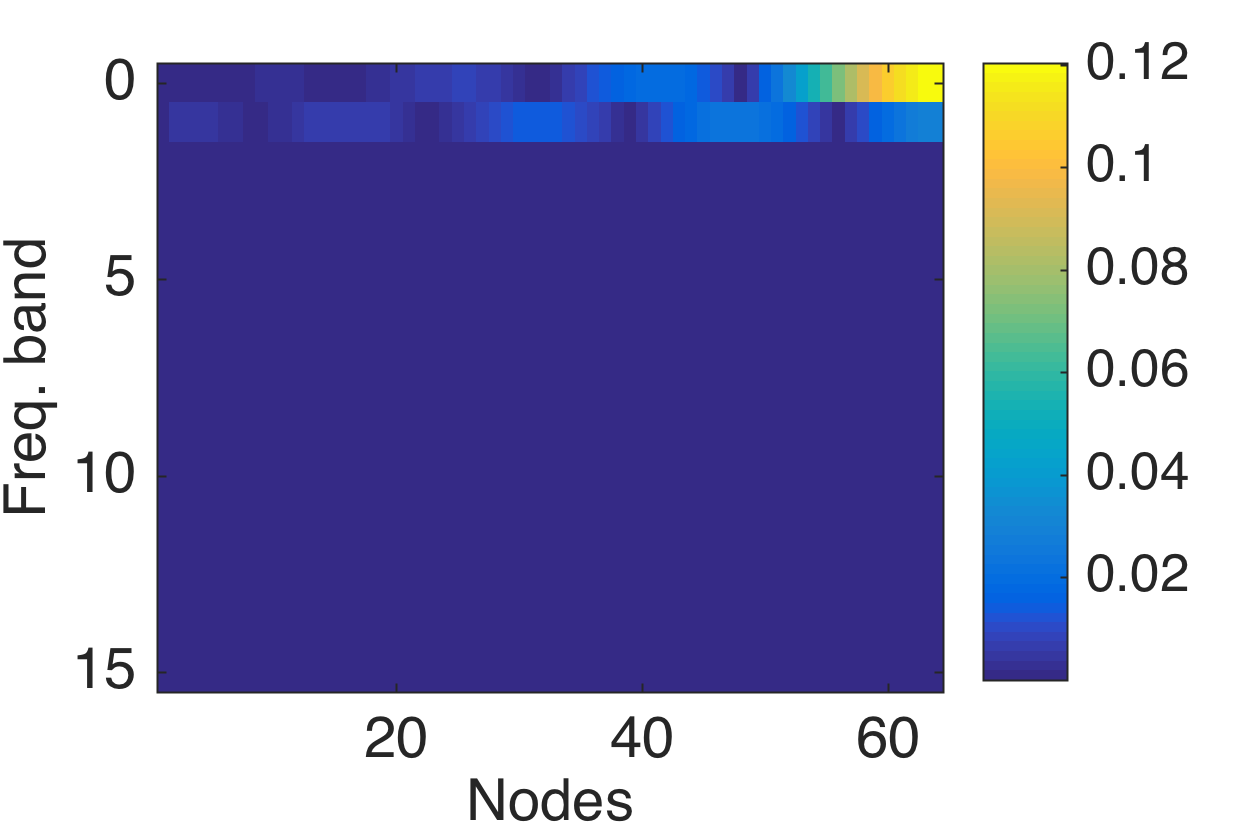} \\
\vspace{-.3in}
\begin{minipage}{.1\linewidth}
$f_1=T_{1}g_0$ \\ 
Vertex domain \vspace{0.75in} 
\end{minipage}
\includegraphics[width=0.16\textwidth]{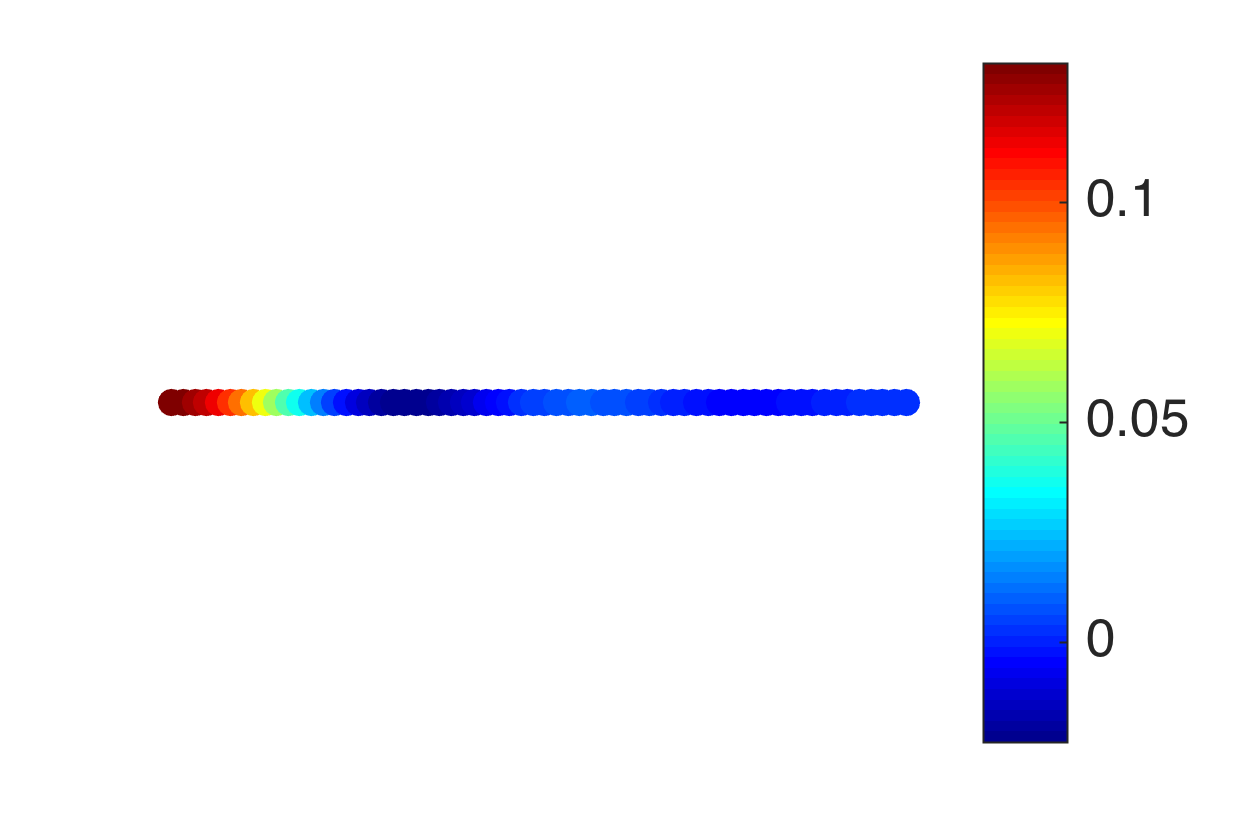}
\includegraphics[width=0.16\textwidth]{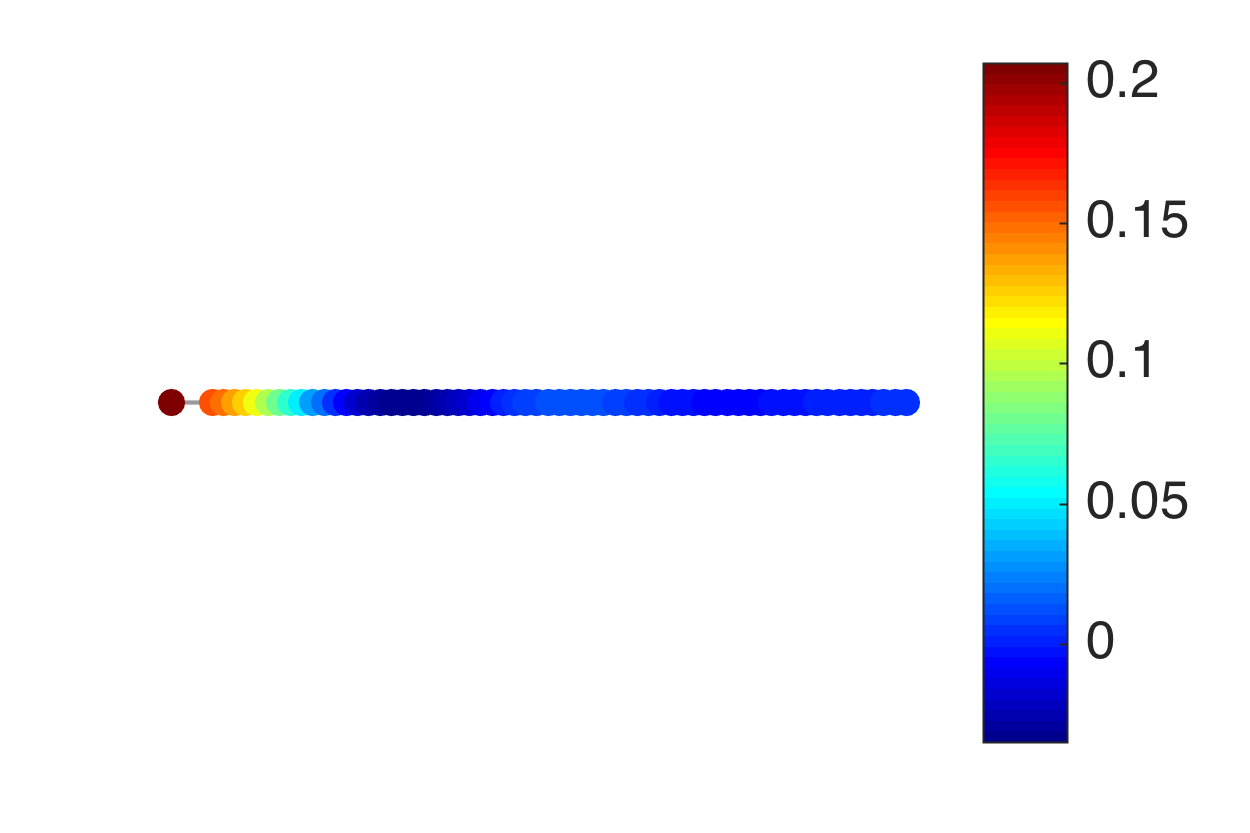}
\includegraphics[width=0.16\textwidth]{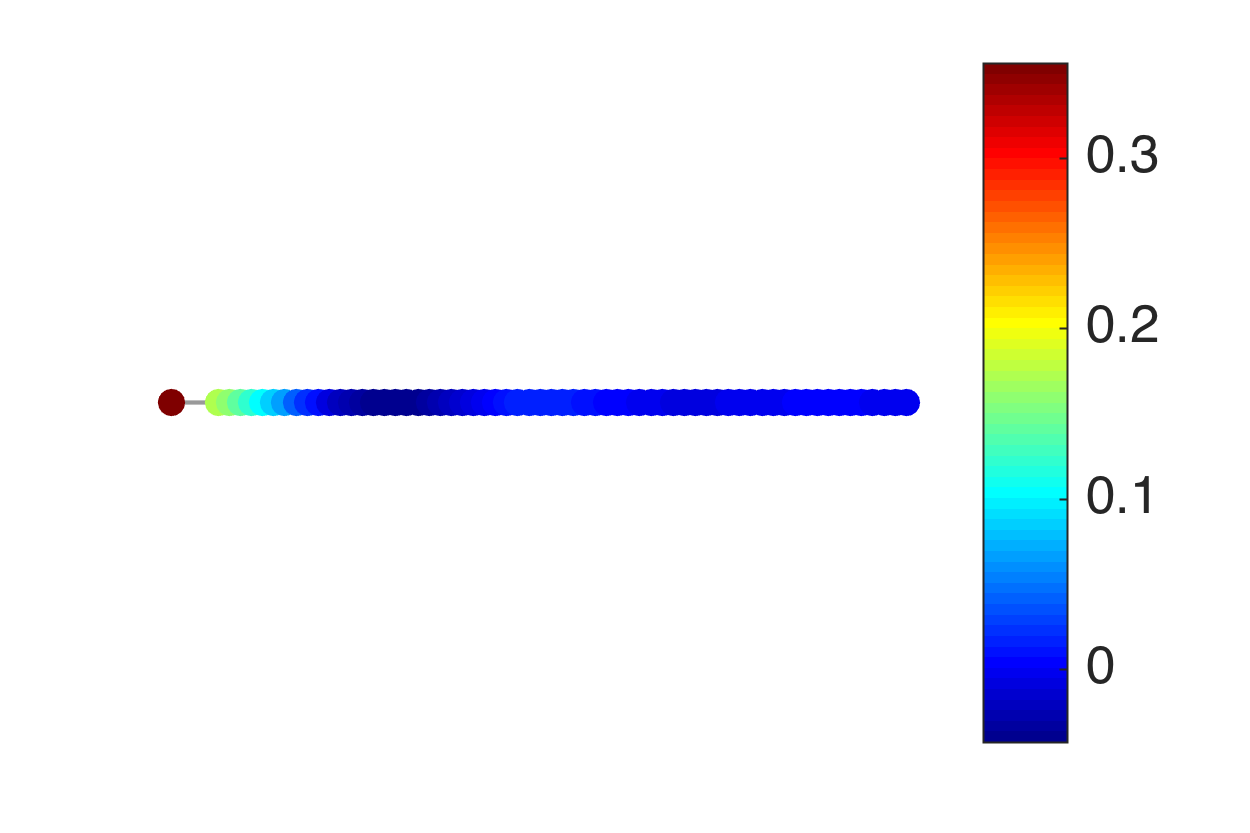}
\includegraphics[width=0.16\textwidth]{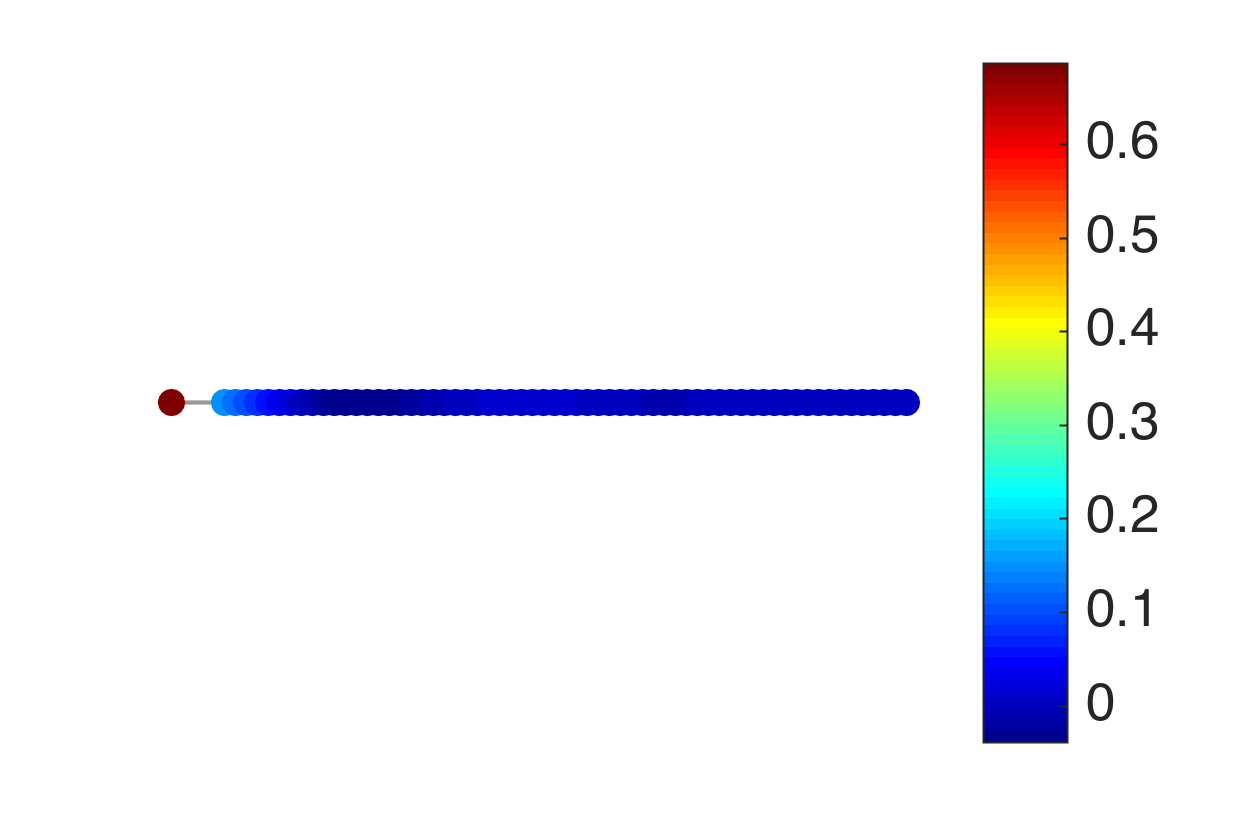}
\includegraphics[width=0.16\textwidth]{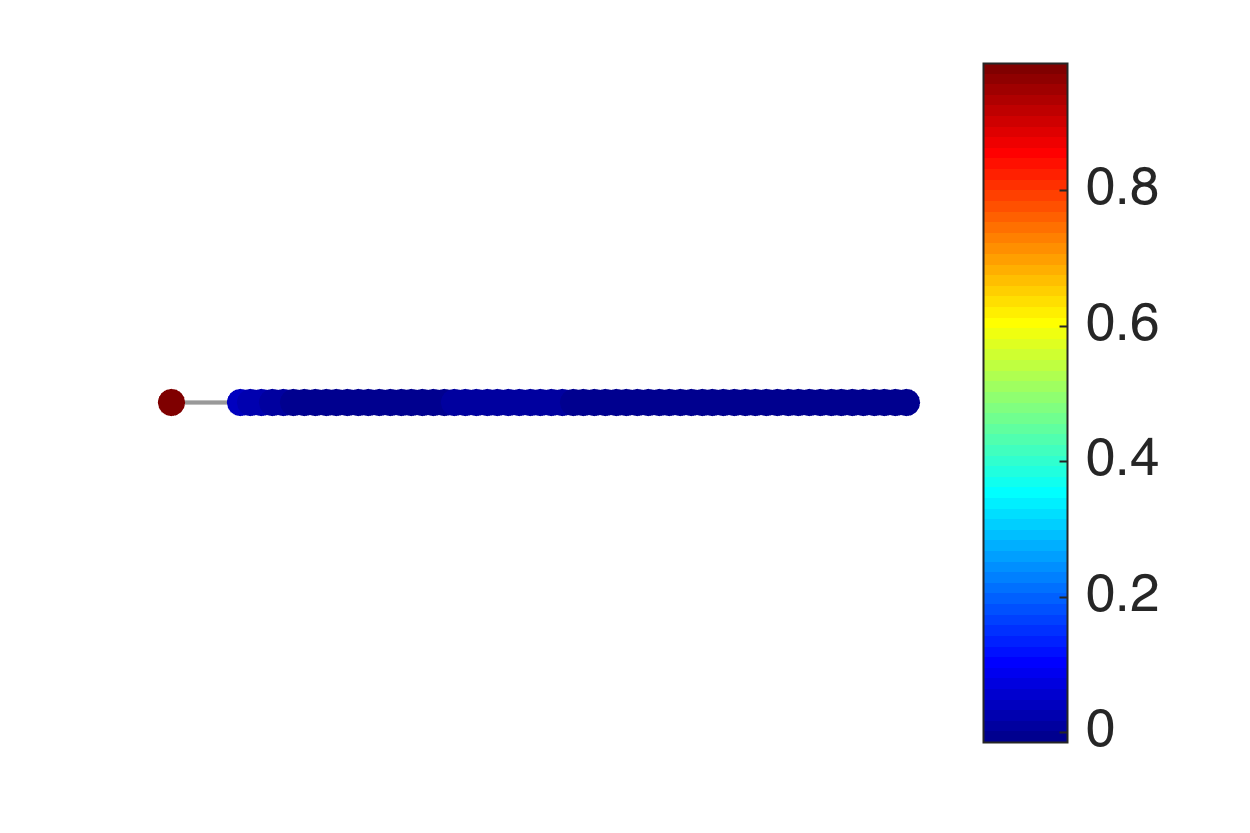} \\
\end{center}
\vspace{-.7in}
\caption{
Graph Gabor transforms of $f_1=T_1g_0$ and $f_2=T_{64}g_0$ for 5 different distances between vertices 1 and 2 of the modified path graph. The distance $d=1/W_{12}$ is the inverse of the weight of the edge connecting the first two vertices in the path. The node $64$ is not affected by the change in the graph structure, because its energy is concentrated on the opposite side of the path graph. The graph Gabor coefficients of $f_1$, however, become highly concentrated as a graph Laplacian eigenvector becomes localized on vertex 1 as the distance increases. The bottom row shows that as the distance between the first two vertices increases, the atom $T_1 g_0$ also converges to a Kronecker delta centered on vertex 1.
} 
\label{fig:path_node_away_ambiguity}
\end{figure}

\begin{figure}[ht!]
\begin{center}
\includegraphics[width=0.45\textwidth]{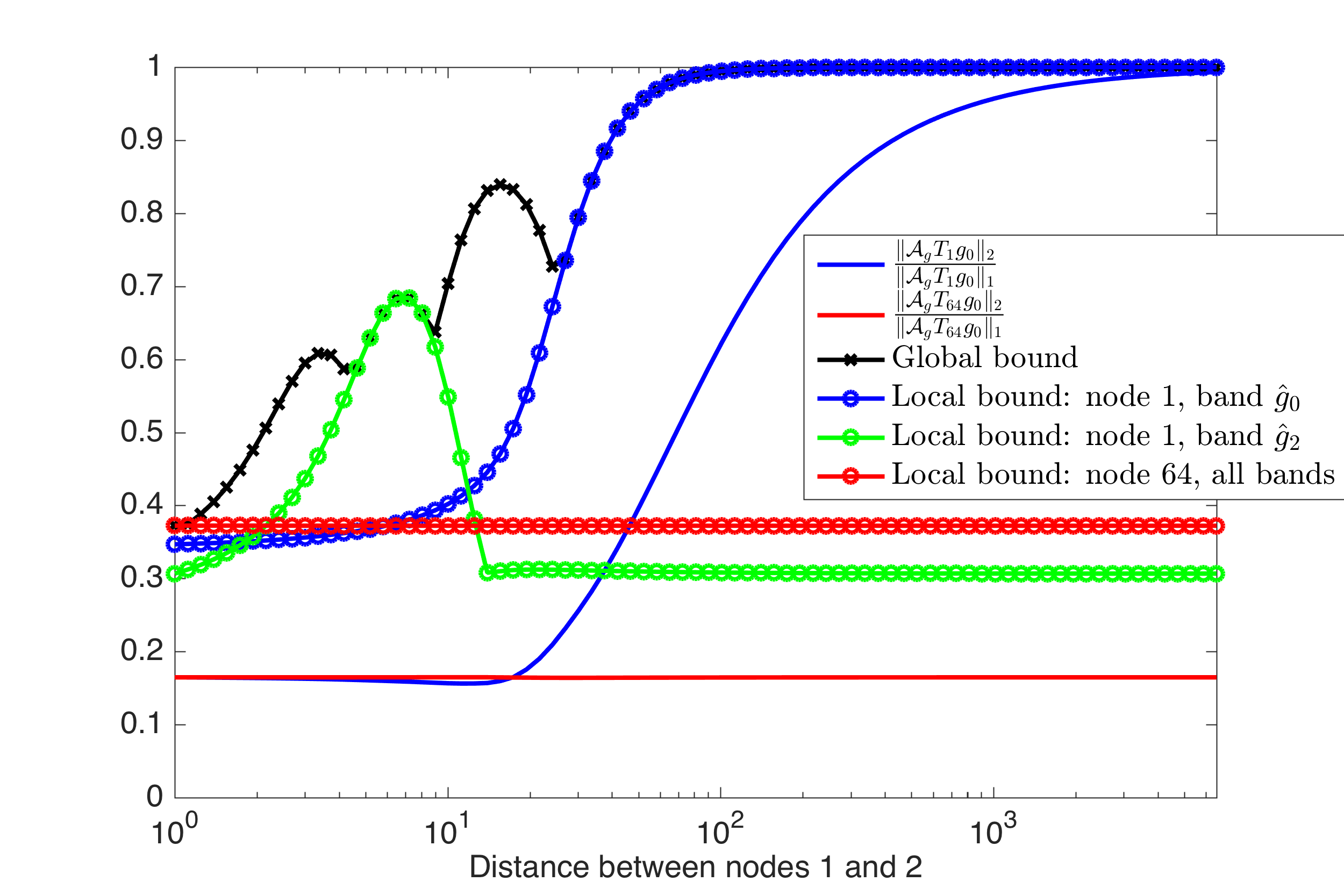}
\end{center}
\caption{ 
Concentration of the graph Gabor coefficients of 
$f_1=T_1g_0$ and $f_2=T_{64}g_0$ 
with respect to the distance between the first two vertices in the modified path graph, along with the upper bounds on this concentration from  Theorem  \ref{Co:Lieblocgraph} (global uncertainty) and Theorem \ref{theo:local_uncertainty} (local uncertainty). Each bump of the global bound corresponds to a local bound of a given spectral band of node $1$. For clarity, we plot only bands $\widehat{g_0}$ and $\widehat{g_2}$ for node $1$. For node $64$, the local bound is barely affected by the change in graph structure, and the sparsity levels of the graph Gabor transform coefficients of $T_{64}g_0$ also do not change much.}
\label{fig:bound Agf}
\end{figure}
\end{example}

\subsection{Single kernel analysis}

Let us focus on the case where we analyze a single kernel $\hat{g}$. Such an analysis is relevant when we model the signal as a linear combination of different localizations of a single kernel:
$$
f(n) = \sum_{i=1}^N w_i T_ig(n)
$$
This model has been proposed in different contributions { \cite{perraudin2016stationary, gadde2015probabilistic, zhang2015graph}}, and has also been used as an interpolation model, e.g., in \cite{pesenson_splines} and \cite[Section V.C]{shuman2016multiscale}. 
In this case, we could ask the following question. If we measure the signal value at node $j$, how much information do we get about $w_j$? We can answer this by looking at the overlap between the atom $T_jg$ and the other atoms. When $T_jg$ has a large overlap with the other atoms, the value of $f(j)$ does not tell us much about $w_j$. However, in the case where $T_jg$ has a very small overlap with the other atoms (an isolated node for example), knowing $f(j)$ gives an excellent approximation for the value of $w_j$. The following theorem uses the sparsity level of $g(\L)T_jg$ to analyze the overlap between the atom $T_jg$ and the other atoms.
\begin{theorem} \label{theo:local_atom_overlap}
For a kernel $\hat{g}$, the overlap between the atom localized to center vertex $j$ and the other atoms satisfies
\[
O_{p}(j)=\frac{\left(\sum_{i}\left|<\T_{i}g,\T_{j}g>\right|^{p}\right)^{\frac{1}{p}}}{\left(\sum_{i}\left|<\T_{i}g,\T_{j}g>\right|^{2}\right)^{\frac{1}{2}}}=\frac{||g(\L)\T_{j}g\|_{p}}{||g(\L)\T_{j}g\|_{2}} = \frac{\|\T_{j}g^{2}\|_{p}}{\|\T_{j}g^{2}\|_{2}}
\]
\begin{proof}
This result follows directly from the application of \eqref{eq:norm_product_tig} in Lemma \ref{lemma:Perraudin}.
\end{proof}
\end{theorem}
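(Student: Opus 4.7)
The plan is to reduce the claimed chain of equalities to a single direct application of Lemma \ref{lemma:Perraudin}, together with one elementary spectral-filtering identity. First I would apply equation \eqref{eq:norm_product_tig} of Lemma \ref{lemma:Perraudin} with both kernels taken to be $g$ (i.e. $\hat h = \hat g$): this gives, for every exponent $r \in [1,\infty]$,
$$
\left(\sum_i |\langle T_i g, T_j g\rangle|^{r}\right)^{1/r} \;=\; \sqrt{N}\,\|T_j(g\cdot g)\|_{r}.
$$
Instantiating this once with $r = p$ for the numerator of $O_p(j)$ and once with $r = 2$ for the denominator, the common factor $\sqrt{N}$ cancels, leaving $O_p(j) = \|T_j(g\cdot g)\|_p / \|T_j(g\cdot g)\|_2 = \|T_j g^2\|_p / \|T_j g^2\|_2$, which is the rightmost expression in the theorem.

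To obtain the middle expression $\|g(\L)T_j g\|_p / \|g(\L)T_j g\|_2$, I would next verify the identity $g(\L)T_j g = T_j(g^2)$. Expanding $T_j g$ in the graph Fourier basis from Definition~\ref{def:translation}, its $\ell$-th spectral coefficient is $\sqrt{N}\,\hat g(\lambda_\ell)\overline{u_\ell(j)}$; applying the spectral filter $\hat g(\L)$ multiplies the $\ell$-th coefficient by $\hat g(\lambda_\ell)$, giving $\sqrt{N}\,\hat g(\lambda_\ell)^{2}\overline{u_\ell(j)}$. This is precisely the Fourier representation of $T_j(g^{2})$, because in the paper's convention $g\cdot g = g^{2}$ denotes the vertex-domain signal whose spectrum is $\hat g^{2}$. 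Substituting this identity into the middle ratio yields exactly $\|T_j g^2\|_p / \|T_j g^2\|_2$, closing the chain of equalities.

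There is no real obstacle here: the theorem is essentially a reformulation of Lemma \ref{lemma:Perraudin}. The only point that requires mild care is not mis-reading the notation $g^{2} = g\cdot g$, which denotes a vertex-domain signal (the inverse Fourier transform of $\hat g^{2}$) rather than a pointwise square of $g$ on the vertices; keeping this convention straight is what ensures the identification $g(\L)T_j g = T_j(g^{2})$ is correct and makes the three expressions agree.
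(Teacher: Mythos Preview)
Your proposal is correct and follows essentially the same route as the paper: both reduce the theorem to a direct application of equation \eqref{eq:norm_product_tig} in Lemma~\ref{lemma:Perraudin} with $\hat h = \hat g$, instantiated at exponents $p$ and $2$ so that the common factor $\sqrt{N}$ cancels. Your added verification of the spectral identity $g(\L)T_j g = T_j(g^2)$ for the middle expression is a helpful detail that the paper leaves implicit, but it does not constitute a different approach.
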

%

\subsection{Application: non-uniform sampling}

\begin{example}[Non-uniform sampling for graph inpainting] \label{Ex:adaptated_sampling}
In order to motivate Theorem \ref{theo:local_atom_overlap} from a practical signal processing point of view, we use it to optimize the sampling of a signal over a graph. To asses the quality of the sampling, we solve a small inpainting problem where only a part of a signal is measured and the goal is to reconstruct the entire signal. Assuming that the signal varies smoothly in the vertex domain, we can formulate the inverse problem as:
\begin{align}\label{Eq:rec_smooth}
\mathop{\rm argmin}_x  x^T \L x \hspace{0.25cm} \text{ s. t. } \hspace{0.25cm} y = Mx,
\end{align}
where $y$ is the observed signal, $M$ the inpainting masking operator and $x^T \L x$ the graph Tikhonov regularizer ($\L$ being the Laplacian). In order to generate the original signal, we filter Gaussian noise on the graph with a low pass kernel $\hat{h}$. The frequency content of the resulting signal will be close to the shape of the filter $\hat{h}$. For this example, we use the low pass kernel $\hat{h}(x) = \frac{1}{1+\frac{100}{\lmax} x}$ to generate the smooth signal.

For a given number of measurements, 
the traditional idea is to randomly sample the graph. Under that strategy, the measurements are distributed across the network.
Alternatively, we can use our local uncertainty principles to create an adapted mask. The intuitive idea 
that nodes with less uncertainty (higher local sparsity values) should be sampled with higher probability because their value can be inferred less easily from other nodes. 
Another way to picture this fact is the following. Imagine that we want to infer a quantity over a random sensor network. In the more densely populated parts of the network, the measurements are more correlated and redundant. As result, a lower sampling rate is necessary. On the contrary, in the parts where there are fewer sensors, the information has less redundancy and a higher sampling rate is necessary.
The heat kernel $\hat{g}(x)=e^{-\tau x}$ is a 
convenient choice to probe the local uncertainty of a graph, 
because $\widehat{g^{2}}(x)=e^{-2\tau x}$ is also a heat kernel, resulting in a sparsity level depending only on $\|\T_{j}g^{2}\|_{2}$. Indeed we have $\|\T_{j}g^{2}\|_{1}=\sqrt{N}$. The local uncertainty bound of Theorem \ref{theo:local_atom_overlap}  becomes:
\[
O_{1}(j)=\frac{\|\T_{j}g^{2}\|_{1}}{\|\T_{j}g^{2}\|_{2}}=\frac{\sqrt{N}}{\|\T_{j}g^{2}\|_{2}}.
\]
Based on this measure, we design a second random sampled mask with a probability proportional to 
$\|\T_ig^2\|_2$; 
that is, the higher the overlap level at vertex $j$, the smaller the probability that vertex $j$ is 
chosen as a sampling point, and vice-versa. For each sampling ratio, we performed $100$ experiments and averaged the results. For each experiment, we also randomly generated new graphs. 
The experiment was carried out using open-source code: the UNLocBoX~\cite{perraudin2014unlocbox} and the GSPBox~\cite{perraudin2014gspbox}.
Figure \ref{fig:introinpainting} presents the result of this experiment for a sensor graph and a community graph.
In the sensor graph, we observe that our local measure of uncertainty varies smoothly on the graph and is higher in the more dense part. Thus, the likelihood of sampling poorly connected vertices is higher than the likelihood of sampling well connected vertices. 
In the community graph, we observe that the uncertainty is highly related to the size of the community. The larger the community, the larger the uncertainty (or, equivalently, the smaller the local sparsity value). In both cases, the adapted, non-uniform random sampling performs better than random uniform sampling. 

\begin{figure}[ht!]
\centering
\begin{minipage}[b]{.24\linewidth}
\centerline{\small{$\|T_ig\|_2 \propto $ sampling distribution  }}
\centerline{\small{(Sensor network)}}
\centerline{\includegraphics[width=\linewidth]{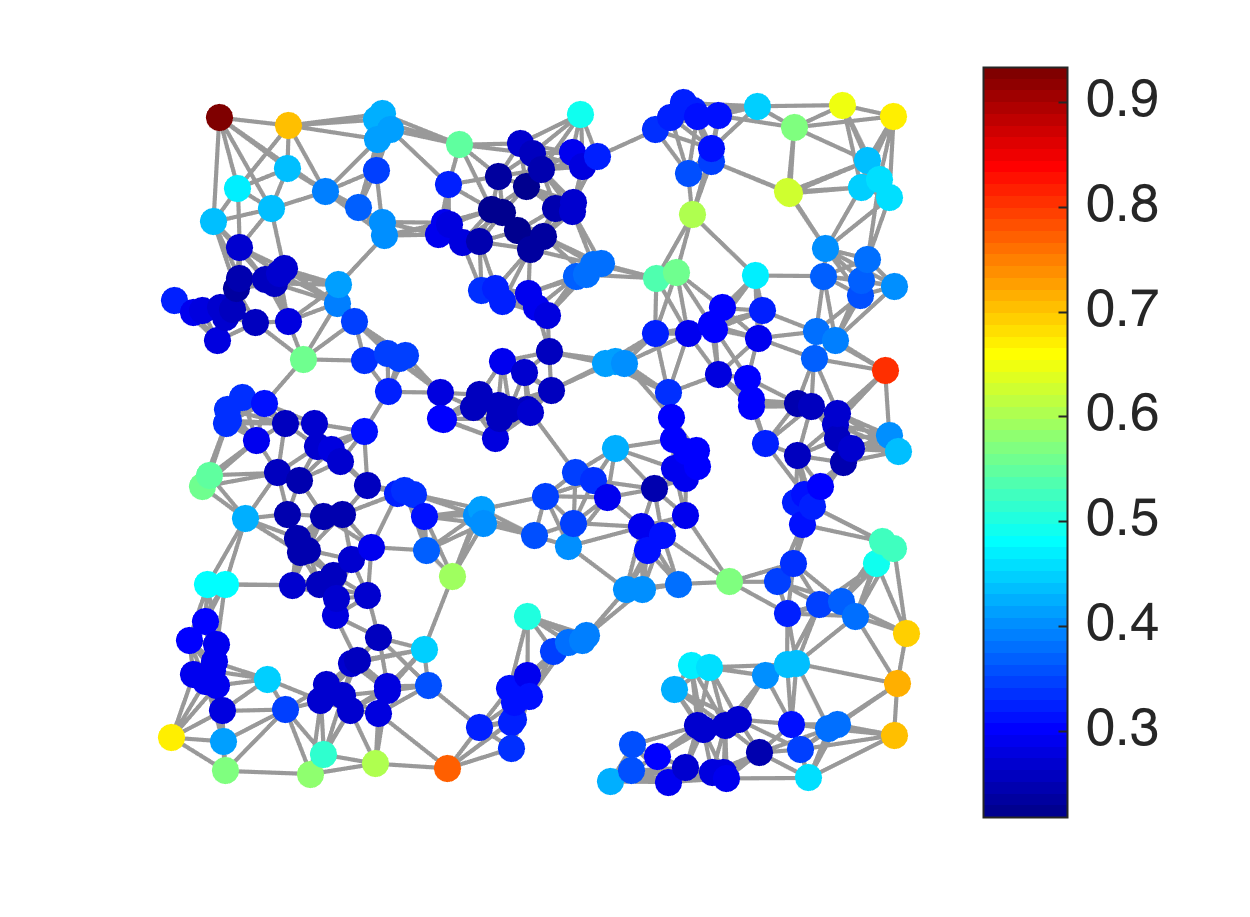}} 
\centerline{\small{(a)}}
\end{minipage}
\hfill
\begin{minipage}[b]{.24\linewidth}
\centerline{\small{$\|T_ig\|_2 \propto $ sampling distribution  }}
\centerline{\small{(Community graph)}}
\centerline{\includegraphics[width=\linewidth]{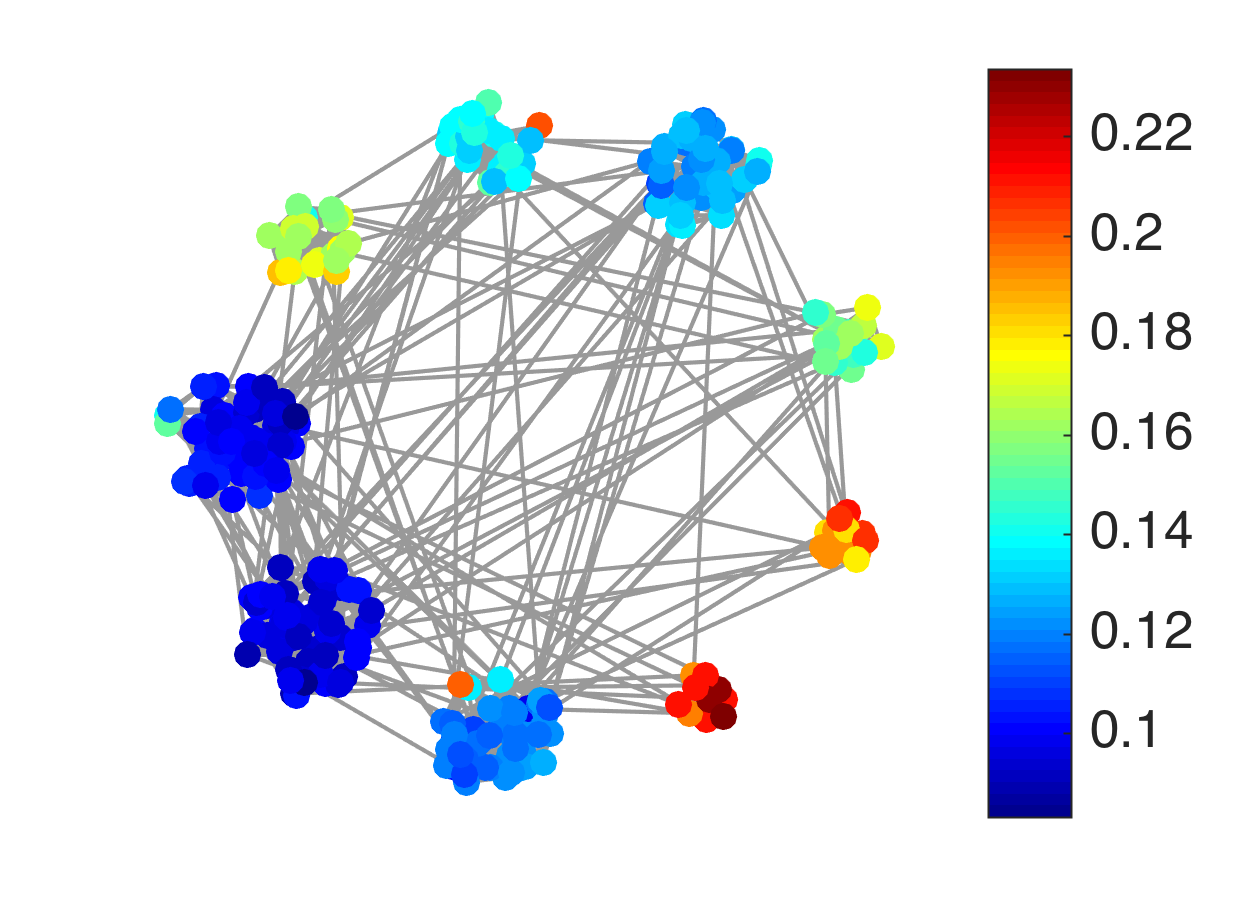}} 
\centerline{\small{(b)}}
\end{minipage}
\vspace{.1in}
\begin{minipage}[b]{.24\linewidth}
\centerline{\small{Reconstruction error}}
\centerline{\small{(Sensor network)}}
\centerline{\includegraphics[width=\linewidth]{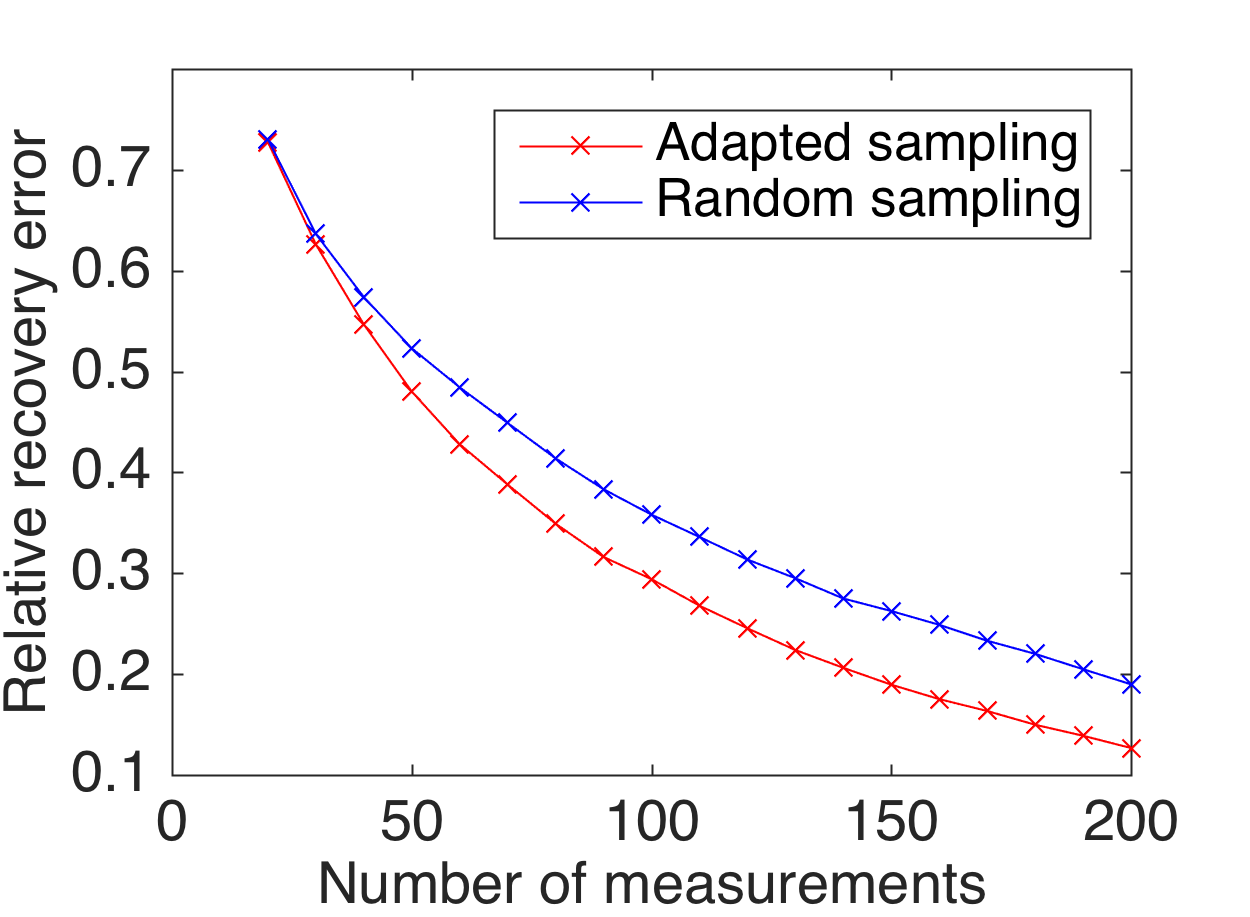}}
\centerline{\small{(c)}}
\end{minipage}
\hfill
\begin{minipage}[b]{.24\linewidth}
\centerline{\small{Reconstruction error}}
\centerline{\small{(Community graph)}}
\centerline{\includegraphics[width=\linewidth]{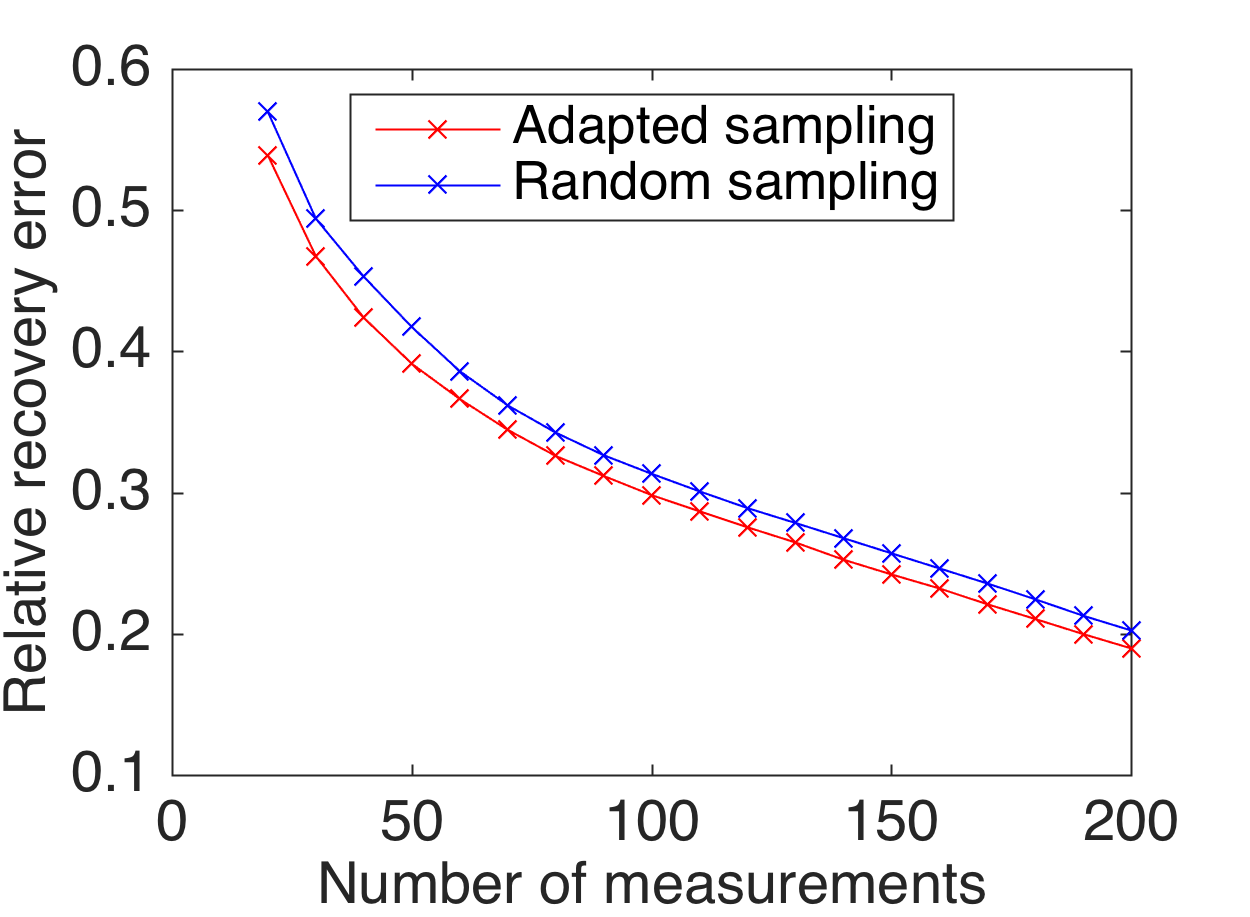}} 
\centerline{\small{(d)}}
\end{minipage}
\hfill
\\
\vspace{.1in}
\begin{minipage}[b]{.19\linewidth}
\centerline{\small{Smooth signal}}
\centerline{\includegraphics[width=\linewidth]{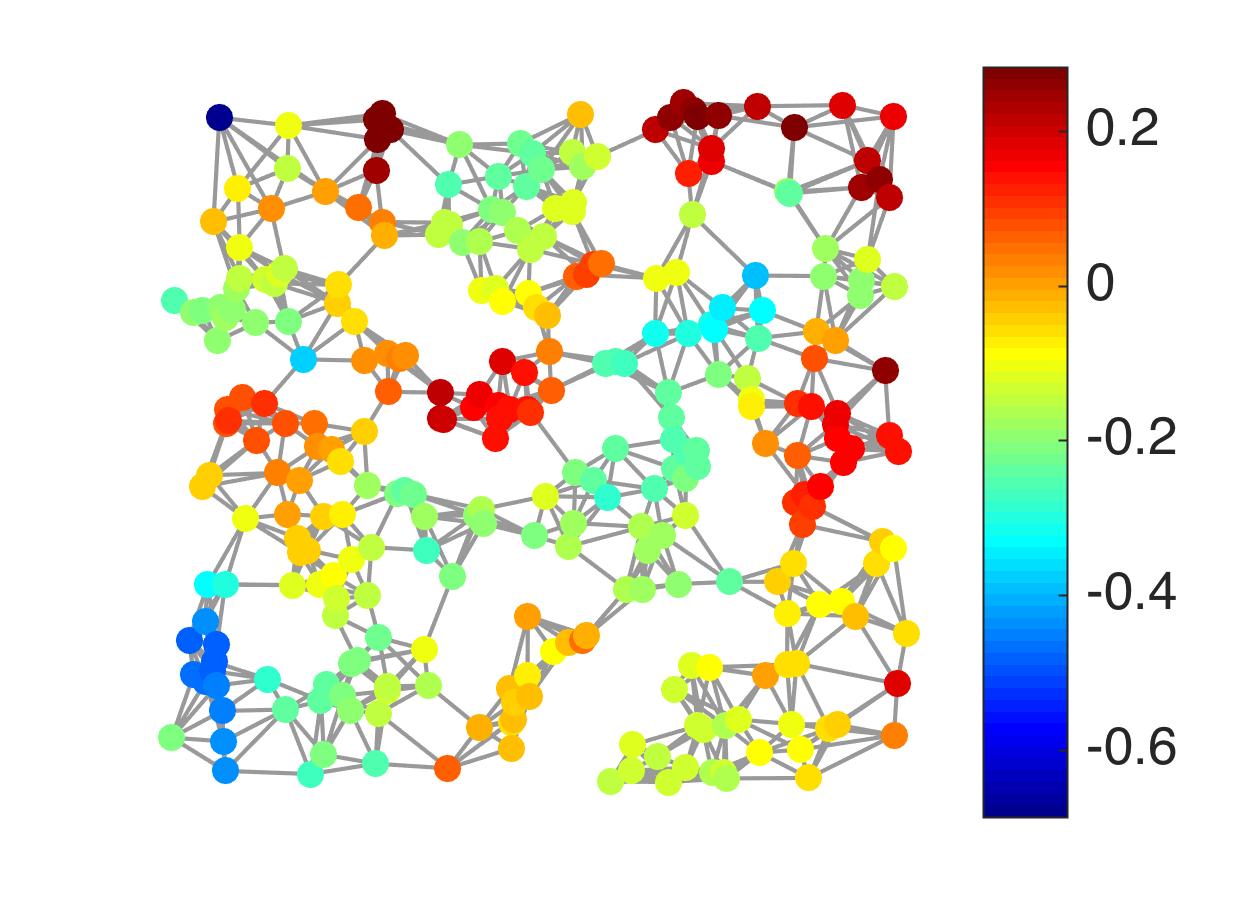}} 
\centerline{\small{(e)}}
\end{minipage}
\hfill
\begin{minipage}[b]{.19\linewidth}
\centerline{\small{Sample locations}}
\centerline{\small{(Uniform sampling)}}
\centerline{\includegraphics[width=\linewidth]{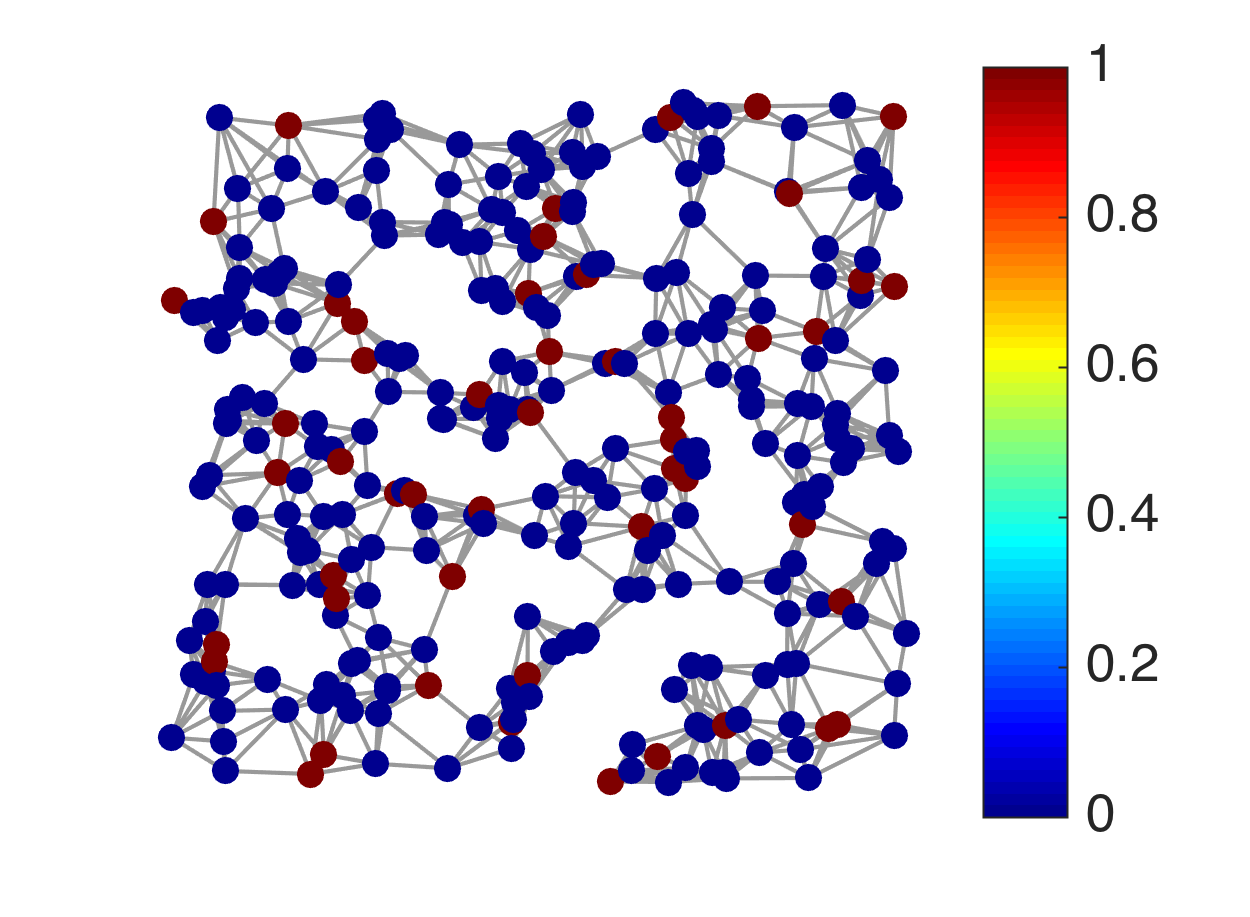}} 
\centerline{\small{(f)}}
\end{minipage}
\hfill
\begin{minipage}[b]{.19\linewidth}
\centerline{\small{Sample locations}}
\centerline{\small{(Non-uniform sampling)}}
\centerline{\includegraphics[width=\linewidth]{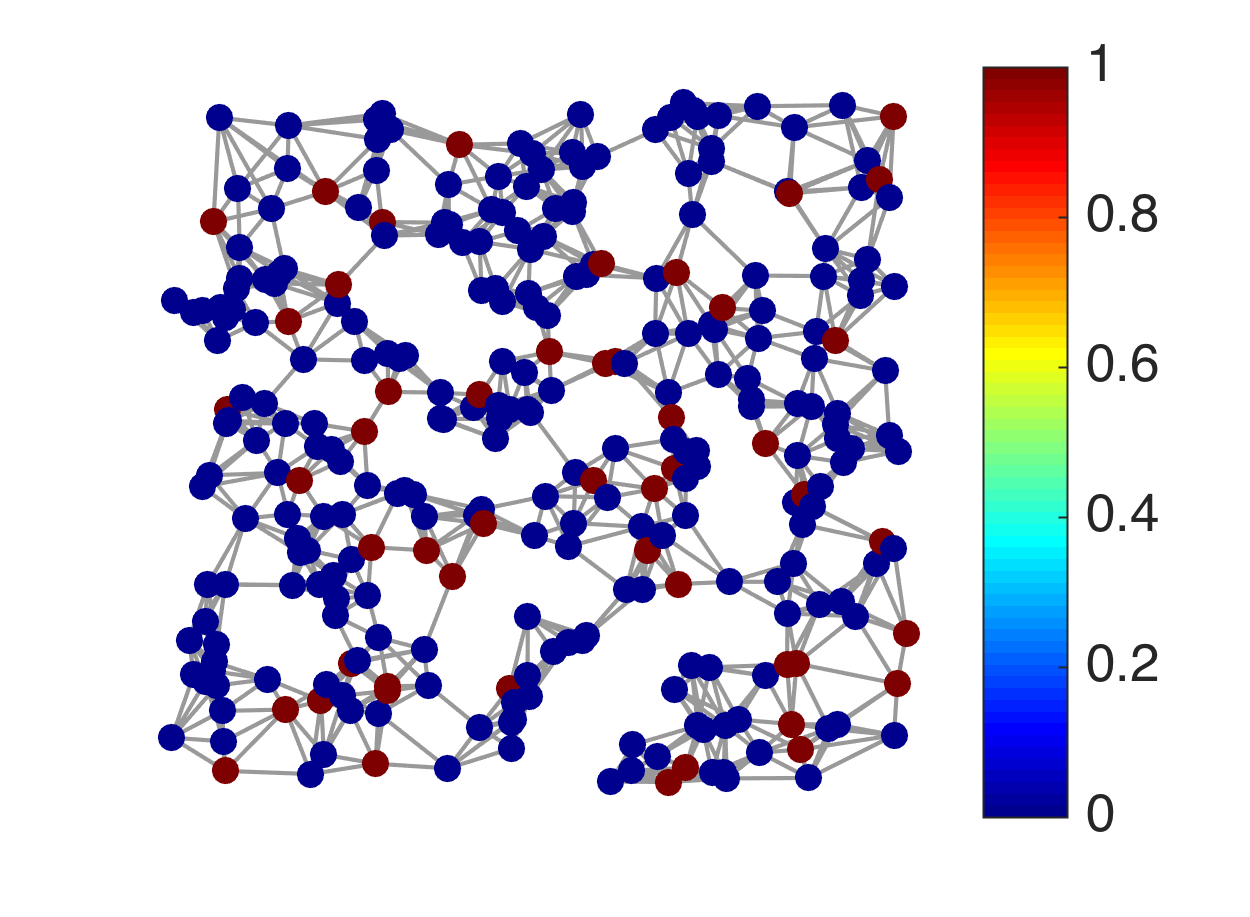}} 
\centerline{\small{(g)}}
\end{minipage}
\hfill
\begin{minipage}[b]{.19\linewidth}
\centerline{\small{Reconstruction}}
\centerline{\small{(Uniform sampling)}}
\centerline{\includegraphics[width=\linewidth]{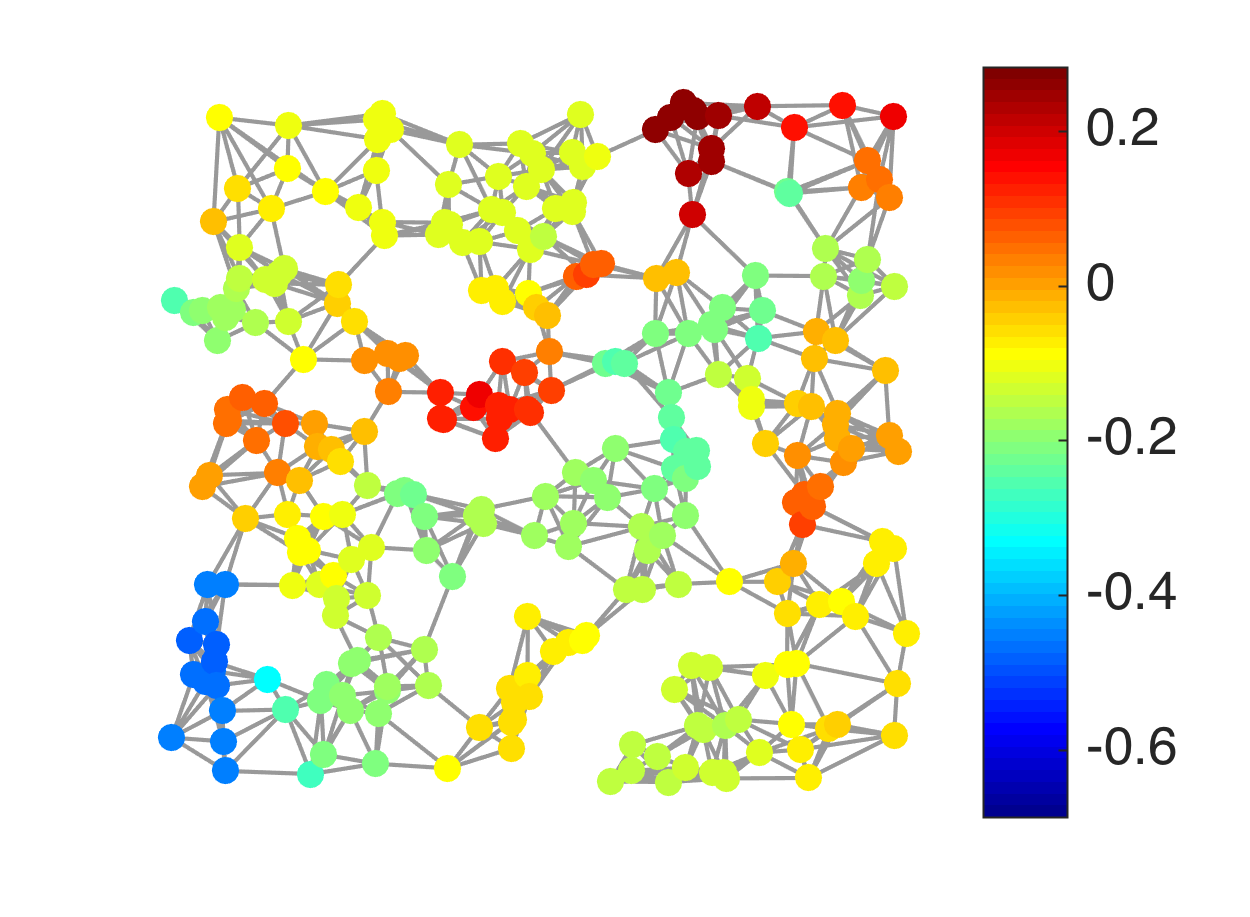}} 
\centerline{\small{(h)}}
\end{minipage}
\hfill
\begin{minipage}[b]{.19\linewidth}
\centerline{\small{Reconstruction}}
\centerline{\small{(Non-uniform sampling)}}
\centerline{\includegraphics[width=\linewidth]{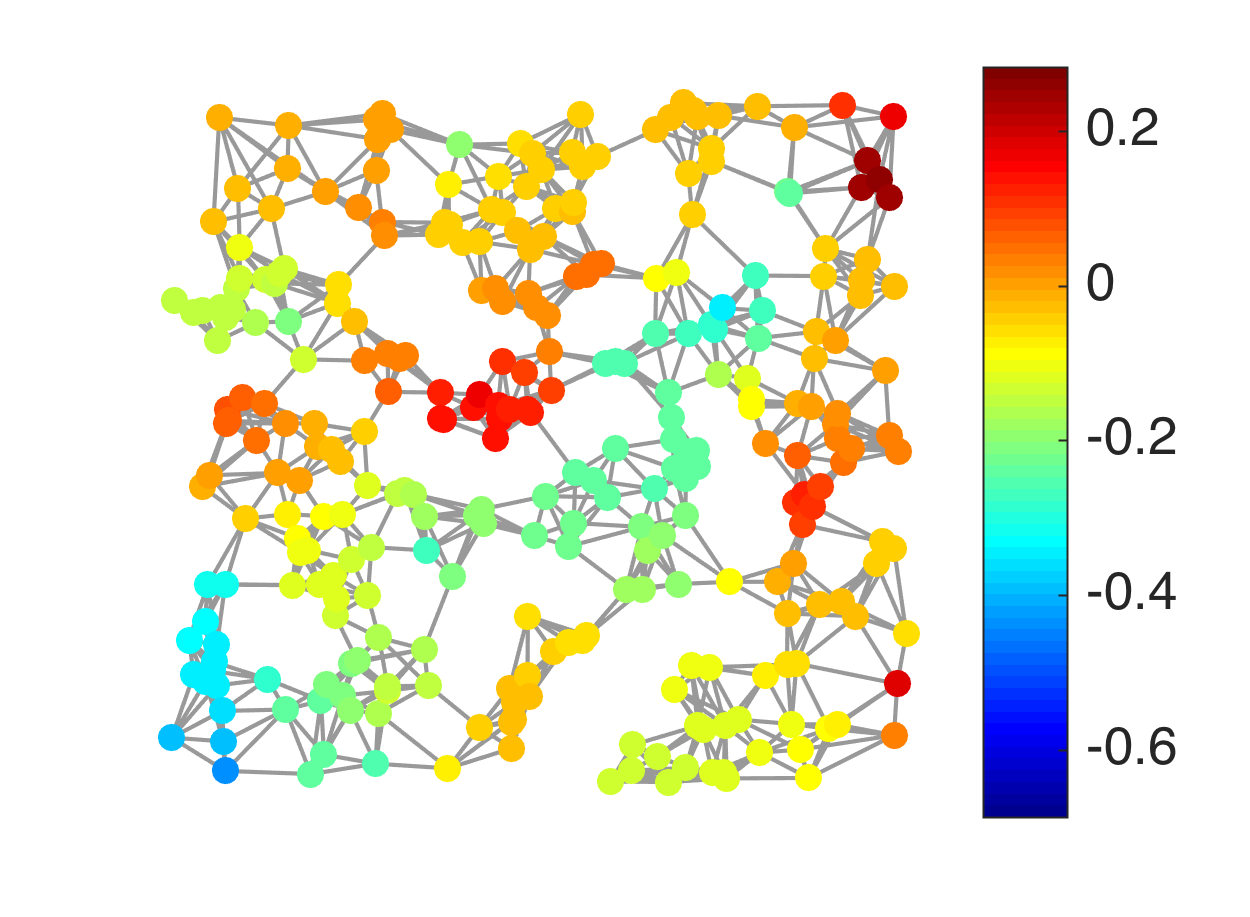}} 
\centerline{\small{(i)}}
\end{minipage}
\caption {Comparison of random uniform sampling and random non-uniform sampling according to a distribution based on the local sparsity values. Top row: (a)-(b) The random non-uniform sampling distribution is proportional to $\|T_ig\|_2$ (for different values of $i$), which is shown here for 
sensor and community graphs with 300 vertices. (c)-(d) the errors resulting from using the different sampling methods on each graph, with the reconstruction in \eqref{Eq:rec_smooth}. 
Bottom row: an example of a single inpainting experiment. (e) the smooth signal, (f)-(g) the locations selected randomly according to the uniform and non-uniform sampling distributions, (h)-(i) the reconstructions resulting from the two different sets of samples.
}
\label{fig:introinpainting}
\end{figure}

\end{example}

Other works are also starting to use uncertainty principles to develop sampling theory for signals on graphs. In \cite{puy2015random}, the cumulative coherence is used to optimize the sampling distribution. This can be seen as sampling proportionally to $\|T_ig\|_2^2$, where $\hat{g}$ is a specific rectangular kernel, in order to minimize the cumulative coherence of band-limited signals. 
In \cite{tsitsvero2015signals}, Tsitsvero et al. make a link between uncertainty and sampling to obtain a non-probabilistic sampling method.
While 
non-uniform random sampling is only an illustrative example in this paper, we are currently working on a separate contribution that uses our uncertainty theory to optimize sampling.

\section{Conclusion}
The global uncertainty principles  discussed in Section \ref{Se:global_unc_princ} may be less informative when applied to signals residing on inhomogeneous graphs, because the structure of a specific area of the graph can affect global quantities such as the coherence $\mu_{\G}$, which play a key role in the uncertainty bounds. Our main contribution was to suggest a new way of considering uncertainty by incorporating a notion of locality; specifically, we focused on the concentration of the analysis coefficients under a linear transform whose dictionary atoms are generated by localizing kernels defined in the graph spectral domain to different areas of the graph. The equivalent physical approach would be to say that the uncertainty on the measurements depends on the medium where the particle is located. 
Comparing the first inequality in \eqref{eq:localuncertainty} from the local uncertainty Theorem \ref{theo:local_uncertainty} with the first inequality in \eqref{Eq:Llg1a} from the global uncertainty Theorem \ref{Co:Lieblocgraph}, we see that the latter global bound can be viewed as the maximum of the local bounds over all regions of the graph and all regions of the spectrum.\footnote{The leading constants in the middle terms of \eqref{Eq:Llg1a} and 
\eqref{eq:localuncertainty} are equal for $p>2$. When $p<2$,  
there is a constant factor $\left(\frac{B}{A}\right)^{\frac{1}{2}-\frac{1}{p}}\geq1$ between the two bounds. This factor is equal to $1$ in the case of a tight frame ($A=B$).}
This supports our view that the benefit of the global uncertainty principle is restricted to the behavior in the region of the graph with the least favorable structure. The local uncertainty principle, on the other hand, provides information about each region of the graph separately.

The key quantities  $\{||T_i g_k||_2\}_{i,k}$ appear in both the global and local uncertainty principles. While we know that smoother kernels $\widehat{g_k}$ lead to atoms of the form $T_i g_k$ being more concentrated in the vertex domain, further study of the norms of these atoms is merited, as they seem to carry some notions of both uncertainty and centrality. 

Finally, we showed in Example \ref{Ex:adaptated_sampling} how this local notion of uncertainty can be used constructively in the context of a sampling and interpolation experiment. The uncertainty quantities suggest to sample non-uniformly, often with higher weight given to less connected vertices. We envision future work applying these local uncertainty principles to other signal processing tasks, as well as extending the notion of local uncertainty to other types of dictionaries for graph signals.

\section*{Acknowledgment}
This work has been supported by the Swiss National Science Foundation research project \textit{Towards Signal Processing on Graphs}, grant number: 2000\_21/154350/1.

\section{Appendix}

\subsection{Hausdorff-Young inequalities for graph signals}
\label{sec:Rieszthorin}

To prove the Hausdorff-Young inequalities for graph signals, we start by restating the Riesz-Thorin interpolation theorem, which can be found in~\cite[Section IX.4]{reed1975methods}. This theorem is valid for any measure spaces with $\sigma$-finite measures, and hence in the finite dimensional case. 
\begin{theorem}[Riesz-Thorin]\label{theo:Riesz-Thorin}
Assume $\TT$ is a bounded linear operator from $\ell^{p_1}$ to $\ell^{p_2}$ and  from $\ell^{q_1}$ to $\ell^{q_2}$; i.e., there exist constants $M_p$ and $M_q$ such that
$$\| \TT f \|_{p_2} \leq M_p \| f \|_{p_1} \hbox{~~and~~~} \| \TT f \|_{q_2} \leq M_q \| f \|_{q_1}. $$ 
Then for any $t$ between 0 and 1, $\TT$ is also a bounded operator from $\ell^{r_1}$ to $\ell^{r_2}$ :
$$\| \TT f \|_{r_2} \leq M_r\| f \|_{r_1},$$  
with
\begin{equation*}
\frac{1}{r_1}= \frac{t}{p_1}+\frac{1-t}{q_1}, \hspace{1cm} \frac{1}{r_2}= \frac{t}{p_2}+\frac{1-t}{q_2},
\end{equation*}
and
\begin{equation*}
M_r = M_p^t M_q^{1-t}.
\end{equation*}
\end{theorem}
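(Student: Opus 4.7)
My plan is to prove Riesz--Thorin by complex interpolation using Hadamard's three-lines lemma, which is the standard route and is particularly clean in the finite-dimensional (or $\sigma$-finite) setting because the analytic families reduce to finite sums. The first reduction is to pass to the dual formulation: by duality in $\ell^{r_2}$, the inequality $\|\TT f\|_{r_2}\le M_r\|f\|_{r_1}$ is equivalent to
\begin{equation*}
|\langle \TT f, g\rangle|\le M_r \|f\|_{r_1}\|g\|_{r_2'}\qquad\text{for all simple } g,
\end{equation*}
where $r_2'$ is the Hölder conjugate of $r_2$. By homogeneity I may assume $\|f\|_{r_1}=\|g\|_{r_2'}=1$, and write $f=|f|e^{i\alpha}$, $g=|g|e^{i\beta}$ with $\alpha,\beta$ real-valued phases.

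Next I would construct an analytic family that interpolates between the two endpoint hypotheses. Set $s(z)=\tfrac{z}{p_1}+\tfrac{1-z}{q_1}$ and $\tau(z)=\tfrac{z}{p_2'}+\tfrac{1-z}{q_2'}$, and define
\begin{equation*}
f_z=|f|^{r_1 s(z)}e^{i\alpha},\qquad g_z=|g|^{r_2'\tau(z)}e^{i\beta},\qquad F(z)=\langle \TT f_z, g_z\rangle.
\end{equation*}
Because all sums are finite (or the measures are $\sigma$-finite with $f,g$ simple), $F$ is an entire function of $z$, bounded on the strip $S=\{0\le \mathrm{Re}(z)\le 1\}$. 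The exponents are chosen so that $r_1 s(t)=1$ and $r_2'\tau(t)=1$, which forces $f_t=f$, $g_t=g$, and therefore $F(t)=\langle \TT f,g\rangle$, the quantity we wish to bound.

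The next step is the boundary analysis. On the line $\mathrm{Re}(z)=1$ one has $|f_{1+iy}|^{p_1}=|f|^{r_1}$, so $\|f_{1+iy}\|_{p_1}=\|f\|_{r_1}^{r_1/p_1}=1$, and similarly $\|g_{1+iy}\|_{p_2'}=1$; Hölder's inequality together with the first hypothesis yields $|F(1+iy)|\le \|\TT f_{1+iy}\|_{p_2}\|g_{1+iy}\|_{p_2'}\le M_p$. The analogous computation on $\mathrm{Re}(z)=0$ gives $|F(iy)|\le M_q$. Hadamard's three-lines lemma applied to the bounded analytic function $F$ on $S$ then delivers $|F(t)|\le M_p^{t}M_q^{1-t}=M_r$, which is exactly the desired bound $|\langle \TT f,g\rangle|\le M_r$. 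Taking the supremum over admissible $g$ and using homogeneity completes the argument.

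The main obstacle, in the usual functional-analytic setting, is verifying the hypotheses of the three-lines lemma (bounded analytic extension to a closed strip) rather than anything combinatorial: one must ensure $F$ really is analytic and bounded on $S$, which requires some care with measurability and integrability when $f$ or $g$ can vanish (the exponentials $|f|^{r_1 s(z)}$ are defined to be $0$ where $f=0$). In the finite-dimensional case we are interested in for graph signals, this subtlety evaporates, since $F$ is a finite sum of exponentials $e^{z \log|f(n)|}$ times bounded phase factors and hence obviously entire and bounded on $S$. A secondary point worth checking is the endpoint case $p_2=\infty$ or $q_2=\infty$, where the duality step must be phrased via $\ell^1$ rather than as an $\ell^{p_2'}$ supremum; this is handled by treating $g$ as an element of the predual in the standard way.
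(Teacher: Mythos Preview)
Your argument is the standard complex-interpolation proof of Riesz--Thorin via Hadamard's three-lines lemma, and it is correct as sketched (including the remarks about finite-dimensionality making analyticity and boundedness of $F$ trivial). There is nothing wrong with it.

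However, the paper does not actually prove this theorem. It simply \emph{restates} Riesz--Thorin and cites \cite[Section IX.4]{reed1975methods} for the proof, then uses it as a black box to derive the Hausdorff--Young inequalities, the converse in Corollary~\ref{theo:Riesz-Thorin-converse}, and the various Lieb-type bounds. So there is no ``paper's own proof'' to compare against: you have supplied a genuine proof where the authors were content with a reference. If anything, the paper implicitly relies on exactly the argument you give, since the Reed--Simon proof is also the three-lines approach.
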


We shall also need the following reverse form of the result: 
\begin{corollary} \label{theo:Riesz-Thorin-converse}
Assume $\TT$ is a bounded invertible linear operator  from $\ell^{p_1}$ to $\ell^{p_2}$ and from $\ell^{q_1}$ to $\ell^{q_2}$, with bounded left-inverse from $\ell^{p_2}$ to $\ell^{p_1}$ and from $\ell^{q_2}$ to $\ell^{q_1}$; i.e., there exist constants $N_p$ and $N_q$ such that 
\begin{align}\label{Eq:bop1}
\| \TT^{-1} g \|_{p_1} \leq N_p \| g \|_{p_2} \hbox{~~and~~~} \| \TT^{-1}g \|_{q_1} \leq N_q \| g \|_{q_2}, 
\end{align}
or, equivalently, there exist constants $M_p$ and $M_q$ such that 
\begin{align}\label{Eq:bop2}
\| \TT f \|_{p_2} \geq M_p \| f \|_{p_1} \hbox{~~and~~~}\| \TT f \|_{q_2} \geq M_q \| f \|_{q_1}. 
\end{align} 
Then for any $t$ between 0 and 1, 
\begin{align}\label{Eq:rtcor}
\| \TT f \|_{r_2} \geq M_r\| f \|_{r_1},
\end{align}
with
\begin{equation*}
\frac{1}{r_1}= \frac{t}{p_1}+\frac{1-t}{q_1}, \hspace{1cm} \frac{1}{r_2}= \frac{t}{p_2}+\frac{1-t}{q_2},
\end{equation*}
and
\begin{equation*}
M_r = M_p^t M_q^{1-t}.
\end{equation*}
\end{corollary}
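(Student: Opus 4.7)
\medskip

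\noindent\textbf{Proof proposal.} The plan is to reduce the corollary to the (already stated) Riesz--Thorin Theorem~\ref{theo:Riesz-Thorin} applied to the inverse operator $\TT^{-1}$, rather than trying to re-run the complex-interpolation machinery directly with reversed inequalities. The key observation is that a lower bound of the form $\|\TT f\|_{p_2}\ge M_p\|f\|_{p_1}$ is, up to reciprocating constants, the same as an upper bound for $\TT^{-1}$ on the appropriate pair of $\ell^p$ spaces, so interpolation for $\TT^{-1}$ will automatically yield an interpolated lower bound for $\TT$.

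First I would verify the equivalence of the two forms of the hypothesis, i.e.\ \eqref{Eq:bop1} $\Leftrightarrow$ \eqref{Eq:bop2}. Given \eqref{Eq:bop1}, substituting $g=\TT f$ gives $\|f\|_{p_1}=\|\TT^{-1}\TT f\|_{p_1}\le N_p\|\TT f\|_{p_2}$, so \eqref{Eq:bop2} holds with $M_p=1/N_p$ and similarly $M_q=1/N_q$. Conversely, starting from \eqref{Eq:bop2} and substituting $f=\TT^{-1}g$ yields \eqref{Eq:bop1} with $N_p=1/M_p$ and $N_q=1/M_q$. So we may freely work with the $\TT^{-1}$ upper bounds.

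Next I would invoke Theorem~\ref{theo:Riesz-Thorin} with the operator $\TT^{-1}$, the exponent pairs $(p_2,p_1)$ and $(q_2,q_1)$, and the constants $N_p$ and $N_q$. This produces, for each $t\in[0,1]$, an interpolated bound
\[
\|\TT^{-1}g\|_{r_1}\;\le\; N_p^{\,t}N_q^{\,1-t}\,\|g\|_{r_2},
\]
where $r_1,r_2$ are exactly the interpolated exponents defined in the statement. Substituting $g=\TT f$ and rearranging gives
\[
\|\TT f\|_{r_2}\;\ge\;\bigl(N_p^{\,t}N_q^{\,1-t}\bigr)^{-1}\|f\|_{r_1}\;=\;M_p^{\,t}M_q^{\,1-t}\,\|f\|_{r_1}\;=\;M_r\|f\|_{r_1},
\]
which is exactly \eqref{Eq:rtcor}.

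The only subtle point, and the one I would be most careful about, is ensuring that $\TT^{-1}$ is a legitimate bounded linear operator defined on the full spaces $\ell^{p_2}$ and $\ell^{q_2}$ on which we are applying Riesz--Thorin, not merely on the range of $\TT$. In the finite dimensional setting relevant to this paper this is automatic from invertibility, so no extension argument is needed; in a more general setting one would either assume $\TT$ is surjective or work on the closed range and invoke a density/extension argument. Apart from this bookkeeping, there is no new interpolation work to do, since all the analytic content is already packaged in Theorem~\ref{theo:Riesz-Thorin}.
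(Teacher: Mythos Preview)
Your proposal is correct and follows essentially the same approach as the paper: establish the equivalence of \eqref{Eq:bop1} and \eqref{Eq:bop2} via the substitutions $g=\TT f$, $f=\TT^{-1}g$ with $M_p=N_p^{-1}$, $M_q=N_q^{-1}$, and then apply Theorem~\ref{theo:Riesz-Thorin} to $\TT^{-1}$ before substituting $g=\TT f$. Your write-up is simply more detailed (and your remark about the finite-dimensional setting handling the domain issue for $\TT^{-1}$ is a welcome clarification that the paper leaves implicit).
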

\begin{proof}
If $\TT$ is invertible and has a left-inverse $\TT^{-1}$ that satisfies $\TT^{-1}\TT f=f$ for all $f$, then the equivalence of \eqref{Eq:bop1} and \eqref{Eq:bop2} follows from taking $g=\TT f$, $f=\TT^{-1}g$, $M_p=N_p^{-1}$, and $M_q=N_q^{-1}$. The
proof of \eqref{Eq:rtcor} follows from the application of Theorem \ref{theo:Riesz-Thorin},
 with $\TT$ replaced by $\TT^{-1}$ and $f$ by $\TT f$.
\end{proof}


\begin{proof}[Proof of Theorem \ref{theo: Hausdorff-Young graph} (Hausdorff-Young inequalities for graph signals)]
First, we have the Parseval equality $\| f \|_2^2 = \| \hat{f} \|_2^2$.
Second, we have
\begin{eqnarray*}
\| \hat{f} \|_\infty =  \max_\ell \left| \sum_{n=1}^N \x_\ell^*(n) f(n) \right| \leq  \max_\ell  \sum_{n=1}^N \left| \x_\ell^*(n) f(n) \right| \leq  \mu_{\G} \sum_{n=1}^N | f(n) | = \mu_{\G} \| f \|_1 . \label{premice2}
\end{eqnarray*}
Applying the Riesz-Thorin theorem with $p_1=2$, $p_2=2$, $M_p= 1$, $q_1=1$, $q_2=\infty$,  $M_q=\mu_{\G}$, $t=\frac{2}{q}$, $r_1=p$, and $r_2=q$ leads to the first inequality \eqref{eq: HY 1}. The proof of the converse is similar, as we have
\begin{eqnarray*}
\| f\|_\infty =  \max_i \left| \sum_{\ell=0}^{N-1} \x_\ell(i) \hat{f}(\ell) \right| \leq  \max_i  \sum_{\l=0}^{N-1} \left| \x_\ell(i) \hat{f}(\l) \right| \leq  \mu_{\G} \sum_{\l=0}^{N-1} | \hat{f}(\l) | = \mu_{\G} \|\hat{f} \|_1. 
\end{eqnarray*}
The graph Fourier transform is invertible, so  \eqref{eq: HY 2}  then follows from Corollary~\ref{theo:Riesz-Thorin-converse}, with $p_1=\infty$, $p_2=1$, $M_p= \mu_{\G}^{-1}$, $q_1=2$, $q_2=2$,  $M_q=1$, $t=\frac{2}{q}-1$, $r_1=p$, and $r_2=q$.
\end{proof}

\subsection{Variations of Lieb's uncertainty principle}

\subsubsection{Generalization of Lieb's uncertainty principle to frames} \label{Se:lieb_gen_proof}


\begin{proof}[Proof of Theorem \ref{theo: amb tight}]
Let  $\Dc=\{g_{i,k}\}$ be a frame of atoms in $\Cbb^N$, with lower and upper frame bounds $A$ and $B$, respectively. We show the following two inequalities, which together yield \eqref{Eq:ft1a}. First, for any signal $f\in \Cbb^N$ and any $p\ge 2$,
\begin{align}\label{Eq:wtsft1a}
s_p(\A_{\Dc}f)=\frac{\|\A_{\Dc}f\|_{p}}{\|\A_{\Dc}f\|_{2}}
\leq\frac{B^{\frac{1}{p}}}{A^{\frac{1}{2}}}\left(\max_{i,k}\|g_{i,k}\|_{2}\right)^{1-\frac{2}{p}}
\end{align}
Second, for any signal $f\in \Cbb^N$ and any $1\leq p \leq 2$,
\begin{align}\label{Eq:wtsft1b}
\frac{1}{s_p(\A_{\Dc}f)}=\frac{\|\A_{\Dc}f\|_{p}}{\|\A_{\Dc}f\|_{2}}
\geq\frac{A^\frac{1}{p}}{B^{\frac{1}{2}}}\left(\max_{i,k}\|g_{i,k}\|_{2}\right)^{1-\frac{2}{p}}
\end{align}
%
For any $f$, the frame $\Dc$ satisfies
\begin{equation}
\sqrt{A}\|f\|_{2}\le\|\A_{\Dc}f\|_{2}\le\sqrt{B}\|f\|_{2}.\label{eq: bound norm 2}
\end{equation}
The computation of the sup-norm gives
\begin{equation}\label{eq: bound norm inf}
\|\A_{\Dc}f\|_{\infty}  =  \max_{i,k}\left|\langle f, g_{i,k} \rangle\right| \le \|f\|_{2}\max_{i,k}\|g_{i,k}\|_{2}.
\end{equation}
From \eqref{eq: bound norm 2}, $\A_{\Dc}$ is a linear bounded operator form $\ell_2$ to $\ell_2$ by $\sqrt{B}$. Similarly, from \eqref{eq: bound norm inf}, this operator is also bounded from $\ell_2$ to $\ell_\infty$ by $\max_{i,k}\|g_{i,k}\|_{2}$. Interpolating between $\ell_2$ and $\ell_\infty$ with the Riesz-Thorin theorem leads to
\begin{equation} \label{eq: lieb generalization p 2 inf}
\|\A_{\Dc}f\|_{p}\leq B^{\frac{1}{p}}\left(\max_{i,k}\|g_{i,k}\|_{2}\right)^{1-\frac{2}{p}}\|f\|_{2}.
\end{equation}
We combine~\eqref{eq: bound norm 2} and~\eqref{eq: lieb generalization p 2 inf} to obtain \eqref{Eq:wtsft1a}.
The second inequality \eqref{Eq:wtsft1b} is obtained using the following instance of H\"older's inequality:
$$\|\A_{\Dc}f\|_{2}^{2}\le\|\A_{\Dc}f\|_{\infty}\|\A_{\Dc}f\|_{1},$$
which implies that
\begin{equation}
\|\A_{\Dc}f\|_{1}\geq\frac{\|\A_{\Dc}f\|_{2}^{2}}{\|\A_{\Dc}f\|_{\infty}}\geq\frac{A\|f\|_{2}}{\max_{i,k}\|g_{i,k}\|_{2}}.\label{eq: bound norm 1}
\end{equation}
We then use Corollary~\ref{theo:Riesz-Thorin-converse}, the converse of Riesz-Thorin, to interpolate between \eqref{eq: bound norm 1} and \eqref{eq: bound norm 2}, and we find for $p\in[1,2]$:
\begin{equation}  \label{eq: lieb generalization p 1 2}
\|\A_{\Dc}f\|_{p}\geq A^{\frac{1}{p}}\left(\max_{i,k}\|g_{i,k}\|_{2}\right)^{1-\frac{2}{p}}\|f\|_{2}.
\end{equation}
Combining \eqref{eq: lieb generalization p 1 2} with the second inequality in \eqref{eq: bound norm 2} yields \eqref{Eq:wtsft1b}.
\end{proof}

\subsubsection{Discrete version of Lieb's uncertainty principle} \label{Se:discrete_Lieb}

\begin{proof}[Proof of Theorem \ref{theo:Classical_Lieb_discrete}]
Theorem \ref{theo:Classical_Lieb_discrete} is actually a particular case of Theorem \ref{theo: amb tight}. To see why,  we need to understand 
the transformation between the graph framework used in this contribution and the classical discrete periodic case. 
The DFT basis vectors $\left\{u_k(n)=\frac{1}{\sqrt{N}}\exp\left( \frac{i2\pi k n}{N}\right)\right\}_{k=0,1,\ldots,N-1}$ can also be chosen as the eigenvectors of the graph Laplacian for a ring graph with $N$ vertices \cite{strang1999discrete}. The frequencies of the DFT, which correspond up to a sign to the inverse of the period of the eigenvectors, are not the same as the graph Laplacian eigenvalues on the ring graph, which are all positive. We can, however, form a bijection between the set $\sigma(\L)$ of graph Laplacian eigenvalues and the set of $N$ frequencies of the DFT, by associating one member from each set sharing the same eigenvector. At this point, instead of considering graph filters as continuous functions evaluated on the Laplacian eigenvalues, we can define a graph filter as a mapping from each individual eigenvalue to a complex number. Note that an eigenvalue with multiplicity $2$ can have two different outputs (e.g., $\lambda_3=\lambda_4=1$, but the filter has different values at $\lambda_3$ and $\lambda_4$). With this bijection and view of the graph spectral domain, we can recover the classical discrete periodic setting by forming a ring graph with $N$ vertices. Because the classical translation and modulation preserve 2-norms, the discrete windowed Fourier atoms of the form
$$g_{u,k}[n]=g[n-u]\exp \left(\frac{i2\pi k n}{N}\right)$$ 
all have the same norm $||g||_2$. 
Together these  $N^2$  atoms
comprise a tight frame on the ring graph with frame bounds $A =B= N \| g \|_2^2$. 
Inserting these values into  \eqref{eq: lieb generalization p 2 inf_main}
and \eqref{eq: lieb generalization p 1 2_main} yields \eqref{eq: lieb p 1 infty} and \eqref{eq: lieb p 1 2}.
\end{proof}

For the case of $p\geq 2$, we also provide an alternative direct proof following similar ideas to those used in Lieb's proof for the continuous case \cite{lieb1990integral}. The arguments below follow the sketch of the proof of Proposition 2 in \cite{eusipcooptamb} and supporting personal communication from Bruno Torr{\'e}sani. 
We need two lemmas. The first one is a direct application of Theorem~\ref{theo: Hausdorff-Young graph}, where here $\mu_{\G}=1/\sqrt{N}$.
%
\begin{lemma} \label{HY_discrete}
Let $f\in \Cbb^N$ and $p$ be the H\"older conjugate of $p'$ ($\frac{1}{p} +\frac{1}{p'} = 1$). Then for 
$1\leq p \leq 2 $, we have
 $$\| \hat f \|_{p'} \leq N^{\frac{1}{p'}-\frac{1}{2}} \| f \|_{p}.$$
 Conversely, for $2 \leq p \leq \infty$, we have
 $$\| \hat f \|_{p'} \geq N^{\frac{1}{p'}-\frac{1}{2}}  \| f \|_{p}.$$
\end{lemma}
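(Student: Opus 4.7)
The plan is very short because the lemma is, as the paper itself notes, an immediate specialization of Theorem~\ref{theo: Hausdorff-Young graph} to the classical discrete Fourier setting. My strategy is to identify the classical DFT on $\Cbb^N$ with the graph Fourier transform for the ring graph of $N$ vertices (as done in the proof of Theorem~\ref{theo:Classical_Lieb_discrete}), observe that for this graph the coherence equals $\mu_{\G}=1/\sqrt{N}$, and then substitute.

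First I would recall that the DFT basis vectors $u_k(n)=\frac{1}{\sqrt N}\exp(i2\pi kn/N)$ diagonalize the Laplacian of the ring graph, and that $|u_k(n)|=1/\sqrt N$ for every pair $(k,n)$. Consequently $\mu_{\G}=\max_{k,n}|\langle\delta_n,u_k\rangle|=1/\sqrt N$. Since Lemma~\ref{HY_discrete} uses the notation $p'$ for the Hölder conjugate (i.e.\ $p'=q$ in the notation of Theorem~\ref{theo: Hausdorff-Young graph}), the two exponent relationships match exactly.

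Then I would plug $\mu_{\G}=1/\sqrt N$ into the bounds \eqref{eq: HY 1} and \eqref{eq: HY 2} and simplify the exponent:
\begin{equation*}
\mu_{\G}^{\,1-\frac{2}{p'}}=\left(\frac{1}{\sqrt N}\right)^{1-\frac{2}{p'}}=N^{-\frac12+\frac{1}{p'}}=N^{\frac{1}{p'}-\frac12}.
\end{equation*}
For $1\le p\le 2$ this yields $\|\hat f\|_{p'}\le N^{\frac{1}{p'}-\frac12}\|f\|_p$, and for $2\le p\le\infty$ the same substitution in the reverse inequality \eqref{eq: HY 2} gives $\|\hat f\|_{p'}\ge N^{\frac{1}{p'}-\frac12}\|f\|_p$, establishing both parts of the lemma.

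There is no genuine obstacle here: the only ``work'' is the bookkeeping that shows the DFT really is a graph Fourier transform for the ring graph and that its coherence attains the minimal value $1/\sqrt N$. Both facts are standard (and already used elsewhere in the paper), so the proof reduces to one line of substitution followed by the exponent calculation above.
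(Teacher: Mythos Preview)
Your proposal is correct and matches the paper's own justification exactly: the paper simply states that the lemma is a direct application of Theorem~\ref{theo: Hausdorff-Young graph} with $\mu_{\G}=1/\sqrt{N}$, and your substitution and exponent bookkeeping are precisely what is needed.
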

%
%
The second lemma is an equivalent of Young's inequality in the discrete case.
We denote the circular convolution between two discrete signals $f,g$ by $f\ast g$. The circular convolution satisfies $\widehat{f\ast g} = \hat{f}\cdot \hat{g}$. 
\begin{lemma} \label{classical_young}
Let $f\in L^p$, $g\in L^q$, where $1 \leq p,q,r \leq \infty$ satisfy $1+\frac{1}{r} =\frac{1}{p} +\frac{1}{q} $. Then
\begin{equation*}
\| f \ast g \|_r \leq \| f\|_p \| g\|_q.
\end{equation*}
\end{lemma}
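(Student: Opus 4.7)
The plan is to prove this discrete Young inequality via the Riesz-Thorin interpolation theorem (Theorem~\ref{theo:Riesz-Thorin}), applied to the convolution-by-$g$ operator $T_g : f \mapsto f \ast g$ with $g$ held fixed. I would interpolate between the two extreme cases $T_g : L^{1} \to L^{q}$ and $T_g : L^{q'} \to L^{\infty}$, each with operator norm at most $\|g\|_q$, and pick the interpolation parameter so that the input and output exponents land on $p$ and $r$.

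First I would establish the two endpoint bounds. For $T_g : L^{q'} \to L^{\infty}$, writing out $(f \ast g)(n) = \sum_{m} f(m)\,g(n-m)$ and applying H\"older's inequality in the summation variable $m$ yields $|(f\ast g)(n)| \leq \|f\|_{q'}\|g\|_q$ for every $n$, using that circular translation preserves the $L^{q}$-norm of $g$. For $T_g : L^{1} \to L^{q}$, I would expand $f = \sum_m f(m)\,\delta_m$ so that $f \ast g = \sum_m f(m)\,\tau_m g$, where $\tau_m g$ denotes the circular shift of $g$, and then apply the triangle inequality in $L^{q}$ to get $\|f \ast g\|_q \leq \sum_m |f(m)|\,\|g\|_q = \|f\|_1\|g\|_q$.

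I would then apply Theorem~\ref{theo:Riesz-Thorin} to $T_g$ with $p_1 = 1$, $p_2 = q$, $M_p = \|g\|_q$, $q_1 = q'$, $q_2 = \infty$, $M_q = \|g\|_q$, and interpolation parameter $t = q/r$. This choice is admissible ($0 \leq t \leq 1$) because the hypothesis $1/p + 1/q = 1 + 1/r$ together with $1/p \leq 1$ forces $r \geq q$. The output exponent is $1/r_2 = t/q = 1/r$, and a direct simplification using $1/q' = 1 - 1/q$ gives
$$\frac{1}{r_1} = t + \frac{1-t}{q'} = \frac{q}{r} + \left(1 - \frac{q}{r}\right)\left(1 - \frac{1}{q}\right) = 1 + \frac{1}{r} - \frac{1}{q} = \frac{1}{p},$$
so $r_1 = p$. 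The conclusion $\|f \ast g\|_r \leq M_p^{t} M_q^{1-t}\,\|f\|_p = \|g\|_q\,\|f\|_p$ follows.

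There is no substantial obstacle here; the argument is essentially the textbook proof of Young's inequality adapted to the finite cyclic group underlying discrete periodic convolution. The only point requiring care is verifying that both endpoint bounds come with the same constant $\|g\|_q$ (so that $M_p^t M_q^{1-t} = \|g\|_q$) and that the algebra ties the input exponent exactly to $p$. The boundary cases $p = 1$ and $1/p + 1/q = 1$ correspond to $t = 1$ and $t = 0$ respectively, and are already built into the endpoint estimates themselves.
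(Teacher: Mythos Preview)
Your proof is correct, but it takes a slightly different route from the paper's. You interpolate once, directly between the endpoints $T_g : L^1 \to L^q$ and $T_g : L^{q'} \to L^\infty$, both bounded by $\|g\|_q$. The paper instead performs a two-stage interpolation: first it fixes $g$ and interpolates between $T_g : L^1 \to L^1$ and $T_g : L^\infty \to L^\infty$ (both bounded by $\|g\|_1$) to obtain the special case $\|f\ast g\|_p \leq \|f\|_p\|g\|_1$; then it fixes $f$ and interpolates between $T_f : L^1 \to L^p$ and $T_f : L^{p'} \to L^\infty$ (both bounded by $\|f\|_p$) to reach the general statement. Your single-step argument is more economical and avoids the intermediate $q=1$ case entirely; the paper's version is the more traditional textbook presentation that builds the result in two layers. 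Both are standard and equally valid in the finite discrete setting.
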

\begin{proof}
The proof is based on the following inequalities \cite[p. 174]{pinsky}
\begin{eqnarray}
\| f \ast g \|_1 &\leq& \| f\|_1 \| g\|_1 \label{eq:demo_young_classic111}\\
\| f \ast g \|_\infty &\leq& \| f\|_\infty \| g\|_1 \label{eq:demo_young_classicii1}\\
\| f \ast g \|_\infty &\leq& \| f\|_p \| g\|_{p^{\prime}}, \label{eq:demo_young_classicipq}
\end{eqnarray}
where $\frac{1}{p} +\frac{1}{p^{\prime}} = 1$. For a fixed function $g\in L^q$, we define an operator $\TT_g$ by $(\TT_{g}f)(n) = (f * g)(n)$. Using \eqref{eq:demo_young_classic111} and \eqref{eq:demo_young_classicii1}, we observe that this operator is bounded from $L^1$ to $L^1$ by $\|g\|_1$ and from $L^\infty$ to $L^\infty$ by $\|g\|_1$. Thus, we can apply the Riesz-Thorin theorem to this operator to get
\begin{align}\label{Eq:RT1fg}
\|f * g \|_p \leq \| f \|_p \| g \|_1.
\end{align}
Similarly, for a fixed function $f \in L^p$, we define another operator $T_f$ by $(T_{f}g)(n) = (f * g)(n)$. From \eqref{Eq:RT1fg} and  \eqref{eq:demo_young_classicipq}, we observe that this new operator is bounded from $L^1$ to $L^p$ by  $ \| f \|_p $ and from $L^{p^{\prime}}$ to $L^\infty$ by $\| f \|_p $.
One more application of the Riesz-Thorin theorem leads to the desired result:
$$
\| f \ast g \|_r \leq \| f\|_p \| g\|_q,
$$
where $1+\frac{1}{r} =\frac{1}{p} +\frac{1}{q} $.
\end{proof}

\begin{proof}[Alternative proof of Theorem \ref{theo:Classical_Lieb_discrete} for the case $p\geq 2$]
Suppose $p>2$ and let $\frac{1}{p}+\frac{1}{p^{\prime}}=1$. We denote the DFT by $\F$. Noting that 
$\frac{p}{p^{\prime}}>1$, we have
\begin{eqnarray}
\| \A_{\Dc_{DWFT}} f \|_p^p 
& = & \sum_{u=1}^N \sum_{k=0}^{N-1} | \A_{\Dc_{DWFT}} f [u,k]|^p \nonumber \\
&= & N^\frac{p}{2} \sum_{u=1}^N \sum_{k=0}^{N-1} \left| \F\{ f[\cdot] g[u-\cdot]\}[k] \right|^p  \nonumber \\
&= & N^\frac{p}{2}  \sum_{u=1}^N \left\| \F\{ f[\cdot] g[u-\cdot]\} \right\|_p^p \nonumber \\
&\leq & N^\frac{p}{2} \sum_{u=1}^N N^{\frac{p}{p^{\prime}}-\frac{p}{2}} \left\|  f[\cdot] g[u-\cdot] \right\|_{p^{\prime}}^p \label{discrete_lieb_HJ} \\
&= & N^{p-\frac{p}{p^{\prime}}}  \sum_{u=1}^N \left( \sum_{n=1}^N | f[n] g[u-n]|^{p^{\prime}} \right)^\frac{p}{p^{\prime}} \nonumber  \\
&= & N^{p-\frac{p}{p^{\prime}}} \sum_{u=1}^N \left( \sum_{n=1}^N | f^{p^{\prime}}[n]||g^{p^{\prime}}[u-n]| \right)^\frac{p}{p^{\prime}} \nonumber  \\
&= & N^{p-\frac{p}{p^{\prime}}} \sum_{u=1}^N \left( (| f^{p^{\prime}}| *|g^{p^{\prime}}|)[u] \right)^\frac{p}{p^{\prime}} \nonumber  \\
&= & N^{p-\frac{p}{p^{\prime}}} \left\| | f^{p^{\prime}}| *|g^{p^{\prime}}| \right\|_\frac{p}{p^{\prime}}^\frac{p}{p^{\prime}} \nonumber  \\
&\leq & N^{p-\frac{p}{p^{\prime}}}  \| f^{p^{\prime}}\|_\alpha^\frac{p}{p^{\prime}} \|g^{p^{\prime}}\|_\beta^\frac{p}{p^{\prime}}    \label{discrete_lieb_Y}\\
&=&N \| f^{p^{\prime}}\|_\alpha^\frac{p}{p^{\prime}} \|g^{p^{\prime}}\|_\beta^\frac{p}{p^{\prime}} \nonumber, 
\end{eqnarray}
for any $1\leq \alpha, \beta \leq \infty$ satisfying $\frac{1}{\alpha}+\frac{1}{\beta}=p^{\prime}$. Equation
\eqref{discrete_lieb_HJ} follows from the Hausdorff-Young inequality given in Lemma \ref{HY_discrete} and \eqref{discrete_lieb_Y} follows from the Young inequality given in Lemma \ref{classical_young} with $r= \frac{p}{p^{\prime}}$.
Now we can perform a change variable $a=\alpha p^{\prime}$ and $b=\beta p^{\prime}$ so that $\frac{1}{a}+\frac{1}{b}=1$, and \eqref{discrete_lieb_Y} becomes
\begin{align}\label{Eq:precov}
\| \A_{\Dc_{DWFT}} f \|_p^p 
\leq  N \| f^{p^{\prime}}\|_\alpha^\frac{p}{p^{\prime}} \|g^{p^{\prime}}\|_\beta^\frac{p}{p^{\prime}} =  N \| f\|_a^p \|g\|_b^p. 
\end{align}
Finally, we take $a=b=2$ and take the $p^{th}$ root of \eqref{Eq:precov} to show the first half of Theorem \ref{theo:Classical_Lieb_discrete}. Note that we cannot follow the same line of logic for the case $1 \leq p \leq 2$ without a converse of the Young's inequality in Lemma  \ref{classical_young}. 



\end{proof}
\bibliography{bib2}



\end{document}